\documentclass{article} %
\usepackage{iclr2027_conference,times}

\usepackage{amsmath,amsfonts,bm}

\def\eqref#1{equation~\ref{#1}}

\def\1{\bm{1}}

\def\eps{{\epsilon}}

\DeclareMathAlphabet{\mathsfit}{\encodingdefault}{\sfdefault}{m}{sl}
\SetMathAlphabet{\mathsfit}{bold}{\encodingdefault}{\sfdefault}{bx}{n}

\newcommand{\R}{\mathbb{R}}

\usepackage{hyperref}
\usepackage{url}
\usepackage{graphicx} %
\usepackage{mathtools, amssymb}
\usepackage{amsthm, amssymb}
\usepackage{caption,subcaption}

\usepackage[dvipsnames]{xcolor}
\usepackage[most]{tcolorbox}

\newtcolorbox{subfigbox}{
  enhanced,
  arc=2.5mm,               %
  boxrule=0.6pt,            %
  colframe=gray!60,         %
  colback=white,            %
  boxsep=0pt,
  left=-1pt, right=3pt,      %
  top=1.5pt, bottom=-3pt       %
}

\newtcolorbox{highlightbox}[1][]{
  enhanced,
  breakable,                %
  arc=2.5mm,               %
  boxrule=1pt,            %
  colframe=Dandelion!50,         %
  colback=yellow!3!white,     %
  top=3pt, bottom=3pt,
  left=3pt, right=3pt,
  #1                        %
}

\newtcolorbox{highlightbox2}[1][]{
  enhanced,
  breakable,                %
  arc=2.5mm,               %
  boxrule=1.2pt,            %
  colframe=MidnightBlue!70,         %
  colback=Green!3!white,     %
  top=4pt, bottom=4pt,
  left=4pt, right=4pt,
  #1                        %
}

\usepackage{pgfplots}
\usetikzlibrary{arrows.meta}

\theoremstyle{thm}
\newtheorem{definition}{Definition}[section]
\newtheorem{definition*}{Definition}
\newtheorem{proposition}{Proposition}[section]
\newtheorem{lemma}{Lemma}[section]
\newtheorem{theorem}{Theorem}[section]
\newtheorem{corollary}{Corollary}[section]

\theoremstyle{definition}
\newtheorem{remark}{Remark}[section]
\newtheorem{example}{Example}
\newtheorem*{example*}{Example}
\newtheorem*{remark*}{Remark}

\usepackage{wrapfig}
\usepackage{float}

\newcommand{\N}{\mathbb{N}}

\newcommand{\A}{\mathcal{A}}
\newcommand{\F}{\mathbb{F}}
\newcommand{\LTL}{\textbf{LTL}[\textbf{P}]}
\newcommand{\PFO}{\textbf{PFO}^2[<]}

\renewcommand{\S}{|\Sigma|}
\newcommand{\Pa}{\mathbf{P}}
\newcommand{\Past}{\Pa}
\newcommand{\SigmaStar}{\Sigma^{*}}
\newcommand{\Sub}{\operatorname{Sub}}
\newcommand{\PastSub}{\Sub_{\Past}}
\newcommand{\prof}{\operatorname{prof}}
\newcommand{\FPT}{\mathcal{FPT}}

\newcommand{\edits}[1]{#1}
\newcommand{\editsMiH}[1]{#1}

\title{Quantitative Generalization Bounds for Multi-Layer Transformers: A Few Long Strings can Beat Many Short Ones} %
\iclrfinalcopy

\author{%
  Marius Henry \textsuperscript{1,3}
  \And
  Michael Rizvi-Martel \textsuperscript{2}
  \And
  Michael Hahn \textsuperscript{1}
  \AND
      \makebox[\dimexpr\textwidth-2\tabcolsep\relax][c]{%
    \normalfont
    \begin{tabular}{l}
      \textsuperscript{1} \href{https://lacoco-lab.github.io}{LaCoCo}, Saarland University, Saarland Informatics Campus \\
      \textsuperscript{2} Mila \& Université de Montréal \qquad
      \textsuperscript{3} Zuse School ELIZA 
    \end{tabular}
  }
}

\begin{document}

\maketitle

\begin{abstract}
Transformer-based language models exhibit an astonishing capacity to generalize to longer and more difficult problem instances than those seen in training.
However, theoretical understanding of this phenomenon still lags behind. 
Unlike most machine learning models, the generalisation abilities of transformers  inherently require considering two axes. On one hand, one must examine in-distribution \textit{statistical generalisation}, which concerns the ability of a model to broaden a function from training samples to novel samples of the same distribution. On the other hand, given the sequential nature of language data, one must also consider $\textit{length generalisation}$, which reflects whether a model trained on inputs of a given length can extrapolate to longer ones.
These two notions have mostly been studied separately, leaving open how they interact. In this work, we give a joint treatment of both axes for fixed-precision transformers. We \editsMiH{give} generalisation guarantees in terms of the parameter sizes of the involved models, as well as the quantization space and the alphabet size that supply the sequences. \edits{Finally, we illustrate how the requirements in the sample count and training length trade off}. In particular, we find that \textbf{increasing training length can be exponentially more efficient than increasing the sample size}.
Experiments support predictions of the theory.
\end{abstract}

\section{Introduction}
\begin{wrapfigure}[22]{r}{0.45\textwidth}
\vspace{-16pt}
\centering
\includegraphics[width=\linewidth]{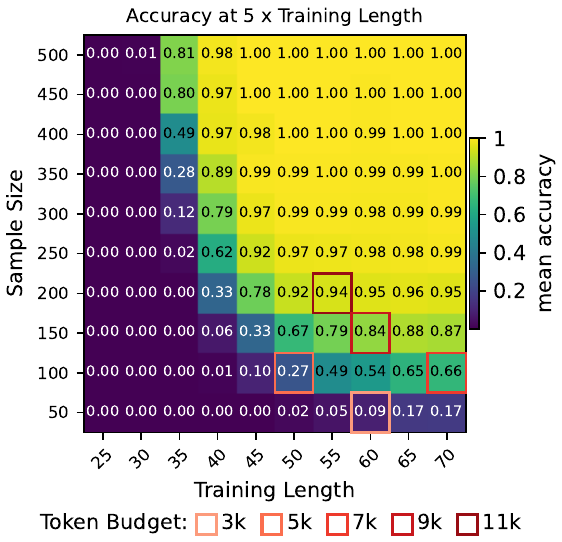}

\vspace{-4pt}
\caption{Our results predict that increasing the length of the training words can be much more token-efficient than increasing \edits{their number;} test accuracy for the language $A_{15}$ (Sec.~\ref{sec:experiments}) on words five times longer than the training length. We highlight \edits{in red} the highest-accuracy cell \edits{at five different} training token budgets.}
\end{wrapfigure}
The success of large language models (LLMs) has prompted research into the learning and generalization properties of the underlying transformer architecture.
Classical statistical learning theory provides a wealth of techniques that bound the generalization error as a function of the data size and the complexity of the hypothesis class. Recent work has brought these to bear on transformers (Covering numbers: \citet{edelman2022inductive, trauger2024sequence}, Bayesian methods: \citet{lintilhac2026sharperpicturegeneralizationtransformers,rizvi2026narrow}, VC Dimension: \citet{yang2026tight}, \emph{inter alia}).
These results primarily address the \emph{i.i.d. regime} -- for transformers, this in particular implies that the test inputs are drawn from the same length distribution as the training inputs. %

However, for transformer learning theory, the i.i.d. paradigm is arguably insufficient: it is highly desirable to understand when transformers can extrapolate to sequences \emph{longer than seen during training}. This phenomenon is referred to as \emph{length-generalization} \citep[e.g.][]{bhattamishra2020ability,anil2022exploring,kazemnejad2023impact}, and is understood to be diagnostic of when a transformer actually learns an underlying algorithmic solution, as opposed to length-dependent heuristics \citep[e.g.][]{zhou2024algorithms, huang2026discovering}. It also indicates when LLMs can be expected to perform a task reliably across long contexts \citep{jobanputra2026born}.

Length generalization is governed by two quantities: the number of training samples, and their length. Sample count determines how reliably the training data pins down the target function, as in classical statistical learning theory; sample length determines how well the model extrapolates, since the longest input length at which the target function's behavior is observed dictates to which point a model will extrapolate beyond it.
For a given target function and learning procedure, we thus need to consider not only its \textbf{sample complexity} (the required number of training samples), but also its \textbf{length complexity} (the required length of training samples).

Here, we take up the question of providing quantitative bounds on generalization \emph{along both axes} for multi-layer transformers in the fixed-precision regime.
We study the questions:
\begin{center}
\emph{How do properties of a target function influence the length complexity and data complexity of the task?}

\emph{When both sample size and length are bounded, how do they trade off? In particular, under a fixed token budget, how to best allocate sample size and length?}
\end{center}
and address these by providing generalization bounds \textbf{depending on both sample size and input length}.
The main idea is that training length determines which behaviors become observable; sample size determines how reliably those behaviors are learned. Below a task-dependent length threshold, some competing behaviors cannot be distinguished. Around the relevant transition, longer examples can make informative events much more common; increasing length can thus strongly reduce sample complexity in this regime. Beyond that transition, the benefit of additional length diminishes and additional independent examples matter more.
\section{Background}
\subsection{Transformers}\label{def:main-FPT}

We study transformers in the \textbf{fixed-precision regime}, without positional encodings, as formalized by \citet{li_cotterell2025fpt,li2026characterizing} (see also \citet{li2024chain, london2025pause}).

This class is expressive enough to accommodate key algorithmic primitives relevant to transformers, such as retrieving past symbols, checking global symbol alternation patterns, or the ``induction head'' construction. %
Fixed-precision transformers are less expressive than log- or unbounded-precision transformers; this turns out to be necessary for obtaining quantitative results, because length generalization bounds for the latter are known to be uncomputable in the worst case by results of \cite{yang2026length}.
We discuss the detailed comparison across models of transformers in Appendix~\ref{sec:app:faq}.

Our model of transformers \edits{derives from} \cite{li_cotterell2025fpt}: we consider transformers with softmax attention, fixed precision, no positional encodings and strict causal masking. However, we generalize over \cite{li_cotterell2025fpt} in viewing transformers not as binary classifiers, but as providing an output at every position.
We provide details in Appendix~\ref{sec:fpt-definition}.
All activations are computed in $\F$,  a finite subset of $\mathbb{R}$ extended with $-\infty$ and $\infty$. The result of every operation is rounded to $\F$. 

\edits{Fix a} finite alphabet $\Sigma$. A \emph{fixed-precision transformer} $T$, also just called \emph{transformer} consists of token embeddings in dimension $D$, a sequence of $L$ layers each consisting of $H$ softmax attention heads with a ReLU feedforward network, and finally a linear output projection.
Overall, it takes a word  $w \in \Sigma^*$  input and returns vectors of dimension $\nu$ at every input position: $T(w) \in \F^{\nu \times |w|}$.  We suppose that \edits{$H$ always divides $D$}, and that the dimension per head $D_h$ is uniform among the heads. \edits{We omit layer norm, as it is not essential to our results and requires extra care in fixed precision; see Appendix~\ref{sec:app:normalization-constraints} for more details.} Assuming that $\Sigma, \F$ and the output dimension $\nu$ are \edits{shared}, we refer to these transformers as $\mathcal{FPT}(L, D, H)$.
See Appendix~\ref{sec:fpt-definition} for details.

\subsection{Learning framework}

We follow prior sample complexity bounds in studying an idealized learning model that abstracts away from the algorithmic process of learning.
Capacity-based learning bounds \citep{edelman2022inductive, trauger2024sequence, yang2026tight} apply to \textbf{Empirical Risk Minimization} (ERM), which finds parameters minimizing training error under some constraint on the hypothesis class.
We contrast it with the \textbf{Guess \& Check algorithm} \citep[G\&C, ][]{chiang2023loss, buzaglo_narrow_teachers_2024}, which samples parameter settings from the parameter space until a setting interpolating the training set is found.
Guess \& Check expands over ERM by modeling the inductive bias of SGD-like optimizers towards solutions occupying large volumes in parameter space \citep{chiang2023loss,buzaglo_narrow_teachers_2024}.
This is an important difference over bounds based on ERM  \citep{edelman2022inductive,yang2026tight, trauger2024sequence}, as it (i) allows us to explicitly take into account the effect of both the hypothesis class and the ground-truth function, (ii) while bounds depend more mildly on the hypothesis model's size than capacity-based bounds. 
This enables stronger results when the target function can be represented by small models, and we find it to be borne out empirically in experiments.
G\&C is closely related to a Bayesian interpretation of learning, as it samples from a posterior over the parameter space conditional on the training set \citep[e.g.][]{haussler1994bounds}.
We throughout work in the realizable regime, where the training data can be exactly fitted by an element of the hypothesis class. Here, ERM and G\&C can be defined as follows:
\begin{definition}

Let a hypothesis class $\mathcal{H} = \mathcal{FPT}(L_s, D_s, H_s)$ (where ``$\dots_s$'' stands for ``student''), and a training set $(X^{(n)}, Y^{(n)})_{n=1,\dots,N}$, be given.

    The \textbf{ERM Learner} selects a transformer $T_s \in \mathcal{H}$ such that $T_s$ perfectly fits the training set: $T_s(X^{(n)}) \equiv Y^{(n)}\ \forall (X^{(n)}, Y^{(n)})_{n=1,\dots,N}$.
    The \textbf{Guess \& Check Learner} returns a sample from the posterior
    \begin{equation}
        P\left(T_s\ |\ T_s(X^{(n)}) \equiv Y^{(n)}\ \forall (X^{(n)}, Y^{(n)})_{n=1,\dots,N}\right)
    \end{equation}
    where $P(T_s)$ is the uniform distribution over $\mathcal{H}$.
\end{definition}

Intuitively, G\&C adds an inductive bias towards hypotheses represented by a larger volume in parameter space.
See Appendix~\ref{sec:app:faq} for discussion beyond the interpolating regime.

\begin{remark}
\label{remark:learning-assumptions}
    Throughout this paper, we consider the student--teacher learning framework between two transformers $T_s$ and $T_t$, with $L_s \geq L_t, D_s \geq D_t$, $H_s \geq H_t$ \edits{and $D_{h,s} \geq D_{h,t}$} for realizability. %
\end{remark}

\section{Length Generalization}

We first derive length generalization bounds when unbounded data samples are available (Section~\ref{sec:len-gen-only}); and
statistical generalization bounds at a fixed length (Section~\ref{sec:statistical-gen-only}).
We then derive joint generalization bounds~(Section \ref{sec:joint-generalization}).

\subsection{Deterministic Length Generalization}\label{sec:len-gen-only}

Our first result states that, if two fixed-precision transformers exactly agree up to a certain length bound $C$, they must agree everywhere. We in fact obtain two distinct bounds with different rates:

\begin{highlightbox}
\begin{lemma}[Deterministic Length Generalization]
\label{thm:deterministic-length-generalization}
\label{thm:equal-short-then-long}
Let $r_\F$ be a well chosen constant and let $T_s \in \mathcal{FPT}(L_s,D_s,H_s)$ and $T_t \in \mathcal{FPT}(L_t,D_t,H_t)$ be fixed-precision transformers (\ref{def:main-FPT}). 
Set
\begin{equation}\label{eq:chain-bound-in-long-short-lemma}
  C_0 \coloneqq r_\F \left[L_sD_s |\F|^{D_{h,s}+1}
                 + L_tD_t |\F|^{D_{h,t}+1}\right]+1, \qquad C_1 \coloneqq \binom{r_\F L_s+\S}{\S}
\end{equation}
and let $C \coloneqq \min(C_0, C_1)$. Then, if $T_s$ and $T_t$ agree on every word in $\Sigma^{\leq C}$, they are identical on every word in $\Sigma^*$:
\begin{equation}\label{eq:from-short-to-long}
    T_s|_{\leq C} \equiv T_t|_{\leq C} \quad \implies \quad T_s \equiv T_t.
\end{equation}
\end{lemma}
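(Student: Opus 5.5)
The plan is to find a finite, monotonically evolving \emph{summary} of a prefix that determines every activation of a fixed-precision transformer, and then shorten words by deleting positions that do not change this summary.

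\textbf{Reductions.} Because of strict causal masking and the absence of positional encodings, the output of $T$ at position $i$ of $w$ depends only on the prefix $w_{1..i}$. So $T_s \equiv T_t$ on $\Sigma^*$ as soon as the two transformers agree at the \emph{last} position of every word. It therefore suffices to show that every word $w$ can be mapped to a word $w'$ of length $\leq C$ such that, for both $T_s$ and $T_t$ simultaneously, the last-position output on $w'$ equals that on $w$.

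\textbf{Saturation lemma.} This is the step I expect to be the main obstacle. I would prove a fixed-precision saturation property of softmax attention; here $r_\F$ is the constant from the statement, and the claim is part of what is to be proved.
\begin{itemize}
\item In $\F$, the rounded exponentiated scores and the rounded running sums of weights and of weighted values can only change a bounded number of times.
\item Consequently, adding a key--value pair whose type has already occurred at least $r_\F$ times leaves the rounded attention output unchanged, whatever the query.
\end{itemize}
Suppose this holds. Then the hidden vector at position $i$ in layer $\ell$ is a function of the current token and, for each head, of the vector of occurrence counts of each key/value type among earlier positions, capped at $r_\F$. A key/value type for a head is determined by at most $|\F|^{D_h+1}$ possibilities.

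I would prove the lemma by showing that the fixed-precision denominator and numerator saturate: once enough mass of a given type has accumulated, one more equal summand is absorbed by rounding. I would first try to make this argument go through for the left-to-right accumulation order, and then check that the outcome does not depend on the order of summation.

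\textbf{Chain argument for $C_0$.} For each word $w$, consider the joint summary $S(w)$: the collection of all capped count vectors, over all layers and heads of both $T_s$ and $T_t$. Reading $w$ left to right, $S$ only grows, coordinatewise. Each coordinate can increase at most $r_\F$ times. The number of coordinates is at most $L_sD_s|\F|^{D_{h,s}+1}+L_tD_t|\F|^{D_{h,t}+1}$, summing over layers and heads and absorbing the factor $H\cdot D_h = D$. Hence at most $C_0-1$ positions change $S$.

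Now delete every non-final position $j$ at which $S$ does not change. I would argue by induction over layers and positions that this leaves every later hidden vector unchanged. At layer 1, the deleted position contributed a type whose capped count was already saturated. Hence all layer-1 activations at retained positions are unchanged, and the same argument repeats at higher layers. The resulting word has the same final output and length $\leq C_0$, the $+1$ accounting for the final token.

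\textbf{Bound $C_1$.} By induction on layers, the summary at layer $\ell$ is determined by the symbol counts of the prefix capped at $r_\F \ell$. At layer $\ell$, a type's count saturates once the underlying symbol counts are saturated at depth $\ell-1$, with an additional $r_\F$ of slack. So the state of either transformer depends only on the capped Parikh vector at threshold $r_\F L_s$; for $T_t$ the threshold $r_\F L_t$ is no larger. Along a word this vector moves monotonically through a poset with $\binom{r_\F L_s+\S}{\S}$ elements, so the same deletion argument yields an equivalent word of length $\leq C_1$. Taking the better of the two constructions gives $C=\min(C_0,C_1)$ and completes the proof.
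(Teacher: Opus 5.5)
Your $C_1$ argument rests on a false claim: the layerwise summaries are \emph{not} determined by the capped Parikh vector of the prefix. The layer-$1$ type at position $i$ depends on $w_i$ \emph{and} on the capped layer-$0$ counts over $w_{<i}$. So the layer-$1$ counts already record which letters occurred after which earlier content, i.e.\ order. Concretely, take a transformer computing $\Past(\pi_b\wedge\Past\pi_a)$, which is an $\LTL$ formula and hence expressible. It outputs $1$ at the last position of $aba$ but $0$ on $baa$, although the prefixes $ab$ and $ba$ have the same Parikh vector and the same next letter. Your deletion step fails the same way: for $M>r_\F L_s$ it sends $b^{M}aba$ to $b^{r_\F L_s}aa$, because the $b$ following the $a$ exceeds the cap and is deleted. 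This flips the final output from $1$ to $0$. The count does not match either. Capped Parikh vectors form $\{0,\dots,r_\F L_s\}^{|\Sigma|}$, with $(r_\F L_s+1)^{|\Sigma|}$ elements and chains of length only $|\Sigma|\,r_\F L_s$; had your claim held, it would give a bound far below $C_1$. Since $C=\min(C_0,C_1)$, your proof leaves the lemma unproved whenever $C_1<C_0$.

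The missing ingredient is the paper's nested temporal profile $U_0=\star$, $U_{k+1}(w)=\{(w_i,U_k(w_{<i})):1\le i\le|w|\}$.
\begin{itemize}
\item It keeps exactly this order information and updates as $U_k(ua)=U_k(u)\cup\{(a,U_{k-1}(u))\}$.
\item Together with the current letter, it determines every formula of past depth at most $k$.
\item Counting to $r_\F$ nests at most $r_\F$ past operators per layer, so $k=r_\F L_s$ suffices for both transformers.
\end{itemize}
Bounding the number of \emph{changes} of $U_k$ by $\binom{k+|\Sigma|}{|\Sigma|}-1$ is then a genuine lemma, not a count of poset elements. Define next-change ranks $r_a(u)=\min\{j:U_j(ua)\neq U_j(u)\}$. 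Appending $a$ of rank $t$ sets $r_a$ to $t+1$ and every other $r_b$ to $\min(r_b,t+1)$. Hence the sorted rank vector, capped at $k+1$, increases lexicographically among the $\binom{k+|\Sigma|}{|\Sigma|}$ nondecreasing tuples in $\{1,\dots,k+1\}^{|\Sigma|}$. With $U_{r_\F L_s}$ in place of the Parikh vector, your deletion step goes through unchanged.

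Your $C_0$ half is correct and takes a genuinely different route from the paper. The paper compiles both transformers to $\LTL$, counts distinct past subformulas, uses the $\mathcal R$-trivial characterization to get the reduced poDFA $\mathcal{A}_\Delta$, and takes a shortest accepted word. You instead use the joint capped-count vector as the state of an explicit, non-reduced ordered automaton and delete all self-loop letters at once. This is valid because the first $r_\F$ occurrences of every type at every layer change $S$ and are retained. Your route is more elementary; the paper's yields the absorbing-disagreement poDFA it reuses in the probabilistic results. Two caveats apply.
\begin{itemize}
\item The saturation property cannot be proved from left-to-right rounding: swamping makes running sums order-dependent, and the claim can fail. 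Take it instead from the paper's attention arithmetic assumptions: sums depend only on the multiset of summands, more than $K_\F$ copies of a positive finite summand overflow to $+\infty$ and zero the head, and $r_\F=K_\F+1$.
\item Types must be (key vector, value coordinate, scalar value), giving $D_h|\F|^{D_h+1}$ per head. You also need to note that each output coordinate and the denominator are determined by these per-coordinate counts. Only then does the factor $D$ in $C_0$ appear.
\end{itemize}
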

\end{highlightbox}

This statement yields a first length generalization guarantee, in the setting where unbounded data samples are available: In this case, if a learned model $T_s$ exactly fits data representing a target function expressed by $T_t$ up to length $C$, then we know $T_s \equiv T_t$ even on longer length strings, i.e., length generalization occurs.

See Appendix \ref{app:length-generalization-proof} for the formal proof; here, we provide an informal discussion to build intuition.
Given a teacher transformer $T_t$ implementing the target function and a student $T_s$ from the hypothesis class, we introduce the \textit{difference function}, which checks for any given word if the output vectors of both transformers match at every position. 
Previously, \cite{li_cotterell2025fpt} showed that fixed-precision transformers with a binary classifying output layer are expressively equivalent to $\LTL$, a fragment of linear temporal logic, and further can be represented as a \textit{partially-ordered deterministic finite automaton} (poDFA) (Figure~\ref{fig:podfas}). A poDFA is a DFA where the reachability relation defines a partial order on the state set; intuitively, such automata have no nontrivial loops. Intuitively, the state space describes the dynamic of the transformer's KV-cache; as soon as some activation is added to the KV-cache, it cannot be removed anymore, giving rise to the partially ordered loop-free structure.

Building on this, we express the difference function \edits{between the student and the teacher} as a logic formula in $\LTL$, \edits{from which we obtain a poDFA $\mathcal{A}_\Delta$}. The student and the teacher transformers produce a different output somewhere on a word $w_1 \dots, w_n$ if and only if $\mathcal{A}_\Delta$, run on the word, ends up in the unique \edits{absorbing disagreement state} $\bot$ \edits{(Figure~\ref{fig:difference-podfa})}. As a supplement, we mention in Appendix~\ref{sec:logic}, \ref{app:length-generalization-proof} how transformers exist as ordered Mealy machines (Def. \ref{def:mealy-machine}), providing a similar underlying ordered structure on individual transformers, even outside of the learning framework.

We then upper-bound the \emph{height} of $\mathcal{A}_\Delta$ -- i.e., the length of its longest transition chain. The key idea here is that observing behaviour on strings up to roughly this length should be important for length generalization, because strings of this length should be minimally required to be able to observe the whole state space of $\mathcal{A}_\Delta$ -- and thus the whole range of behaviour expressed by the student and teacher models.
We obtain $C_0, C_1$ as two upper bounds on the height of $\mathcal{A}_\Delta$ (eq. \ref{eq:chain-bound-in-long-short-lemma}); whichever bound is smaller provides a tighter bound in a given setting, though neither dominates the other globally. Notably, we show some kind of tightness in $L$ and $\F$ of our bounds in Appendix \ref{app:ExamplesTightness}, where we prove that any upper bound on the height must depends at least linearly on $|\F|$ and $L$.

Now, if $T_s$ and $T_t$ agree up to length $C = \min\{C_0, C_1\}$, $\mathcal{A}_\Delta$ cannot visit a $\bot$ state on any word of this length. But given that $C$ upper-bounds the height of the automaton, the $\bot$ state cannot be reachable even by longer inputs, and $T_s \equiv T_t$ across $\Sigma^*$.

The above entails a first length-generalization bound in the limiting case where all data up to some length is available; it is an illustrative first special case of our general result. 

\begin{figure}[htbp]
  \centering
  \captionsetup{skip=5pt}
  \captionsetup[subfigure]{position=top,skip=0pt}
  \begin{subfigbox}
    \begin{tikzpicture}[x=\linewidth,y=-\linewidth,
      every node/.style={inner sep=0pt,outer sep=0pt}]
      \path[use as bounding box] (0,0) rectangle (1,0.412);
      \node[anchor=north west,blend mode=multiply] at (0.0125,0.049)
        {\includegraphics[width=0.095\linewidth]{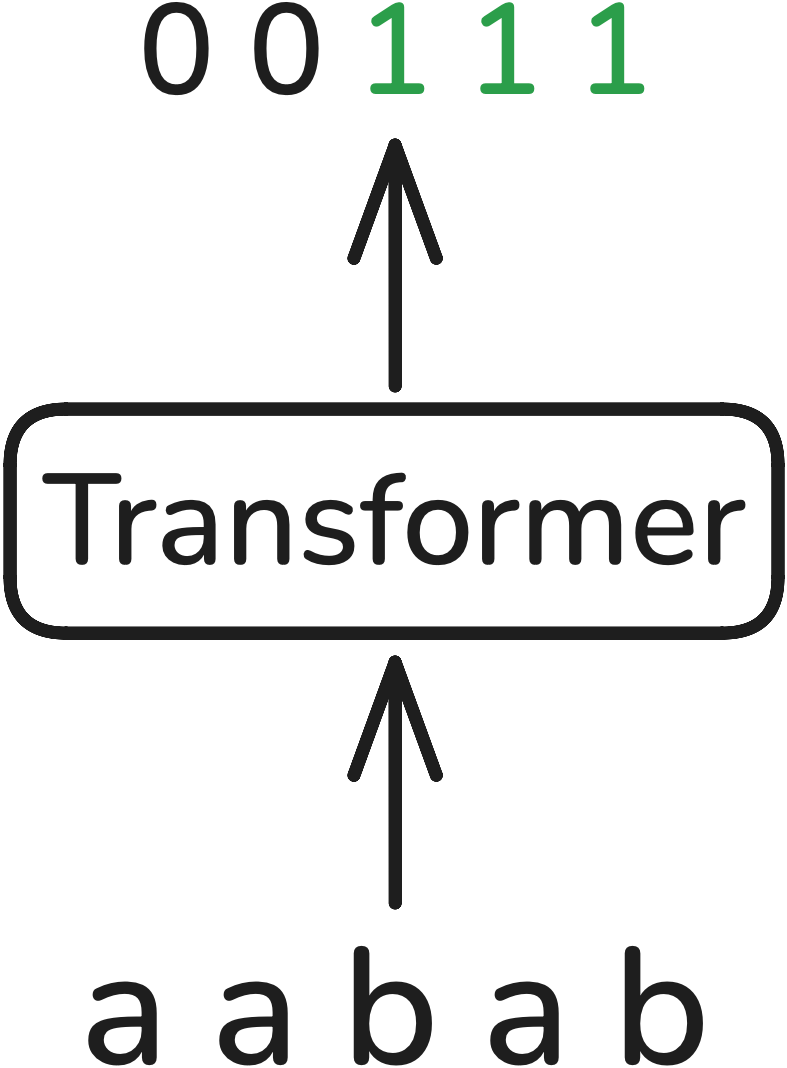}};
      \node[anchor=north west,blend mode=multiply] at (0.1964,0.030)
        {\includegraphics[width=0.3372\linewidth]{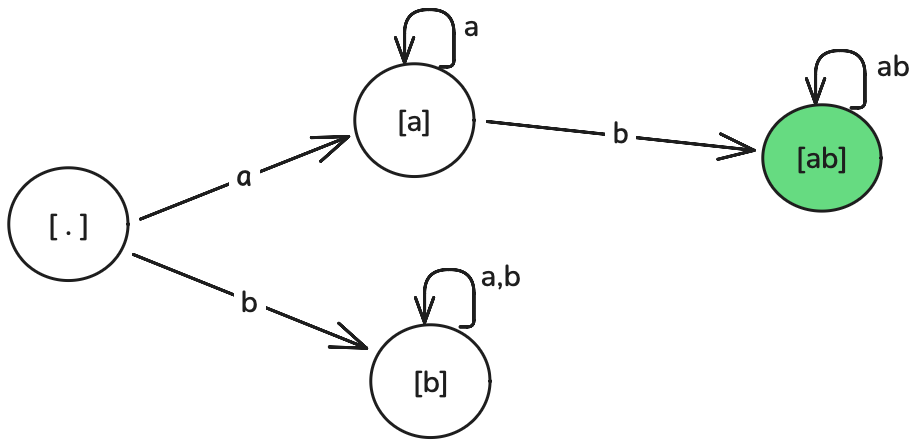}};
      \node[anchor=north west,blend mode=multiply] at (0.0125,0.230)
        {\includegraphics[width=0.095\linewidth]{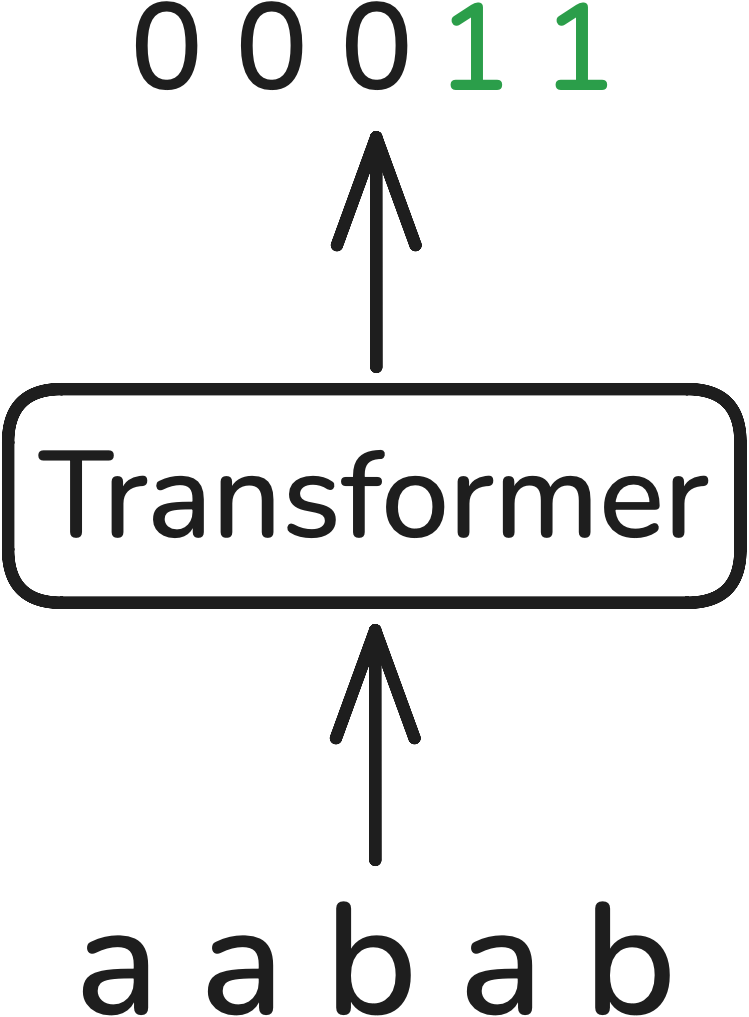}};
      \node[anchor=north west,blend mode=multiply] at (0.135,0.209)
        {\includegraphics[width=0.460\linewidth]{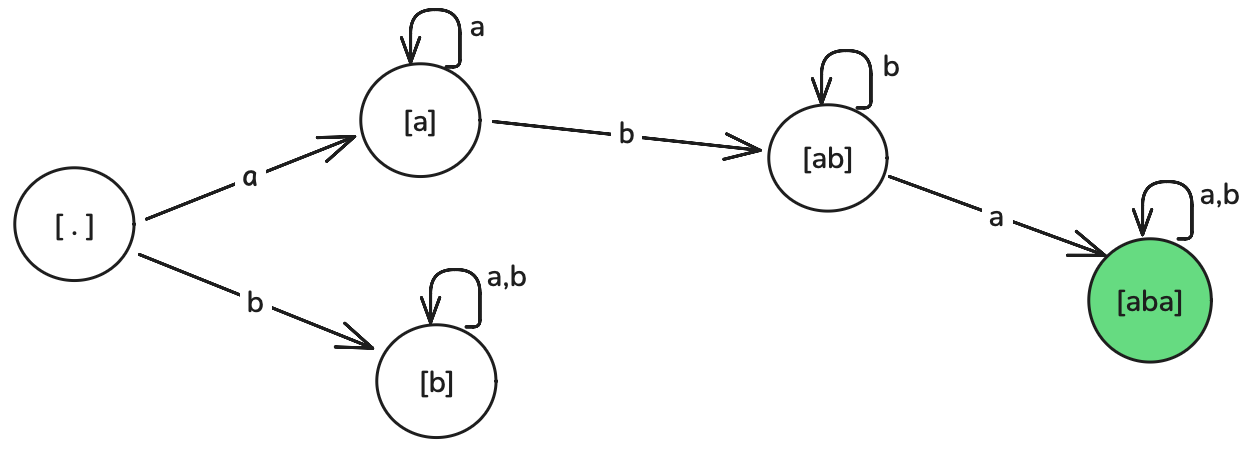}};
      \node[anchor=north west,blend mode=multiply] at (0.550,0.088)
        {\includegraphics[width=0.450\linewidth]{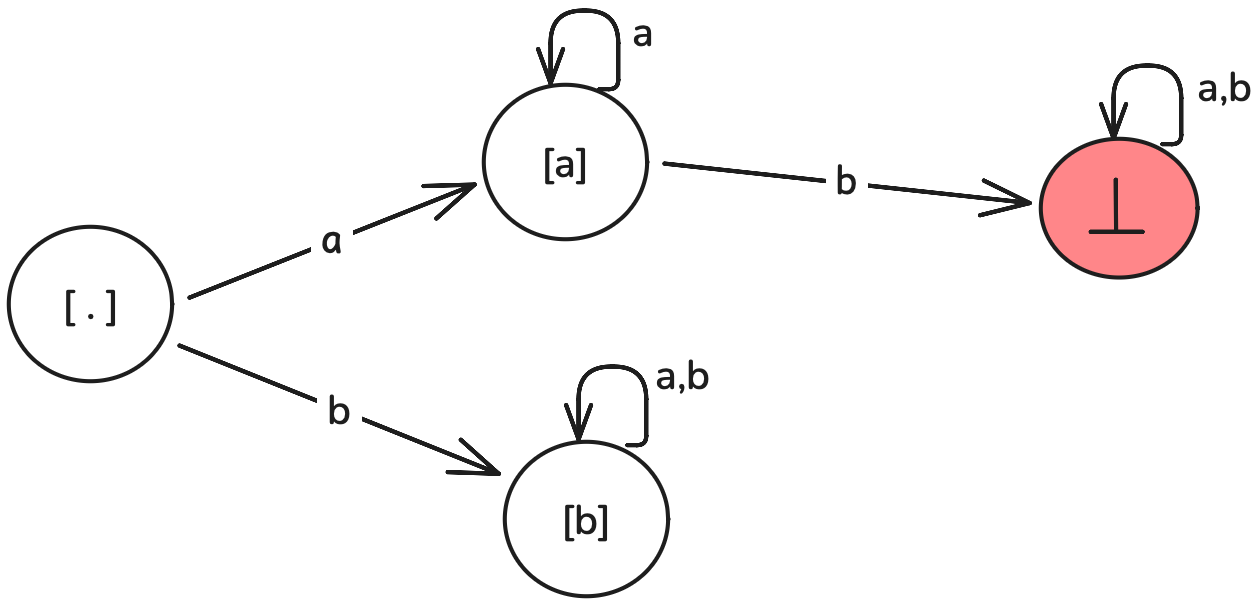}};
      \draw[gray!60,line width=0.6pt] (0.022,0.202) -- (0.1,0.202);
      \draw[gray!60,line width=0.6pt] (0.185,0.202) -- (0.510,0.202);
      \node[anchor=north] at (0.060,0.005) {\begin{subfigure}{24pt}
        \caption{}\label{fig:transformerL2}\end{subfigure}};
      \node[anchor=north] at (0.365,0.005) {\begin{subfigure}{24pt}
        \caption{}\label{fig:podfaL2}\end{subfigure}};
      \node[anchor=north] at (0.060,0.375) {\begin{subfigure}{24pt}
        \caption{}\label{fig:transformerL3}\end{subfigure}};
      \node[anchor=north] at (0.365,0.375) {\begin{subfigure}{24pt}
        \caption{}\label{fig:podfaL3}\end{subfigure}};
      \node[anchor=north] at (0.775,0.316) {\begin{subfigure}{24pt}
        \caption{}\label{fig:difference-podfa}\end{subfigure}};
    \end{tikzpicture}
  \end{subfigbox}
     \caption{\edits{Examples of poDFAs; the vertices are the states and the arrows are the transition functions $\delta$. Figure (a) represents the transformer that outputs ``1'' as soon as the input word has entered the language $L_2 \coloneqq a\Sigma^*b\Sigma^*$ with (b) being the associated poDFA. Similarly, (c) depicts the transformer that recognizes the language $L_3 \coloneqq a\Sigma^*b\Sigma^*a\Sigma^*$ and (d) is its corresponding poDFA. Figure (e) is the poDFA $\mathcal{A}_\Delta$ of the difference function between these two transformers. Any disagreement sends the run to the absorbing final state $\bot$.}}
  \label{fig:podfas}
  \end{figure}

\subsection{Extension to the probabilistic framework}
We next ask for a similar length generalization guarantee when $T_s$ and $T_t$ are equal only on \emph{most} strings up to length $K$. 
Setting the data distribution $\mathcal{D}_{K} := \operatorname{Unif}(\Sigma^{\leq K})$ and using the bound $C$ from Lemma~\ref{thm:deterministic-length-generalization}, we indeed find a probabilistic extension of the guarantee:

\begin{highlightbox}
\begin{lemma}[Probabilistic Length Generalization]
\label{lem:absorption-tail}
Let \edits{$0 < \eps < 1$, $\S \geq 2$} and consider the training words of length up to $K$, with
\begin{equation}\label{eq:absorption-tail}
  K\geq \S\left(
    C+\log\frac1\eps+
    \sqrt{\log^2\frac1\eps+2C\log\frac1\eps}
      \right) \sim \mathcal{O}\big(\S(C+\log\frac{1}{\eps})\big).
\end{equation} 
Then
\[
\operatorname{Pr}_{\mathcal{D}_{K}} \left[ T_s(w) \neq T_t(w) \right] \leq \eps \quad \implies \quad \forall M\geq K,\ \operatorname{Pr}_{\mathcal{D}_{M}}\left[T_s(w) \neq T_t(w)\right] \edits{< 3\eps}.
\]
\end{lemma}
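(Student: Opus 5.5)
We work with the difference poDFA $\mathcal{A}_\Delta$ from the proof of Lemma~\ref{thm:deterministic-length-generalization}: every run has at most $C$ non-self-loop transitions before it stabilises, and $T_s(w)\neq T_t(w)$ iff the run on $w$ ends in $\bot$. Since $\bot$ is absorbing, the set of disagreeing words is closed under extension. The idea is to show that the disagreement probability under $\mathcal{D}_M$ is controlled by the probability of reaching $\bot$ within a bounded number of steps, and that $K$ is large enough for this event to be already visible under $\mathcal{D}_K$.

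\textbf{Step 1 (stabilisation of a random run).} Consider a uniformly random infinite word $w_1w_2\dots$. In a poDFA, each non-sink state $q$ has a self-loop on some letters and at least one letter that leaves $q$. Hence at each step the run leaves its current non-terminal state with probability at least $1/\S$, until it reaches a state that loops on every letter. Since there are at most $C$ proper transitions on any chain, the time $\tau$ to reach a terminal (fully self-looping) state is stochastically dominated by a sum of $C$ independent geometric variables with parameter $1/\S$. A Chernoff bound for this negative binomial tail then gives
\[
\Pr[\tau > m]\le \eps \quad\text{for } m \ge \S\bigl(C+\log\tfrac1\eps+\sqrt{\log^2\tfrac1\eps+2C\log\tfrac1\eps}\bigr).
\]
The square root is exactly what one gets by solving the quadratic arising from the multiplicative Chernoff bound $\exp(-\delta^2\mu/(2+\delta))$ with $\mu=m/\S$ for the binomial count of ``exit'' successes in $m$ trials. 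This matches \eqref{eq:absorption-tail}, which suggests this is the intended route.

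\textbf{Step 2 (comparing lengths).} Let $p(n)$ be the disagreement probability for uniform words of length exactly $n$. Then $p(n)=\Pr[\text{run reaches }\bot\text{ by time } n]$ is nondecreasing in $n$, by absorption. For $n\ge K$, write $p(n)\le p_{\text{term}}(n)+\Pr[\tau>K]$, where the first term accounts for runs that reach a terminal state by time $K$. If a run reaches a terminal state $q\ne\bot$ by time $K$, it stays there, so it never disagrees. Therefore $p(n)\le p(K)+\Pr[\tau>K]\le p(K)+\eps$.

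\textbf{Step 3 (relating $p(K)$ to the hypothesis).} Under $\mathcal{D}_K=\mathrm{Unif}(\Sigma^{\le K})$, the mass on length exactly $K$ is at least $(\S-1)/\S\ge 1/2$, and the mass on lengths $K-j$ decays geometrically. We need $p(K)\lesssim 2\eps$ from $\Pr_{\mathcal{D}_K}[\neq]\le\eps$. Directly this gives $p(K)\le \eps\cdot \S/(\S-1)\le 2\eps$. Then for $M\ge K$, $\mathcal{D}_M$ is a mixture over lengths $n\le M$. For $n\le K$ we have $p(n)\le p(K)$ by monotonicity, and for $n>K$ Step 2 gives $p(n)\le p(K)+\eps$. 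Hence $\Pr_{\mathcal{D}_M}\le 2\eps+\eps=3\eps$. Strictness should come from the Chernoff bound being strict, or from $\S/(\S-1)<2$ when $\S>2$.

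\textbf{Main obstacle.} Step 1 is the delicate part. We must justify that every non-terminal state of $\mathcal{A}_\Delta$ has an exiting letter with probability at least $1/\S$, which is immediate, and that dominance by independent geometrics holds despite state-dependent exit sets. This needs a coupling in which each step succeeds independently with probability exactly $1/\S$ via a designated exit letter. We then solve the Chernoff inequality to recover the stated constant. A secondary subtlety is that the factor $2$ in Step 3 is needed to land on $3\eps$ rather than $2\eps$.
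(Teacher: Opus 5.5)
Your proposal is correct and follows essentially the same route as the paper. The argument has three parts. First, couple each step with an independent Bernoulli$(1/|\Sigma|)$ success via a designated exit letter, so that absorption is dominated by $\mathrm{Bin}(K,1/|\Sigma|)$, and apply Chernoff. Second, use $p_m \le p_K + \Pr[\text{transient at time } K]$ with $p_K \le e_K/\lambda_K$. Third, average over lengths.

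Two small fixes are needed:
\begin{itemize}
\item The stated cutoff comes from the lower-tail bound $\Pr[Y\le(1-\eta)\mu]\le \exp(-\eta^2\mu/2)$ with $\eta = 1-C/\mu$. The form $\exp(-\delta^2\mu/(2+\delta))$ you cite is the upper-tail bound and yields a different root.
\item Strictness holds even for $|\Sigma|=2$, because $\lambda_K > (|\Sigma|-1)/|\Sigma|$ strictly. This gives $p_K < \tfrac{|\Sigma|}{|\Sigma|-1}\epsilon \le 2\epsilon$.
\end{itemize}
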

\end{highlightbox}

See Appendix \ref{sec:probabilistic-LG} for the formal proof. Intuitively, this holds for two reasons: (i) the poDFA $\mathcal{A}_\Delta$ ultimately reaches a sink \textit{absorbing} state; (ii) the input distribution $\mathcal{D}_{K}$ is uniform and so are the letters: this implies that for any non-absorbing state, the probability to leave the state by reading the next letter is at least of $1/\S$.
Therefore, after a big enough number of steps $K$, $\mathcal{A}_\Delta$ will be likely to live in an absorbing state. In particular, since the probability of a transition is at least $1/\S$, the overall probability of a word to reach an absorbing state after $K$ steps dominates the binomial law $Y\sim\operatorname{Bin}(K,1/\S)$. The lemma follows from the application of a Chernoff bound. %

\section{Joining Length and Statistical Generalization}
\subsection{Data complexity}\label{sec:statistical-gen-only}

We lift the deterministic statement to a statistical generalization statement. The key here is to combine Lemma~\ref{thm:deterministic-length-generalization} with statistical in-distribution generalization on the strings up to length $K$:

\begin{highlightbox}
\begin{proposition}[Statistical Generalization for ERM and G\&C]\label{thm:data-complexity}
Fix $\eps \in (0, 1)$ and $\delta \in (0, \tfrac{1}{5})$. 
Assume the target function is represented by a teacher $T_t \in \mathcal{FPT}(L_t, D_t, H_t)$, and the learner (G\&C or ERM) selects a student $T_s$ from the hypothesis class $\mathcal{FPT}(L_s, D_s, H_s)$. %
Let $S$ be an i.i.d. sample of $N$ strings from $\mathcal{D}_K$.
\begin{enumerate}
\item Let $T_s$ be obtained by the G\&C learner on $S$. If
\[
N \geq \frac{1}{\eps}\big(B \log|\F| + 3 \log \frac{2}{\delta}\big),
\qquad
B \sim \mathcal{O}\big( L_t D_t^2 + D_s D_t \edits{(L_s + L_t)} + |\Sigma|D_s\big),
\]
then, with probability at least $1-\delta$ over $S$ and the draw of $T_s$, 
\[
\Pr_{w \sim \mathcal{D}_K}\big[T_s(w) \neq T_t(w)\big] < \eps .
\]
\item Let $T_s$ be obtained by ERM on $S$. If
\[
N \geq \frac{1}{\eps}\big(B_{ERM} \log|\F| + \log(1/\delta)\big),
\qquad
B_{ERM} \sim \mathcal{O}\big( L_s D_s^2 \edits{+  |\Sigma|D_s}\big),
\]
then, with probability at least $1-\delta$ over $S$,
\[
\Pr_{w \sim \mathcal{D}_K}\big[T_s(w) \neq T_t(w)\big] < \eps .
\]
\end{enumerate}
\end{proposition}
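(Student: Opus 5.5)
We handle the two learners separately; both rest on the finiteness of the hypothesis class, since all parameters live in $\F$.

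\textbf{ERM.} We use the standard realizable-case bound for a finite class. First we count parameters. Token embeddings contribute $|\Sigma| D_s$. Each layer contributes $\mathcal{O}(D_s^2)$ for the query, key, value and output projections across the $H_s$ heads (their total width is $D_s$), plus the feedforward weights, which are $\mathcal{O}(D_s^2)$ once the hidden width is taken proportional to $D_s$. The output projection adds $\nu D_s$. Summing gives $B_{ERM} = \mathcal{O}(L_s D_s^2 + |\Sigma| D_s)$ and $|\mathcal{H}| \le |\F|^{B_{ERM}}$.

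Call a hypothesis $h$ \emph{bad} if $\Pr_{\mathcal{D}_K}[h(w)\neq T_t(w)] \ge \eps$. A fixed bad $h$ fits all $N$ i.i.d. samples with probability at most $(1-\eps)^N \le e^{-\eps N}$. A union bound over $\mathcal{H}$ shows that some bad hypothesis is consistent with probability at most $|\F|^{B_{ERM}} e^{-\eps N}$. This is at most $\delta$ under the stated condition on $N$. Any consistent $T_s$ returned by ERM is therefore good.

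\textbf{G\&C.} Here we want a bound depending on the teacher's size rather than on $|\mathcal{H}|$. The plan is a PAC-Bayes/Occam-type argument. Let $G$ be the set of student parameter settings that compute exactly the teacher's function $T_t$. The key step is a lower bound on the prior mass $P(G) \ge |\F|^{-B}$.

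To get it, we embed $T_t$ into the student architecture. The teacher's parameters are placed in a $D_t$-dimensional block of a subset of $L_t$ layers and $H_t$ heads. The remaining layers are made to act as identity through the residual stream by zeroing their outputs: the value/output matrices of unused heads and the second FFN layer are set to zero. The unused $D_s - D_t$ coordinates must be prevented from influencing the teacher block, which we do by zeroing the relevant cross blocks. Fixing these entries costs roughly $D_t^2$ per teacher layer, plus $D_s D_t$ per layer (over $L_s + L_t$ layers) for the cross blocks and the output/value zeroing, plus $|\Sigma| D_s$ for the embeddings. All other entries are free, which gives the stated $B$. We must also check that zeros are exactly representable in $\F$ and that rounding does not break the identity. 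Establishing this embedding and verifying that the free entries really do not affect the output is the main obstacle; in particular, free query/key entries in unused heads only alter attention patterns whose values are zeroed, and that needs checking.

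Given $P(G) \ge |\F|^{-B}$, the posterior mass of the bad set is
\[
\frac{P(\text{bad} \cap \text{consistent})}{P(\text{consistent})} \le \frac{P(\text{bad} \cap \text{consistent})}{P(G)}.
\]
For each bad $h$, $\Pr_S[h \text{ consistent}] \le e^{-\eps N}$, so $\mathbb{E}_S[P(\text{bad} \cap \text{consistent})] \le e^{-\eps N}$. Markov's inequality then bounds the probability over $S$ that this mass exceeds $e^{-\eps N}/\delta'$. On the complementary event, the posterior probability of drawing a bad $T_s$ is at most $|\F|^{B} e^{-\eps N}/\delta'$. We split $\delta$ into the Markov failure and the draw failure, taking e.g. $\delta' = \delta/2$ and requiring the posterior mass to be at most $\delta/2$. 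This requirement becomes $\eps N \ge B\log|\F| + \log(4/\delta^2)$, which the constant $3\log(2/\delta)$ covers for $\delta < 1/5$.
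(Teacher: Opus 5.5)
\textbf{Gap: the embedding behind $P(G)\ge|\F|^{-B}$.} Your ERM argument and your reduction of G\&C to a prior-mass bound $P(G)\ge|\F|^{-B}$ follow the paper's structure. The gap is the embedding that gives this bound, which you leave open as the main obstacle. That construction is exactly what makes $B$ scale like $D_sD_t$ rather than $D_s^2$, and the mechanism you sketch does not achieve it, for two reasons.
\begin{enumerate}
\item Turning each of the $L_s-L_t$ extra layers into an identity by zeroing the value (or output) matrices and the second FFN layer fixes $\Theta(D_s^2)$ entries per layer.
\item Suppose that in the teacher layers you only block the unused coordinates from influencing the teacher block, and let them carry arbitrary values. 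Then each of the $D_{h,s}-D_{h,t}$ extra query/key coordinates of a teacher head needs a whole row of $W_Q$ or $W_K$ zeroed, because $q_rk_r$ must vanish for an arbitrary residual. With $H_t=H_s$ this costs $(D_s-D_t)D_s$ per layer.
\end{enumerate}
Either way $B$ can degrade to the ERM order $L_sD_s^2$, which defeats the point of the G\&C bound.

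\textbf{The missing idea: two regimes, as in the paper.}
\begin{enumerate}
\item \emph{Teacher layers.} Through the first $L_t$ layers, keep the exact invariant that the student residual equals the teacher residual on the teacher coordinates and $\mathbf{0}$ elsewhere (zero-padded embeddings, $|\Sigma|D_s$ constraints). Every entry that multiplies an unused coordinate is then free, and you only constrain teacher-coordinate columns of the following rows:
\begin{itemize}
\item the extra rows of $W_Q$ in teacher heads, so the extra score terms vanish, and the extra rows of $W_V$;
\item $W_V$ of the unused heads;
\item the first-layer rows and biases of the extra FFN units, so they output $\mathrm{ReLU}(0)=0$ and their second-layer columns are free;
\item the unused rows of the second FFN layer on the $D_{\mathrm{ff},t}$ teacher units, together with their biases.
\end{itemize}
This is $O(D_sD_t)$ per layer.
\item \emph{Extra layers.} Use only a weak identity. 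Zero just the $D_t$ rows of $W_V$ and of the second FFN layer (plus bias) that write into teacher coordinates, which is $(D_s+D_{\mathrm{ff},s}+1)D_t$ entries. Let the unused coordinates fill with garbage, and have the output layer ignore them through $D_s-D_t$ zero columns.
\end{enumerate}
This architecture has no attention output projection, so the teacher coordinates cannot simply be the top $D_t$ ones. Teacher head $h$'s output coordinates must sit inside student head $h$'s block, which is possible because $D_{h,s}\ge D_{h,t}$ and $H_s\ge H_t$.

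\textbf{Arithmetic facts.} The construction needs two facts: inserting zeros does not change sums, and $0\cdot f=0$ for every $f\in\F$ (free entries may be $\pm\infty$). Your worry about free query/key entries in heads with zero values is settled by the convention that zero values, or an infinite denominator, give zero head output.

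\textbf{Posterior step.} Your argument here is correct and more self-contained than the paper's appeal to the lemma of Buzaglo et al., but the Markov split is unnecessary. Write $V_S$ for the students consistent with $S$. Since $P(V_S)\ge|\F|^{-B}$ deterministically, you get directly $\Pr_{S,T_s}[\mathrm{bad}]=\mathbb{E}_S\bigl[P(\mathrm{Bad}\cap V_S)/P(V_S)\bigr]\le|\F|^{B}e^{-\eps N}$. So $\eps N\ge B\log|\F|+\log(1/\delta)$ already suffices.
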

\end{highlightbox}

Note here that $\mathcal{O}(\cdot)$ absorbs constants depending on $\F$ and $\nu$. The ERM bound has a similar flavor to bounds using capacity-based arguments~\citep{edelman2022inductive, trauger2024sequence, yang2026tight} in that it scales with parameters count of the hypothesis class. The G\&C bound has a similar dependency but scales with \edits{$D_sD_t$ rather than $D_s^2$}. This provides improved generalization \edits{when a large student network learns a} low-complexity target. %

This result extends the work of \citet{rizvi2026narrow} with some improvements; we \edits{give the exact values of $B$ and $B_{ERM}$ and} discuss the proof and extensions in Appendix~\ref{sec:app:data-complexity}. 
For ERM, this is a straightforward application of Appendix B.1 in \cite{buzaglo_narrow_teachers_2024}.
For G\&C, the proof strategy (Appendix~\ref{sec:app:data-complexity}) is to show that low-complexity teachers can be embedded in many equivalent ways into the parameter space of a higher-complexity hypothesis class; this means that G\&C is intrinsically more likely to find low-complexity interpolators. This insight (\cite{buzaglo_narrow_teachers_2024}) drives the appearance of a teacher complexity term, and a milder dependence on student complexity than based on hypothesis class parameter count alone.

\subsection{Main Result: Joint Statistical and Length Generalization}\label{sec:joint-generalization}

We now combine length and statistical generalization to obtain our \textbf{main result}.
The basic intuition is that generalization occurs when (i) the length is sufficiently long so that all representative behavior relevant at long lengths are already visible, (ii) the sample size is sufficient for achieving low test error at this length; combining (i) and (ii) then entails low test error at higher lengths. The result is:

\begin{highlightbox2}
\begin{theorem}[Statistical and Length Generalization]\label{thm:main-theorem}
Let the transformers $T_s \in \mathcal{FPT}(L_s, D_s, H_s)$ student and $T_t \in \mathcal{FPT}(L_t, D_t, H_t)$ teacher satisfying \ref{remark:learning-assumptions}, $C$ as in Eq.~\ref{eq:chain-bound-in-long-short-lemma}, $\eps \in (0, 1)$ and $\delta \in (0, \tfrac{1}{5})$.

    \edits{Let} $T_s$ be given by the Guess \& Check \edits{learner}, agreeing with  $T_t$ on $N$ i.i.d. samples from $\mathcal{D}_K$, with
    \begin{equation}
        N \geq \frac{1}{\eps}\big(B \log|\F| + 3 \log \frac{2}{\delta}\big)
        \sim \mathcal{O} \left ( \frac{1}{\eps}[(L_t D_t^2 + D_s D_t \edits{(L_s + L_t) + \S D_s}) + \edits{\log \frac{1}{\delta}}] \right)
    \end{equation}    
    and
    \begin{equation}
  K\geq \S\left(
    C+\log\frac1\eps+
    \sqrt{\log^2\frac1\eps+2C\log\frac1\eps}
      \right) \sim \mathcal{O}\big(\S(C+\log\frac{1}{\eps})\big),
      \end{equation} 
    \edits{where} 
    \begin{equation}
          \edits{C \coloneqq \min(C_0, C_1),\ \quad C_0 } \sim \mathcal O \left(L_sD_s \exp(D_{h,s})
                 + L_tD_t \exp(D_{h,t}) \right),\ \quad \edits{C_1 \sim  \mathcal O \left( 2^{\S+L_s}\right).}
    \end{equation}
    \edits{Then} with probability $1 - \delta$, we have  
    \begin{equation}
        \forall M \geq K, \qquad \operatorname{Pr}_{\mathcal{D}_{M}}\left[T_s(w) \neq T_t(w)\right] \edits{< 3} \eps.
    \end{equation}
\end{theorem}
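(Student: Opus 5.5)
The theorem follows by chaining Proposition~\ref{thm:data-complexity} (part 1, G\&C) with Lemma~\ref{lem:absorption-tail}. Its hypotheses are exactly the conjunction of the hypotheses of these two results. The proof therefore has two steps, followed by a translation of the explicit constants into the stated asymptotic forms.

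\textbf{Step 1 (statistical generalization at length $K$).} Let $S$ be $N$ i.i.d. draws from $\mathcal{D}_K$ with $N \geq \frac{1}{\eps}(B\log|\F| + 3\log\frac{2}{\delta})$. The realizability conditions of Remark~\ref{remark:learning-assumptions} guarantee that the teacher embeds into $\mathcal{FPT}(L_s,D_s,H_s)$, so the G\&C posterior is well defined and nonempty. Proposition~\ref{thm:data-complexity}(1) then gives an event $E$ of probability at least $1-\delta$ over $S$ and the draw of $T_s$ on which
\[
\Pr_{w\sim\mathcal{D}_K}[T_s(w)\neq T_t(w)] < \eps .
\]
Note that the proposition is applied at the fixed length $K$ chosen in the theorem. $K$ is deterministic and does not depend on $S$, so no union bound over lengths is needed.

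\textbf{Step 2 (deterministic transfer to all longer lengths).} On $E$, fix the realized pair $(T_s,T_t)$. Lemma~\ref{lem:absorption-tail} is a statement about an arbitrary pair of fixed-precision transformers with the given architecture parameters. It requires only $K$ above the displayed threshold, where $C=\min(C_0,C_1)$ from Lemma~\ref{thm:deterministic-length-generalization} depends only on $(L_s,D_s,D_{h,s},L_t,D_t,D_{h,t},|\F|,\S)$ and not on the particular weights. Applying it with the error bound from Step~1 yields
\[
\Pr_{\mathcal{D}_M}[T_s(w)\neq T_t(w)]<3\eps \quad\text{for all } M\ge K .
\]
Because this implication holds pointwise for every pair on $E$, the conclusion holds with probability at least $1-\delta$.

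\textbf{Step 3 (asymptotic forms).} It remains to check the asymptotic simplifications.
\begin{itemize}
\item $C_0 = r_\F[L_sD_s|\F|^{D_{h,s}+1}+\dots]+1$ is $\mathcal{O}(L_sD_s\exp(D_{h,s}) + L_tD_t\exp(D_{h,t}))$, with $|\F|$ treated as a constant absorbed into the base.
\item $C_1=\binom{r_\F L_s+\S}{\S}\le 2^{r_\F L_s+\S}$, which is $\mathcal{O}(2^{\S+L_s})$ up to constants depending on $\F$.
\item $K = \mathcal{O}(\S(C+\log\frac1\eps))$ follows from $\sqrt{a^2+2ab}\le a+b$.
\item The bound on $N$ is the explicit form of $B$ from the proposition.
\end{itemize}

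The only subtle point is the strictness mismatch. Proposition~\ref{thm:data-complexity} gives strict inequality $<\eps$, while Lemma~\ref{lem:absorption-tail} assumes $\le\eps$; strict inequality is stronger, so the lemma applies. I expect no real obstacle beyond confirming that $C$ in the lemma is uniform over the hypothesis class, and this is visible from its definition.
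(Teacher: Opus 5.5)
Your proposal is essentially the paper's proof (Theorem~\ref{appendix:thm:main-theorem}). Proposition~\ref{thm:data-complexity}(1) gives error $<\eps$ on $\mathcal{D}_K$ with probability at least $1-\delta$; then, for each fixed student in that event, the weight-independent height bound $C$ on the difference poDFA together with the absorption-tail transfer yields error $<3\eps$ at every $M\geq K$ deterministically, with no union bound over $M$. One small correction: Lemma~\ref{lem:absorption-tail} also assumes $|\Sigma|\geq 2$, since its proof uses $\lambda_K>(s-1)/s$ to pass from $e_K$ to the exact-length error $p_K$. The main-text statement omits this assumption but the appendix restatement makes it explicit, so the hypotheses are not quite ``exactly the conjunction'' of the two results' hypotheses unless you add it.
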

\end{highlightbox2}

See Appendix~\ref{sec:app:full-pipeline} for the formal proof.
For ERM, an analogous result \edits{ensues} from bounding $N$ appropriately, following Proposition  \ref{thm:data-complexity}. In addition, the theorem can be adapted to alternative distributions $\widetilde{D}_K$ \edits{if they sample} enough long words; we give an example in Appendix~\ref{app:main-thm-uniform-training-sampling}.

\subsection{Tradeoff between Length and Data Complexity}
\label{sec:tradeoff}

The previous theorem provides conditions on the data complexity $N$ and the training length threshold $K$, such that low loss can be achieved. Assuming the same learning framework for student $T_s$, teacher $T_t$ \edits{and writing $s\coloneqq \S$}, we now investigate how these quantities trade off, by giving an error bound for any joint choice of $N$ and $K$:

\begin{highlightbox2}
\begin{theorem}[Joint Generalization Bound in Sample Size and Length]\label{thm:joint-bound-michaels-attempt-main-paper}
Define 
\begin{equation}
        a_C(K) \coloneqq \operatorname{Pr}\left(Bin(K,\frac{1}{s}) \geq C\right), 
        \quad \rho \coloneqq \frac{s}{s-1} \quad \text{and} \quad B_\delta \coloneqq B\log |\F| + 3 \log \frac{2}{\delta},
    \end{equation}
where $C$ is the bound given in Lemma~\ref{thm:deterministic-length-generalization} and $B$ is the sample  complexity term from Proposition~\ref{thm:data-complexity}. 

Now \editsMiH{assume} that the student is trained on N random samples from $\mathcal{D}_K$ and that $K$ is big enough so that $a_C(K) \neq 0$. Then with probability $1-\delta$, for any length $M \geq K$, the error rate between the student and the teacher is bounded as

    \begin{equation}\label{eq:multiplicative-bound}
 \epsilon \coloneqq \operatorname{Pr}_{\mathcal{D}_{M}}\left[T_s(w) \neq T_t(w)\right]\leq \frac{\rho B_\delta}{N \cdot a_C(K)}.
\end{equation}
\end{theorem}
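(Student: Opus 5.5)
Our plan has two steps.
\begin{itemize}
\item[(i)] We turn Proposition~\ref{thm:data-complexity} into an explicit error bound at the training length $K$.
\item[(ii)] We transfer that error to any $M \geq K$ using the absorbing structure of the difference poDFA $\mathcal{A}_\Delta$ from Lemma~\ref{thm:deterministic-length-generalization}, now quantitatively in $a_C(K)$ rather than through the Chernoff threshold of Lemma~\ref{lem:absorption-tail}.
\end{itemize}

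For (i), we invert the condition of Proposition~\ref{thm:data-complexity}, part 1. For any $N$, set $\eps_K \coloneqq B_\delta/N$. Then $N \geq \frac{1}{\eps_K}(B\log|\F| + 3\log\frac2\delta)$ holds trivially, so with probability $1-\delta$ we get $\Pr_{\mathcal{D}_K}[T_s(w)\neq T_t(w)] \leq B_\delta/N$. If $B_\delta/N \geq 1$, the claimed bound is trivial because $\rho/a_C(K) \geq 1$. We may therefore assume $\eps_K<1$.

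For (ii), we analyse the run of $\mathcal{A}_\Delta$ on a uniform word of length $m$. Call a state \emph{terminal} if it is absorbing. Two cases arise.
\begin{itemize}
\item The terminal state is $\bot$: every extension of the word is a disagreement.
\item The terminal state is some other absorbing (agreement) state: no extension ever disagrees.
\end{itemize}
At each non-absorbing state, the next letter leaves the state with probability at least $1/s$. Since the height of $\mathcal{A}_\Delta$ is at most $C$, a run reaches an absorbing state within at most $C$ departures. Coupling the departure indicators with i.i.d.\ Bernoulli$(1/s)$ variables shows that the probability of being absorbed after $K$ letters is at least $a_C(K)$.

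We want a lower bound on the error at length $K$ in terms of the long-length error $\eps_M \coloneqq \Pr_{\mathcal{D}_M}[\text{disagree}]$. The idea is that a disagreement at length $M$ forces the run eventually into $\bot$, and $\bot$ is reachable from the state reached at every prefix. The cleanest route I see is a converse argument.
\begin{itemize}
\item Let $q$ be the probability that a random infinite uniform letter stream is eventually absorbed in $\bot$. Absorption happens almost surely, so the error at length $M$ is at most $q$. Hence $\eps_M \leq q$ for all $M$. Strictly, $\mathcal{D}_M$ mixes lengths, but the error is monotone in length, so uniform word lengths only help.
\item The error at length $K$ is at least $\Pr[\text{absorbed in }\bot \text{ by step } K]$. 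By the strong Markov property, together with the fact that each step's exit probability is at least $1/s$ independently of which state is absorbing, this is at least $q\cdot a_C(K)$. One way to see it: condition on the eventual absorbing state; the event ``absorbed by time $K$'' still dominates $\mathrm{Bin}(K,1/s)\geq C$ conditionally. The conditioning may tilt transition probabilities, which is where the factor $\rho$ will enter.
\end{itemize}
Combining, $\eps_M \leq q \leq \rho\,\eps_K/a_C(K) \leq \rho B_\delta/(N a_C(K))$.

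The main obstacle is this conditioning step, together with the mixture over lengths in $\mathcal{D}_K$ (strings of length $<K$ get less time to absorb). For the conditioning, I would use the worst case: after conditioning on ending in $\bot$, a departure probability can degrade from $1/s$ but stays at least $1/s$ per departure among the remaining letters, except possibly a self-loop bias. The fraction $s/(s-1)=\rho$ arises naturally from the geometric length weighting of $\operatorname{Unif}(\Sigma^{\leq K})$. Words of length exactly $K$ carry mass at least $(s-1)/s$ of $\mathcal{D}_K$, so restricting the training error to length-$K$ words costs at most a factor $\rho$. This suggests simplifying the argument. First, bound the error on exact length $K$ by $\rho\eps_K$. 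Then argue with exact lengths throughout: the exact-length-$K$ error is at least $q\,a_C(K)$ by an unconditioned coupling, taking $q$ to be the minimum over reachable states of the $\bot$-reaching probability. I would check that this gives the constant $\rho$ without additional loss.
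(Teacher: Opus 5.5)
The overall plan matches the paper's proof. You invert Proposition~\ref{thm:data-complexity} to get $e_K\le B_\delta/N$. You bound every long-length error by $q=\Pr(Z=\bot)$ using monotonicity in length. You show $p_K\ge a_C(K)\,q$. Finally you pay $\rho$ to pass from $\mathcal D_K$ to exact length $K$ via $\lambda_K>(s-1)/s$. The genuine gap is the step that carries all the content, $p_K=\Pr(T\le K,\,Z=\bot)\ge a_C(K)\Pr(Z=\bot)$. You leave it open, and the reason you give for doubting it is mistaken.

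The missing observation is this. Conditioning the letter-driven chain on $Z=\bot$ is a Doob $h$-transform with $h(x)=\Pr(Z=\bot\mid\text{start at }x)$. On states with $h>0$, the conditioned chain is again Markov with kernel $P(x,y)\,h(y)/h(x)$. For a self-loop $y=x$ the factor $h(y)/h(x)$ equals $1$, so the conditioned self-loop probability equals the unconditioned one. Hence the conditioned probability of leaving any transient state is still at least $1/s$: there is no ``degradation'' and no ``self-loop bias.'' The sequential coupling of Lemma~\ref{app:lem:absorption-tail} then applies unchanged, using that the height bound forces absorption after at most $C$ strict transitions. This gives $\Pr(T\le K\mid Z=\bot)\ge a_C(K)$, which is the paper's Lemma~\ref{lem:app:conditional-absorption}. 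In particular the conditioning contributes no constant; $\rho$ comes only from $\lambda_K$.

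Your proposed ``simplification'' at the end fails.
\begin{itemize}
\item Taking $q$ to be the minimum over reachable states of the probability of reaching $\bot$ gives $q=0$ whenever the agreement sink is reachable, which is the typical case. The bound is then vacuous.
\item With that choice of $q$, you also lose the other half of the argument, $\eps_M\le q$.
\item More fundamentally, an unconditioned coupling controls only $\Pr(T\le K)$, the total absorbed mass, and not how it splits between $\bot$ and the safe sink. Absorption into the agreement state could a priori be fast while absorption into $\bot$ is slow. So $\Pr(T\le K)\ge a_C(K)$ alone does not yield $\Pr(T\le K,\,Z=\bot)\ge a_C(K)\Pr(Z=\bot)$.
\end{itemize}
You need the conditional statement for the specific absorbing state $\bot$, obtained via the $h$-transform argument above.
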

\end{highlightbox2}

See Appendix~\ref{sec:multiplicative-upper-bound} for the proof.
This value $a_C(K)$ lower-bounds the probability of reaching an absorbing state by time $K$, and $a_C(K) = 0$ when $K < C$. This theorem holds for either of the Guess \& Check or the ERM learner, setting $B$ accordingly  following Proposition~\ref{thm:data-complexity}. 
Moreover, under an appropriate definition of $a_C(K)$, the theorem even remains valid if the input strings are generated from appropriate nonuniform distributions (Appendix~\ref{sec:nonuniform-distribution}).
We further complement the upper bound (\ref{eq:multiplicative-bound})  with a lower bound: there are families of target functions and input distributions such that the expected test error on long inputs is lower-bounded by $\frac{B}{8 N a_{C}(K)}$ for an appropriate choice of $B$ (Appendix, Theorem~\ref{thm:lower-bound-on-tradeoff}).

\paragraph{Takeaway: Importance of Long Training Samples for Transformers.}
This implies the following:
First, when $K< C$, there is no \edits{assurance of} length generalization.
Second, when $K$ just slightly exceeds $C$, length generalization can be guaranteed, but only at an extraordinary sample size on the order of \edits{$ \gtrapprox s^C$, with $s$ the size of the alphabet.}
Namely, $a_C(C) = s^{-C}$, leading to a sample size requirement $N_{req}(C) = N_0 s^C$ where $N_0 = \frac{\rho B_\delta}{\epsilon}$. Intuitively, this happens because only a small fraction of the samples will already have reached an absorbing state; most samples will thus not be informative about the target function's behavior on long strings.

Third, increasing $K$ by a constant can reduce the required sample size by a multiplicative factor.
Specifically, increasing length from $C$ to $sC$ reduces the required sample
size by a factor of at least $s^C/2$; \edits{a linear increase of $K$  permits an exponential saving on $N$}. %

Given a fixed training-token budget $Budget=NK$, the above considerations allow us to determine an optimal allocation to $N$ and $K$; we show in Appendix~\ref{app:explicit-choice-tradeoff} that, for $C \geq 2$, the following closed form achieves an error bound within a factor $1+\mathcal O(\sqrt{\log C/C})$ of the optimum:
\[
  \widehat K
  \coloneqq \left\lceil |\Sigma|\left(
    C+\log C+\sqrt{\log^2 C+2C\log C}
  \right)\right\rceil,
\]
\edits{provided $Budget \geq \widehat K$}. Intuitively, this means that close-to-optimal budget allocation can be achieved at around $K \sim |\Sigma| C$, exponentially more efficient \edits{than} at $K \sim C$.

\begin{wrapfigure}[19]{r}{0.50\textwidth}
\vspace{-14pt}
\centering
\begin{tikzpicture}
\begin{axis}[
    width=\linewidth,
    height=6cm,
    axis lines=left,
    xlabel={$K$},
    ylabel={$N$},
    xmin=0, xmax=11,
    ymin=0, ymax=42,
    xtick={1.7,6.3},
    xticklabels={$C$,$sC$},
    ytick={1,10,20,40},
    yticklabels={$N_0$,
                 $s^C N_0/4$,$s^C N_0/2$,$s^C N_0$},
    tick label style={font=\scriptsize},
    label style={font=\small},
    ylabel style={xshift=-15pt},
    xlabel style={yshift=3pt},
    legend style={draw=none,fill=none,font=\scriptsize,
                  at={(0.97,0.97)},anchor=north east},
    samples=200,
    domain=1.7:10.7,
    clip=false,
]
    \draw[densely dashed,black!40]
        (axis cs:1.7,0) -- (axis cs:1.7,41.5);
    \draw[densely dashed,black!40]
        (axis cs:6.3,0) -- (axis cs:6.3,2.1);

    \addplot[very thick,blue!75!black] {1+39*exp(-0.90*(x-1.7))};
    \addlegendentry{$\eps_\infty$}
    \addplot[very thick,orange!90!black] {0.5+19.5*exp(-0.90*(x-1.7))};
    \addlegendentry{$2\eps_\infty$}
    \addplot[very thick,green!50!black] {0.25+9.75*exp(-0.90*(x-1.7))};
    \addlegendentry{$4\eps_\infty$}

    \addplot[densely dotted,thick,blue!75!black,forget plot]
        coordinates {(0,1) (10.7,1)};
    \addplot[densely dotted,thick,orange!90!black,forget plot]
        coordinates {(0,0.5) (10.7,0.5)};
    \addplot[densely dotted,thick,green!50!black,forget plot]
        coordinates {(0,0.25) (10.7,0.25)};
    \addplot[only marks,mark=*,mark size=1.8pt,blue!75!black,forget plot]
        coordinates {(1.7,40)};
    \addplot[only marks,mark=*,mark size=1.8pt,orange!90!black,forget plot]
        coordinates {(1.7,20)};
    \addplot[only marks,mark=*,mark size=1.8pt,green!50!black,forget plot]
        coordinates {(1.7,10)};
\end{axis}
\end{tikzpicture}
\caption{Schematic error-budget limit lines with linear axes.
At $K=C$, every budget can be respected with $N \sim \mathcal{O}(s^C)$ training samples.
By length $K=sC$, at least half of the words have reached their terminal absorbing states, so each curve is within a factor $2$ of its
floor; further length gains saturate.}
\label{fig:multiplicative-tradeoff}
\end{wrapfigure}

Our result reveals that there are several regimes of generalization.
When $K$ is too small, there is in general no hope of length generalization no matter how large $N$ is.
When $K$ has just crossed the threshold given by the longest chain, one may need a very large $N$, because only few strings will have reached the final state.
In this regime, increasing $K$ further is much more efficient than increasing $N$: increasing $K$ by an additive constant can cut down $N$ by a multiplicative factor.

\subsection{Extension with Local Attention}
Local attention has emerged as an effective and popular addition to transformers \citep[e.g.][]{beltagy2020longformer}.
It is also relevant because positional information often specifically introduces a locality bias favoring attention to recent position \citep{press2021train,su2024roformer}.
Local attention thus simultaneously serves as a simplified account of the role of positional information (similar to positional functions in Limit Transformers, \cite{huang2025formal, izzo2025quantitative}).
In each layer, we add a certain number of local attention heads at some window size $W$; for these heads, context positions at distance $>W$ are masked.
Both $W$ and the number of local heads are hyperparameters.
We refer to \citet{li2026characterizing} for the  full formalization in the $\mathcal{FPT}$ context.

Formally, we can model an $\mathcal{FPT}$ with local attention in terms of a more complex automaton that arises as a \emph{cascade product} of a poDFA with a simple automaton that maintains the last $L\cdot W$ symbols.
That is, we can model the difference automaton $\mathcal{A}$ in terms of the cascade product of a poDFA representing the output difference between the two transformers with a simple automaton that maintains the last $L\cdot W$ symbols. More details in Appendix~\ref{appendix:local-attention}.

The statistical generalization guarantee (Proposition~\ref{thm:data-complexity}) is unaffected, because the number of parameters does not change when making some heads local.
The length generalization guarantee changes. First, the construction we provide is only valid for the bound $C_0$ in Eq.~\ref{eq:chain-bound-in-long-short-lemma}. Second this threshold $C_0$ receives a factor of \edits{$R \coloneqq WL + 1$}.
Third, the bounding binomial probability to go forward in the underlying poDFA \edits{is also reduced from $\S^{-1}$ to $\S^{-R}$}. Thus, as long as $W$ is fixed, we have the same qualitative predictions as above: model size impacts data and length complexity; and sample size and length trade off.

\section{Experiments}
\label{sec:experiments}

We introduce three families of tasks, and study (i) the impact of teacher and student complexity on length and data complexity, and (ii) the predicted interaction between length and sample size. 

\paragraph{Tasks.}
We defined three task families, each along a complexity scale indexed by $k \in \mathbb{N}$, as follows (i) \textbf{Alternating languages $L_k$:} Detecting whether $\geq k$ distinct blocks of $a$'s and $b$'s have occurred \citep{yang2025kneedeep} -- that is, checking membership in the languages $L_2 = a^+ b^+\Sigma^*; L_3 = a^+ b^+ a^+ \Sigma^*$, etc.
There is a teacher family with number of layers $L_k = \Theta(k)$, embedding dimension $D_k = O(1)$, head count $H_k = O(1)$ and fixed alphabet size $\S_k = O(1)$. Specifically, the letter sampling is not uniform, with probability to alternate block set to $0.05$. This still fits our theoretical framework, as explained in Appendix~\ref{app:exp-alternating-lang-specifics}.
(ii) \textbf{Alphabet languages $A_k$:} Detecting whether all symbols from a given alphabet $\Sigma_k = [k]$ have occurred.
There is a teacher family with $L_k = O(1)$, $D_k = O(k)$, $H_k = O(k)$, $\S_k = O(k)$. 
(iii) \textbf{Induction Head tasks $\mathit{IH}_k$:}  Outputting which next tokens can follow  the current token under the context's bigram distribution over the alphabet $\Sigma_k = [k]$ \citep[cf.][]{edelman2024evolution}; that is, given the current token $\alpha$ output a vector $\in \F^{|\Sigma|}$ indicating for which symbols $\beta$ the bigram $\alpha\beta$ has contextually occurred.
There is a teacher family with $L_k = O(1)$, $D_k = O(k)$, $H_k = O(1)$, $\S_k = O(k)$, with local attention.
See Appendix \ref{app:experiments} for teacher constructions.
Taken together, these tasks allow us to test the theoretical prediction that the teacher size (impacted by $L$, $H$, $D$, $|\Sigma|$) impacts both length and data complexity.

\paragraph{Setup.}
To test our predictions against ordinary gradient-based training, we train
FP32 transformers with Adam \cite{kingma2014adam} and binary
cross-entropy over $10$ seeds. We use binary classification because exact logit-equality in FP32 is not realistic. The difference function between the student and the teacher then checks for logit signs rather than exact values (Appendix \ref{app:bce-and-accuracy}). 
In addition, we retain only runs with perfect training accuracy---that is, runs that correctly classify every supervised target on
every training word. This enforces the \edits{interpolation conditioning} of Guess \& Check. Trained architectural sizes are fixed and common across a family, unless explicitly varied.
When sweeping $N$, we adjust $K$ across tasks to match the fraction of
words exhibiting the required blocks, letters, or bigrams, making
examples approximately comparable in informativeness (Appendix~\ref{app:experiments-adjusted-length}). Sweeps of $K$ instead compare tasks at common lengths.

\begin{figure}[b]
    \centering
      \includegraphics[width=\linewidth]{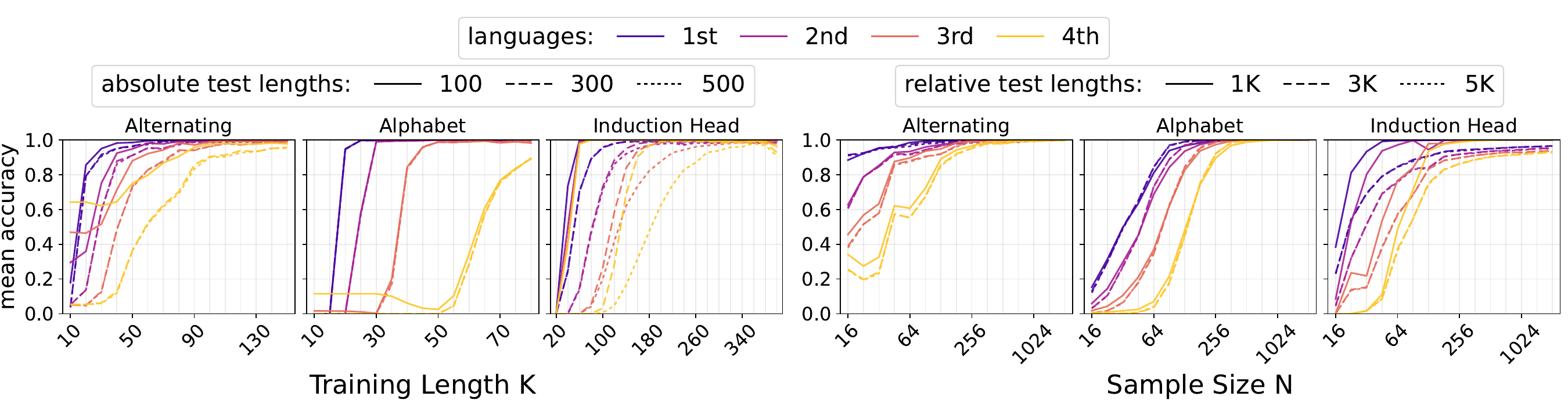}
  \vspace{-0.4cm}
  \caption{\textbf{Left:} Impact of teacher's size on length complexity. Larger teachers require training samples to be longer. The abscissa scale is linear, as suggested by theory.
  \textbf{Right:} Impact of teacher's size on data complexity. Larger teachers require more training samples for similar length generalization performance. The abscissa scale is \edits{logarithmic}, as suggested by theory. On both sides, the plotted tasks are respectively the  Alternating $L_4, L_5, L_6, L_7$, the Alphabet $A_{10}, A_{12}, A_{15}, A_{20}$ and the Induction Head $\mathit{IH}_5, \mathit{IH}_6, \mathit{IH}_7, \mathit{IH}_8$.}
    \label{fig:teacher-complexity-main}
\end{figure}

\paragraph{Effect of Transformers Size on Data and Length Complexity.}
Across the three families, increasing teacher size impacts the sample size $N$ and training length $K$ at which generalization becomes successful
(Figure~\ref{fig:teacher-complexity-main}). %
This is in line with our theoretical predictions, in particular, the G\&C-based bounds.
Interestingly, the student size had relatively small impact on sample size and length  (Appendix~\ref{app:experiments-impact-student}).
Thus, G\&C-based bounds are successful in capturing \edits{the fact teacher size is an important factor beyond student size}, but may \editsMiH{still be overly pessimistic about the impact of student size}. %

\paragraph{Tradeoff between Sample Size and Length.}
Longer training words can substantially reduce the sample size needed
for generalization (Figure~\ref{fig:aggregate_KxN_5K}). In particular, we find the non-linear interaction between the two quantities, qualitatively in line with the predictions from Figure \ref{fig:multiplicative-tradeoff}. This backs the idea that an efficient token-budget allocation should ensure that training samples are long enough. See Appendix~\ref{app:experiments-tradeoff} for further results showing the same pattern. 
\begin{figure}[h]
    \centering
  \includegraphics[width=0.95\linewidth]{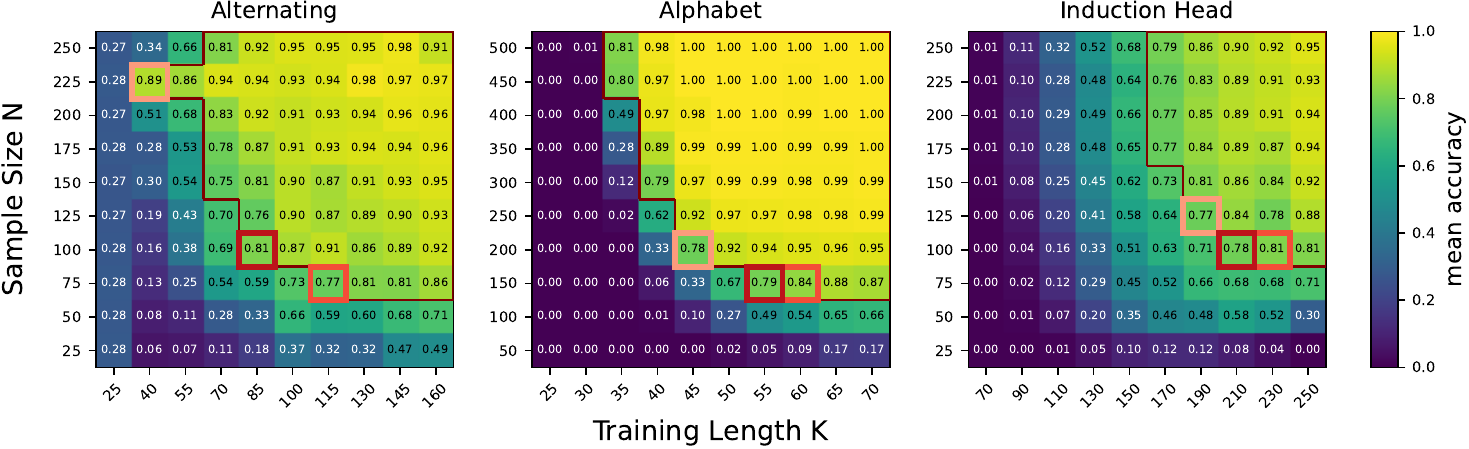}
  \vspace{-0.4cm}
  \caption{Tradeoff between the training \edits{length} and the sample size for one task of each language class: Alternating $L_6$, Alphabet $A_{15}$ and Induction Head $\mathit{IH}_7$. Evaluation at length 5 times the training length. \edits{Brown} contouring around \edits{all} cells with accuracy $\geq 0.75$\edits{. Under} this condition, the $3$ cells with the smallest token-budget are framed.
  }\label{fig:aggregate_KxN_5K}
\end{figure}

\section{Discussion}

Our result goes beyond established theory in two ways:
First, it goes beyond \textbf{statistical generalization} bounds for transformers by guaranteeing length generalization. 
Prior learning results for transformers characterize generalization error and sample size when the test and train samples are at the same sequence length \citep{edelman2022inductive,trauger2024sequence,li2026sharpergeneralizationboundstransformer,truong2024rank,yang2026tight,pli25nextTokenPrediction,rizvi2026narrow,lintilhac2026sharperpicturegeneralizationtransformers,mousavihosseini2025transformersoutperformfeedforwardrecurrent,hanneke2026samplecomplexityautoregressivereasoning,joshi2025theorylearningautoregressivechain}.
Our results instead consider the more general case of length generalization, in the setting of fixed-precision transformers.
We note that our main result ingests the statistical generalization bound essentially as a black box; hence, our results might analogously also be formalized on the basis of other statistical generalization bounds.

Second, our results go beyond prior work on \textbf{length generalization} by giving explicit and joint rates for sample size and length. 
So far, the most generally predictive account of length generalization has been the RASP-L Conjecture \citep{zhou2024algorithms} and its formalization via C-RASP and Limit Transformers \citep{huang2025formal}.
This theory has been highly predictive of which tasks empirically show length generalization \citep{yang2026algebraicdecompositiontheorytransformer}, but provides no quantitative predictions.
Notably, C-RASP is more powerful than $\mathcal{FPT}$; e.g. it can model the MAJORITY language.
However, C-RASP is a too loose target for quantitive learning bounds: \cite{yang2026length} show that, beyond the shallow-depth case \citep{izzo2025quantitative,chen2025nonasymptotic}, any length threshold would need to grow uncomputably with model complexity.
The positive fragment C-RASP$^+$ has expressivity equivalent to fixed-precision transformers and was studied in \citet{yang2026length, zetzsche2026lengthgeneralizationtransformerscompression}. We compare with these results in detail in Appendix~\ref{sec:app:faq}.
Importantly, our work innovates over these prior results by providing joint generalization guarantees in both sample size and length.

Like capacity-based generalization bounds \citep{edelman2022inductive, trauger2024sequence, yang2026tight}, our work treats an idealized learning model that abstracts away from the computational process of finding a model fitting the training data.
Understanding optimization dynamics for transformers has been difficult.
Progress has been made for specific task families and restricted models, such as single layers \citep[e.g.][]{yang2025multihead,mwigo2026generalization,huang2025transformers,wang24sparsetokenselection} (including one-layer models performing chain-of-thought, \citet{huang2025transformers}) or specific task families \citep[e.g.][]{nichani24causal}.
We view the study of G\&C as an (imperfect) proxy for part of the implicit inductive bias of SGD-like learning; understanding of optimization dynamics for a general multi-layer model and a general hypothesis class will require new methods.

The main limitation of our results is that they predict asymptotic rates but not yet concrete realistic numerical bounds for specific tasks and models. The primary barrier here is that the bounds involve large constants involving the fixed-precision set $\F$, which is large at full-precision training. It is an interesting question whether realistic numerical bounds can be obtained when training with strong quantization.
Further limitations are that fixed-precision transformers are less expressive than log-precision or infinite-precision transformers, we did not treat positional encodings, and studied G\&C and ERM as opposed to optimization dynamics.

\section{Conclusion}
We have provided generalization bounds for multi-layer fixed-precision transformers that account for the roles of both the number and the length of training samples.
We provided a joint treatment of both dimensions, showing that, under a given token budget, increasing length can sometimes be exponentially more efficient than increasing sample size.

\section*{Acknowledgments}
Funded in part by the Deutsche Forschungsgemeinschaft (DFG, German Research Foundation) – Project number  560456343.
M.H. is supported by the Konrad Zuse School of Excellence in Learning and Intelligent Systems \href{https://eliza.school/}{(ELIZA)} through the DAAD programme Konrad Zuse Schools of Excellence in Artificial Intelligence, sponsored by the Federal Ministry of Education and Research.

\section*{Contributions}
Ma.H. led the project. 
Mi.R. contributed Proposition 4.1. 
Mi.H. supervised the project.

\bibliography{iclr2027_conference}
\bibliographystyle{iclr2027_conference}

\newpage

\appendix

\tableofcontents

\newpage

\section{FAQ}\label{sec:app:faq}

\begin{enumerate}

    \item \textit{How do the results relate to \citet{yang2026length,zetzsche2026lengthgeneralizationtransformerscompression}?}

    \cite{yang2026length} give two core results: (i) an impossibility result for computable length generalization bounds for C-RASP, and (ii) a deterministic length generalization bound analogous to our Lemma~\ref{thm:equal-short-then-long} for $\text{C-RASP}^+$. 
    
    The first result indicates that C-RASP is too powerful a target for obtaining quantitative learning results for transformers: even though C-RASP has been successful in predicting generalization behaviour experimentally \citep{huang2025formal, yang2026algebraicdecompositiontheorytransformer}, there are C-RASP-expressible tasks where training data would need to include strings with astronomically large lengths.
    \citet{yang2026length, zetzsche2026lengthgeneralizationtransformerscompression} further show an analogue of our Lemma~\ref{thm:equal-short-then-long} for $\text{C-RASP}^+$, the positive fragment of C-RASP, giving a computable bound in the size of the defining C-RASP formula.
    Most recently, \cite{zetzsche2026lengthgeneralizationtransformerscompression} have further studied the role of sample size.
They show exponential sample size bounds for $\text{C-RASP}^+$ and another fragment, called $\text{C-RASP}_1$.

     $\text{C-RASP}^+$ has the same expressivity as fixed-precision transformers, though there is no direct 1-to-1 link between the transformer size parameters ($L$, $D$, $H$) and the size of defining C-RASP$^+$ formulas. As a consequence, length and sample complexities that we obtain may not be directly comparable to bounds obtained for  $\text{C-RASP}^+$ in \citet{yang2026length, zetzsche2026lengthgeneralizationtransformerscompression}.

     An important difference of our results, besides directly applying to a transformer model rather than the C-RASP$^+$ logic, is that we obtain joint bounds allowing us to understand the tradeoff between sample length and sample size.

\item \textit{How do the experiments differ from the theoretical learning model? What are the implications of these differences?}

The point of our experiments is to test theoretical predictions (obtained for idealized learners G\&C or ERM on fixed-precision transformers) in a realistic training setup using gradient-based optimization (Adam) on high-precision (FP32) transformers.
Whereas the theoretical setup asks for exact match of the output vectors, the experiments instead train for per-dimension binary cross-entropy.
In experiments, we select only runs with perfect training accuracy, enforcing interpolation, as assumed in theory.

\item \textit{Why study fixed-precision transformers? Why not study C-RASP / Limit Transformers?}

It is a well-established empirical finding that transformers show strong length generalization on the class of languages expressed by fixed-precision transformers \citep{li_cotterell2025fpt, yang2026algebraicdecompositiontheorytransformer}.
They are thus a natural target for joint treatment of sample and length complexity.

Fixed-precision transformers define a strict subset of C-RASP \citep{yang2024counting} and Limit Transformers \citep{huang2025formal, izzo2025quantitative}.
On the one hand, there is empirical evidence that transformers also show length generalization on tasks that are expressible in C-RASP but not by fixed-precision transformers, such as MAJORITY or certain regular languages such as bounded-depth Dyck languages \citep{yang2026algebraicdecompositiontheorytransformer}, which require a transformer's numerical precision to grow with the input length.
On the other hand, as discussed above, \cite{yang2026length} show that C-RASP cannot admit computable length generalization  bounds, i.e., any threshold $C$ satisfying (\ref{eq:from-short-to-long}) for all of C-RASP must grow faster in the size of the C-RASP formula (or Limit Transformer) than any computable function.
This indicates that C-RASP is too broad as a target for quantitative length generalization bounds \citep{yang2026length,zetzsche2026lengthgeneralizationtransformerscompression}.
It is an interesting question for future work whether tractable generalization bounds exist for some natural class between fixed-precision transformers and C-RASP.

\item \textit{How do the results relate to \citet{izzo2025quantitative, chen2025nonasymptotic}?}

\citet{izzo2025quantitative,chen2025nonasymptotic} provide quantitative length generalization bounds for C-RASP or Limit Transformers fragments with only one or two layers.
On the one hand, fixed-precision transformers form a strict subset of C-RASP/Limit Transformers; on the other hand, our results apply to the multi-layer setup, where computable quantitative bounds for C-RASP  are impossible (see Items 1 and 3).

    \item \textit{The paper studies ERM and G\&C. Why not study optimizers like SGD or Adam?}

Our work follows prior work \citep[e.g.][]{edelman2022inductive, yang2026length, chen2025nonasymptotic, izzo2025quantitative, trauger2024sequence} in studying idealized learners abstracting away from gradient dynamics. ERM in particular is a highly popular standard approach for deriving learning bounds.
Guess-and-check models the inductive bias of realistic training towards solutions occupying high volume in parameter space \citep{buzaglo_narrow_teachers_2024}, unlike ERM.
\editsMiH{Prior and concurrent work has provided evidence that parameter space volume is useful for understanding transformer generalization, finding that certain hard-to-learn functions occupy small volume in parameter space \citep{hahn2024sensitive,kover2026understanding,zhu2026exact}; G\&C is consistent with this by predicting easier generalization for functions occupying high parameter space volume.}
Neither G\&C nor ERM should be considered full models of learning. For instance, for matrix factorization, \citet{Alexander2025Neural} show that SGD can in some cases of matrix factorization generalize even better than G\&C.
Understanding gradient dynamics for general mutli-layer transformers is extremely difficult. Developing quantitative generalization bounds for gradient-based training is a major task for future work; the 1-layer setup might be a tractable starting point.

    \item \textit{Could the framework be extended to handle positional encodings?}

Prior bounds for statistical generalization (not length generalization) of transformers use fixed absolute positional encodings \citep{edelman2022inductive, yang2026tight,lintilhac2026sharperpicturegeneralizationtransformers}. That would also be possible to include in our setup, though one would need to understand their effects on the automaton structure.
Learnable absolute positional encodings are harder to account for as they lead to rapid growth of the parameter count with the input length (as acknowledged by \citet{edelman2022inductive}), and are inherently undefined outside of the training length range. \citet{huang2025formal} impose translation invariance constraints to constrain the hypothesis space in their analysis. %

Relative positional encodings can be easier to account for, as they do not involve an unbounded number of parameters.
\citet{yang2025kneedeep} show that  ALiBi \citep{press2021train} can be understood (in the context of C-RASP) as the addition of local predicates; we expect the effect on fixed-precision transformers to be very similar to the effect provided by local attention. We opt to analyze local attention, as it is nowadays popular in LLM design and has been rigorously formalized in the fixed-precision setup by \cite{li2026characterizing}.
RoPE \citep{su2024roformer} is harder to capture theoretically in the context of length generalization: while it can be formalized in terms of modular predicates \citep{yang2025kneedeep} (which admit exact automata-theoretic treatment in an appropriate extension of our poDFA framework, with appropriate modifications to the length complexity term), this does not characterize conventional RoPE implementation, which rather behave like fixed-length backwards-lookup operators \citep{jerad2026disentangling}, and may in fact impair empirical length generalization \citep{jerad2026disentangling}.

\item \textit{Can the theory predict specific numerical bounds?}

Our aim is to predict qualitative rates and the shape of the tradeoff between sample size and input length.
Predicting numerical bounds faces the challenge that constants depending on the precision set $\F$ will be extremely large at full-precision training.
Constants involving it are in general needed (Appendix~\ref{app:ExamplesTightness}).
It is a highly interesting (and challenging) problem for future work to make bounds numerically tighter.

\item \textit{Are the teacher constructions for the three task families minimal?}

For the Alternation Languages, \cite{yang2025kneedeep} show that transformer depth for $L_k$ must be $\geq k$ under a more expressive formalization of transformers (fixed-precision residual stream, unbounded-precision attention). 
Thus, our teacher family with $L_k = \Theta(k), D_k = O(1), H_k = O(1)$ is optimal up to rates.

For the Alphabet Languages, we found models hard to train when head count was $<k$, though we do not have a proof that an $\mathcal{FPT}$ requires $\Omega(k)$ heads. However, a simple information-theoretic argument shows that $D_k \cdot |\F| = \Omega(\log k)$ is unavoidable; hence, the teachers do need to grow with $k$.
For the Induction Head Languages, the same argument applies as $k \rightarrow \infty$.

\item \textit{Why are there missing points in some experimental plots?}

As detailed in Appendix~\ref{sec:experimental-protocol}, across all configurations, each run over 10 random seeds, we strictly retain only the runs that achieve 100\% training accuracy, consistent with the Guess \& Check conditioning assumption. 
\edits{Notably, we observe in Figure~\ref{fig:student_L} that Alternating languages $L_k$ can't reach 100\% training accuracy when the trained student has less than $k-1$ layers, as expected (\cite{yang2025kneedeep}). Similarly, students learning from Alphabet languages $A_k$ with head count $H_s < k$ struggle to fit the training data (Figure~\ref{fig:student_H}).}

\item \textit{What is strict causal masking? Why assume it?}

Strict causal masking means that attention heads cannot attend to the query position, but only to strictly preceding positions.
Strict causal masking increases expressivity compared to non-strict causal masking \citep{li_cotterell2025fpt}. We believe that our techniques and main results would equally apply in the non-strict-masking setup, though then applying to a less expressive model and correspondingly a smaller class of functions.

\item \textit{The theoretical bounds include effects of both teacher size (the transformer representing the target function) and the student size (the transformers in the learner's hypothesis class). Experiments show strong effects of the teacher but weak effects for the student. Why?}

Our theoretical predictions provide worst-case sufficient upper bounds on the test error; practical learning outcomes may be better both because the theoretical analysis may not be tight and because gradient-based learning may generalize better than the theoretical idealizations ERM and G\&C.
Indeed, it is a common observation that key techniques for proving statistical learning bounds show an overly pessimistic dependency on the student size: overparameterized neural networks often generalize very well, far better than is accounted for by capacity-based bounds that guarantee learning for ERM via uniform convergence \citep[e.g.][]{ nagarajan2021uniformconvergenceunableexplain,zhang2017understandingdeeplearningrequires, bartlett2021deep}.
The G\&C analysis as developed by \citet{buzaglo_narrow_teachers_2024}, that we adopt in Section~\ref{sec:statistical-gen-only}, goes some way towards ameliorating this, as its dependence on the student size is substantially milder than for the capacity-based ERM bounds.
Some work has provided generalization bounds that scale very mildly in the student size for simpler neural networks \citep[e.g.][]{allen2019learning,pmlr-v97-arora19a}, though the extension of such results to transformers will likely require new techniques.

\item \textit{The bound $C$ on the height of the difference automata is chosen as the minimum of two bounds (Lemma~\ref{thm:equal-short-then-long}). When is which bound stronger? What do we know about tightness?}

$C_0$ involves the layer count, hidden dimension, and per-head dimension of both student and teacher; it is exponential in the per-head dimension and linear in the others.
$C_1$ involves only the student layer count and the alphabet size; it \edits{is} exponential in either \edits{and becomes polynomial in $L_s$ when the alphabet is fixed}.
Importantly, $C$ is the \emph{minimum} both, i.e., our results always apply to the smaller of both bounds.
Thus, even if our theorems were stated just for $C := C_0$ or $C := C_1$, they would remain valid; giving two versions strengthens our results.

The two bounds have strengths in different hyperparameter regimes. In the regime where models are very deep, $C_0$ is asymptotically smaller, and $C$ remains \emph{linear} in $L_s$ and $L_t$.
In the regime where models are very wide, $C_1$ is asymptotically smaller, as it does not depend on the hidden dimension.
The exception is when the number of heads also increases: then, the per-head dimension $D_{h,s}$ decreases, and $C_0$ may become more favorable.

Notably, only two quantities enter both bounds: $L$ and $|\F|$; both bounds additionally require other quantities, but they can trade off which each other; e.g., as discussed above, taking into account $D$ can turn the depth dependence from exponential (in $C_1$) to linear (in $C_0$).
We show the unavoidability of including the two quantities $L$ and $|\F|$ in Appendix~\ref{app:ExamplesTightness}.

\end{enumerate}

\newpage

\section{Conventions and Background}
\label{sec:conventions-background}

We specify the fixed-precision transformer model used throughout the paper,
following \citet[Section~3 and Appendix~B]{li_cotterell2025fpt}. Our main adaptations are that we retain the
number of attention heads explicitly \edits{and that we drop layer norm as well as the attention score normalization} \editsMiH{(see Appendix~\ref{sec:app:normalization-constraints} for more)}. In addition, we distinguish tokenwise vectorial outputs from their
word-level binary satisfaction framework. 

\subsection{Fixed-precision arithmetic}

\paragraph{Fixed-precision set.}
\label{def:finite-precision-set}
A \textit{fixed-precision set} or \textit{finite-precision set} $\F$ is a finite ordered set, constructed from a subset of the real numbers $\F_{\mathrm{fin}}$ containing $0$ and $1$, and extended with the two infinite limits:
\begin{equation}
 \F=\F_{\mathrm{fin}}\cup\{-\infty,+\infty\},
 \qquad \{0,1\}\subseteq\F_{\mathrm{fin}}\subset\R,
 \qquad p\coloneqq|\F|.
 \label{eq:precision-set}
\end{equation}
The set $\F$ is equipped with deterministic rules for rounding every elementary operation back to $\F$, including overflow and underflow. When we study transformers on a fixed-precision set $\F$, all scalar parameters and intermediate calculation values must belong to $\F$. The precision convention is fixed and independent of the input length.

We use the natural extended-real rules whenever defined, in particular
\[
 \exp(-\infty)=0,\qquad \exp(+\infty)=+\infty,\qquad
 f/(\pm\infty)=0\quad(f\in\F_{\mathrm{fin}}).
\]
More details about the edge cases stemming from the finite-precision arithmetic can be found in \cite{li_cotterell2025fpt}.

\subsection{Transformers}
\label{sec:fpt-definition}

Transformers read words $w=w_1\cdots w_n\in\Sigma^*$ directly; throughout,
$n=|w|$ refers to the length of an input word. Differently to \cite{li_cotterell2025fpt}, no end-of-sequence token is appended and transformer computations
are performed only at the ordinary positions $1,\ldots,n$. \edits{This allows for a broader definition of transformers, with vectorial outputs at every position.} 

We use $i,j$ for positions, $\ell$ for layers, $d$ for the vectorial dimension and $h$ for heads.

\begin{definition}[Fixed-precision transformer]
\label{def:fpt}
Fix a depth $L\geq0$, a width $D\geq1$, and a number of heads $H\geq1$
with $H\mid D$; put $D_h=D/H$. A fixed-precision transformer assigns to
each $w\in\Sigma^*$ of length $n$ an output
\[
 T(w)\in\F^{\nu\times n}.
\]
It consists of a token embedding, $L$ transformer blocks, and a pointwise
output layer. Thus $T(w)_{:,i}\in\F^\nu$ is defined at every ordinary
position $1\leq i\leq n$. We use $\FPT(L,D,H)$ for the class of such
transformers with the corresponding parameter sizes.

\begin{remark*}
All calculations below use the fixed-precision convention of
Definition~\ref{def:finite-precision-set}.
\end{remark*}

\paragraph{Embedding.}
A map $e:\Sigma\to\F^D$ determines the $0$-th layer $\Lambda^{(0)}$, with
\begin{equation}
 \Lambda^{(0)}(w)\in\F^{D\times n},
 \qquad \Lambda^{(0)}_{:,i}(w)=e(w_i)
 \quad(1\leq i\leq n).
 \label{eq:fpt-input}
\end{equation}
There are no positional encodings.

For $0\leq\ell < L$ and $1\leq h\leq H$, we define the attention mechanism
of head $h$ with input $\Lambda^{(\ell)}\in\F^{D\times n}$. We sometimes
omit $\ell$ and $h$ from the notation for readability purposes.

\paragraph{Attention.} An attention mechanism is a map
$\operatorname{A}_h:\F^{D\times n}\to\F^{D_h\times n}$. First, choose
$\edits{W_Q,W_K,W_V}
\in\F^{D_h\times D}$ and set
\begin{equation}
 \edits{Q=W_Q}\Lambda^{(\ell)},\qquad
 \edits{K=W_K}\Lambda^{(\ell)},\qquad
 \edits{V=W_V}\Lambda^{(\ell)}.
 \label{eq:qkv}
\end{equation}
The attention mechanism has a strict causal masking: the visible positions at $i$ are $\{1,\ldots,i-1\}$. For $1\leq j< i$, compute the attention score
\begin{equation}
 S_{i,j}
 := \edits{Q_{:,i}^{\mathsf T}K_{:,j}.}
 \label{eq:fpt-score}
\end{equation}
For a given position $i$, we consider the maximum of queried attention scores at position i $\max_i := \max_{j<i}S_{i,j}$
and define the stabilized attention weights
\[ \alpha_{i,j}
 := \frac{\exp(S_{i,j}-\max_i)}{\sum_{k < i} \exp(S_{i,k}-\max_i)}.
 \label{eq:fpt-softmax-denominator}
\]

The attention map is consequently defined as
\[
\operatorname{A}_h(\Lambda^{(\ell)})_{:,i}
 := \sum_{j<i} \alpha_{i,j}V_{:,j}.
\]

As there is no well-defined attention score for the first index $i=1$, we set
$\operatorname{A}_h(\Lambda^{(\ell)})_{:,1}:=\mathbf{0}\in\F^{D_h}$.

We define by extension the head-agnostic attention mechanism $\operatorname{A}^{(\ell)}$ as the concatenation of the attention heads, with
\[
  \operatorname{A}^{(\ell)}_{:,i}(\Lambda^{(\ell)}) := \left(\operatorname{A}_1(\Lambda^{(\ell)})_{:,i}^{\mathsf{T}},...,\operatorname{A}_H(\Lambda^{(\ell)})_{:,i} ^{\mathsf{T}}\right)^{\mathsf{T}}.
\]

For each intermediate sublayer $\Lambda^{(\ell+0.5)}, 0\leq \ell < L$, we define a position-wise feed-forward mechanism applied to this sublayer. We consider feed-forwards with one hidden layer of dimension \edits{$D_{\mathrm{ff}}$}. We set \edits{$D_{\mathrm{ff}} = D$} for the base case, but it can be changed \edits{without repercussions as long as the statistical complexity guarantees given in Appendix~\ref{sec:app:data-complexity:main} are adapted accordingly.} We consider the $\operatorname{ReLU}$ activation function.

\paragraph{Feed-forward.}
Define $\operatorname{FFN}_\ell: \F^D\to\F^D$, applied independently at
every one of the $n$ positions, as
\[
 \operatorname{FFN}_\ell(x)
 :=\Theta^{F_2}_\ell\operatorname{ReLU}
   (\Theta^{F_1}_\ell x+b^{F_1}_\ell)+b^{F_2}_\ell,
 \qquad \operatorname{ReLU}(t)=\max(0,t),
\]
where $\Theta^{F_1}_\ell \in \edits{\F^{D_{\mathrm{ff}}\times D}}$,
$\Theta^{F_2}_\ell \in \edits{\F^{D\times D_{\mathrm{ff}}}}$, and the biases have matching
dimensions. The parameters are shared across all positions.

\paragraph{Transformer block.} Transformer blocks are \edits{simply} the \edits{composition} of an attention mechanism with a feed-forward, \edits{involving} residual streams \edits{at each intermediate step. Layer norm is omitted; explanations on this architecture choice can be found in Appendix~\ref{sec:app:normalization-constraints}. Consequently, each} of the $L$ transformer block is
\begin{align}
 \Lambda^{(\ell+\frac12)}
 &= \edits{\operatorname{A}^{(\ell)}(\Lambda^{(\ell)})+\Lambda^{(\ell)}},
 \label{eq:fpt-att-residual}\\
 \Lambda^{(\ell+1)}
 &= \edits{\operatorname{FFN}_\ell(\Lambda^{(\ell+\frac12)})
           +\Lambda^{(\ell+\frac12)}}.
 \label{eq:fpt-ff-residual}
\end{align}

\paragraph{Output Layer}
The output layer is the pointwise map $\operatorname{O}:\F^D\to\F^\nu$
given by
\[
 \operatorname{O}(x)
 := \Theta^{\mathrm{out}} x+b^{\mathrm{out}}
\]
where $\Theta^{\mathrm{out}} \in \F^{\nu \times D}$ and $b^{\mathrm{out}} \in \F^\nu$ are the parameters shared across positions.
Thus
\[
 T(w)_{:,i}=\operatorname{O}\bigl(\Lambda^{(L)}_{:,i}(w)\bigr)
 \qquad(1\leq i\leq n).
\]
\end{definition}

\paragraph{Attention arithmetic assumptions.}\label{app:arithmetic-assumptions}
For the transformers considered below, we impose the following conventions,
motivated by the positive-sum argument of
\citet[Appendix~B.2, Lemma~B.5]{li_cotterell2025fpt}.
Attention sums depend only on the multiset of summands, and inserting zeros
does not change them.
Nonnegative summands have a nonnegative sum, with $+\infty$ absorbing.
There is an integer $K_\F\geq1$, independent of input length, such that any
such sum containing more than $K_\F$ copies of a strictly positive finite
value is $+\infty$. We write
\begin{equation}
 r_\F\coloneqq K_\F+1.
 \label{eq:attention-count-cutoff}
\end{equation}
Stabilized exponentials are nonnegative; zero weights contribute zero even
for infinite values. An empty prefix or a softmax denominator equal to $0$
or $+\infty$ gives the zero head output. All remaining arithmetic cases are
resolved deterministically.

\newpage

\section{Logic}
\label{sec:logic}

In this section, we define the past fragments of the linear temporal logic and of a two-variable first-order logic,
introduced by \citet{li_cotterell2025fpt}. These fragments are expressively
equivalent and provide the link between temporal formulas and the
partially-ordered structure studied below, that applies to transformers.

Fix a finite nonempty alphabet $\Sigma$ and a word
$w=w_1\cdots w_n\in\Sigma^*$. The positions $1,\ldots,n$ are the ordinary
token positions. We additionally use $n+1$ as a \emph{virtual logical
endpoint}: it is an unlabelled position after the word, used only to evaluate
word-level formulas.

\subsection{Past temporal logic}
\label{sec:ltl-past}

First we introduce $\LTL$, the past fragment of the full linear temporal logic $\textbf{LTL}[\textbf{N}, \textbf{U}, \textbf{S}, \textbf{P}]$ that provides several operators, such as \emph{next}, \emph{until}, and their past-time counterparts \emph{since} and \emph{past}. In our situation, the temporality refers to the positions of a given word, earlier positions thought as being in the past of later positions. The
fragment $\LTL$ uses only the past operator $\textbf{P}$, read “at some
strictly earlier position”. The restriction to the past operator reflects the
strict masking of the attention mechanism. We use its pointed formulas to
describe tokenwise transformer outputs, and evaluate word-level formulas at
the virtual endpoint.

\begin{definition}[Syntax and semantics of $\LTL$]
\label{def:ltl-past}
The formulas of $\LTL$ are generated by
\begin{equation}
 \varphi ::= \top\mid\pi_a\mid\neg\varphi
             \mid(\varphi\wedge\varphi)\mid\Past\varphi,
 \qquad a\in\Sigma.
 \label{eq:ltl-grammar}
\end{equation}
For any position $1\leq i\leq n+1$, where $n+1$ is the virtual endpoint,
satisfaction is defined inductively by
\begin{align*}
 w,i \models& \top
 \quad\text{always},\\
 w,i\models& \pi_a
 \iff i\leq n\text{ and }w_i=a,\\
 w,i\models& \neg\varphi
 \iff w,i\not\models\varphi,\\
 w,i\models& \varphi\wedge\psi
 \iff w,i\models\varphi\text{ and }w,i\models\psi,\\
 w,i\models& \Past\varphi
 \iff \exists j<i\text{ with }w,j\models\varphi.
\end{align*}
Thus $\Past$ means ``at some strictly earlier position''. The connectives
$\bot,\vee,\Rightarrow$ are Boolean abbreviations. In particular,
$\neg\Past\neg\varphi$ means that $\varphi$ holds at every earlier
position, and is vacuously true at position $1$.
\end{definition}

\begin{definition}
\label{def:ltl-word-satisfaction}
For a logic formula $\Psi\in\LTL$, set
\begin{equation}
 w\models\Psi := w,n+1\models\Psi,
 \qquad \qquad L(\Psi)\coloneqq\{w\in\Sigma^*:w\models\Psi\}.
 \label{eq:ltl-language}
\end{equation}
Every atomic formula $\pi_a$, $a\in\Sigma$, is false at the virtual endpoint. In particular,
the formula $\neg\bigvee_{a\in\Sigma}\pi_a$ holds precisely at this endpoint. We also have $L(\neg\Past\top)=\{\eps\}$.
\end{definition}

\begin{example}
\label{ex:ltl-languages}
For $a,b,c\in\Sigma$,
\begin{align*}
 L(\Past\pi_a)
 &=\Sigma^*a\Sigma^*,\\
 L\bigl(\Past(\pi_c\wedge\Past(\pi_b\wedge\Past\pi_a))\bigr)
 &=\Sigma^*a\Sigma^*b\Sigma^*c\Sigma^*,\\
 L\bigl(\Past(\pi_a\wedge\neg\Past\top)\bigr)
 &=a\Sigma^*.
\end{align*}
The last formula selects an $a$ with no predecessor, hence checking for first symbol. Note that we nevertheless can't express the language $\Sigma^*a$ in $\LTL$.
\end{example}

\begin{lemma}[Prefix invariance]
\label{lem:ltl-prefix-invariance}
For $u,v\in\Sigma^*$, $1\leq i\leq|u|$, and $\varphi\in\LTL$,
\[
 u,i\models\varphi\quad\Longleftrightarrow\quad uv,i\models\varphi.
\]
\end{lemma}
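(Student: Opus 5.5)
We argue by structural induction on $\varphi$, following the grammar in (\ref{eq:ltl-grammar}), proving the statement simultaneously for all $u,v\in\Sigma^*$ and all positions $1\leq i\leq|u|$. The key observation is that every clause of the semantics in Definition~\ref{def:ltl-past} refers either to the letter at the current position or to strictly earlier positions. Since $i\leq|u|$, all these positions lie inside $u$, where $u$ and $uv$ carry identical letters. The virtual endpoint plays no role: because $i\leq|u|<|uv|+1$, we never evaluate at $n+1$ for either word.

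\textbf{Base cases.} For $\top$ the claim is trivial. For $\pi_a$, we have $u,i\models\pi_a$ iff $i\leq|u|$ and $u_i=a$. Since $i\leq|u|\leq|uv|$ and $(uv)_i=u_i$, this is equivalent to $uv,i\models\pi_a$.

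\textbf{Inductive steps.} Negation and conjunction follow immediately from the induction hypothesis at the same position $i$. For $\Past\varphi$, we have $u,i\models\Past\varphi$ iff there is $j<i$ with $u,j\models\varphi$. Every such $j$ satisfies $1\leq j<i\leq|u|$, so the induction hypothesis applied at position $j$ gives $u,j\models\varphi\iff uv,j\models\varphi$. The range of witnesses $\{1,\dots,i-1\}$ is the same for both words. Hence the two existential statements are equivalent.

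The only point needing care is to phrase the induction hypothesis as quantifying over all positions $j\leq|u|$, not just the fixed $i$, so that it can be invoked at the earlier witnesses in the $\Past$ case. Once this is done there is no real obstacle.
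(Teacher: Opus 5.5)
Your proof is correct and takes the same approach as the paper: structural induction on $\varphi$, using that atoms read only the current letter and that $\Past$ quantifies only over strictly earlier positions, all of which lie inside $u$. You also make explicit that the induction hypothesis must range over all positions $j\leq|u|$ so that it applies to the witnesses in the $\Past$ case, which the paper's one-line proof leaves implicit.
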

\begin{proof}
Induct on $\varphi$. Atomic predicates depend only on the current symbol,
and $\Past$ quantifies only over positions strictly before $i$.
\end{proof}

\subsection{Past two-variable first-order logic}
\label{sec:pfo-past}
The second logic that we introduce, a first-order logic, is equivalent to $\LTL$. The two-variable
fragment $\mathbf{FO}^2[<]$ uses only the position variables $x,y$, which may
be reused under quantifiers. We then further restrict to its past fragment $\PFO$: 
each new occurrence of a free variable must be upper
bounded by the other, again reflecting strict causal masking. We first give
its pointed formulas, then its sentences.
Naming two positions with the variables $x$ and $y$ is in particular useful for describing an attention query and a key.

The syntax mirrors the construction of $\LTL$: formulas are built from
atomic statements using Boolean connectives and one restricted way of
looking into the past.

\begin{definition}[Syntax and semantics of $\PFO$]
\label{def:pfo-past}
For $a\in\Sigma$ and variables $x,y$, formulas are generated by
\[
 \theta ::= \top \mid \pi_a(x) \mid \neg\theta
             \mid(\theta\wedge\theta)
             \mid \exists y<x:\theta(x,y),
 \label{eq:pfo-past-quantifier}
\]
\edits{where $\theta(x,y)$ is always defined by a boolean combination of two unary formulas; e.g. $\theta(x,y) \coloneqq \psi(x) \wedge \psi(y)$.}
As in Appendix~\ref{sec:ltl-past}, positions range over
$1,\ldots,n+1$, with $n+1$ the virtual endpoint. For a current position $i$
and, when needed, an earlier position $j$, satisfaction is defined inductively by
\begin{align*}
 w,i\models&\ \top
 &&\text{always},\\
 w,i\models&\ \pi_a(x)
 &&\Longleftrightarrow\ i\leq n\text{ and }w_i=a,\\
 w,i\models&\ \neg\theta
 &&\Longleftrightarrow\ w,i\not\models\theta,\\
 w,i\models&\ \theta\wedge\psi
 &&\Longleftrightarrow\ w,i\models\theta\text{ and }w,i\models\psi,\\
 w,i\models&\ \exists y<x:\theta(x,y)
 &&\Longleftrightarrow\ \exists j<i:\ w,i,j\models\theta(x,y).
\end{align*}
A formula with one free variable is a \emph{pointed formula}; a formula
with no free variables is a \emph{sentence}. We use an unbounded existential
quantifier only to close a pointed formula into a sentence, as in
$\exists x:\theta(x)$. Boolean connectives and bounded universal quantifiers
are abbreviations as usual.
\end{definition}

\begin{example}
\label{ex:pfo-subsequence}
The sentence
\[
 \exists x:\bigl(\pi_c(x)\wedge
   \exists y<x:\bigl(\pi_b(y)\wedge\exists x<y:\pi_a(x)\bigr)\bigr)
\]
defines $\Sigma^*a\Sigma^*b\Sigma^*c\Sigma^*$. The innermost quantifier
reuses $x$ for the $a$-position: within its scope, the outer $c$-position
is no longer named. This is the first-order analogue of
$\Past(\pi_c\wedge\Past(\pi_b\wedge\Past\pi_a))$ in $\LTL$.
\end{example}

\begin{remark}
\label{rem:pfo-counting}
For every fixed $k\in\N$ and every one-variable formula $\theta(z)$, we
can express that at least, or exactly, $k$ earlier positions satisfy
$\theta$. Define recursively
\begin{align*}
 \exists^{\geq 0}y<x:\theta(y)&\coloneqq\top,\\
 \exists^{\geq k+1}y<x:\theta(y)&\coloneqq
 \exists y<x:\bigl(\theta(y)\wedge
   \exists^{\geq k}x<y:\theta(x)\bigr),\\
 \exists^{=k}y<x:\theta(y)&\coloneqq
 \bigl(\exists^{\geq k}y<x:\theta(y)\bigr)\wedge
 \neg\bigl(\exists^{\geq k+1}y<x:\theta(y)\bigr).
\end{align*}
The recursive definition reuses $x$ and $y$ at each level, so it remains
in $\PFO$. These are finite formulas for each fixed threshold $k$; they do
not add an unbounded counting operation to the logic.
\end{remark}

\begin{lemma}[Temporal--first-order correspondence]
\label{thm:ltl-pfo-equivalence}
The logics $\LTL$ and $\PFO$ have the same expressive power
\citep[Theorem~2.1]{li_cotterell2025fpt}. More precisely, every $\LTL$
formula is equivalent at each position to a $\PFO$ formula with one free
variable, and every such $\PFO$ formula has an equivalent $\LTL$ formula;
consequently, their sentences define the same class of languages. Under
this correspondence, the basic one-position past test, where $\varphi$ has
only $y$ free, is translated as
\[
 \Past\varphi
 \qquad\longleftrightarrow\qquad
 \exists y<x:\varphi(y).
\]
\end{lemma}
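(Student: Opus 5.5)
The plan is a mutual structural induction establishing the two translations separately, $\LTL\to\PFO$ and $\PFO\to\LTL$. Throughout, the statement is read pointwise: for each formula and each position $1\le i\le n+1$ (including the virtual endpoint), the two formulas have the same satisfaction. The claim about sentences then follows, since a word-level $\LTL$ formula is evaluated at $n+1$. A $\PFO$ sentence $\exists x:\theta(x)$ is likewise handled through its pointed body, using the convention that the closing quantifier ranges over the positions up to the endpoint. We could simply cite \citet[Theorem~2.1]{li_cotterell2025fpt}. We nonetheless sketch the argument to check that adding the virtual endpoint $n+1$ and using strict masking do not break it.

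\textbf{From $\LTL$ to $\PFO$.} We define a map $\varphi\mapsto\varphi^{x}$ producing a pointed formula with free variable $x$, together with its mirror $\varphi^{y}$ obtained by swapping the names $x$ and $y$. The clauses are
\[
\top^x=\top,\quad \pi_a^x=\pi_a(x),\quad (\neg\varphi)^x=\neg\varphi^x,\quad (\varphi\wedge\psi)^x=\varphi^x\wedge\psi^x,\quad (\Past\varphi)^x=\exists y<x:\varphi^{y}.
\]
Correctness at every position $i$ is a direct induction. The only nontrivial case is $\Past$, where the semantics on both sides read $\exists j<i$ with $w,j\models\varphi$. The atomic clause agrees at $i=n+1$ because both $\pi_a$ and $\pi_a(x)$ are defined to be false there. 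Reusing variables keeps the result within two variables, and the body of each quantifier has only the bound variable free. It is therefore a (trivial) Boolean combination of unary formulas, as the syntax requires.

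\textbf{From $\PFO$ to $\LTL$.} This is the harder direction, and the main obstacle is the quantifier case $\exists y<x:\theta(x,y)$ with $\theta$ a Boolean combination of unary formulas in $x$ and in $y$. We handle it as follows. First, write $\theta$ in disjunctive normal form over its maximal unary subformulas, obtaining $\bigvee_k(\alpha_k(x)\wedge\beta_k(y))$. Then distribute the quantifier over the disjunction, and pull the $x$-part out, since it does not depend on $y$:
\[
\exists y<x:\theta \;\equiv\; \bigvee_k\bigl(\alpha_k(x)\wedge \exists y<x:\beta_k(y)\bigr).
\]
By induction on quantifier depth, $\alpha_k$ and $\beta_k$ have $\LTL$ equivalents $\hat\alpha_k$ and $\hat\beta_k$; for $\beta_k$ we rename $y$ to $x$ first. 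The whole formula is thus equivalent to $\bigvee_k(\hat\alpha_k\wedge\Past\hat\beta_k)$.

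The point to check is that the induction measure decreases. The unary pieces $\alpha_k$ and $\beta_k$ are proper subformulas, or Boolean combinations of proper subformulas, so an induction on formula size suffices. We also need to check that atomic formulas $\pi_a(y)$ inside $\beta_k$ correctly become $\pi_a$ evaluated at the earlier position $j\le n$, which holds because $j<i\le n+1$. Combining both directions gives equal expressive power at every position and hence on sentences. The displayed correspondence $\Past\varphi\leftrightarrow\exists y<x:\varphi(y)$ is exactly the $\Past$ clause of the first map.
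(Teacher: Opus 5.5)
Your proposal is correct. The paper itself gives no proof of this lemma: it imports it from \citet[Theorem~2.1]{li_cotterell2025fpt}, even though it has changed that setting (no EOS token, and word-level formulas are evaluated at a virtual unlabelled endpoint $n+1$ where every $\pi_a$ is false).

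You instead give a self-contained syntactic translation in both directions, essentially the $\Past$-only case of the classical Etessami--Vardi--Wilke argument relating $\mathbf{FO}^2[<]$ to unary temporal logic. Its key step, $\exists y<x:\theta\equiv\bigvee_k\bigl(\alpha_k(x)\wedge\exists y<x:\beta_k(y)\bigr)$, works because the paper's syntax forces $\theta(x,y)$ to be a Boolean combination of unary formulas. The induction hypothesis should be applied to the maximal unary subformulas, as you indicate, and not to the DNF disjuncts, whose size may blow up.

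The citation buys brevity. Your argument buys an explicit check that the new endpoint convention and the strict $j<i$ quantification are harmless, which the paper does not re-verify.

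The one thing you leave implicit is the sentence level, though both directions are one-line checks. Take the closing quantifier to range over $1,\dots,n+1$. Then a sentence $\exists x:\theta(x)$ corresponds to $\hat\theta\vee\Past\hat\theta$ evaluated at $n+1$. Conversely, a word-level $\Psi$ corresponds to $\exists x:\bigl(\neg\bigvee_{a\in\Sigma}\pi_a(x)\wedge\Psi^x\bigr)$, since $\neg\bigvee_a\pi_a$ holds exactly at the endpoint.
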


\subsection{Transformers as logic formulas}
\label{sec:transformers-as-logic-formulas}

Under the attention arithmetic assumptions of Appendix~\ref{sec:fpt-definition},
fixed precision lets us describe every coordinate at every ordinary token
position by a pointed $\PFO$ formula. This is the tokenwise version of the
compilation of \citet{li_cotterell2025fpt}; in particular, it does not use an
$\mathtt{EOS}$ token.

\begin{proposition}[Transformer-to-logic]
\label{prop:transformer-to-logic}
Let $T\in\FPT(L,D,H)$ have output dimension $\nu$.  For every output
coordinate $1\leq d\leq\nu$ and $f\in\F$, there is a pointed formula
$\Phi^T_{d,f}(x)\in\PFO$ such that, for every $w\in\Sigma^*$ and every
ordinary position $1\leq i\leq |w|$,
\begin{equation}
 w,i\models\Phi^T_{d,f}(x)
 \quad\Longleftrightarrow\quad
 T(w)_{d,i}=f.
 \label{eq:tokenwise-transformer-formula}
\end{equation}
More generally, the same statement holds for every coordinate and value of
each residual layer $\Lambda^{(\ell)}$.
\end{proposition}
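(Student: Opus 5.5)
We argue by induction on the layer index $\ell$, proving the stronger claim stated at the end of Proposition~\ref{prop:transformer-to-logic}. For every residual sublayer $\Lambda^{(\ell)}$ and $\Lambda^{(\ell+\frac12)}$, every coordinate $d$ and every value $f\in\F$, we construct a pointed formula $\Phi^{(\ell)}_{d,f}(x)\in\PFO$ with $w,i\models\Phi^{(\ell)}_{d,f}(x)\iff\Lambda^{(\ell)}_{d,i}(w)=f$ at all ordinary positions. Since $\F$ is finite, each coordinate takes finitely many values. Any pointwise map $g:\F^D\to\F^\nu$ can then be written as a finite Boolean combination: $g(x)_d=f$ iff $\bigvee_{v\in\F^D,\,g(v)_d=f}\bigwedge_{d'}\Phi_{d',v_{d'}}(x)$. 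This remark handles the embedding, the feed-forward blocks with their residuals, and the output layer $\operatorname{O}$ with no further work. The base case is $\Phi^{(0)}_{d,f}(x)=\bigvee_{a:\,e(a)_d=f}\pi_a(x)$. Finally, applying $\operatorname{O}$ to the formulas for $\Lambda^{(L)}$ gives $\Phi^T_{d,f}$.

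All the real work is in the attention step, which is the main obstacle. Fix a layer and a head. Given formulas for the coordinates of $\Lambda^{(\ell)}$, write $\chi_v(z)$ for the formula stating that the full residual vector at position $z$ equals $v\in\F^D$. The query, key and value at a position are functions of that vector, so $Q_{:,i}$, $K_{:,j}$, $V_{:,j}$ and the score $S_{i,j}$ are all determined by the pair $(v_i,v_j)$. The head output at $i$ is therefore a function of two things: the query vector $v_i$, and the multiset of past residual vectors $\{v_j: j<i\}$.

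We show that the output actually depends only on $v_i$ and on \emph{truncated counts} $\min(\#\{j<i:\chi_u(j)\},\,r_\F)$ for $u\in\F^D$. The argument has three parts:
\begin{itemize}
\item The maximum $\max_i$ is determined by which vectors $u$ occur at least once in the past.
\item Each weight $\alpha_{i,j}$ depends only on $(v_i,u_j)$ and on the denominator. By the attention arithmetic assumptions, the denominator is a sum over a multiset of nonnegative values that saturates to $+\infty$ once a positive finite value appears more than $K_\F$ times. Hence it depends only on counts truncated at $r_\F$, using the convention that zeros are ignored.
\item The weighted sum $\sum_j\alpha_{i,j}V_{:,j}$ is a multiset sum of the same kind. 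For nonnegative summands, saturation again gives dependence only on truncated counts. For signed summands, we rely on the convention that attention sums depend only on the multiset and are resolved deterministically. To make this rigorous we would split positive and negative parts, or else invoke the positive-sum lemma of \citet[Lemma~B.5]{li_cotterell2025fpt} directly. This is the delicate point.
\end{itemize}

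Granting this, each truncated count condition is expressible in $\PFO$ as $\exists^{=k}y<x:\chi_u(y)$ for $k<r_\F$, or $\exists^{\geq r_\F}y<x:\chi_u(y)$ at the cap, using Remark~\ref{rem:pfo-counting}. The head-output coordinate $d$ equals $f$ iff a finite disjunction holds. It ranges over the query vector $v$ and over count profiles $c:\F^D\to\{0,\dots,r_\F\}$ whose resulting output coordinate is $f$, and each disjunct has the form $\chi_v(x)\wedge\bigwedge_u\mathrm{Count}_{u,c(u)}(x)$. Position $1$ fits automatically: the empty profile yields the zero output, matching the convention $\operatorname{A}_h(\cdot)_{:,1}=\mathbf 0$.

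Concatenating heads and adding the residual is again a pointwise map applied to finitely many formulas. This closes the induction. The resulting formulas are finite, though exponentially large in $D$ and $|\F|$.
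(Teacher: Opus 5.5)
Your overall route is the paper's route. It proceeds by induction over sublayers, treats pointwise maps as finite lookup tables, and expresses attention through counts of earlier positions clipped at $r_\F$, written with $\exists^{=k}y<x$ and $\exists^{\geq r_\F}y<x$. Counting full residual-vector types $\chi_u$ instead of the paper's key/value types is a harmless refinement.

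The gap is the step you flag yourself and then grant. You never show that the weighted value sum $\sum_{j<i}\alpha_{i,j}V_{:,j}$ is determined by clipped counts when values can be negative. For signed summands the stated conventions give no saturation rule. They only say that a sum depends on its multiset of summands, and a multiset sum may in principle distinguish $r_\F$ copies of a summand from $r_\F+1$ copies. Your suggested patch of splitting positive and negative parts is not licensed either: nothing in the arithmetic assumptions says a rounded signed sum decomposes into a positive-part sum plus a negative-part sum. So, as written, the claim that the head output depends only on the query vector and the clipped counts is unproved.

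The missing idea removes the signed case entirely. It is the argument the paper spells out in the proof of Lemma~\ref{lem:simultaneous-output-formulas}: case-split on whether the softmax denominator overflows, which is already decidable from clipped counts.
\begin{itemize}
\item If some type present in the past has stabilized exponential $+\infty$, the denominator is $+\infty$. The same holds if some type has a strictly positive finite exponential and clipped count $r_\F=K_\F+1$. In either case the head outputs $\mathbf 0$ by convention.
\item Otherwise, every type with a nonzero weight has a positive exponential and hence count at most $K_\F<r_\F$, so its exact multiplicity is recorded.
\end{itemize}
In the second case, zero-weight terms contribute zero even for infinite values, and inserting zeros does not change sums. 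So the full multiset of nonzero summands $\alpha_{i,j}V_{:,j}$ is known, and the weighted sum is fixed by multiset invariance regardless of sign. The sum is never evaluated in the regime where clipping loses information. With this dichotomy inserted, no appeal to a positive/negative split or to the Li--Cotterell lemma is needed, and the rest of your argument goes through as written.
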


\begin{proof}[Construction]
For every layer $\ell$, coordinate $d$, and $f\in\F$, construct
$\Phi^{(\ell)}_{d,f}(x)$ so that
\begin{equation}
 w,i\models\Phi^{(\ell)}_{d,f}(x)
 \quad\Longleftrightarrow\quad
 \Lambda^{(\ell)}_{d,i}(w)=f.
 \label{eq:layer-value-invariant}
\end{equation}
The embedding case is immediate:
\begin{equation}
 \Phi^{(0)}_{d,f}(x)
 =\bigvee_{\substack{a\in\Sigma\\e(a)_d=f}}\pi_a(x).
 \label{eq:embedding-value-formula}
\end{equation}

Every pointwise map $G:\F^m\to\F^r$ is a finite lookup table.  If its
input coordinates are described by $\varphi_{j,g}(x)$, its $d$-th output has
value $f$ precisely when
\begin{equation}
 \Phi^{G}_{d,f}(x)
 \coloneqq
 \bigvee_{\substack{\mathbf g\in\F^m\\G(\mathbf g)_d=f}}
 \ \bigwedge_{j=1}^{m}\varphi_{j,g_j}(x).
 \label{eq:pointwise-lookup-formula}
\end{equation}
Thus projections, residual connections, feed-forward
maps, and the final output map preserve the construction using Boolean
operations only.

It remains to consider one attention head.  Its query, key, and value vectors
are pointwise maps, so the preceding display gives formulas for every
$\mathbf q,\mathbf k,\mathbf v\in\F^{D_h}$.  Denote the corresponding
formulas by
\[
 \mathrm{Q}^{(\ell,h)}_{\mathbf q}(x),\qquad
 \mathrm{K}^{(\ell,h)}_{\mathbf k}(y),\qquad
 \mathrm{V}^{(\ell,h)}_{\mathbf v}(y);
\]
for example,
$w,i\models\mathrm{Q}^{(\ell,h)}_{\mathbf q}(x)$ if and only if
$Q_{:,i}=\mathbf q$.  The relevant query--key/value relation is the
two-variable formula
\begin{equation}
 \theta_{\mathbf q,\mathbf k,\mathbf v}(x,y)
 \quad\equiv\quad
 \mathrm{Q}^{(\ell,h)}_{\mathbf q}(x)
 \wedge\mathrm{K}^{(\ell,h)}_{\mathbf k}(y)
 \wedge\mathrm{V}^{(\ell,h)}_{\mathbf v}(y).
 \label{eq:attention-unary-type}
\end{equation}
For each current query position $x$, the maximum score, softmax denominator,
and weighted value sum are finite case distinctions over the multiplicities
of earlier positions $y$ satisfying these relations.  In particular, the
required fixed thresholds vary with the query position and are expressed
directly in $\PFO$ by
\begin{equation}
 \exists^{\geq r}y<x:\theta_{\mathbf q,\mathbf k,\mathbf v}(x,y),
 \label{eq:attention-threshold-formula}
\end{equation}
and their exact-count Boolean combinations, as in
Remark~\ref{rem:pfo-counting}.  Let
$\Xi^{(\ell,h)}_{\mathbf q,d,f}(x)$ denote the resulting finite Boolean
combination that selects the cases in which head $h$ outputs value $f$ in
coordinate $d$ for query $\mathbf q$.  Its output formula is
\begin{equation}
 \Phi^{\operatorname{A},(\ell,h)}_{d,f}(x)
 =
 \bigvee_{\mathbf q\in\F^{D_h}}
 \left(
   \mathrm{Q}^{(\ell,h)}_{\mathbf q}(x)
   \wedge\Xi^{(\ell,h)}_{\mathbf q,d,f}(x)
 \right).
 \label{eq:attention-output-formula}
\end{equation}
Fixed precision gives finitely many query cases, and the arithmetic
assumptions of Appendix~\ref{sec:fpt-definition} let us clip multiplicities
at $r_\F$. Thus only finitely many cases are needed.

Consequently, attention produces formulas satisfying
\eqref{eq:layer-value-invariant} for the next sublayer, and the pointwise
steps complete the induction over $\ell$.  Applying the final output lookup
formula~\eqref{eq:pointwise-lookup-formula} yields
\eqref{eq:tokenwise-transformer-formula}.
\end{proof}

All formulas in Proposition~\ref{prop:transformer-to-logic} are evaluated at
ordinary token positions.  By Lemma~\ref{thm:ltl-pfo-equivalence}, each has an
equivalent pointed $\LTL$ formula, \edits{using the $\PFO$} presentation to keep the
current position $x$ and the earlier position $y$ explicit.

\subsection{Partially ordered automata}
\label{sec:transformer-mealy}

Word-level formulas define languages, whereas the pointed formulas
\eqref{eq:tokenwise-transformer-formula} describe an output at each token.
The corresponding automata verifying the formulas are respectively DFAs and Mealy machines.

\begin{definition}[Deterministic Finite Automaton]
\label{def:ordered-automata}
A \emph{DFA} is a tuple $\A=(Q,\Sigma,\delta,q_0,F_A)$, with finite state
set $Q$, initial state $q_0$, accepting set $F_A\subseteq Q$, and transition
function $\delta:Q\times\Sigma\to Q$. By abuse of notation, extend $\delta$
to words by iterating over their letters:
\[
 \delta(q,\eps)=q,\qquad
 \delta(q,ua)=\delta(\delta(q,u),a).
\]
It recognises the language $L(\A)=\{u:\delta(q_0,u)\in F_A\}$.
\end{definition}

For our results, we do not need the following Mealy machine construction, as we work 
\edits{with the difference function which} is expressed by a poDFA.
Nevertheless, Mealy machines \edits{describe individual transformers: they} process
the input left to right and emit an output vector at every position. \edits{We shall see also that the height arguments can directly apply on Mealy machines.
In general, Mealy machines provide} the opportunity to deepen the ordered-automaton analyses of the transformers outside of \edits{the} student--teacher learning framework.

\begin{definition}[Mealy Machine]
\label{def:mealy-machine}
A \emph{Mealy machine} $\mathcal M=(Q,\Sigma,\Gamma,\delta,q_0,\lambda)$ is an automaton in the same way as the DFA \ref{def:ordered-automata},
but has a finite output alphabet $\Gamma$, an output map
$\lambda:Q\times\Sigma\to\Gamma$ and \edits{no} notion of \edits{accepting states}. On input $w=w_1\cdots w_n$, it emits
\[
 \mathcal M(w)_i=\lambda(\delta(q_0,w_{1:i-1}),w_i)
 \qquad(1\leq i\leq n).
\]
\end{definition}

Intuitively, Mealy machines are automata such that their restricted behaviour, with respect to one output symbol, resembles a DFA.

\begin{definition}[Partially ordered automata]
A DFA $\mathcal A$ or a Mealy  machine $\mathcal M$, is said to be \emph{partially ordered} if reachability of its states is antisymmetric,
equivalently if it contains no cycles but self-loops. Its \emph{height}
is the largest number of strict state changes along a run:
\[
 \operatorname{ht}(\mathcal M)
 =\max_{w\in\Sigma^*}
 \bigl|\{i:\delta(q_0,w_{1:i})\neq\delta(q_0,w_{1:i-1})\}\bigr|,
\]
with the same definition for $\A$. It is \emph{reduced} if all states are
reachable and if two distinct states have different future behaviour. 
\end{definition}

\begin{remark*}
Different future behaviours for DFAs means that for any two distinct states representers $w, w'$, there exists a suffix $v \in \Sigma^*$ such that, w.l.o.g. $\delta(w,v) \in F_A$ while $\delta(w',v) \notin F_A$. For Mealy machines, it similarly means that for some suffix concatenation,
the future emitted sequences must disagree.
\end{remark*}

We recall the language-level correspondence established by
\citet[Section~4 and Appendix~C]{li_cotterell2025fpt}.

\begin{proposition}[]
\label{prop:past-language-podfa}
For every $\Psi\in\LTL$, the language
$L(\Psi)=\{w:w,|w|+1\models\Psi\}$ is regular and its reduced DFA is
partially ordered. The same holds for the equivalent word-level
$\PFO$ formulas.
\end{proposition}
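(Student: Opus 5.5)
The plan is to build an explicit partially ordered DFA for $L(\Psi)$ directly from the subformulas of $\Psi$, and then show that minimisation cannot introduce cycles.

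\textbf{Construction.} Let $\PastSub(\Psi)$ denote the set of subformulas of $\Psi$ of the form $\Past\varphi$.
\begin{itemize}
\item The state set is $Q=2^{\PastSub(\Psi)}$, with initial state $q_0=\emptyset$. A state $S$ is meant to record exactly the past subformulas that hold at the current position $i$. This is correct for $i=1$, since no $\Past$-formula holds at the first position.
\item Given $S$ and the letter $a=w_i$, every subformula of $\Psi$ has a determined truth value at $i$. Atoms $\pi_b$ are true iff $b=a$, Boolean connectives are evaluated recursively, and $\Past\varphi$ is true iff $\Past\varphi\in S$.
\item The transition is $\delta(S,a)=S\cup\{\Past\varphi\in\PastSub(\Psi): \varphi \text{ is true at } i\}$. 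By the semantics of $\Past$, a straightforward induction on $i$ shows that $\delta(q_0,w_{1:i-1})$ is exactly the set of $\Past$-subformulas true at $w,i$. By Lemma~\ref{lem:ltl-prefix-invariance}, this truth value does not depend on the continuation of the word.
\item For acceptance, note that at the virtual endpoint $n+1$ all atoms are false and the $\Past$-subformulas are given by $S=\delta(q_0,w)$. So I put $F_A=\{S: \Psi \text{ evaluates to true with all } \pi_a \text{ false and } \Past\text{-values from } S\}$.
\end{itemize}
This gives a finite DFA for $L(\Psi)$, so the language is regular. Since $S\subseteq\delta(S,a)$ always holds, reachability implies set inclusion. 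The automaton is therefore partially ordered, with height at most $|\PastSub(\Psi)|$. The $\PFO$ case follows from Lemma~\ref{thm:ltl-pfo-equivalence}.

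\textbf{Reduced DFA.} The main step is to show that the reduced DFA, i.e.\ the Myhill--Nerode automaton with states $u^{-1}L$, is also partially ordered. A quotient of an acyclic automaton can in general acquire cycles, so monotonicity has to be used.

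Suppose two distinct Nerode classes reach each other. Then there are words $u,v_1,v_2$ with $uv_1\sim v\not\sim u$ and $vv_2\sim u$. Hence $u(v_1v_2)^k\sim u$ and $u(v_1v_2)^kv_1\sim v$ for all $k$.

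Now consider the states $S_k=\delta(q_0,u(v_1v_2)^k)$ in the subset automaton. They form an increasing chain in a finite lattice, so they stabilise at some $S^*$ with $\delta(S^*,v_1v_2)=S^*$. Setting $T=\delta(S^*,v_1)$, monotonicity gives $S^*\subseteq T\subseteq\delta(T,v_2)=S^*$, so $T=S^*$.

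Thus $u(v_1v_2)^k$ and $u(v_1v_2)^kv_1$ reach the same state of a DFA recognising $L$, and so lie in the same Nerode class. This forces $u\sim v$, a contradiction. Hence reachability in the reduced DFA is antisymmetric.

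I expect this last step to be the only delicate point; the construction itself is routine bookkeeping.
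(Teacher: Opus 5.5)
Your proof is correct, and it takes a genuinely different route from the paper.

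The paper's proof is a two-step citation: \citet{li_cotterell2025fpt} show that the syntactic monoid of an $\LTL$-definable language is $\mathcal R$-trivial, and the characterisation of \citet{brzozowski_rtrivial} identifies $\mathcal R$-trivial regular languages with those whose reduced DFA is partially ordered. You instead construct the past-profile automaton on $2^{\PastSub(\Psi)}$ directly. This is the same object the paper only introduces later, in Lemmas~\ref{lem:profile-residual} and~\ref{lem:past-profile-monotonicity}. You then prove by your stabilisation argument that the Nerode quotient inherits the partial order.

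The paper's route is shorter and places $L(\Psi)$ exactly in the algebraic picture. However, it leans on external results whose conventions about the end of the word must be matched to the present virtual-endpoint semantics. Your route is self-contained and tailored to that semantics. It also yields the height bound $|\PastSub(\Psi)|$ along the way, anticipating Theorem~\ref{lem:minimal-poDFA-height}.

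Your quotient step uses nothing about the subset structure beyond the fact that reachability in the source automaton is a partial order. An increasing sequence in a finite poset stabilises, and $S^*\preceq T\preceq S^*$ forces $T=S^*$. So it actually shows that the Nerode quotient of \emph{any} poDFA is partially ordered, which is the easy direction of Brzozowski--Fich. Your caveat about quotients acquiring cycles therefore concerns arbitrary quotients, not the Nerode one. This also gives a rigorous version of the one-line claim ``reduction preserves partial order'' that the paper uses in Proposition~\ref{prop:mealy-right-context}.
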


\begin{proof}
Li and Cotterell show that the syntactic monoid of an $\LTL$-definable
language is $\mathcal R$-trivial~\citep[Appendix~C]{li_cotterell2025fpt}:
distinct monoid elements have distinct principal right ideals.
The characterisation of \citet{brzozowski_rtrivial} identifies these
regular languages exactly with those whose reduced DFA is partially
ordered. The $\PFO$ statement follows from
Lemma~\ref{thm:ltl-pfo-equivalence}.
\end{proof}

The states of this automaton have the usual right-context interpretation
given by Myhill--Nerode~\citep{myhill1957finite,nerode1958linear}.

\begin{proposition}
\label{prop:past-dfa-right-context}
Let $\A$ be the reduced DFA of a regular language $\mathcal L$. For any
$u,v\in\Sigma^*$,
\[
 \delta(q_0,u)=\delta(q_0,v)
 \quad\Longleftrightarrow\quad
 \forall z\in\Sigma^*:\quad
 (uz\in\mathcal L\ \Longleftrightarrow\ vz\in\mathcal L).
\]
Thus its states are precisely the classes of prefixes with the same
accepting continuations. For $\mathcal L=L(\Psi)$, these classes form
the states of the reduced poDFA above.
\end{proposition}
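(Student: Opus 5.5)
This is the classical Myhill--Nerode characterization of the minimal DFA, and the plan is to prove the two implications directly from the definition of a reduced DFA: all states are reachable, and distinct states have different future behaviour in the sense of the remark following the definition of partially ordered automata. The starting point is the identity $\delta(q_0,uz)=\delta(\delta(q_0,u),z)$, which follows from the inductive extension of $\delta$ to words in Definition~\ref{def:ordered-automata} by induction on $|z|$. Combined with $L(\A)=\mathcal L$, it gives, for every $z$, the equivalence $uz\in\mathcal L \iff \delta(\delta(q_0,u),z)\in F_A$.

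For the forward implication, assume $\delta(q_0,u)=\delta(q_0,v)=:q$. Then for every $z$, both $uz\in\mathcal L$ and $vz\in\mathcal L$ are equivalent to $\delta(q,z)\in F_A$, so they agree.

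For the converse, argue by contraposition. Suppose $q=\delta(q_0,u)$ and $q'=\delta(q_0,v)$ are distinct states. Reducedness provides a suffix $z$ with, without loss of generality, $\delta(q,z)\in F_A$ and $\delta(q',z)\notin F_A$. The identity above turns this into $uz\in\mathcal L$ and $vz\notin\mathcal L$, so the right-hand side fails.

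It remains to identify the states with the classes of prefixes having the same accepting continuations. Consider the map from $\Sigma^*$ to $Q$ given by $u\mapsto\delta(q_0,u)$. By reachability it is surjective, and by the equivalence just proved its fibres are exactly the right-congruence classes. Hence it induces a bijection between $Q$ and those classes.

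For $\mathcal L=L(\Psi)$, Proposition~\ref{prop:past-language-podfa} shows that the reduced DFA is the poDFA, so its states are these classes. The only point needing care, and the main one here, is making explicit that ``different future behaviour'' in the definition of reducedness is exactly the existence of a separating suffix measured from the two states. Everything else is routine.
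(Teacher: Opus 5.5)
Your proof is correct and follows the paper's route. Determinism gives the forward direction, which you make explicit via $\delta(q_0,uz)=\delta(\delta(q_0,u),z)$; reducedness supplies a distinguishing continuation for the converse; and reachability identifies the states with the right-context classes. Your induction on $|z|$ and the explicit surjection $u\mapsto\delta(q_0,u)$ only spell out steps the paper leaves implicit.
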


\begin{proof}
Prefixes reaching the same state have the same acceptance decision on every
continuation, by determinism. Conversely, two distinct states of a reduced
DFA admit a distinguishing continuation. Every state is reachable, so this
identifies the states with the right-context classes.
\end{proof}

We now apply this correspondence to the tokenwise transformer formulas \ref{eq:tokenwise-transformer-formula}.
Their past tests are recognised by poDFAs, and the current letter determines
the output together with the states of these automata.

\begin{proposition}[Transformers as partially-ordered Mealy machines]
\label{prop:past-formulas-mealy}
Every transformer admitting the formulas
\eqref{eq:tokenwise-transformer-formula} is realised by a finite partially
ordered Mealy machine with output alphabet $\Gamma=\F^\nu$:
\[
 \mathcal M_T(w)_i=T(w)_{:,i}\qquad(1\leq i\leq|w|).
\]
In particular, Proposition~\ref{prop:transformer-to-logic} provides these
formulas for the transformers of Appendix~\ref{sec:fpt-definition}.
\end{proposition}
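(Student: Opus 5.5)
We plan to build the Mealy machine as a product of finitely many poDFAs, one for each past test that occurs in the tokenwise formulas. By Proposition~\ref{prop:transformer-to-logic} and Lemma~\ref{thm:ltl-pfo-equivalence}, each output coordinate value is described by a pointed $\LTL$ formula $\Phi^T_{d,f}$. Every such pointed formula is a Boolean combination of atoms $\pi_a$ (testing the current letter) and of subformulas of the form $\Past\psi$. The first step is therefore to collect the finite set $\mathcal P$ of all maximal $\Past$-subformulas $\Past\psi_1,\dots,\Past\psi_m$ occurring in the formulas $\Phi^T_{d,f}$, for $d\le\nu$ and $f\in\F$. For each of these we then have
\[
 w,i\models\Past\psi_k \iff w_{1:i-1}\cdot(\text{endpoint})\models\Past\psi_k,
\]
that is, $w_{1:i-1}\in L(\Past\psi_k)$ when the formula is evaluated at the virtual endpoint $|w_{1:i-1}|+1=i$. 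This uses Lemma~\ref{lem:ltl-prefix-invariance}: the truth of $\Past\psi_k$ at position $i$ depends only on the positions $j<i$, so it coincides with evaluation on the prefix at its virtual endpoint.

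The second step invokes Proposition~\ref{prop:past-language-podfa}. Each language $L(\Past\psi_k)$ has a reduced poDFA $\A_k=(Q_k,\Sigma,\delta_k,q_{0,k},F_k)$. We define the Mealy machine on the reachable part of the product $Q=Q_1\times\cdots\times Q_m$, using the componentwise transition and initial state $(q_{0,1},\dots,q_{0,m})$. For the output map $\lambda(q,a)$, the tuple $q$ fixes the truth value of every $\Past\psi_k$ through the condition $q_k\in F_k$, and the letter $a$ fixes every $\pi_b$. Hence each Boolean combination $\Phi^T_{d,f}$ receives a determined truth value. Because exactly one $f$ satisfies $\Phi^T_{d,f}$ at every real position, we can set $\lambda(q,a)_d$ to be that $f$. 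For consistency on tuples and letters that never co-occur, we choose an arbitrary default value. The correctness identity $\mathcal M_T(w)_i=T(w)_{:,i}$ then follows immediately from the definition $\mathcal M(w)_i=\lambda(\delta(q_0,w_{1:i-1}),w_i)$.

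The last step is partial order, and we expect this to be the main point to check carefully. Suppose $q\to q'\to q$ in the product. Projecting onto coordinate $k$ gives $q_k\to q'_k\to q_k$ in $\A_k$. Since $\A_k$ is partially ordered, this forces $q_k=q'_k$ for every $k$, and hence $q=q'$. So the product has no cycles other than self-loops, and reachability is antisymmetric. We also need to avoid subtleties with the virtual endpoint, where $\pi_a$ is false; the prefix-language identity in the first step handles this. Finiteness is immediate because $\mathcal P$ is finite. If we want a reduced machine, we can quotient by future behaviour, and this preserves the partial order because the quotient of a partial-order DFA by a congruence is still acyclic up to self-loops. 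This last remark is not needed for the statement.
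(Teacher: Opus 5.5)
Your proposal is correct and follows essentially the same route as the paper: you take the reachable part of the product of the reduced poDFAs for the past tests, and read the output off by Boolean evaluation of $\Phi^T_{d,f}$, with $\pi_a$ fixed by the current letter and each $\Past\psi$ fixed by whether the prefix is accepted; the partial order is inherited componentwise. The only difference is cosmetic: you use the outermost $\Past$-subformulas, whereas the paper takes all distinct past subformulas, and this changes nothing.
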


\begin{proof}
Translate the pointed $\PFO$ formulas into equivalent $\LTL$ formulas
$\Psi^T_{d,f}$ by Lemma~\ref{thm:ltl-pfo-equivalence}, and let $\mathcal P$
collect their distinct past subformulas $\chi=\Past\psi$.
For each $\chi$, Proposition~\ref{prop:past-language-podfa} provides a
reduced poDFA $\A_\chi=(Q_\chi,\Sigma,\delta_\chi,q_{0,\chi},F_\chi)$
recognising $L(\chi)$. After a prefix $u$, its acceptance bit gives exactly
the corresponding past test at the next token:
\[
 \delta_\chi(q_{0,\chi},u)\in F_\chi
 \quad\Longleftrightarrow\quad
 u\sigma,|u|+1\models\chi
 \qquad(\sigma\in\Sigma).
\]
Indeed, both sides ask whether $\psi$ holds at some position in $u$;
prefix invariance preserves those truth values.

Take the reachable part of the product of these poDFAs, with
\[
 q_0=(q_{0,\chi})_{\chi\in\mathcal P},\qquad
 \delta(q,\sigma)=\bigl(\delta_\chi(q_\chi,\sigma)\bigr)_{\chi\in\mathcal P}.
\]
For each current letter $\sigma$, evaluate $\Psi^T_{d,f}$ by replacing $\pi_a$
with $\mathbf1\{\sigma=a\}$ and each past test $\chi$ with
$\mathbf1\{q_\chi\in F_\chi\}$, respecting Boolean connectives.
Set $\lambda(q,\sigma)_d$ to the unique $f$ for which this evaluation is true.
Equation~\eqref{eq:tokenwise-transformer-formula} ensures existence and
uniqueness on reachable states. The output uses the state before reading
$\sigma$, so the past remains strict. Finally, every component can only advance
in its reachability order; their finite product is therefore partially
ordered. If $\mathcal P$ is empty, the product has one state.
\end{proof}

This proposition establishes the inherent ordered structure of the transformers, using Mealy machines. We shall see quickly that
the difference function between two transformers translates naturally in the other ordered automaton that we defined earlier: the poDFA.

\begin{remark}
\label{rem:token-predicates-automata}
Each $\Phi^T_{d,f}(x)$ gives a Boolean output on transitions of this machine.
Testing it at the last ordinary token need not define a poDFA language:
$\pi_a(x)$ would give $\Sigma^*a$, whose reduced DFA has a nontrivial cycle
when $|\Sigma|\geq2$. The poDFAs used above instead recognise past tests
at the virtual endpoint. Importantly, a fixed Mealy state may still emit different
vectors on different input letters.
\end{remark}

To identify the states intrinsically, write
\[
 T_u(z)\coloneqq
 \bigl(T(uz)_{:,|u|+j}\bigr)_{j=1}^{|z|}.
\]
This is the sequence emitted on the continuation $z$ after reading $u$;
it excludes the outputs already emitted on $u$.

\begin{proposition}
\label{prop:mealy-right-context}
For a transformer satisfying Proposition~\ref{prop:past-formulas-mealy},
there is a unique reduced Mealy machine, up to isomorphism.
Its states are the classes of
\[
 u\sim_T v
 \quad\Longleftrightarrow\quad
 \forall z\in\Sigma^*:\ T_u(z)=T_v(z),
\]
and its transitions and outputs are
\[
 q_0=[\eps],\qquad
 \delta([u],a)=[ua],\qquad
 \lambda([u],a)=T(ua)_{:,|u|+1}.
\]
This reduced machine is partially ordered. Consequently,
\[
 \delta(q_0,u)=\delta(q_0,v)
 \quad\Longleftrightarrow\quad
 \forall z\in\Sigma^*:\ T_u(z)=T_v(z).
\]
\end{proposition}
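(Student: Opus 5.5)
We would prove Proposition~\ref{prop:mealy-right-context} by the standard Myhill--Nerode construction, adapted to sequential output functions, and then transfer partial orderedness from the machine of Proposition~\ref{prop:past-formulas-mealy}.

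\textbf{Step 1: the quotient is a well-defined Mealy machine.} First check that $\sim_T$ is a right congruence. If $u\sim_T v$ and $a\in\Sigma$, then for every $z$ we have $T_{ua}(z)$ equal to the tail of $T_u(az)$ with its first vector dropped, and likewise for $v$. Hence $ua\sim_T va$, and $\delta$ is well defined. The output $\lambda([u],a)=T(ua)_{:,|u|+1}$ is the first entry of $T_u(a)$, so it depends only on the class.

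\textbf{Step 2: finiteness and correctness.} Let $\mathcal M_T$ be the partially ordered machine of Proposition~\ref{prop:past-formulas-mealy}. The map $u\mapsto\delta_{\mathcal M_T}(q_0,u)$ refines $\sim_T$, since equal states emit equal outputs on every continuation $z$. Hence $\sim_T$ has finitely many classes. By induction on $i$, the quotient emits $T(w)_{:,i}$ at position $i$, because $\lambda([w_{1:i-1}],w_i)=T(w)_{:,i}$. The machine is reduced by construction: every class $[u]$ is reachable via $u$, and distinct classes disagree on some $T_u(z)$.

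\textbf{Step 3: uniqueness up to isomorphism.} Take any reduced Mealy machine $\mathcal M'$ computing $T$. Every state of $\mathcal M'$ is reachable, and two prefixes reach the same state of $\mathcal M'$ iff they have identical future emissions, i.e.\ iff they are $\sim_T$-equivalent. The "only if" direction is determinism; the "if" direction is reducedness. So the map $\delta'(q_0',u)\mapsto[u]$ is a well-defined bijection commuting with transitions and outputs. The displayed equivalence $\delta(q_0,u)=\delta(q_0,v)\iff\forall z:\,T_u(z)=T_v(z)$ is immediate from the definition of the states.

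\textbf{Step 4: partial orderedness (the main obstacle).} A quotient of a partially ordered automaton need not obviously be partially ordered, so this step needs an argument. The approach is to use the surjective morphism $\pi:\mathcal M_T\to\mathcal M_{\sim}$ from Step 2.

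Suppose $[u]\neq[v]$, $[u]$ reaches $[v]$ via $x$, and $[v]$ reaches $[u]$ via $y$. Then $[u(xy)^k]=[u]$ for all $k$. In $\mathcal M_T$, the run of $u(xy)^k$ can only advance in the partial order, so it stabilises: for some $k$, the state $p=\delta(q_0,u(xy)^k)$ satisfies $\delta(p,xy)=p$. Because $\mathcal M_T$ has only self-loops, every intermediate state on the path $p\xrightarrow{x}\cdot\xrightarrow{y}p$ equals $p$. In particular $\delta(p,x)=p$.

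Projecting through $\pi$ gives $[u]=\pi(p)=\pi(\delta(p,x))=[u(xy)^kx]=[vy(xy)^{k-1}x]$. Since $[vy]=[u]$, this class is $[u(xy)^{k-1}x]=[v]$, contradicting $[u]\neq[v]$. Hence reachability on $\mathcal M_\sim$ is antisymmetric. The care needed here is in the bookkeeping of the projected classes, which uses the right-congruence property from Step 1.
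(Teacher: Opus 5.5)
Your proof is correct and takes essentially the same route as the paper: $\sim_T$ is a right congruence, it is refined by the finite partially ordered machine of Proposition~\ref{prop:past-formulas-mealy}, uniqueness follows from reduced realisations, and partial order is shown by lifting a quotient cycle to that machine. Your Step~4 stabilisation/self-loop argument is an explicit version of the paper's remark that repeatedly traversing a nontrivial quotient cycle would force infinitely many state changes. The bookkeeping there can be shortened: $[u(xy)^k]=[u]$ together with right congruence gives $[u(xy)^kx]=[ux]=[v]$ directly, which also covers $k=0$, where your $(xy)^{k-1}$ is undefined.
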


\begin{proof}
Causality gives $T_u(az)=T_u(a)\,T_{ua}(z)$, with concatenation of output
sequences. Hence $u\sim_T v$ implies equal outputs on $a$ and
$ua\sim_T va$, so the displayed maps are well defined. Equal states of
the finite machine in Proposition~\ref{prop:past-formulas-mealy} imply
equivalent prefixes; our construction is therefore a finite quotient.
All classes are reachable and distinct classes have a distinguishing
continuation. In any reduced realisation, these same classes correspond
bijectively to states, proving reducedness and uniqueness.

Finally, reduction preserves partial order: repeatedly traversing a
nontrivial quotient cycle would force infinitely many state changes in
the original finite partially ordered machine, a contradiction.
\end{proof}

\subsection{Past subformulas and automaton height}
\label{sec:past-profile-height}
We relate past subformulas to the height of their associated automata.
Word satisfaction is always evaluated at the virtual endpoint
$|w|+1$ of Appendix~\ref{sec:ltl-past}.

\begin{definition}[Subformulas]
\label{def:past-subformulas}
For $\Psi\in\LTL$, let $\Sub(\Psi)$ be the set of its syntactically distinct
subformulas, including $\Psi$ itself. Recursively,
\begin{align*}
 \Sub(\top)&=\{\top\},&\Sub(\pi_a)&=\{\pi_a\},\\
 \Sub(\neg\varphi)&=\{\neg\varphi\}\cup\Sub(\varphi),&
 \Sub(\Past\varphi)&=\{\Past\varphi\}\cup\Sub(\varphi),\\
 \Sub(\varphi\wedge\psi)
 &=\{\varphi\wedge\psi\}\cup\Sub(\varphi)\cup\Sub(\psi).
\end{align*}
The \emph{past subformulas} are those whose outermost operator is $\Past$:
$\PastSub(\Psi)\coloneqq\{\Past\varphi:\Past\varphi\in\Sub(\Psi)\}$.
\end{definition}

\begin{definition}[Past profile]
\label{def:past-profile}
For any formula $\Psi\in\LTL$, we define the past profile of a word $u\in\Sigma^*$ as the vector
\begin{equation}
 \prof_\Psi(u)\in\{0,1\}^{\PastSub(\Psi)},
 \qquad
 \prof_\Psi(u)(\Past\varphi)
 =\mathbf{1}\{u,|u|+1\models\Past\varphi\}.
 \label{eq:past-profile}
\end{equation}
It records which past subformulas hold after reading $u$, at its virtual
endpoint $|u|+1$.
\end{definition}

\begin{example}
For $\Psi=\Past(\pi_b\wedge\Past\pi_a)$, order the profile coordinates as
$(\Past\pi_a,\Psi)$. For distinct $a,b$, the profiles of
$\eps,a,ab$ are respectively $(0,0),(1,0),(1,1)$.
\end{example}

For a language $L\subseteq\SigmaStar$, we note the left residual at a prefix $u$ with respect to a language $L$ as
\[
 u^{-1}L\coloneqq\{v\in\SigmaStar:uv\in L\}.
\]

\begin{lemma}
\label{lem:profile-residual}
Let $u_1,u_2\in\SigmaStar$.  If
\[
  \prof_\Psi(u_1)=\prof_\Psi(u_2),
  \tag{PR}
\]
then,
\[
u_1^{-1}L(\Psi) = u_2^{-1}L(\Psi);
\]
equivalently for every $v\in\SigmaStar$,
\[
  u_1v\in L(\Psi)\quad\Longleftrightarrow\quad u_2v\in L(\Psi).
\]

\end{lemma}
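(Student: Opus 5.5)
We prove the equivalence $u_1v\in L(\Psi)\iff u_2v\in L(\Psi)$ by showing something stronger. For every subformula $\varphi\in\Sub(\Psi)$ and every position of the continuation, the truth value of $\varphi$ is the same in $u_1v$ and in $u_2v$, once positions are aligned from the end of the prefix. Concretely, for $0\le j\le |v|$ we claim
\[
 u_1v,\,|u_1|+1+j\models\varphi \iff u_2v,\,|u_2|+1+j\models\varphi .
\]
Position $|u_k|+1+j$ is the $(j+1)$-th letter of $v$ when $j<|v|$, and the virtual endpoint when $j=|v|$. Applying the claim with $\varphi=\Psi$ and $j=|v|$ gives exactly the statement, since word satisfaction is evaluated at the virtual endpoint.

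The claim is proved by structural induction on $\varphi$, for all $j$ simultaneously.
\begin{itemize}
\item $\top$: trivial.
\item $\pi_a$: both positions carry the same letter $v_{j+1}$, or are both the endpoint, where $\pi_a$ is false.
\item Negation and conjunction: follow directly from the induction hypothesis.
\end{itemize}

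The only substantive case is $\varphi=\Past\psi$. At aligned position $j$ in $u_kv$, the formula $\Past\psi$ holds iff $\psi$ holds at some strictly earlier position. Such a position lies either inside $u_k$ or at an aligned position $j'<j$ inside $v$; we treat the two parts separately.
\begin{itemize}
\item \emph{Witness in $v$.} Since $\psi$ is a subformula, the induction hypothesis applies at every aligned $j'<j$, so the truth values there coincide for $k=1,2$.
\item \emph{Witness in $u_k$.} By prefix invariance (Lemma~\ref{lem:ltl-prefix-invariance}), $\psi$ has the same truth value at position $i\le|u_k|$ in $u_kv$ as in $u_k$. So a witness exists iff $u_k,|u_k|+1\models\Past\psi$, which is the profile coordinate $\prof_\Psi(u_k)(\Past\psi)$. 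This coordinate is defined because $\Past\psi\in\PastSub(\Psi)$, and the two coordinates agree by (PR).
\end{itemize}
Taking the disjunction of the two parts gives equality of truth values, which completes the induction.

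The main obstacle is a technical detail at the prefix boundary. Prefix invariance is stated only for ordinary positions $1\le i\le|u|$. It must be applied exactly to the positions of $u_k$, and not to the virtual endpoint of $u_k$, which in $u_kv$ is the first letter of $v$ (or the endpoint if $v=\eps$). This requires splitting the earlier positions as "strictly inside $u_k$" versus "aligned positions of $v$". The case $j=0$ then reduces entirely to the profile coordinate, and the case $v=\eps$ is immediate from the definition of $\prof_\Psi$.
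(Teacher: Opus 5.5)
Your proof is correct and follows the paper's argument essentially verbatim. It uses the same aligned-position structural induction over $\Sub(\Psi)$, and it splits the $\Past$ case the same way: a witness inside $u_k$ is settled by the profile coordinate and (PR), and a witness inside $v$ by the induction hypothesis. Your explicit appeal to prefix invariance at positions $1,\dots,|u_k|$ only spells out a step the paper leaves implicit.
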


\begin{proof}
Write $l_i=|u_i|$ and $m=|v|$. By structural induction on
$\varphi\in\Sub(\Psi)$, we show that, for $0\leq k\leq m$,
\begin{equation}
 u_1v,l_1+k+1\models\varphi
 \quad\Longleftrightarrow\quad
 u_2v,l_2+k+1\models\varphi.
 \label{eq:shifted-subformula-equivalence}
\end{equation}
First, the atomic case compares the same suffix symbol, or the two virtual
endpoints; hence the equivalence holds. The Boolean cases follow immediately.

Now consider $\varphi=\Past\theta$. For $i\in\{1,2\}$,
\[
 u_iv,l_i+k+1\models\Past\theta
 \iff \prof_\Psi(u_i)(\Past\theta)=1
 \ \lor\
 \exists\ 0\leq j<k:\ u_iv,l_i+j+1\models\theta.
\]
The first case says that $\theta$ already has a witness in $u_i$, which is
stored in the profile. The second says that it has a witness in the first $k$
positions of $v$. By (PR), the first cases agree; by the induction hypothesis,
so do the second. Thus the equivalence holds. Taking $\varphi=\Psi$ and $k=m$
proves the claim.
\end{proof}

\begin{corollary}
\label{cor:profiles-determine-minimal-states}
Let $\Psi \in \LTL$ and let $A_\Psi$ be its minimal DFA. If two prefixes
$u_1,u_2\in\SigmaStar$ have the same past profile, then they reach the same
state of $A_\Psi$.
\end{corollary}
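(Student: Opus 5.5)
The plan is to derive the corollary directly from Lemma~\ref{lem:profile-residual} combined with the Myhill--Nerode characterisation of the reduced DFA in Proposition~\ref{prop:past-dfa-right-context}.

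First, I take $A_\Psi$ to be the minimal (equivalently, reduced) DFA recognising $L(\Psi)$. By Proposition~\ref{prop:past-language-podfa}, this language is regular, so $A_\Psi$ exists and is unique up to isomorphism.

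Next, I suppose $\prof_\Psi(u_1)=\prof_\Psi(u_2)$. This is exactly the hypothesis (PR), so Lemma~\ref{lem:profile-residual} applies. It gives $u_1^{-1}L(\Psi)=u_2^{-1}L(\Psi)$. In other words, for every $z\in\SigmaStar$ we have $u_1z\in L(\Psi)\iff u_2z\in L(\Psi)$.

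Finally, I invoke Proposition~\ref{prop:past-dfa-right-context} with $\mathcal L=L(\Psi)$. Its right-to-left implication says that two prefixes with identical accepting continuations reach the same state. Hence $\delta(q_0,u_1)=\delta(q_0,u_2)$.

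There is no real obstacle here. The only point needing care is that "minimal DFA" must be read as the reduced DFA, with all states reachable and pairwise distinguishable, so that Proposition~\ref{prop:past-dfa-right-context} applies. This holds by the standard uniqueness of the minimal DFA.
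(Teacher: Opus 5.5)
Your proposal is correct and follows the paper's proof: Lemma~\ref{lem:profile-residual} turns equal past profiles into equal residuals $u_1^{-1}L(\Psi)=u_2^{-1}L(\Psi)$. The right-to-left direction of the Myhill--Nerode characterisation in Proposition~\ref{prop:past-dfa-right-context} then gives $\delta(q_0,u_1)=\delta(q_0,u_2)$ in the reduced DFA.
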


\begin{proof}
Lemma~\ref{lem:profile-residual} gives us that for every suffix $v\in\SigmaStar$,
\[
  u_1v\in L(\Psi)\quad\Longleftrightarrow\quad u_2v\in L(\Psi).
\]
This is precisely the characterisation of
Proposition~\ref{prop:past-dfa-right-context}.
\end{proof}

\begin{definition}
\label{def:profile-order}
For $b,b'\in\{0,1\}^{\PastSub(\Psi)}$, write $b\preceq b'$ when
\[
  \forall\chi\in\PastSub(\Psi),\qquad b(\chi)\leq b'(\chi).
\]
Write $b\prec b'$ when $b\preceq b'$ and $b\neq b'$.
\end{definition}

\begin{lemma}
\label{lem:past-profile-monotonicity}
For all $u,v\in\Sigma^*$,
\[
  \prof_\Psi(u)\preceq\prof_\Psi(uv).
\]
\end{lemma}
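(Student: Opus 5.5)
The claim is a coordinatewise monotonicity statement, so I will fix an arbitrary past subformula $\Past\varphi\in\PastSub(\Psi)$ and show that $\prof_\Psi(u)(\Past\varphi)=1$ implies $\prof_\Psi(uv)(\Past\varphi)=1$. By Definition~\ref{def:profile-order}, this gives $\prof_\Psi(u)\preceq\prof_\Psi(uv)$ at once. Since the profile only takes values in $\{0,1\}$, there is nothing to show when the coordinate is $0$.

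Assume $\prof_\Psi(u)(\Past\varphi)=1$, that is, $u,|u|+1\models\Past\varphi$. By the semantics of $\Past$ in Definition~\ref{def:ltl-past}, there is a witness position $j<|u|+1$ with $u,j\models\varphi$, so $j$ is an ordinary position $1\leq j\leq|u|$ of $u$. The key step is to transport this witness to $uv$. By prefix invariance (Lemma~\ref{lem:ltl-prefix-invariance}), which applies exactly because $j\leq|u|$, we get $uv,j\models\varphi$. Since $j\leq|u|<|uv|+1$, the same position is a strictly earlier witness for the endpoint of $uv$, so $uv,|uv|+1\models\Past\varphi$, i.e.\ $\prof_\Psi(uv)(\Past\varphi)=1$.

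The only point that needs care is the role of the virtual endpoint. Truth of $\varphi$ at the endpoint $|u|+1$ of $u$ is \emph{not} preserved in $uv$, since atoms are false at endpoints but that position may carry a letter in $uv$. The strictness of $\Past$ guarantees, however, that any witness lies at an ordinary position of $u$, where Lemma~\ref{lem:ltl-prefix-invariance} applies. I will state this explicitly. With that observation in place, the argument is a one-line existential-witness transfer and needs no induction on $\Psi$.
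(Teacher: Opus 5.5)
Your proof is correct and follows the paper's argument: fix a coordinate $\Past\varphi$, take a witness position strictly before the endpoint of $u$, transfer it to $uv$ by prefix invariance (Lemma~\ref{lem:ltl-prefix-invariance}), and note that it still lies strictly before $|uv|+1$. You also state explicitly that the witness must be an ordinary position $j\le|u|$, which is the hypothesis of prefix invariance; the paper's proof uses this without stating it.
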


\begin{proof}
Let $\Past\theta\in\PastSub(\Psi)$ and suppose that
$\prof_\Psi(u)(\Past\theta)=1$. By definition, there is a position
$m<|u|+1$ such that $u,m\models\theta$. By
Lemma~\ref{lem:ltl-prefix-invariance}, $uv,m\models\theta$; also
$m<|uv|+1$. Hence $uv,|uv|+1\models\Past\theta$, so
$\prof_\Psi(uv)(\Past\theta)=1$. This holds for every coordinate.
\end{proof}

\begin{lemma}
\label{lem:boolean-profile-chain}
Let $S$ be a finite set and let
\[
  b_0\prec b_1\prec\cdots\prec b_r
  \qquad\text{in }\{0,1\}^{S}
\]
under the coordinate-wise order. Then $r\leq |S|$.
\end{lemma}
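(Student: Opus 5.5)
The idea is to attach to each vector $b_k$ its support $\operatorname{supp}(b_k)\coloneqq\{s\in S: b_k(s)=1\}\subseteq S$, and to count how these supports grow along the chain.

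First, the coordinate-wise order on $\{0,1\}^S$ is exactly inclusion of supports: $b\preceq b'$ holds iff $\operatorname{supp}(b)\subseteq\operatorname{supp}(b')$. Also, $b\neq b'$ iff the supports differ. Hence each strict step $b_{k-1}\prec b_k$ gives a strict inclusion $\operatorname{supp}(b_{k-1})\subsetneq\operatorname{supp}(b_k)$. Since these are finite sets, this means
\[
|\operatorname{supp}(b_k)|\geq|\operatorname{supp}(b_{k-1})|+1 \qquad (1\leq k\leq r).
\]

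Next, we telescope over the chain. This yields $|\operatorname{supp}(b_r)|\geq|\operatorname{supp}(b_0)|+r\geq r$. Since $\operatorname{supp}(b_r)\subseteq S$, we also have $|\operatorname{supp}(b_r)|\leq|S|$, and therefore $r\leq|S|$.

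Equivalently, one can apply the map $b\mapsto\sum_{s\in S}b(s)$. It is strictly increasing along the chain, takes integer values, and ranges in $\{0,\dots,|S|\}$, so there can be at most $|S|$ strict increases.

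There is no real obstacle here. The only point needing care is that strictness in the coordinate-wise order forces at least one coordinate to flip from $0$ to $1$, and that no coordinate can ever flip back. Both facts follow directly from Definition~\ref{def:profile-order}.
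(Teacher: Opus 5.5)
Your proof is correct and matches the paper's argument. The paper also passes to supports, turns each strict step into a strict inclusion $S_i\subsetneq S_{i+1}\subseteq S$, and concludes because each strict inclusion raises the cardinality by at least one. Your reformulation via the weight $\sum_{s\in S} b(s)$ is the same counting argument.
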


\begin{proof}
For every $i$, let $S_i\subseteq S$ be the support of $b_i$, namely the set
of coordinates on which $b_i$ is $1$. The strict inequalities give
\[
  S_0\subsetneq S_1\subsetneq\cdots\subsetneq S_r\subseteq S.
\]
Every strict inclusion increases cardinality by at least one. Thus there are
at most $|S|$ strict inclusions, proving $r\leq |S|$.
\end{proof}

\begin{lemma}
\label{lem:strict-residual-path}
Let $u_0,\ldots,u_r\in\SigmaStar$ be a sequence of prefixes such that, for every
$0\leq i<r$, there is a word $x_i\in\SigmaStar$ satisfying
\[
  u_{i+1}=u_ix_i
  \qquad\text{and}\qquad
  u_i^{-1}L(\Psi)\neq u_{i+1}^{-1}L(\Psi).
\]
Then
\[
  r\leq \bigl|\PastSub(\Psi)\bigr|.
\]
\end{lemma}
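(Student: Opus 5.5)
The plan is to attach a past profile to each prefix and show that the profile strictly increases at every step of the chain. This turns the sequence of residuals into a strict chain in $\{0,1\}^{\PastSub(\Psi)}$, and Lemma~\ref{lem:boolean-profile-chain} then gives $r\leq|\PastSub(\Psi)|$.

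\textbf{Step 1: the profiles form a monotone chain.} Set $b_i\coloneqq\prof_\Psi(u_i)$ for $0\leq i\leq r$. Since $u_{i+1}=u_ix_i$, Lemma~\ref{lem:past-profile-monotonicity} gives $b_i\preceq b_{i+1}$ for every $i$.

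\textbf{Step 2: each step is strict.} Suppose $b_i=b_{i+1}$ for some $i$. Lemma~\ref{lem:profile-residual}, applied with $u_1=u_i$ and $u_2=u_{i+1}$, would then give $u_i^{-1}L(\Psi)=u_{i+1}^{-1}L(\Psi)$, which contradicts the hypothesis on the $i$-th step. Hence $b_i\neq b_{i+1}$, and combined with Step~1 we get $b_i\prec b_{i+1}$.

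\textbf{Step 3: conclude.} The profiles now form a strict chain
\begin{equation*}
b_0\prec b_1\prec\cdots\prec b_r \qquad\text{in } \{0,1\}^{\PastSub(\Psi)}.
\end{equation*}
Lemma~\ref{lem:boolean-profile-chain}, applied with $S=\PastSub(\Psi)$, yields $r\leq|\PastSub(\Psi)|$.

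The only point that needs care is the direction of implication in Step~2. Lemma~\ref{lem:profile-residual} says that equal profiles force equal residuals, and we use its contrapositive: distinct residuals force distinct profiles. The degenerate cases are harmless. If some $x_i=\eps$, then $u_{i+1}=u_i$ and the residuals coincide, so the hypothesis already excludes it. If $r=0$, the bound holds trivially.
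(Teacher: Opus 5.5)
Your proposal is correct and follows the paper's proof step for step. It gets monotonicity of the profiles from Lemma~\ref{lem:past-profile-monotonicity}, strictness from the contrapositive of Lemma~\ref{lem:profile-residual}, and the bound $r\leq|\PastSub(\Psi)|$ from Lemma~\ref{lem:boolean-profile-chain} with $S=\PastSub(\Psi)$.
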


\begin{proof}
By Lemma~\ref{lem:past-profile-monotonicity},
\[
  \prof_\Psi(u_i)\preceq\prof_\Psi(u_{i+1})
  \qquad (0\leq i<r).
\]
If two consecutive profiles were equal, Lemma~\ref{lem:profile-residual}
would imply $u_i^{-1}L(\Psi)=u_{i+1}^{-1}L(\Psi)$, contrary to assumption.
Hence each displayed relation is strict.
Lemma~\ref{lem:boolean-profile-chain}, applied with $S=\PastSub(\Psi)$,
gives the result.
\end{proof}

We finally combine the previous definitions and results for the following result. In particular, the bound $C_0$ from Eq.~\ref{eq:chain-bound-in-long-short-lemma} is an application of this theorem:

\begin{theorem}[Past subformulas bound automaton height]
\label{lem:minimal-poDFA-height}
For $\Psi\in\LTL$, the reduced DFA $\A_\Psi$ of $L(\Psi)$ satisfies
\[
\operatorname{ht}(\A_\Psi) \leq |\PastSub(\Psi)|.
\]
\end{theorem}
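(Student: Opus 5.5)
The plan is to reduce the height bound to Lemma~\ref{lem:strict-residual-path} by reading off, from a run of maximal height, a chain of prefixes whose residuals change strictly at each step. Fix any word $w=w_1\cdots w_n\in\Sigma^*$, and let $i_1<i_2<\cdots<i_r$ be the positions at which the run of $\A_\Psi$ strictly changes state, that is, $\delta(q_0,w_{1:i_k})\neq\delta(q_0,w_{1:i_k-1})$. It suffices to show $r\leq|\PastSub(\Psi)|$ for every such $w$, since the height is the maximum of $r$ over all words.

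First I would use the fact that along any run of a partially ordered automaton, the states visited at the change points are pairwise distinct. Indeed, if $\delta(q_0,w_{1:i_k})=\delta(q_0,w_{1:i_{k'}})$ with $k<k'$, the intermediate states would form a cycle that is not a self-loop, contradicting antisymmetry of reachability. This is why I need the reduced DFA to be a poDFA, which is Proposition~\ref{prop:past-language-podfa}. Strictly, for the argument below I only need consecutive distinctness, which holds by definition of a change point.

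Next, set $u_0=\eps$ and $u_k=w_{1:i_k}$. Take $x_0=w_{1:i_1}$ and $x_k=w_{i_k+1:i_{k+1}}$, so that $u_{k+1}=u_kx_k$. Because $\A_\Psi$ is reduced, Proposition~\ref{prop:past-dfa-right-context} identifies states with residual classes. Distinct states therefore give distinct residuals $u^{-1}L(\Psi)$.

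The one delicate point is the first step. Here $u_0=\eps$ sits in state $q_0$, while $u_1$ sits in a different state. The state before the change at $i_1$ equals $q_0$, because no change occurred earlier. So the state reached by $u_1$ differs from the state reached by $u_0$, and their residuals differ. For $k\geq1$, the state at $u_k$ equals the state just before $i_{k+1}$, since no change occurs strictly between $i_k$ and $i_{k+1}$. That state differs from the one at $u_{k+1}$. Hence $u_k^{-1}L(\Psi)\neq u_{k+1}^{-1}L(\Psi)$ for all $0\leq k<r$.

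With this, Lemma~\ref{lem:strict-residual-path} applies directly to $u_0,\dots,u_r$ and gives $r\leq|\PastSub(\Psi)|$. Taking the maximum over $w$ yields $\operatorname{ht}(\A_\Psi)\leq|\PastSub(\Psi)|$. I expect no real obstacle: the substantive work, namely profile monotonicity and the fact that equal profiles give equal residuals, is already done in the preceding lemmas. The only care needed is the index bookkeeping, matching state changes to consecutive prefixes whose states are distinct.
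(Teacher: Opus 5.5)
Your proof is correct and takes the same route as the paper, which simply cites Lemma~\ref{lem:strict-residual-path}. Your change-point bookkeeping is exactly the reduction the paper leaves implicit: the first step from $\eps$, and the identification of reduced-DFA states with residuals via Proposition~\ref{prop:past-dfa-right-context}. As you yourself observe, the pairwise-distinctness remark based on the partial order is not needed, because consecutive distinctness already suffices for the lemma.
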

\begin{proof}
The bound follows from Lemma~\ref{lem:strict-residual-path}.
\end{proof}

We can directly convey the above result to the Mealy machine. Intuitively, our transformers from Appendix~\ref{sec:fpt-definition} are the Mealy machines made of the pointed formulas given in \ref{eq:tokenwise-transformer-formula}.

\begin{corollary}
For a finite family $\mathcal G$ of pointed $\LTL$ formulas, the reduced
Mealy machine $\mathcal M_{\mathcal G}$ emitting their truth values at
each token similarly satisfies
\[
 \operatorname{ht}(\mathcal M_{\mathcal G})
 \leq \left|\bigcup_{\varphi\in\mathcal G}\PastSub(\varphi)\right|.
\]
\end{corollary}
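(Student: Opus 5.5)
The plan is to mimic the proof of Theorem~\ref{lem:minimal-poDFA-height}, with past profiles now taken over the union $S\coloneqq\bigcup_{\varphi\in\mathcal G}\PastSub(\varphi)$. For a prefix $u$, define $\prof_{\mathcal G}(u)\in\{0,1\}^{S}$ by $\prof_{\mathcal G}(u)(\Past\theta)=\mathbf 1\{u,|u|+1\models\Past\theta\}$. The first step is a Mealy analogue of Lemma~\ref{lem:profile-residual}: if $\prof_{\mathcal G}(u_1)=\prof_{\mathcal G}(u_2)$, then for every continuation $z$ and every $\varphi\in\mathcal G$ the truth values of $\varphi$ at the positions $|u_1|+k$ of $u_1z$ and $|u_2|+k$ of $u_2z$ coincide, for $1\le k\le|z|$. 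I will prove this exactly as in Lemma~\ref{lem:profile-residual}, by structural induction over the subformulas in $\bigcup_{\varphi}\Sub(\varphi)$, evaluated at shifted ordinary positions instead of the endpoint. The atomic cases read the same letter of $z$. Boolean cases are immediate. For $\Past\theta$, a witness lies either in $u_i$, which is recorded by the shared profile coordinate since $\Past\theta\in S$, or among the earlier positions of $z$, which is handled by the induction hypothesis. It follows that equal profiles give equal emitted sequences on every continuation, so $u_1\sim u_2$ in the sense of Proposition~\ref{prop:mealy-right-context}, and hence both reach the same state of the reduced machine $\mathcal M_{\mathcal G}$.

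Next, monotonicity $\prof_{\mathcal G}(u)\preceq\prof_{\mathcal G}(uv)$ holds coordinatewise, with the same argument as Lemma~\ref{lem:past-profile-monotonicity}, via prefix invariance (Lemma~\ref{lem:ltl-prefix-invariance}).

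Now fix a word $w$ and list the indices $i_1<\dots<i_r$ at which the state of $\mathcal M_{\mathcal G}$ strictly changes, and set $u_k=w_{1:i_k}$, with $u_0=\eps$. The states reached after $u_{k-1}$ and $u_k$ differ. By the first step, their profiles must then differ, and by monotonicity $\prof(u_{k-1})\prec\prof(u_k)$. This yields a strict chain $\prof(u_0)\prec\prof(u_1)\prec\cdots\prec\prof(u_r)$ in $\{0,1\}^S$, and Lemma~\ref{lem:boolean-profile-chain} gives $r\le|S|$. Maximizing over $w$ bounds the height.

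The only delicate point is the first step. Reducedness identifies states with $\sim$-classes via Proposition~\ref{prop:mealy-right-context}, which was stated for transformers; the same Myhill--Nerode argument applies verbatim to any causal tokenwise map, and $\mathcal G$ defines such a map. I also need to be careful that the state change between consecutive indices $i_{k-1}$ and $i_k$ really compares the states after $u_{k-1}$ and after $u_k$. This holds because the state is constant strictly between the two change points.
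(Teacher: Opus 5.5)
Your proof is correct and is essentially the paper's argument: both rest on the shared past profile over $\bigcup_{\varphi\in\mathcal G}\PastSub(\varphi)$, whose bits only flip from $0$ to $1$, together with the fact that the reduced machine is a quotient of the profile machine. The difference is only in presentation. The paper builds the profile Mealy realisation forward (profile plus current letter determines the outputs and the OR-update) and then appeals to ``reduction cannot increase height''. You instead prove that quotient step explicitly, via a residual induction modelled on Lemma~\ref{lem:profile-residual}, and then run the strict-chain argument of Theorem~\ref{lem:minimal-poDFA-height} directly on $\mathcal M_{\mathcal G}$.
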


\begin{proof}
For the Mealy bound, store the shared past profile over
$\bigcup_{\varphi\in\mathcal G}\PastSub(\varphi)$.
Together with the next letter, it determines the current truth values
and the updated profile: the bit for $\Past\theta$ becomes its old value
OR the current truth of $\theta$. Thus these profiles form a Mealy
realisation. Each bit changes at most once, giving the stated height;
reduction cannot increase it.
\end{proof}

\newpage

\section{Length Generalization}
\label{app:length-generalization-proof}

For two $\mathcal{FPT}$s \edits{given by} $T_s,T_t$ sharing $\Sigma$, $\F$, and output dimension $\nu$,
the \emph{difference function} from Section~\ref{sec:len-gen-only} is
\begin{equation}
\label{app:def:difference-function}
 \Delta(w)\coloneqq
 \mathbf1\{\exists\,1\leq i\leq|w|:\ T_s(w)_{:,i}\neq T_t(w)_{:,i}\}.
\end{equation}

We give two height bounds for its reduced poDFA $\A_\Delta$.
We use the arithmetic assumptions of Appendix~\ref{sec:fpt-definition},
with $p=|\F|$ and $r_\F=K_\F+1$. 

\subsection{Bounding the height with past subformulas}
\label{sec:bounding-the-height}
We count the past subformulas needed to describe one transformer's outputs,
then combine the two families into a formula for the difference function.

\begin{lemma}[Simultaneous output representation]
\label{lem:simultaneous-output-formulas}
Under these arithmetic hypotheses, let
$T\in\FPT(L,D,H)$ have output dimension $\nu$, and put $D_h=D/H$.
There are formulas $\Phi^T_{d,f}\in\LTL$, for $1\leq d\leq\nu$ and
$f\in\F$, such that, for every $w\in\Sigma^*$ and $1\leq i\leq|w|$,
\[
 w,i\models\Phi^T_{d,f}
 \quad\Longleftrightarrow\quad T(w)_{d,i}=f.
\]
Their shared collection of distinct past subformulas satisfies
\begin{equation}
 \mathcal P(T)\coloneqq
 \bigcup_{d=1}^{\nu}\ \bigcup_{f\in\F}\PastSub(\Phi^T_{d,f}),
 \qquad
 |\mathcal P(T)|\leq B(T)\coloneqq r_\F L D\,p^{D/H+1}.
 \label{eq:shared-past-bound}
\end{equation}
\end{lemma}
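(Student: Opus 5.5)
We revisit the construction of Proposition~\ref{prop:transformer-to-logic} and count only the past operators it creates. Pointwise maps (embedding, query/key/value projections, residual additions, feed-forward layers, output layer) are finite lookup tables, as in \eqref{eq:pointwise-lookup-formula}. They are built from previously constructed formulas using Boolean connectives only, so they add no new element to $\PastSub$. Consequently every past subformula in $\mathcal P(T)$ is created inside an attention head. We therefore organise the count layer by layer and head by head, and fix one canonical $\LTL$ formula for each atomic fact. The union over all $d,f$ then shares these formulas syntactically, so each one is counted once.

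For layer $\ell$ and head $h$, the head output at query position $x$ depends on $x$ only through the query vector $\mathbf q\in\F^{D_h}$. Given $\mathbf q$, it depends on the multiset of pairs $(S,\mathbf v)$ over earlier positions, where $S=\mathbf q^{\mathsf T}\mathbf k$, clipped at multiplicity $r_\F$ by the arithmetic assumptions. The natural past tests are therefore $\exists^{\geq r}y<x:\mathrm K_{\mathbf k}(y)\wedge\mathrm V_{\mathbf v}(y)$ for $1\le r\le r_\F$. In $\LTL$ these are nested formulas $\Past(\theta\wedge\Past(\theta\wedge\cdots))$, following Remark~\ref{rem:pfo-counting}. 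Each threshold $r$ contributes exactly one new past subformula on top of threshold $r-1$, so one key/value type costs $r_\F$ past subformulas. The query is not inside the past operator; it is resolved outside by the disjunction \eqref{eq:attention-output-formula}.

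A naive count over pairs $(\mathbf k,\mathbf v)\in\F^{2D_h}$ gives $p^{2D_h}$ per head, which is too large. The main obstacle is to bring this down to $D\,p^{D_h+1}/H$ per head, so that the per-layer total over $H$ heads is $D\,p^{D_h+1}$. My plan is to decouple keys from values and count per value coordinate. For each coordinate $d'\le D_h$ of the value vector and each value $g\in\F$ of that coordinate, and each score level $s\in\F$ (obtained by grouping keys by score with the fixed query), the coordinate-$d'$ output is a function of two quantities: the clipped counts of positions with (score $=s$, $V_{d'}=g$), and the clipped counts of positions with score $=s$ for the denominator. Indexing by keys, this gives key type $\mathbf k\in\F^{D_h}$ paired with a single coordinate value $g\in\F$, i.e.\ $p^{D_h}\cdot p=p^{D_h+1}$ types per coordinate. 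With threshold counts this is $r_\F\,p^{D_h+1}$ past subformulas for each of the $D_h$ coordinates. Per head that is $r_\F D_h p^{D_h+1}$, and summing over $H$ heads gives $r_\F D p^{D_h+1}$ per layer. Summing over $L$ layers gives $B(T)=r_\F L D p^{D/H+1}$.

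The point to verify carefully is that these per-coordinate counts really determine the rounded weighted sum. The argument is as follows. The maximum score, the stabilized exponentials, the denominator and each coordinate's numerator are all sums over multisets. The arithmetic assumptions state that such sums depend only on the clipped multiset, and that the computation is coordinate-separable in the value. The denominator tests are the $g$-marginals, which are Boolean combinations of the tests already counted: a test with threshold $\geq r$ on the marginal is a disjunction over multiplicity splits, all of which are bounded by $r_\F$. Once this is checked, Lemma~\ref{thm:ltl-pfo-equivalence} converts the $\PFO$ formulas into $\LTL$ without new past operators beyond the one per quantifier. This gives \eqref{eq:shared-past-bound}.
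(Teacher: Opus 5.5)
Your proposal is correct and follows essentially the paper's own argument. For each head you use one threshold chain $\Past(\theta\wedge\Past(\theta\wedge\cdots))$ of length $r_\F$ per query-independent type $(\mathbf k,d',g)$ with $\mathbf k\in\F^{D_h}$, $d'\le D_h$, $g\in\F$, while the query and the key-marginals needed for the denominator are handled by Boolean combinations outside the past operators; this gives $r_\F D_h p^{D_h+1}$ per head and $B(T)$ in total.

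The one step to tighten is your claim that the signed numerator depends only on clipped counts, since that is not itself one of the arithmetic assumptions. As in the paper, it follows from a dichotomy. Either some key type with positive stabilized exponential reaches the clipping level $r_\F$, so the denominator overflows and the head output is zero. Or no such type does, and then every nonzero numerator summand has an exactly known multiplicity.
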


\begin{proof}
Follow the layerwise construction of
Appendix~\ref{sec:transformers-as-logic-formulas}, now directly in $\LTL$.
At layer $0$, the embedding coordinates are described by
\[
 \Phi^{(0)}_{d,f}=\bigvee_{a\,:\,e(a)_d=f}\pi_a.
\]
Moreover, if a pointwise map $g$ sends input coordinates with values
$(f_1,\ldots,f_r)$ to $g(f_1,\ldots,f_r)$, then its output value $f$ is
described by the Boolean combination
\[
 \bigvee_{(f_1,\ldots,f_r)\,:\,g(f_1,\ldots,f_r)=f}
 \ \bigwedge_{j=1}^{r}\Phi_{j,f_j}.
\]
Thus projections, feed-forward maps, residuals, and the
output map add no past operators. Only attention does.

Fix a layer $\ell$ and a head $h$, suppressing these indices below. For each
key vector $\mathbf k\in\F^{D_h}$, coordinate $d\leq D_h$, and value
$f\in\F$, let $\theta_{\mathbf k,d,f}$ describe the unary condition
$K_{:,i}=\mathbf k$ and $V_{d,i}=f$. These pointwise predicates add no
past operators. Writing $\theta(y)$ for their unary $\PFO$ form, we describe
the clipped counts
\[
 c_{\mathbf k,d,f}(i)
 =\min\!\left\{r_\F,
   \bigl|\{j<i:K_{:,j}=\mathbf k,\ V_{d,j}=f\}\bigr|\right\}
\]
using the threshold tests
\begin{equation}
 \exists^{=r}y<x:\theta(y)\quad(0\leq r<r_\F),
 \qquad
 \exists^{\geq r_\F}y<x:\theta(y).
 \label{eq:shared-threshold-formulas}
\end{equation}
By Remark~\ref{rem:pfo-counting}, each successive at-least threshold adds
one outer $\Past$ to the conjunction of $\theta$ with the previous threshold.
Thresholds $1,\ldots,r_\F$ share their shorter expansions, and exact counts
are Boolean combinations of successive thresholds (threshold $0$ is $\top$).
Hence all tests for one $\theta$ add at most $r_\F$ distinct past subformulas.

Fix an output coordinate $d$. The current query $\mathbf q=Q_{:,i}$ and
the clipped-count vector
\[
 \mathbf c_i=\bigl(c_{\mathbf k,d,f}(i)\bigr)_
 {\mathbf k\in\F^{D_h},\;d\leq D_h,\;f\in\F}
\]
determine the head output. Indeed, these data identify the present keys,
the maximum score, and each stabilized exponential $a_{\mathbf k}$. If some
$a_{\mathbf k}=+\infty$, or if
\[
 c_{\mathbf k,d,f}(i)=r_\F\quad\text{and}\quad a_{\mathbf k}>0,
\]
then the denominator overflows and the head output is zero. Otherwise, every
type with a positive exponential has its exact multiplicity recorded, while
zero terms contribute nothing. Multiset summation then determines the
denominator, the weights, and the weighted value sum. Hence, for a
finite lookup function $A_d$,
\[
 \operatorname{Att}_{d,i}=A_d(\mathbf q,\mathbf c_i).
\]
Expanding this finite lookup into Boolean cases reuses only the threshold
tests above, and so introduces no further past subformulas.

There are
\[
 |\F^{D_h}|\,D_h\,|\F|=D_h p^{D_h+1}
\]
key--coordinate--value types per head. Thus, if $\mathcal P_\ell$ denotes the
past subformulas available after layer $\ell$, then
\[
 |\mathcal P_\ell\setminus\mathcal P_{\ell-1}|
 \leq r_\F H D_h p^{D_h+1}
 =r_\F D p^{D/H+1}.
\]
Summing this bound over the $L$ layers proves~\eqref{eq:shared-past-bound}.
All $\nu$ output coordinates and all $f\in\F$ reuse the same collection, so
the final output map adds no factor.
\end{proof}

\begin{remark}
We count distinct past subformulas, not occurrences or Boolean formula size.
A general $\exists y<x:\theta(x,y)$ need not translate into a single
$\Past$-subformula. Here all counted predicates depend only on $y$;
the current query is handled outside the past operators by Boolean cases.
\end{remark}

The predicates $\Phi^T_{d,f}$ describe the outputs of the Mealy machines
from Appendix~\ref{sec:transformer-mealy}. Comparing these predicates and
checking for a mismatch anywhere in the word gives the difference function.

\begin{proposition}[Difference function as a logic formula]
\label{prop:tokenwise-disagreement}
Let $T_s,T_t$ be student and teacher transformers satisfying the preceding
hypotheses, over the same alphabet and precision set, with the same output
dimension $\nu$. Using the formulas of
Lemma~\ref{lem:simultaneous-output-formulas}, define
\begin{align}
 \psi_\Delta
 &\coloneqq
 \bigvee_{d=1}^{\nu}\ \bigvee_{f\in\F}
 \bigl(\Phi^{T_s}_{d,f}\wedge\neg\Phi^{T_t}_{d,f}\bigr),
 \label{eq:tokenwise-disagreement}\\
 \Psi_\Delta&\coloneqq\Past\psi_\Delta.
 \label{eq:persistent-disagreement}
\end{align}
Then
\[
 w\models\Psi_\Delta\quad\Longleftrightarrow\quad\Delta(w)=1,
\]
and
\begin{equation}
 |\PastSub(\Psi_\Delta)|
 \leq B(T_s)+B(T_t)+1.
 \label{eq:disagreement-past-count}
\end{equation}
Moreover, $L(\Psi_\Delta)\Sigma^*=L(\Psi_\Delta)$.
\end{proposition}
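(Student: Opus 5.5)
The proof has three parts: semantic correctness of $\Psi_\Delta$, the count of past subformulas, and suffix-closure of the language.

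\emph{Correctness.} We evaluate $\psi_\Delta$ at an ordinary position $1\leq i\leq|w|$. By Lemma~\ref{lem:simultaneous-output-formulas}, for each coordinate $d$ exactly one $f\in\F$ satisfies $w,i\models\Phi^{T_s}_{d,f}$, namely $f=T_s(w)_{d,i}$, and likewise for $T_t$. Hence the disjunct for $(d,f)$ holds iff $T_s(w)_{d,i}=f\neq T_t(w)_{d,i}$. So $w,i\models\psi_\Delta$ iff the output vectors differ at position $i$.

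Since $w\models\Psi_\Delta$ means $w,|w|+1\models\Past\psi_\Delta$, it asks for a witness $j<|w|+1$. Every such $j$ is an ordinary position, so the virtual endpoint never has to be evaluated against the transformer formulas. This gives exactly $\Delta(w)=1$. The formulas are evaluated on the word $w$ itself, so prefix invariance (Lemma~\ref{lem:ltl-prefix-invariance}) is not even needed here.

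\emph{Counting.} A past subformula of $\Psi_\Delta$ is either $\Psi_\Delta$ itself or a past subformula of $\psi_\Delta$. Boolean connectives introduce no $\Past$, so every past subformula of $\psi_\Delta$ lies in some $\PastSub(\Phi^{T_s}_{d,f})$ or $\PastSub(\Phi^{T_t}_{d,f})$. Hence
\[
\PastSub(\Psi_\Delta)\subseteq\{\Psi_\Delta\}\cup\mathcal P(T_s)\cup\mathcal P(T_t),
\]
and the bound~\eqref{eq:shared-past-bound} applied to each transformer gives $B(T_s)+B(T_t)+1$. The only subtle point is that we count \emph{syntactically distinct} subformulas as a set. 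The large disjunctions over $d$ and $f$ therefore do not multiply the count, because the union defining $\mathcal P(T)$ already absorbs the repeated shared subformulas.

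\emph{Suffix closure.} The inclusion $L(\Psi_\Delta)\subseteq L(\Psi_\Delta)\Sigma^*$ is trivial, taking the empty suffix. Conversely, suppose $w\in L(\Psi_\Delta)$, so some ordinary $j\leq|w|$ satisfies $w,j\models\psi_\Delta$. By Lemma~\ref{lem:ltl-prefix-invariance}, $wv,j\models\psi_\Delta$ for every $v$. Since $j<|wv|+1$, we get $wv\models\Past\psi_\Delta$. Alternatively, this follows from Lemma~\ref{lem:past-profile-monotonicity} applied to the coordinate $\Psi_\Delta$.

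The main point needing care is the correctness step. Lemma~\ref{lem:simultaneous-output-formulas} guarantees the equivalence only at ordinary positions, so one must check that the $\Past$ operator ranges over ordinary positions only. Strict precedence of $j$ relative to the endpoint $|w|+1$ ensures this.
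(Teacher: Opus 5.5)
Your proposal is correct and follows the paper's proof: the same pointwise correctness argument for $\psi_\Delta$, the same inclusion $\PastSub(\Psi_\Delta)\subseteq\mathcal P(T_s)\cup\mathcal P(T_t)\cup\{\Past\psi_\Delta\}$ for the count, and persistence of a mismatch under right concatenation. The only difference is that you justify suffix closure through prefix invariance of $\LTL$ formulas (Lemma~\ref{lem:ltl-prefix-invariance}), whereas the paper appeals directly to causality of the transformers; this is the same fact stated at the level of formulas.
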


\begin{proof}
At each token, $\psi_\Delta$ holds exactly when an output coordinate differs.
Thus $\Past\psi_\Delta$, evaluated at $|w|+1$, expresses $\Delta(w)=1$
(and is false on $\eps$). Boolean operations add no past subformulas:
\[
 \PastSub(\Psi_\Delta)
 \subseteq\mathcal P(T_s)\cup\mathcal P(T_t)
              \cup\{\Past\psi_\Delta\}.
\]
This proves the count. Causality preserves an existing mismatch under
right concatenation, proving persistence.
\end{proof}

\begin{corollary}
\label{cor:tokenwise-disagreement-height}
The reduced DFA $\A_\Delta$ of $L(\Psi_\Delta)$ is partially ordered and
satisfies
\[
 \operatorname{ht}(\A_\Delta)
 \leq B(T_s)+B(T_t)+1
 =r_\F\bigl(L_sD_s p^{D_s/H_s+1}+L_tD_t p^{D_t/H_t+1}\bigr)+1.
\]
If $\Delta$ is not identically zero, $\A_\Delta$ has a unique accepting
state, which is absorbing.
\end{corollary}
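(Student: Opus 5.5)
The corollary follows by combining Proposition~\ref{prop:tokenwise-disagreement} with the general results on past formulas, in three steps.

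\emph{Partial order.} Since $\Psi_\Delta\in\LTL$, Proposition~\ref{prop:past-language-podfa} shows that $L(\Psi_\Delta)$ is regular and that its reduced DFA $\A_\Delta$ is partially ordered.

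\emph{Height bound.} Theorem~\ref{lem:minimal-poDFA-height} gives $\operatorname{ht}(\A_\Delta)\leq|\PastSub(\Psi_\Delta)|$. The count \eqref{eq:disagreement-past-count} bounds the right-hand side by $B(T_s)+B(T_t)+1$. Substituting the definition $B(T)=r_\F L D\,p^{D/H+1}$ from Lemma~\ref{lem:simultaneous-output-formulas} for each transformer yields the displayed expression. This step is pure bookkeeping.

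\emph{Unique absorbing accepting state.} Here I would use the persistence property $L(\Psi_\Delta)\Sigma^*=L(\Psi_\Delta)$ together with Proposition~\ref{prop:past-dfa-right-context}.
\begin{itemize}
\item Take any $u\in L(\Psi_\Delta)$. Persistence gives $u^{-1}L(\Psi_\Delta)=\Sigma^*$.
\item Hence all accepted prefixes share the same right context $\Sigma^*$. By the Myhill--Nerode identification, they all reach the same state $\bot$ of the reduced DFA, so the accepting state is unique.
\item For every $a\in\Sigma$, the word $ua$ is again accepted. Therefore $\delta(\bot,a)=\bot$, and $\bot$ is absorbing.
\item Such a state exists only if some word is accepted, which holds exactly when $\Delta$ is not identically zero.
\end{itemize}
In the degenerate case $\Delta\equiv0$, the reduced DFA has a single non-accepting state, and the height bound holds trivially.

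I expect no step to be a real obstacle. The only point needing care is checking that ``reduced'' here matches the Myhill--Nerode characterisation, which Proposition~\ref{prop:past-dfa-right-context} already provides. With that in place, all accepting prefixes collapse to one state.
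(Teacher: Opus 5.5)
Your proposal is correct and follows the paper's proof essentially step for step. You take the partial order from Proposition~\ref{prop:past-language-podfa}, the height bound from Theorem~\ref{lem:minimal-poDFA-height} combined with \eqref{eq:disagreement-past-count}, and the unique absorbing accepting state from persistence together with the right-context characterisation of Proposition~\ref{prop:past-dfa-right-context}; your explicit check that $\delta(\bot,a)=\bot$ and your remark on the degenerate case $\Delta\equiv 0$ only spell out details the paper leaves implicit.
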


\begin{proof}
Apply Proposition~\ref{prop:past-language-podfa} and
Theorem~\ref{lem:minimal-poDFA-height} to $\Psi_\Delta$, then
use~\eqref{eq:disagreement-past-count}. Every accepted prefix has residual
$\Sigma^*$ by persistence, so all accepting prefixes reach the same
absorbing state by Proposition~\ref{prop:past-dfa-right-context}.
\end{proof}

\begin{remark}[Mealy-machine interpretation]
Let $\mathcal M_T$ be the reduced Mealy machine of $T$.
The shared-profile argument of Appendix~\ref{sec:past-profile-height} gives
$\operatorname{ht}(\mathcal M_T)\leq B(T)$. Running the two machines
in parallel until their outputs differ gives
\begin{equation}
 \operatorname{ht}(\A_\Delta)
 \leq\operatorname{ht}(\mathcal M_{T_s})
     +\operatorname{ht}(\mathcal M_{T_t})+1.
 \label{eq:mealy-disagreement-height}
\end{equation}
Each strict product transition advances at least one component; entering
the absorbing bad state adds at most one. Reduction cannot increase height.
\end{remark}

\subsection{Bounding the height with temporal profiles}
\label{sec:counting-temporal-profiles}

We bound the same difference-function poDFA $\A_\Delta$ using past
nesting depth. This removes the exponential dependence on $D/H$ and
gives a polynomial bound in depth, of degree $s=|\Sigma|$.

\begin{definition}[Past depth and temporal profiles]
\label{def:depth-temporal-profile}
The \emph{past depth} of a formula is defined by
\begin{align*}
 \operatorname{pd}(\top)&=\operatorname{pd}(\pi_a)=0, &
 \operatorname{pd}(\neg\varphi)&=\operatorname{pd}(\varphi),\\
 \operatorname{pd}(\Past\varphi)&=1+\operatorname{pd}(\varphi), &
 \operatorname{pd}(\varphi\wedge\psi)
 &=\max\{\operatorname{pd}(\varphi),\operatorname{pd}(\psi)\}.
\end{align*}
For $w\in\Sigma^*$, write $w_{<i}=w_1\cdots w_{i-1}$ and define
\begin{equation}
 U_0(w)=\star,\qquad
 U_{k+1}(w)=
 \bigl\{(w_i,U_k(w_{<i})):1\leq i\leq |w|\bigr\},
 \label{eq:depth-temporal-profile}
\end{equation}
where $\star$ is a fixed symbol. Unlike $\prof_\Psi$, the profile $U_k$
does not depend on a particular formula: it records which
(letter, lower-depth-profile) pairs have occurred.
\end{definition}

The profile records which letters have appeared in which past contexts.
For fixed $k$, there are finitely many profiles, and higher profiles
determine all lower ones by forgetting their deepest context layer.
For $k\geq1$,
\begin{equation}
 U_k(ua)=U_k(u)\cup\{(a,U_{k-1}(u))\}.
 \label{eq:temporal-profile-update}
\end{equation}
Thus profiles grow by inclusion and update deterministically from the next
letter. Structural induction shows that $(w_i,U_k(w_{<i}))$ determines
every formula of past depth at most $k$ at position $i$.
At the virtual endpoint, all letter atoms are false, so $U_k(w)$ alone
determines truth. Equal profiles therefore give equal residual languages.

\begin{lemma}[Counting strict profile changes]
\label{lem:temporal-profile-changes}
For every $k\geq0$, along the prefixes of any word, $U_k$ changes at most
\begin{equation}
 \binom{k+s}{s}-1
 \label{eq:temporal-profile-change-bound}
\end{equation}
times. Consequently, if $\operatorname{pd}(\Psi)\leq k$, the reduced DFA
of $L(\Psi)$ has height at most $\binom{k+s}{s}-1$.
\end{lemma}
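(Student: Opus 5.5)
We will prove the counting bound by bounding the number of \emph{distinct values} of $U_k$ along the prefixes of a fixed word $w$. By \eqref{eq:temporal-profile-update}, $U_k$ grows by inclusion, so a value that has been left can never return. Hence the number of strict changes equals the number of distinct values minus one. It therefore suffices to show that, along any run over an alphabet of size $s$, $U_k$ takes at most $D(k,s)\coloneqq\binom{k+s}{s}$ distinct values. The base cases are routine: $U_0\equiv\star$ gives $D(0,s)=1$, and for $s=0$ the only word is $\eps$, so $D(k,0)=1$. The goal is then the Pascal recurrence
\[
 D(k,s)\leq D(k-1,s)+D(k,s-1),
\]
which closes the induction on $k+s$ via $\binom{k+s}{s}=\binom{k-1+s}{s}+\binom{k+s-1}{s-1}$.

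To obtain this recurrence, we will attribute each strict change of $U_k$ at a step $u\to ua$ to the new pair $(a,U_{k-1}(u))$. We split these changes into two types, according to whether the step $u\to ua$ also changes $U_{k-1}$.

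Changes that occur at a step where $U_{k-1}$ also strictly changes can be charged injectively to changes of $U_{k-1}$. By induction, there are at most $D(k-1,s)-1$ of them.

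The remaining changes add a pair $(a,V)$ while $U_{k-1}$ stays equal to $V$, both before and after reading $a$. Fix such a maximal interval on which $U_{k-1}\equiv V$. Since $U_{k-1}$ determines $U_1$, the set of letters seen is frozen on this interval. Moreover, if $k\ge 2$, then reading $a$ without changing $U_{k-1}$ means $(a,U_{k-2}(u))$ was already present. The plan is to show that, across the whole run, the letters $a$ responsible for such ``silent'' changes behave like a run over a strictly smaller effective alphabet. Concretely, we would pick the first letter $b=w_1$ and argue that every silent change with $a\neq b$, or every change after $b$ has been absorbed into all relevant contexts, is mirrored by the profile $U_k$ of the word with $b$ deleted. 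That word lives over $s-1$ letters, so these changes contribute at most $D(k,s-1)-1$. The extra $+1$ in the recurrence accounts for the initial value.

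The consequence for $\Psi$ with $\operatorname{pd}(\Psi)\le k$ is then direct. By the remark preceding the lemma, $U_k(u)$ determines $u^{-1}L(\Psi)$. Along any run of the reduced DFA, every strict state change $u\to ua$ forces a change of residual, hence a change of $U_k$. The height is therefore at most the number of strict changes of $U_k$, which is $\binom{k+s}{s}-1$. Here we use Proposition~\ref{prop:past-dfa-right-context} to identify states with residuals.

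The main obstacle is the second step: finding the right decomposition that yields $D(k,s-1)$ rather than the naive $s\cdot D(k-1,s)$. The naive bound would give only $s^k$-type growth. What I would try first is an induction on the alphabet through the \emph{last-new-letter} decomposition: write $w=x\,c\,y$, where $c$ is the final letter whose first occurrence is latest. The prefix $x$ is over $s-1$ letters, giving $D(k,s-1)$ values. On $c\,y$ the full alphabet is present, and there we would show that each change of $U_k$ is witnessed by a distinct change of $U_{k-1}$, giving $D(k-1,s)$. If this charging fails, I would fall back to verifying the bound via an explicit injection of profiles into monotone lattice paths of length $k$ in $s$ letters, since $\binom{k+s}{s}$ counts multisets of size $k$ from $s+1$ types.
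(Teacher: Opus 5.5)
\textbf{The counting bound itself is never proved.} Your reduction to counting distinct values of $U_k$ is fine, as are the base cases and the deduction of the height bound from the counting bound. But the Pascal recurrence $D(k,s)\le D(k-1,s)+D(k,s-1)$ is only announced, and neither route you sketch establishes it.

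\emph{The ``delete $w_1$'' route is not an argument.} Removing a letter alters every later context $U_{k-1}(u_{<i})$, so there is no evident correspondence between the changes in the two runs.

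\emph{Your budget for silent changes is off by one.} Silent changes are steps that change $U_k$ but not $U_{k-1}$, and they can number $D(k,s-1)$, not $D(k,s-1)-1$. For $s=1$, the step $a^{k-1}\to a^k$ is silent, although $D(k,0)-1=0$. For $k=s=2$, along $aabab$ the profile $U_2$ takes all $6=\binom{4}{2}$ values, and the steps reading the second, fourth and fifth letters are silent. Your accounting $1+(D(1,2)-1)+(D(2,1)-1)$ allows only $5$ values.

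\emph{The last-new-letter fallback fails at exactly the step you flag.} Once every letter has appeared, $U_1$ is frozen forever, yet $U_2$ keeps changing. In $aba$, the final $a$ adds the new pair $(a,U_1(ab))$ to $U_2$, while $U_1(aba)=U_1(ab)$. So changes of $U_k$ on $cy$ cannot be charged injectively to changes of $U_{k-1}$. The bound of $D(k-1,s)$ values on that segment is true, but not for that reason.

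\textbf{The missing idea is an invariant recording the depth at which the next occurrence of each letter will register.} The paper uses the next-change rank
\[
r_a(u)=\min\{j\ge1:U_j(ua)\neq U_j(u)\}.
\]
Reading $a$ changes exactly the $U_j$ with $j\ge t:=r_a(u)$. The ranks then update by $r_a(ua)=t+1$ and $r_b(ua)=\min\{r_b(u),t+1\}$ for $b\neq a$. Consequently the sorted vector of capped ranks $\min\{r_a(u),k+1\}$ strictly increases lexicographically whenever $U_k$ changes. There are $\binom{k+s}{s}$ nondecreasing $s$-tuples over $\{1,\dots,k+1\}$, which gives the bound.

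In your terms, the silent changes are precisely the steps with $t=k$. On prefixes containing the whole alphabet every rank is at least $2$, leaving $\binom{k-1+s}{s}$ tuples over $\{2,\dots,k+1\}$. So the rank invariant is exactly what would complete your last-new-letter decomposition. Once you have it, however, the recurrence and the induction on $s$ are no longer needed.
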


\begin{proof}
For each $a\in\Sigma$, define its next-change rank after $u$ by
\[
 r_a(u)=\min\{j\geq1:U_j(ua)\neq U_j(u)\}.
\]
This is finite because $\Past^{|u|+1}\top$ distinguishes $u$ from $ua$
at the virtual endpoint. In particular,
\begin{equation}
 U_k(ua)\neq U_k(u)
 \quad\Longleftrightarrow\quad r_a(u)\leq k.
 \label{eq:temporal-rank-change}
\end{equation}

Initially, $r_a(\eps)=1$. If $t=r_a(u)$, the ranks update as
\begin{equation}
 r_a(ua)=t+1,\qquad
 r_b(ua)=\min\{r_b(u),t+1\}\quad(b\neq a).
 \label{eq:temporal-rank-update}
\end{equation}
A letter not yet seen has rank $1$. Otherwise, comparing with the prefix
before its last occurrence shows that its rank is one plus the first depth
at which the intervening profiles differ. Appending $a$ changes profiles
from depth $t$ onwards, which gives~\eqref{eq:temporal-rank-update}.

For fixed $k$, let
\[
 \rho_k(u)\coloneqq
 \operatorname{sort}\bigl(\min\{r_a(u),k+1\}:a\in\Sigma\bigr).
\]
If $t\leq k$, Equation~\eqref{eq:temporal-rank-update} replaces one entry
$t$ by $t+1$, leaves all entries below $t$ fixed, and leaves every other
altered entry at least $t+1$. Hence
\[
 U_k(ua)\neq U_k(u)
 \quad\Longrightarrow\quad
 \rho_k(u)<_{\mathrm{lex}}\rho_k(ua).
\]
If $t>k$, both $U_k$ and $\rho_k$ are unchanged. Since there are exactly
\[
 \binom{k+s}{s}
\]
nondecreasing $s$-tuples in $\{1,\ldots,k+1\}^s$, the profile changes at
most $\binom{k+s}{s}-1$ times. The profile automaton recognises $L(\Psi)$;
reduction cannot increase its height.
\end{proof}

This counts profile changes along a run, not the total number of profiles.

\begin{lemma}[Past depth of transformer outputs]
\label{lem:transformer-past-depth}
Under the stated assumptions, a transformer $T\in\FPT(L,D,H)$ with
output dimension $\nu$ admits coordinate/value formulas
$\Phi^T_{d,f}\in\LTL$ satisfying
\[
 w,i\models\Phi^T_{d,f}\iff T(w)_{d,i}=f,
 \qquad \operatorname{pd}(\Phi^T_{d,f})\leq r_\F L,
\]
for every $1\leq i\leq |w|$, $1\leq d\leq\nu$, and $f\in\F$.
\end{lemma}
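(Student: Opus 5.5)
We would prove the bound by induction on the layer index $\ell$, reusing the layerwise construction from the proof of Lemma~\ref{lem:simultaneous-output-formulas} but now tracking past depth instead of the number of distinct past subformulas. The invariant to carry is: for every layer $0\leq \ell\leq L$, every coordinate $d\leq D$ and every $f\in\F$, there is a formula $\Phi^{(\ell)}_{d,f}\in\LTL$ with $w,i\models\Phi^{(\ell)}_{d,f}\iff\Lambda^{(\ell)}_{d,i}(w)=f$ and $\operatorname{pd}(\Phi^{(\ell)}_{d,f})\leq r_\F\ell$.

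\textbf{Base case and pointwise steps.} For $\ell=0$ the embedding formulas $\bigvee_{a:e(a)_d=f}\pi_a$ have past depth $0$. Every pointwise map (queries, keys, values, the residual additions, the ReLU feed-forward and the final output map $\operatorname{O}$) is a finite lookup table. It is therefore expressed by the Boolean combination \eqref{eq:pointwise-lookup-formula} of its input formulas. Since $\operatorname{pd}$ takes the maximum over $\wedge$ and is unchanged by $\neg$, these steps do not increase past depth. In particular the final output formulas $\Phi^T_{d,f}$ inherit the depth bound of $\Lambda^{(L)}$.

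\textbf{Attention step (main step).} Fix layer $\ell$ and head $h$. By induction, the unary predicates $\theta_{\mathbf k,d',f'}(y)$, expressing $K_{:,j}=\mathbf k$ and $V_{d',j}=f'$, have past depth at most $r_\F\ell$. As in the proof of Lemma~\ref{lem:simultaneous-output-formulas}, the head output at $i$ is a finite lookup $A_d(\mathbf q,\mathbf c_i)$ of the current query and the clipped counts $c_{\mathbf k,d',f'}(i)\in\{0,\dots,r_\F\}$. Here the clipping at $r_\F$ is justified by the arithmetic assumptions (overflow beyond $K_\F$ copies of a positive term).

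The key computation is the depth of the threshold tests $\exists^{\geq r}y<x:\theta(y)$ for $r\leq r_\F$. Unfolding Remark~\ref{rem:pfo-counting} in $\LTL$ gives $\Past(\theta\wedge\Past(\theta\wedge\cdots\Past\theta))$ with $r$ nested $\Past$'s, so its past depth is $r+\operatorname{pd}(\theta)\leq r_\F+r_\F\ell$. Exact counts are Boolean combinations of consecutive thresholds, and the query enters only through Boolean cases outside the $\Past$ operators. Hence the head-output formulas have depth $\leq r_\F(\ell+1)$. The residual and feed-forward steps then preserve this, which closes the induction.

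\textbf{Expected obstacle.} The delicate point is checking the translation from $\PFO$ to $\LTL$ in Lemma~\ref{thm:ltl-pfo-equivalence}: one must confirm that the threshold formulas, whose body depends only on $y$, translate into exactly $r$ nested $\Past$'s rather than incurring extra depth. We would handle this by writing the $\LTL$ formula directly, recursively as $\chi_{r+1}\coloneqq\Past(\theta\wedge\chi_r)$ with $\chi_0\coloneqq\top$. We would then verify $w,i\models\chi_r$ iff at least $r$ positions $j<i$ satisfy $\theta$, by induction on $r$, choosing the latest witness for the outer $\Past$. With this explicit form, the $\PFO$ detour is avoided entirely.
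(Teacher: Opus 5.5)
Your proposal is correct and follows the paper's own proof. Both argue by induction over layers in the construction of Lemma~\ref{lem:simultaneous-output-formulas}: embeddings have depth $0$, pointwise lookups add no depth, and each attention layer adds at most $r_\F$ nested $\Past$ operators through the clipped threshold tests, while the query enters only through Boolean cases. Your explicit recursion $\chi_{r+1}=\Past(\theta\wedge\chi_r)$, verified by choosing the latest witness, is exactly the unfolding of Remark~\ref{rem:pfo-counting} that the paper uses, and it makes rigorous a step the paper states in one line.
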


\begin{proof}
In the construction of Lemma~\ref{lem:simultaneous-output-formulas},
embeddings have depth zero and pointwise maps do not increase depth.
At each layer, the unary key/value predicates inherit the previous
layer's depth. Counting their occurrences up to $r_\F$ adds at most
$r_\F$ nested past operators; attention then uses only Boolean
combinations of these tests and the current query predicates.
Each layer therefore increases depth by at most $r_\F$.
\end{proof}

\begin{corollary}
\label{cor:disagreement-height-temporal-depth}
Put $L_*=\max\{L_s,L_t\}$. The reduced poDFA $\A_\Delta$ of the
difference function satisfies
\begin{equation}
 \operatorname{ht}(\A_\Delta)
 \leq \binom{r_\F L_*+s}{s}.
 \label{eq:disagreement-height-temporal-depth}
\end{equation}
\end{corollary}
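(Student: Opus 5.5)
The plan is to bound the past depth of the difference formula $\Psi_\Delta$ of Proposition~\ref{prop:tokenwise-disagreement} and then apply Lemma~\ref{lem:temporal-profile-changes}.

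First, by Lemma~\ref{lem:transformer-past-depth}, every coordinate/value formula $\Phi^{T_s}_{d,f}$ has past depth at most $r_\F L_s$, and every $\Phi^{T_t}_{d,f}$ has past depth at most $r_\F L_t$. Both are therefore at most $r_\F L_*$. The tokenwise disagreement formula $\psi_\Delta$ in \eqref{eq:tokenwise-disagreement} is a Boolean combination of these. Since $\operatorname{pd}$ is the maximum over Boolean connectives, $\operatorname{pd}(\psi_\Delta)\leq r_\F L_*$, and hence $\operatorname{pd}(\Psi_\Delta)=\operatorname{pd}(\Past\psi_\Delta)\leq r_\F L_*+1$.

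Applying Lemma~\ref{lem:temporal-profile-changes} directly with $k=r_\F L_*+1$ would give $\binom{r_\F L_*+1+s}{s}-1$, which is weaker than the claim. To lose the extra $+1$ I would not apply the lemma to $\Psi_\Delta$ itself. Instead I would argue on the run of $\A_\Delta$ as follows.

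The automaton that stores $U_k(u)$ with $k=r_\F L_*$, together with one extra ``disagreement seen'' bit, recognises $L(\Psi_\Delta)$. To see this, note that by the structural-induction remark after \eqref{eq:temporal-profile-update}, the pair $(a,U_k(u))$ determines the truth of $\psi_\Delta$ at the position where $a$ is read after prefix $u$. The bit is then updated to its old value OR that truth value. This is a deterministic realisation: its state is the pair $(U_k(u),\text{bit})$.

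Along any run, strict state changes are of two kinds:
\begin{itemize}
\item changes of $U_k$, of which there are at most $\binom{k+s}{s}-1$ by Lemma~\ref{lem:temporal-profile-changes};
\item the single flip of the bit from $0$ to $1$.
\end{itemize}
A step may do both at once, which only reduces the count, so the realisation has height at most $\binom{r_\F L_*+s}{s}$.

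Finally, the reduced DFA $\A_\Delta$ is a quotient of this realisation. Reduction cannot increase height, by the same quotient-cycle argument used in Proposition~\ref{prop:mealy-right-context} and Lemma~\ref{lem:temporal-profile-changes}. This gives \eqref{eq:disagreement-height-temporal-depth}.

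The delicate step is justifying that $(a,U_k(u))$ determines the truth of every depth-$\leq k$ formula at that position. The paper asserts this as a structural induction. I would check the $\Past$ case: $\Past\varphi$ holds at position $i$ iff some pair $(w_j,U_{k-1}(w_{<j}))$ with $j<i$ makes $\varphi$ true, and this is readable from $U_k(w_{<i})$ by the induction hypothesis at depth $k-1$. Partial order of the quotient also needs a little care, but it follows as in Corollary~\ref{cor:tokenwise-disagreement-height}.
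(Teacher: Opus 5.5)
Your proposal is correct and follows the paper's proof essentially step for step: bound $\operatorname{pd}(\psi_\Delta)\leq r_\F L_*$ via Lemma~\ref{lem:transformer-past-depth}, realise $L(\Psi_\Delta)$ by a DFA that tracks $U_k$ with $k=r_\F L_*$ plus a mismatch indicator, count at most $\binom{k+s}{s}-1$ profile changes plus one for the mismatch, and pass to the reduced DFA. The only cosmetic difference is that the paper freezes the profile by sending the run into an absorbing accepting state at the first mismatch, whereas you keep updating $U_k$ alongside a flag bit; both give the same count.
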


\begin{proof}
Set $k=r_\F L_*$. Using the output formulas of
Lemma~\ref{lem:transformer-past-depth} in~\eqref{eq:tokenwise-disagreement}
gives $\operatorname{pd}(\psi_\Delta)\leq k$.
Construct a DFA that stores $U_k(u)$ until the first mismatch.
On reading $a$, $(a,U_k(u))$ determines $\psi_\Delta$ at the new position:
if true, enter an accepting absorbing state; otherwise update $U_k$.
This recognises $L(\Psi_\Delta)$.

The profile grows by inclusion, so the DFA is partially ordered.
Lemma~\ref{lem:temporal-profile-changes} allows at most
$\binom{k+s}{s}-1$ profile changes, and entering the accepting state adds
at most one. Reduction preserves partial order and cannot increase height.
\end{proof}

For fixed $\Sigma$ and $\F$, this bound is polynomial in $L_*$, of degree
$s=|\Sigma|$, and independent of widths, head counts, and output dimension.
It bounds automaton height, not the size of the representing formulas.

\begin{remark}[Mealy-machine interpretation]
The same profile $U_{r_\F L}(u)$, together with the next letter, determines
all outputs of $T$. It therefore also realises $T$ as a Mealy machine, giving
\begin{equation}
 \operatorname{ht}(\mathcal M_T)
 \leq \binom{r_\F L+s}{s}-1.
 \label{eq:mealy-height-temporal-depth}
\end{equation}
Summing the two individual bounds via~\eqref{eq:mealy-disagreement-height}
also bounds $\operatorname{ht}(\A_\Delta)$.
The poDFA proof above is sharper because both transformers share
$U_{r_\F L_*}$ on their common input.
\end{remark}

\begin{remark}
    We remark that \citet{masopust2015complexity} showed a related bound (also in terms of a binomial coefficient) on the height of the minimal DFA representing piecewise testable languages; such languages are expressible by fixed-precision transformers. Our result differs in applying to difference automata for general fixed-precision transformers, and in expressing the bound in terms of the size parameters of those transformers.
\end{remark}

\subsection{Deterministic length generalization}
\label{sec:deterministic-length-generalization-appendix}
Either height bound from Corollary \ref{cor:tokenwise-disagreement-height}-\ref{cor:disagreement-height-temporal-depth}
gives a deterministic length cutoff: any nonzero
difference function between two transformers has its behaviour controlled by height of its poDFA.
Taking the smaller bound yields the following result.

\begin{lemma}[Deterministic length generalization]
\label{lem:deterministic-length-generalization-appendix}
Let $T_s\in\FPT(L_s,D_s,H_s)$ and $T_t\in\FPT(L_t,D_t,H_t)$ satisfy
the preceding hypotheses, with $L_s\geq L_t$ as in
Remark~\ref{remark:learning-assumptions}. Using the arithmetic constant
$r_\F$ from Appendix~\ref{sec:fpt-definition}, set
\begin{equation}\label{eq:deterministic-length-cutoffs-appendix}
  C_0 := r_\F \left[L_sD_s |\F|^{D_{h,s}+1}
                 + L_tD_t |\F|^{D_{h,t}+1}\right]+1, \qquad C_1 :=\binom{r_\F L_s+\S}{\S}.
\end{equation}
For $C:=\min\{C_0,C_1\}$, agreement on all words of length at most $C$
implies agreement on every word:
\[
    T_s|_{\leq C} \equiv T_t|_{\leq C} \quad \implies \quad T_s \equiv T_t.
\]
\end{lemma}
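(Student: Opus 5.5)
We argue by contraposition, working with the difference function $\Delta$ of \eqref{app:def:difference-function} and its reduced DFA $\A_\Delta$. If $T_s\not\equiv T_t$, then $\Delta$ is not identically zero. By Proposition~\ref{prop:tokenwise-disagreement}, $\Delta$ is defined by $\Psi_\Delta=\Past\psi_\Delta$, and $L(\Psi_\Delta)$ is closed under right concatenation. By Corollary~\ref{cor:tokenwise-disagreement-height}, $\A_\Delta$ is a partially ordered DFA with a unique accepting state $\bot$, which is absorbing. The goal is to exhibit a word of length at most $\operatorname{ht}(\A_\Delta)$ on which $\Delta=1$; that word witnesses disagreement within $\Sigma^{\leq C}$.

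The key step is a path-shortening argument in the poDFA. Take any accepted word $w$ and look at its run $q_0=p_0,p_1,\dots,p_{|w|}$. Because $\A_\Delta$ is partially ordered, every transition is either a self-loop or a strict move down the reachability order. Deleting each letter $w_i$ with $p_i=p_{i-1}$ leaves the run of the resulting subword unchanged. This holds because the remaining letters are read from the same states, and the deletion is done inductively from left to right. Hence the shortened word $w'$ still ends in $\bot$, and its length equals the number of strict state changes along the run, which is at most $\operatorname{ht}(\A_\Delta)$ by definition of height. So $\Delta(w')=1$ with $|w'|\leq\operatorname{ht}(\A_\Delta)$.

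It remains to compare $\operatorname{ht}(\A_\Delta)$ with $C=\min\{C_0,C_1\}$. Corollary~\ref{cor:tokenwise-disagreement-height} gives $\operatorname{ht}(\A_\Delta)\leq r_\F(L_sD_s p^{D_{h,s}+1}+L_tD_tp^{D_{h,t}+1})+1=C_0$, since $D_s/H_s=D_{h,s}$. Corollary~\ref{cor:disagreement-height-temporal-depth} gives $\operatorname{ht}(\A_\Delta)\leq\binom{r_\F L_*+s}{s}$ with $L_*=\max\{L_s,L_t\}=L_s$ by Remark~\ref{remark:learning-assumptions}, which equals $C_1$. Therefore $|w'|\leq C$. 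Consequently, if $T_s$ and $T_t$ agree on $\Sigma^{\leq C}$, no such $w'$ exists, so $\Delta\equiv0$ and $T_s\equiv T_t$.

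The main obstacle I expect is making the deletion step precise. One must ensure that removing self-loop letters does not alter later transitions. This is immediate for a DFA, since each transition depends only on the current state and letter, not on history. The cited height bounds, however, are stated for the \emph{reduced} automaton, so I would run the argument on $\A_\Delta$ itself rather than on the profile automata. Finally, a disagreement at any position of $w'$ is a disagreement of the full output vectors $T_s(w')\neq T_t(w')$, which is what $T_s|_{\leq C}\equiv T_t|_{\leq C}$ forbids.
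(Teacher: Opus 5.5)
Your proposal is correct and follows essentially the same route as the paper: both combine the two height bounds (Corollaries~\ref{cor:tokenwise-disagreement-height} and~\ref{cor:disagreement-height-temporal-depth}, with $L_*=L_s$) and then strip self-loop letters from an accepted word of $\A_\Delta$ to obtain a disagreeing word of length at most $\operatorname{ht}(\A_\Delta)\leq C$. The only cosmetic difference is that the paper phrases the stripping as a minimality argument (a shortest accepted word has no self-loops), whereas you delete all self-loop letters explicitly; the two are equivalent.
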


\begin{proof}
Corollaries~\ref{cor:tokenwise-disagreement-height}
and~\ref{cor:disagreement-height-temporal-depth} give
$\operatorname{ht}(\A_\Delta)\leq\min\{C_0,C_1\}=C$.
If $T_s\not\equiv T_t$, let $w$ be a shortest word accepted by
$\A_\Delta$. Its run contains no self-loop as deleting the corresponding
letter would give a shorter accepted word. Every letter therefore
induces a strict transition, so $|w|\leq\operatorname{ht}(\A_\Delta)\leq C$,
contradicting agreement on $\Sigma^{\leq C}$.
\end{proof}

\subsection{Probabilistic length generalization}
\label{sec:probabilistic-LG}

We now show that low error up to a sufficiently large cutoff persists at all
larger lengths.  The argument is automata-theoretic: a random input makes
uniformly positive progress through a partially ordered DFA until it reaches
an absorbing state.

Let $\A=(Q,\Sigma,\delta,q_0,B)$ be a partially ordered DFA, with bad
states $B\subseteq Q$, and let $s=|\Sigma|\geq2$.
Let $C\geq0$ be an integer upper bound on $\operatorname{ht}(\A)$.
For the difference function, take $\A=\A_\Delta$, with $B$ its accepting
states and $C=\min\{C_0,C_1\}$ from~\eqref{eq:deterministic-length-cutoffs-appendix}.
For an
exact length $m$, and for the distribution uniform over all words of length
at most $K$, respectively, write
\[
  p_m\coloneqq
  \Pr_{w\sim\operatorname{Unif}(\Sigma^m)}[\delta(q_0,w)\in B],
  \qquad
  \mathcal D_K\coloneqq\operatorname{Unif}(\Sigma^{\leq K}),
  \qquad
  e_K\coloneqq
  \Pr_{w\sim\mathcal D_K}[\delta(q_0,w)\in B].
\]
Thus $p_m$ is the bad-state probability at one fixed length, whereas $e_K$
is the bad-state probability when every word of length at most $K$ is equally
likely.  The mass that $\mathcal D_K$ assigns to length $K$ is
\[
  \lambda_K
  \coloneqq\Pr_{w\sim\mathcal D_K}[|w|=K]
  =\frac{s^K}{1+s+\cdots+s^K}
  =\frac{(s-1)s^K}{s^{K+1}-1}
  >\frac{s-1}{s}.
\]
Write $Q_{\mathrm{abs}}$ for the absorbing states of $\A$, namely those
satisfying $\delta(q,a)=q$ for every $a\in\Sigma$. All other states are
called transient.

\begin{lemma}[Absorption tail]
\label{app:lem:absorption-tail}
Let $0<\eps<1$.  If
\[
  K\geq s\left(
    C+\log\frac1\eps+
    \sqrt{\log^2\frac1\eps+2C\log\frac1\eps}
  \right),
  \tag{P}
\]
then a uniformly random word $W_K\in\Sigma^K$ ends in a transient state with
probability at most $\eps$.
\end{lemma}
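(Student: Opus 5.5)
The plan is to couple the run of $\A$ on a uniformly random word $W_K=a_1\cdots a_K$ with a count of strict transitions. Then I show this count stochastically dominates a binomial variable, and finish with a Chernoff lower-tail bound.

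\emph{Step 1: one step of progress.} For any transient state $q$ there is some letter $a$ with $\delta(q,a)\neq q$. Since the letters $a_i$ are i.i.d.\ uniform on $\Sigma$, conditionally on the past (hence on the current state $q_{i-1}=\delta(q_0,a_1\cdots a_{i-1})$), the probability of a strict transition at step $i$ is at least $1/s$ whenever $q_{i-1}$ is transient. Let $S_K$ be the number of strict transitions along the run. Because $\A$ is partially ordered, $S_K\leq\operatorname{ht}(\A)\leq C$ always. Moreover, after at most $C$ strict moves the run can only sit in states from which no further strict move is possible, and such states are absorbing. So if the run ends in a transient state after $K$ steps, then every one of the $K$ steps started in a transient state (absorbing states are never left), and at most $C-1$ strict moves occurred. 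Conversely, ending transient requires $S_K\leq C-1$ (indeed $S_K<C$ would suffice as an event containing it).

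\emph{Step 2: coupling with a binomial.} Let $U_1,\dots,U_K$ be i.i.d.\ uniform on $[0,1]$. Generate $a_i$ from $U_i$ so that, when $q_{i-1}$ is transient, the event $\{U_i\leq 1/s\}$ is contained in the event of a strict transition. This is possible because the set of "moving" letters has mass at least $1/s$; when $q_{i-1}$ is transient, assign one moving letter to $[0,1/s]$. Let $Y=\#\{i:U_i\leq 1/s\}\sim\operatorname{Bin}(K,1/s)$. On the event that the run ends transient, every step started transient, so $S_K\geq Y$. Hence
\[
\Pr[\delta(q_0,W_K)\notin Q_{\mathrm{abs}}]\leq\Pr[Y\leq C-1]\leq\Pr[Y<C].
\]

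\emph{Step 3: Chernoff.} With $\mu=K/s$ and $\mu>C$, the multiplicative lower tail gives
\[
\Pr[Y<C]\leq\exp\!\bigl(-(\mu-C)^2/(2\mu)\bigr).
\]
It remains to show $(\mu-C)^2\geq 2\mu\log(1/\eps)$. Writing $\ell=\log(1/\eps)$, this is the quadratic $\mu^2-2(C+\ell)\mu+C^2\geq0$. Its larger root is
\[
C+\ell+\sqrt{(C+\ell)^2-C^2}=C+\ell+\sqrt{\ell^2+2C\ell},
\]
which is exactly the expression in (P) divided by $s$. So for $\mu$ at or above this root the bound is at most $\eps$, and this root exceeds $C$, so the condition $\mu>C$ holds.

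The main obstacle I expect is Step 1/2: making rigorous that "ending transient" forces at most $C-1$ strict moves while every step had probability at least $1/s$ to move. This needs the observation that a transient state always admits a leaving letter, and that $\operatorname{ht}(\A)\leq C$ bounds strict moves along any run. Everything else is the coupling bookkeeping plus solving the quadratic.
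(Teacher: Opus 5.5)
Your proof is correct and follows essentially the same route as the paper: every transient state has progress probability at least $1/s$, a sequential coupling with $\operatorname{Bin}(K,1/s)$ shows that ending transient forces fewer than $C$ successes, and the multiplicative Chernoff lower tail is solved through the same quadratic in $\mu=K/s$. The only difference is cosmetic. The paper counts a step as a success if it makes a strict move or the run is already absorbed, which gives stochastic dominance on the whole sample space, whereas you restrict to the event of ending transient, where every step started in a transient state. Both lead to the same bound $\Pr[Y<C]$.
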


\begin{proof}
If $C=0$, the initial state is absorbing and the claim is immediate.
At every transient state, at least one letter makes a strict transition.
Count a step as successful if it makes a strict transition or the run was
already absorbed. Each step has conditional success probability at least
$1/s$, so sequential coupling shows that the total success count
stochastically dominates $Y\sim\operatorname{Bin}(K,1/s)$.
If the run is still transient after $K$ steps, all successes were strict
transitions and there were fewer than $C$ of them. Hence
\[
  \Pr[\delta(q_0,W_K)\notin Q_{\mathrm{abs}}]\leq\Pr[Y<C].
\]

We use Chernoff's inequality to obtain an explicit sufficient cutoff.

Put $\mu=K/s$ and $\ell_\eps=\log(1/\eps)$.  The displayed condition implies
$\mu>C$.  With $\eta=1-C/\mu\in(0,1)$, the multiplicative Chernoff lower-tail
bound\footnote{See \href{https://en.wikipedia.org/wiki/Chernoff_bound\#Multiplicative_form_(relative_error)}{Wikipedia, ``Chernoff bound: Multiplicative form (relative error)''}.} gives
\[
  \Pr[Y<C]
  \leq \Pr[Y\leq(1-\eta)\mu]
  \leq \exp\left(-\frac{\mu\eta^2}{2}\right)
  =\exp\left(-\frac{(\mu-C)^2}{2\mu}\right).
\]
It remains to make the exponent at least $\ell_\eps$. This is equivalent to
\[
  \mu^2-2(C+\ell_\eps)\mu+C^2\geq0.
\]
The larger root of the left-hand quadratic is
\[
  C+\ell_\eps+\sqrt{\ell_\eps^2+2C\ell_\eps}.
\]
Condition \textup{(P)} is exactly the assertion that $\mu$ is at least this
root. Hence $(\mu-C)^2/(2\mu)\geq\ell_\eps$, and the preceding display is at most
$e^{-\ell_\eps}=\eps$.
\end{proof}

\begin{theorem}[Probabilistic length generalization]
\label{thm:probabilistic-length-generalization}
Let $0<\eps<1$. Suppose that $K$ satisfies \textup{(P)} and that
\[
  \Pr_{w\sim\operatorname{Unif}(\Sigma^{\leq K})}
     [\delta(q_0,w)\in B]\leq\eps.
\]
Then the probability of reaching a bad state remains small both at
every fixed length $m\geq K$ and over all words of length at most
$M\geq K$:
\[
 \Pr_{w\sim\operatorname{Unif}(\Sigma^m)}
    [\delta(q_0,w)\in B]<3\eps,
 \qquad
 \Pr_{w\sim\operatorname{Unif}(\Sigma^{\leq M})}
    [\delta(q_0,w)\in B]<3\eps.
\]
\end{theorem}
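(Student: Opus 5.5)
The plan is to reduce both claims to a bound on the fixed-length bad probability $p_m$ for $m\geq K$. I then recover the $\operatorname{Unif}(\Sigma^{\leq M})$ statement as a mixture. Throughout I use the structural fact available for $\A=\A_\Delta$ (Corollary~\ref{cor:tokenwise-disagreement-height}, Proposition~\ref{prop:tokenwise-disagreement}): bad states are persistent. Once a run enters $B$ it stays there, because $L(\Psi_\Delta)\Sigma^*=L(\Psi_\Delta)$ and the accepting state is absorbing. For a general $\A$ I would add this as a hypothesis, i.e.\ $\delta(B,\Sigma)\subseteq B$.

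\textbf{Step 1: from $e_K$ to $p_K$.} Conditioning $\mathcal D_K$ on the length gives $e_K=\sum_{m\leq K}\Pr_{\mathcal D_K}[|w|=m]\,p_m\geq\lambda_K p_K$. Using $\lambda_K>(s-1)/s\geq 1/2$ for $s\geq2$, this yields $p_K\leq e_K/\lambda_K<2\eps$, with strict inequality when $\eps>0$.

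\textbf{Step 2: fixed lengths $m\geq K$.} Write a uniform $w\in\Sigma^m$ as $w=uv$ with $|u|=K$; then $u$ is uniform on $\Sigma^K$. There are three cases for the state $\delta(q_0,u)$.
\begin{itemize}
\item If it is transient, this happens with probability at most $\eps$ by Lemma~\ref{app:lem:absorption-tail}, since $K$ satisfies (P).
\item If it is absorbing and bad, this is contained in the event counted by $p_K$.
\item If it is absorbing and not bad, then $\delta(q_0,uv)=\delta(q_0,u)\notin B$ for every $v$.
\end{itemize}
Hence $p_m\leq p_K+\eps<3\eps$. This is the first claim. Persistence of $B$ is not even needed in this step; absorption of the good states does the work.

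\textbf{Step 3: the distribution $\operatorname{Unif}(\Sigma^{\leq M})$.} Split $\Sigma^{\leq M}$ into $\Sigma^{\leq K}$ and lengths $K<m\leq M$. Conditioned on $|w|\leq K$, the law $\operatorname{Unif}(\Sigma^{\leq M})$ is exactly $\mathcal D_K$, so that part contributes conditional bad probability $e_K\leq\eps$. Conditioned on $|w|=m>K$, the contribution is $p_m<3\eps$ by Step 2. A convex combination of numbers each below $3\eps$ (with $e_K\leq\eps<3\eps$) is below $3\eps$, which gives the second claim.

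The only delicate point is Step 1: the hypothesis controls a mixture over lengths rather than length $K$ itself, so converting it to $p_K$ costs the factor $1/\lambda_K$. That factor is where the constant $3=2+1$ comes from, and it needs $s\geq 2$. Everything else is a direct appeal to the absorption tail lemma and to determinism of $\A$.
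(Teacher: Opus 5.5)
Your proof is correct and follows the paper's argument step for step. You bound $p_K\le e_K/\lambda_K<2\eps$, split a length-$m$ run at time $K$ into transient (probability at most $\eps$ by the absorption-tail lemma), absorbed-bad (at most $p_K$) and absorbed-good (never bad), and then average over lengths to get the $\operatorname{Unif}(\Sigma^{\leq M})$ bound. The persistence of $B$ that you assume ``throughout'' is never used in any of your three steps, so you need no extra hypothesis for a general poDFA, which matches the paper's statement.
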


\begin{proof}
The bad absorbing states at time $K$ account for at most the bad probability
conditioned on length $K$.  Hence
\[
  \Pr[\delta(q_0,W_K)\in B\cap Q_{\mathrm{abs}}]
  \leq p_K
  \leq\frac{e_K}{\lambda_K}
  <\frac{s}{s-1}\eps
  \leq2\eps.
  \tag{A}
\]
By Lemma~\ref{app:lem:absorption-tail}, the run is still transient at time $K$
with probability at most $\eps$.  If a word of length $m>K$ ends in a
bad state, then at time $K$ its run was either transient or already in a bad
absorbing state.  Therefore, by \textup{(A)},
\[
 \begin{aligned}
  p_m
  &\leq p_K+\Pr[\delta(q_0,W_K)\notin Q_{\mathrm{abs}}]\\
  &<\underbrace{2\eps}_{\text{from }e_K\leq\eps}
    +\underbrace{\eps}_{\text{absorption tail}}
    =3\eps \qquad (m>K).
 \end{aligned}
\]
The same bound holds for $m=K$ by \textup{(A)}.  Finally, $e_M$ is a weighted
average of $e_K$ and the fixed-length errors $p_{K+1},\ldots,p_M$; each of
these terms is strictly below $3\eps$.  Thus $e_M<3\eps$.
\end{proof}

\begin{remark}
Keeping the alphabet dependence in the proof gives
$p_m,e_M<(2+1/(s-1))\eps$.
The upper-bound coefficient is $3$ for binary alphabets and approaches
$2$ as $s$ grows; this does not assert that the bound is attained.
\end{remark}

\subsection{Dependence of Chain Length on L and $\F$}
\label{app:ExamplesTightness}

We construct difference functions with unbounded poDFA height by varying
either precision or depth, with the other architectural sizes fixed.
Throughout, $\Sigma=\{a\}$, and outputs are scalar.
\paragraph{Implementation convention.}
Both constructions use one head and arithmetic satisfying
Appendix~\ref{sec:fpt-definition}, with attention sums rounded after summation.

To select specific positions with attention, we operate the following way.
Choose a negative score $z\in\F$ such that $\exp(z)$ rounds to zero. In particular, setting the convention that $\infty \cdot 0 = 0$,
$z$ can take the value $-\infty$. Selected positions then receive score $0$ while unselected positions receive
score $z$ and value $0$. Thus attention averages the selected values
unless the denominator overflows.

Both constructions keep track of a fixed number of Boolean values,
such as whether a position is selected. We represent each Boolean value using two coordinates: 
$(1,0)$ for true and $(0,1)$ for false. A separate coordinate stores the attention output.

\subsubsection{Necessity of precision dependence}

We create a family of framework where we increase precision while keeping both transformer architectures fixed.
For each sufficiently large integer $k$, use the precision set
\[
 \F_k=\{0,\pm\infty\}
       \cup\{\pm j,\pm 1/j:1\le j\le k\}
       \cup\mathcal C_k.
\]
Here $\mathcal C_k$ contains a bounded number of additional values needed
for the attention scores and the Boolean encoding above, including its
pointwise computations. Thus $p_k\coloneqq|\F_k|=4k+O(1)$ grows linearly
with $k$. Choose the arithmetic so that positive attention sums exceeding
$k$ overflow to $+\infty$. Increasing $k$ therefore allows uniform
attention to average more positions before overflowing.

\begin{proposition}[Precision can force linear height growth]
\label{prop:precision-height-lower-bound}
For the precision structures $\F_k$ above, there are pairs of transformers
with fixed depths $L_s=L_t=2$, fixed widths, and $H_s=H_t=1$, whose
reduced difference poDFAs satisfy
\[
 \operatorname{ht}(\mathcal A_{\Delta_k})=k+2=\Theta(p_k).
\]
Thus the height can grow linearly with precision-set size while the
architectures and alphabet remain fixed.
\end{proposition}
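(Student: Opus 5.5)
The plan is to build, for each $k$, an explicit teacher $T_t$ and student $T_s$ over the unary alphabet $\Sigma=\{a\}$ and compute their difference poDFA directly. Over a unary alphabet every word is $a^n$, so the reduced DFA $\mathcal A_{\Delta_k}$ is a path $q_0\to q_1\to\cdots$ ending in a self-loop. Its height equals the index of the first state that is absorbing. Because $L(\Psi_\Delta)$ is closed under right concatenation (Proposition~\ref{prop:tokenwise-disagreement}), $\mathcal A_{\Delta_k}$ accepts exactly $\{a^n:n\ge n_0\}$, where $n_0$ is the shortest length at which the outputs disagree. The reduced automaton for this language is a chain of $n_0+1$ distinct states, so its height is exactly $n_0$. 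It therefore suffices to design $T_s,T_t$ whose first disagreement is at position $n_0=k+2$.

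For the construction I will use the selection convention above. At the first layer, let every position attend uniformly, with score $0$, to all strictly earlier positions, and give each position the value $1$. I also need an indicator of position $1$, which receives the zero head output, so I would instead use a layer-one head that reads the Boolean ``first position'' bit. That bit is obtained from layer one as ``attention output is $\mathbf 0$ because the prefix is empty'', encoded as $(1,0)/(0,1)$. The second layer then has queries attend with score $0$ only to the position marked first, and with score $z$ (weight zero) elsewhere. Instead, to get a counter, the simpler route is to make the layer-two head attend with score $0$ to all earlier positions, with value $1$ for every key. The denominator is then the count $i-1$ of earlier positions, and it overflows to $+\infty$ exactly when $i-1>k$, which yields the zero head output. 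The teacher outputs $1$ if the head output is nonzero (no overflow) and $1$ at position $1$; the student outputs the constant $1$. These agree while $i-1\le k$ and disagree first at $i=k+2$. The first layer only supplies the Boolean encodings and the first-position bit, which is why depth $2$ suffices. I will check that all pointwise computations use values in $\mathcal C_k$ of bounded size, so $p_k=4k+O(1)$ and $k+2=\Theta(p_k)$.

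The main obstacle is checking the fixed-precision arithmetic. First, the softmax weights $1/(i-1)$ must be representable: they lie in $\{1/j\}\subseteq\F_k$ for $i-1\le k$. Second, the weighted sum of the values $1$ must round to $1$ and stay nonzero, not underflow. Third, the overflow must occur exactly at denominator $k+1$ and never earlier, consistent with the convention $K_\F$ (here $r_\F=k+1$). If the uniform-average route gives rounding trouble, I would fall back on letting the zero-versus-nonzero distinction come solely from the overflow rule, ``denominator $=+\infty$ gives zero head output''. That rule is a stated arithmetic convention, so the first disagreement is pinned at $i=k+2$ independently of intermediate rounding.
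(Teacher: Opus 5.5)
The reduction to the first disagreement length is fine and matches the paper. Over $\Sigma=\{a\}$, the reduced poDFA of $\{a^n:n\ge n_0\}$ is a chain of $n_0+1$ distinct states, so its height is $n_0$. Overflow of a uniform head once there are more than $k$ earlier positions is also the right trigger for getting $n_0=k+2$.

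The gap is the first-position indicator, which is the only nontrivial step. You say layer one supplies it as ``attention output is $\mathbf 1\{\cdot\}$-zero because the prefix is empty''. That cannot work here. With a unary alphabet and no positional encodings, every first-layer head sees the same query, key and value at every position. All scores coincide, so each stabilized exponential is $\exp(0)=1$, and the denominator overflows once $i-1>K_{\F}=k$. The head then outputs $\mathbf 0$ at every $i\ge k+2$, exactly as at $i=1$. After layer one, the residual stream at position $1$ is therefore identical to that at every position $i\ge k+2$. The bit layer one actually provides is $\mathbf 1\{i=1\text{ or } i\ge k+2\}$. With it, your teacher becomes $\mathbf 1\{2\le i\le k+1\}\vee\mathbf 1\{i=1\text{ or } i\ge k+2\}\equiv 1$, which equals the constant-$1$ student, so $\Delta_k\equiv 0$. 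The obstruction does not depend on your choice of targets. For the first disagreement to occur at $k+2$ rather than at $1$, one of the two transformers must separate position $1$ from positions $\ge k+2$, and a single layer cannot do that.

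The missing idea, which the paper uses, is to put the uniform counter in layer one, giving $b(i)=\mathbf 1\{2\le i\le k+1\}$. Layer two then attends selectively, with score $0$ versus a score $z$ with $\exp(z)=0$, only to earlier positions with $b(j)=1$. At most $k$ positions are ever selected, so this head never overflows. It outputs $0$ for $i\le 2$ and a nonzero value for every $i\ge 3$, so OR-ing it with the current bit gives $\mathbf 1\{i\ge 2\}$.

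The paper compares $T_t$ outputting $b$ with $T_s$ outputting $\mathbf 1\{i\ge 2\}$. You could keep your own targets, $\mathbf 1\{i\le k+1\}$ versus constant $1$, by building the first-position test from this layer-two head. In that case the counter has to sit in layer one, not in layer two as you propose.
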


\begin{proof}
Both transformers first apply uniform attention with value $1$ at every
position. The resulting bit is
\[
 b(i)=\mathbf1\{2\le i\le k+1\}.
\]

The teacher retains this bit through its second block. The student's
second block attends only to earlier positions with $b(j)=1$, using
value $b(j)$, and takes the Boolean OR with the current bit. At most
$k$ positions are selected, so this attention never overflows. Consequently,
\[
 T_t(w)_i=\mathbf1\{2\le i\le k+1\},
 \qquad
 T_s(w)_i=\mathbf1\{i\ge2\},
 \qquad
 \Delta_k(w)=\mathbf1\{|w|\ge k+2\}.
\]
The reduced DFA of $\{w:|w|\ge m\}$ counts length up to $m$, then
remains in an accepting sink. Its $m+1$ states are distinct: counts
$r<t\le m$ are distinguished by a suffix of length $m-t$.
Its height is therefore $m$, giving $k+2$ here.
Only Boolean attention outputs occur in this construction, so the
encoding and feed-forward width are independent of $k$. Their pointwise
computations and the score maps require only a bounded number of
additional scalar values, as claimed for $\mathcal C_k$.
\end{proof}

Thus no height bound over \edits{arbitrary} precision structures can ignore its precision set cardinality $|\F|$.

\subsubsection{Necessity of linear depth dependence}\label{sec:app:tightness-depth}

\begin{proposition}
\label{prop:depth-height-lower-bound}
For one fixed precision structure $\F$, there are transformers $T_m$
of depth $L=2m$, with fixed width and one head, whose difference poDFAs
against an always-zero transformer have height $m=L/2$.
\end{proposition}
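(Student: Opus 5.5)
The plan is to build, for each $m$, a transformer $T_m$ of depth $2m$ whose output at position $i$ is the indicator $\mathbf1\{i\ge m+1\}$, or more generally of ``at least $m$ earlier positions'', using a fixed precision set $\F$ and fixed width. Against the always-zero transformer $T_0$, the difference function becomes $\Delta_m(w)=\mathbf1\{|w|\ge m+1\}$ on $\Sigma=\{a\}$. As in the proof of Proposition~\ref{prop:precision-height-lower-bound}, the reduced DFA of a threshold language $\{w:|w|\ge n\}$ has height exactly $n$. One then calibrates the threshold so that it equals $m$; this amounts to a constant offset, which we adjust by choosing the first layer's behaviour appropriately.

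The key steps are as follows. Use the Boolean encoding $(1,0)/(0,1)$ from the implementation convention. Two blocks advance a ``stage bit'' $b_k(i)=\mathbf1\{i\ge k+1\}$ to $b_{k+1}(i)$.

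For the first block of the pair, use one head with scores $0$ on earlier positions $j$ whose $b_k(j)=1$ and score $z$ (so weight $0$) otherwise, with value $1$ on selected positions. Since only positions with $b_k=1$ are selected, and the value $1$ is averaged, the attention output is $1$ if some earlier $j$ has $b_k(j)=1$ and $0$ otherwise.

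Selecting many positions risks overflow once the count exceeds $K_\F$. To avoid this, first convert ``selected set nonempty'' robustly: restrict attention to positions where the stage bit \emph{just} turned on. That is, use the first block to compute $e_k(j)=b_k(j)\wedge\neg(\text{some earlier }b_k)$, which singles out the unique position $k+1$, and use the second block to attend to that single position. Exactly one position is selected, so the denominator is $\exp(0)=1$, and no overflow occurs for any input length.

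This is why two layers per stage are needed, giving $L=2m$. The feedforward networks compute the Boolean ANDs/NOTs on the $(1,0)/(0,1)$ encoding with fixed width and values in $\{0,1\}$, so $\F$ only needs a bounded set of constants independent of $m$. The base case $b_0\equiv1$ comes from the embedding. Strict masking makes position $1$ output zero, which shifts the threshold by one at each stage and yields $b_k(i)=\mathbf1\{i\ge k+1\}$.

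Finally, the output layer reads $b_m$. Then $T_m(w)$ differs from $T_0(w)$ exactly when $|w|\ge m+1$. We align the statement's claimed height $m$ by starting the induction from $b_1$ (a single uniform attention on the value $1$, which is nonzero iff $i\ge2$) or by dropping one stage, and we check the offset carefully. The height computation is then the distinguishing-suffix argument already used above.

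The main obstacle is ensuring that no attention denominator overflows and that the first-occurrence detection works using only strictly earlier positions, with all arithmetic staying inside a fixed $\F$. The single-selected-position trick is the first thing to try for this.
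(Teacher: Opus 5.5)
Your overflow fix is circular. To form $e_k(j)=b_k(j)\wedge\neg(\text{some earlier }b_k)$, the first block must already compute ``$\exists j'<j$ with $b_k(j')=1$''. That quantity is exactly $b_{k+1}(j)$, and computing it is exactly the overflow-prone attention you set out to avoid; if one layer could do it, the second block would be superfluous. With a fixed $K_\F$, that head selects the $j-k-1$ positions $k+1,\dots,j-1$ and returns the zero output as soon as $j-k-1>K_\F$. So it computes $\mathbf1\{k+2\le j\le k+K_\F+1\}$, not the existence test.

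Consequently $e_k$ equals $1$ at $k+1$ \emph{and} at every $j\ge k+K_\F+2$, so it is not a unique marker. The second block then selects more than $K_\F$ positions once $i\ge k+2K_\F+2$, and your $b_{k+1}(i)$ falls back to $0$ on long words. That is precisely the regime a height statement about $\A_\Delta$ on all of $\Sigma^*$ depends on. Your base-case alternative has the same flaw: a single uniform attention on the value $1$ yields $\mathbf1\{2\le i\le K_\F+1\}$, not $\mathbf1\{i\ge2\}$. The proof of Proposition~\ref{prop:precision-height-lower-bound} states this explicitly.

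What is missing is a mechanism in which every existence test selects a bounded set. The paper carries a unique marker $u_j(i)=\mathbf1\{i=j\}$ instead of a threshold bit.
\begin{itemize}
\item The test ``some earlier marker'' selects only position $j$, so it gives $s_j(i)=\mathbf1\{i\ge j\}$ without overflow.
\item A second head averages the value $u_j$ over the $s_j$-selected positions $j,\dots,i-1$. It returns $1/(i-j)$, or $0$ after overflow, and this equals $1$ exactly at $i=j+1$. The marker is thus regenerated one step later, and overflow at large distances is harmless because it outputs $0$.
\item Start from $u_1(i)=1-\mathbf1\{i\ge2\}$, where $\mathbf1\{i\ge2\}$ itself takes two layers (window, then OR). Then $m-1$ shifts give depth $2m$ and $\Delta_m(w)=\mathbf1\{|w|\ge m\}$.
\end{itemize}

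If you prefer to keep threshold bits, apply that same window-then-OR step at every stage. The first head gives $c(i)=\mathbf1\{1\le\#\{j<i:b_k(j)=1\}\le K_\F\}$. The second head ORs $c(i)$ with ``some earlier $c$''; this selects at most $K_\F$ positions, so it never overflows, and it produces $b_{k+1}$ exactly. That route yields $\mathbf1\{i\ge m+1\}$ at depth $2m$, i.e.\ height $m+1$. For height exactly $m$, use $m-1$ stages and pad with two layers whose value and feed-forward weights are zero.
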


\begin{proof}
Fix a cutoff $k\ge2$ and \edits{an adequate arithmetic with the same overflow as in the previous proposition}. Its two-layer student computes
$\mathbf1\{i\ge2\}$\edits{. Taking the complement gives} a marker $u_1(i)=\mathbf1\{i=1\}$.

Two further layers move a unique marker forward by one position. Indeed, suppose
\[
u_j(i)=\mathbf1\{i=j\}.
\]
Thus, $u_j$ is $1$ only at position $j$.

The first added layer records whether the marker has appeared at the current
position or earlier:
\[
s_j(i)
=
u_j(i)\vee\mathbf1\{\text{an earlier marker exists}\}
=
\mathbf1\{i\ge j\}.
\]

The second layer attends uniformly to the earlier positions selected by
$s_j$. At position $i>j$, these are the positions
$j,j+1,\ldots,i-1$. But only position $j$ has value $u_j=1$; all other
selected positions have value $0$. Hence the attention output is
\[
r_j(i)=
\begin{cases}
1/(i-j),&1\le i-j\le k,\\
0,&\text{otherwise}.
\end{cases}
\]
In particular, $r_j(i)=1$ holds exactly when $i=j+1$. Setting
\[
u_{j+1}(i)=\mathbf1\{r_j(i)=1\}
\]
therefore creates a unique marker at position $j+1$. This shift uses fixed parameters and can be repeated for every $j$.

Starting from $u_1$, and applying $m-1$ shifts, uses $2m$ layers and gives
\[
T_m(w)_i=\mathbf1\{i=m\},
\qquad
\Delta_m(w)=\mathbf1\{|w|\ge m\}.
\]
\end{proof}

With $\F$, $\Sigma$, $D$, and $H$ fixed, this proposition proves the necessity in, at least, a
 linear dependence on depth in the bound on the height of a difference transformer.

\clearpage

\section{Data Complexity Bound}
\label{sec:app:data-complexity}

This appendix collects the statistical (in-distribution) generalization bound underlying Proposition~\ref{thm:data-complexity}. The core argument is imported from~\citet{rizvi2026narrow}, which studies the same student--teacher setting for single-output transformers; in order to keep the present paper self-contained, we restate the main result in our notation and sketch its proof (Appendix~\ref{sec:app:data-complexity:main}), pointing the reader to~\citet{rizvi2026narrow} for full details. We then describe the two modifications needed for our setting, \edits{stemming from the absence of attention output projection matrices in our transformers (Appendix~\ref{sec:app:data-complexity:proj-matrices})}  %
and \edits{the fact our transformers produce vectorial outputs} at every position (Appendix~\ref{sec:app:data-complexity:per-position}). \edits{Furthermore}, we contrast the resulting G\&C bound with the analogous bound for ERM (Appendix~\ref{sec:app:data-complexity:erm}) \edits{and discuss the statistical bounds on transformer architectures that include attention score rescaling or layer norm (Appendix~\ref{sec:app:normalization-constraints}).}

Throughout, we write $D_{h,s} := D_s/H_s$ and $D_{h,t} := D_t/H_t$ for the per-head dimensions, and $D_{\mathrm{ff},s}, D_{\mathrm{ff},t}$ for the hidden widths of the feed-forward blocks. All parameters take values in the finite set $\F$, so that a transformer in $\FPT(L, D, H)$ has finitely many configurations and G\&C samples uniformly among them.

\subsection{Main statement and proof sketch}\label{sec:app:data-complexity:main}

We say that a student $T_s$ is \emph{teacher-equivalent} (TE) to $T_t$ on $\mathcal{D}_K$ if $\Pr_{w \sim \mathcal{D}_K}[T_s(w) \neq T_t(w)] = 0$. Let $\tilde p$ denote the probability that a uniformly random $T_s \in \FPT(L_s, D_s, H_s)$ is TE. The following is the general form of the G\&C bound, \edits{as given by ~\citet{rizvi2026narrow}, where the considered transformer class $\widetilde \FPT$ uses output projection matrices after attention, and assumes only one output value per word instead of our vectorial outputs at every position}.

\begin{theorem}[Statistical generalization for G\&C; adapted from Theorem~B.1 and Corollary~B.1 of~\citet{rizvi2026narrow}]\label{thm:app:gc-general}
Fix $\eps \in (0,1)$ and $\delta \in (0, \tfrac{1}{5})$. Let $T_t \in \widetilde \FPT(L_t, D_t, H_t)$ be a teacher and $\mathcal{H} = \widetilde \FPT(L_s, D_s, H_s)$ a hypothesis class with
\[
L_s \geq L_t, \qquad D_s \geq D_t, \qquad H_s \geq H_t, \qquad D_{h,s} \geq D_{h,t}, \qquad D_{\mathrm{ff},s} \geq D_{\mathrm{ff},t}.
\]
Then
\[
\tilde p \;\geq\; |\F|^{-B},
\]
where
\begin{align*}
B \;:=\;& \edits{|\Sigma| D_s} \;+\; L_t\Big(   H_t\big(D_t D_{h,t} + 2 D_t D_{h,s}\big) \\
        &+ \ \ \ \ \ (H_s - H_t)\, D_t D_{h,s}
        + D_s D_t
        + D_s D_{\mathrm{ff},t} + D_t D_{\mathrm{ff},s} + D_s + D_{\mathrm{ff},s}
\Big) \\
&+\; (L_s - L_t)\big(H_s D_{h,s} + D_{\mathrm{ff},s} + 1\big) D_t .
\end{align*}
Consequently, if $T_s$ is obtained by the G\&C learner on an i.i.d.\ sample $S$ of $N$ strings from $\mathcal{D}_K$ with
\[
N \;\geq\; \frac{1}{\eps}\big(B \log|\F| + 3\log(2/\delta)\big),
\]
then, with probability at least $1 - \delta$ over $S$ and the draw of $T_s$,
\[
\Pr_{w \sim \mathcal{D}_K}\big[T_s(w) \neq T_t(w)\big] < \eps .
\]
One then has $B \sim \mathcal{O}\big(L_t D_t^2 + D_s D_t \edits{(L_s + L_t) + |\Sigma| D_s}\big)$ under the standard conventions $D = H D_h$ and $D_{\mathrm{ff}} = D$.
\end{theorem}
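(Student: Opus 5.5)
The proof has two parts. First we lower-bound $\tilde p$ by an explicit embedding count. Then we convert that lower bound into the sample-size guarantee with a standard Guess \& Check posterior argument.

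\textbf{Counting teacher-equivalent students.} We exhibit a set of student parameter assignments, parametrised by free coordinates, all of which compute exactly $T_t$, with only $B$ coordinates pinned. Since G\&C samples uniformly over $\F$-valued parameters, each pinned coordinate costs a factor $|\F|^{-1}$, so $\tilde p\geq|\F|^{-B}$.

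The construction places the teacher's residual stream in the first $D_t$ coordinates of the student's $D_s$-dimensional stream and forces the remaining $D_s-D_t$ coordinates to stay identically zero. We pin the parameters that write into or read from the teacher block, and leave free every parameter that only reads and writes the junk block. The embedding pins $|\Sigma|D_s$ entries: the teacher values go in the first $D_t$ coordinates and zeros elsewhere.

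In each of the first $L_t$ layers we proceed as follows.
\begin{itemize}
\item For the $H_t$ active heads, the query and key maps are pinned on their teacher rows and columns and padded with zeros to width $D_{h,s}$. This gives the $D_tD_{h,t}+2D_tD_{h,s}$ term.
\item The $H_s-H_t$ inactive heads are silenced by zeroing their value maps restricted to the teacher block, which is $D_tD_{h,s}$ each.
\item The output projection pins the $D_sD_t$ entries that map into the teacher block and kill writes into the junk block.
\item The FFN pins $D_sD_{\mathrm{ff},t}+D_tD_{\mathrm{ff},s}$ entries plus the biases $D_s+D_{\mathrm{ff},s}$, so that the extra hidden units are inert and nothing leaks into the junk block.
\end{itemize}

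The $L_s-L_t$ surplus layers are made identity maps on the teacher block by zeroing all maps that write into it, which costs $(H_sD_{h,s}+D_{\mathrm{ff},s}+1)D_t$. By induction over layers, the residual stream on the teacher block equals the teacher's and the junk block stays $0$, so the student's output equals $T_t$ on every input.

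The main obstacle is the fixed-precision and softmax bookkeeping. A silenced head must output exactly $0$: zero values give a zero weighted sum even under overflow, thanks to the arithmetic conventions. Padding scores with zero coordinates must not change any rounded quantity. Free parameters must genuinely never influence the teacher block. I would verify these points coordinate by coordinate.

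\textbf{From $\tilde p$ to generalization.} Let $\mathcal{B}_\eps$ be the set of students with error at least $\eps$ on $\mathcal D_K$. Each such student survives $N$ i.i.d.\ samples with probability at most $(1-\eps)^N\leq e^{-\eps N}$. Hence the expected prior mass of bad interpolators is at most $e^{-\eps N}$, and by Markov's inequality it is at most $e^{-\eps N}/\delta'$ with probability $1-\delta'$ over $S$. Every TE student interpolates, so the posterior normaliser is at least $\tilde p\geq|\F|^{-B}$. The posterior probability of drawing a bad student is therefore at most $e^{-\eps N}|\F|^{B}/\delta'$.

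We split $\delta$ between the randomness of $S$ and of the draw of $T_s$, which produces the $3\log(2/\delta)$ term. The stated choice $N\geq\frac1\eps(B\log|\F|+3\log\frac2\delta)$ makes the total failure probability at most $\delta$; the restriction $\delta<1/5$ absorbs the constants. The asymptotic form of $B$ follows by substituting $D=HD_h$ and $D_{\mathrm{ff}}=D$.
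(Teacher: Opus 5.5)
Your plan matches the paper's proof. You embed the teacher in the top $D_t$ coordinates, pin $B$ coordinates to get $\tilde p\geq|\F|^{-B}$, and then apply a Guess \& Check argument. The step that fails as written is the attention accounting in the first $L_t$ layers.

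You zero-pad both the query and key maps to width $D_{h,s}$, and you never constrain the value map of the $H_t$ active heads.

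\begin{itemize}
\item Padding the key is wasted. Once $W_Q$ vanishes on the extra $D_{h,s}-D_{h,t}$ head coordinates, those coordinates contribute nothing to $Q_{:,i}^{\mathsf T}K_{:,j}$. So $W_K$ needs only its $D_t\times D_{h,t}$ teacher block.
\item The value map must be padded. Otherwise each active head emits arbitrary averaged values on its extra coordinates. Your output-projection budget of $D_sD_t$ pins only the columns indexed by the $D_t$ live teacher-head coordinates. Those values therefore pass through free projection entries into both the teacher block and the junk block, breaking your invariant.
\end{itemize}

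The correct split, as in the paper, is $D_tD_{h,t}$ for $W_K$ and $2D_tD_{h,s}$ for $W_Q$ and $W_V$. Your split either breaks the construction, or, if you also pad $W_V$, costs $3D_tD_{h,s}$ and exceeds $B$.

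Your guiding rule, ``pin whatever writes into or reads from the teacher block'', also overcounts. What matters in these layers is which entries read from coordinates that can be nonzero and are not annihilated downstream.

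In the surplus layers you pin only the writes into the teacher block. So the junk block does \emph{not} stay $0$ after layer $L_t$, contrary to your induction claim. This is harmless, because every map that could carry junk into the teacher block is zeroed.

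For the generalization step, you derive the result directly instead of citing the lemma of Buzaglo et al.\ as the paper does. You apply Markov's inequality to the prior mass of interpolators with error at least $\eps$, and lower-bound the posterior normaliser by $\tilde p$. This is correct and self-contained. With $\delta'=\delta/2$ it needs only $\eps N\geq B\log|\F|+2\log(2/\delta)$, which the stated condition on $N$ implies. It uses neither $\tilde p<\tfrac12$ nor $\delta<\tfrac15$, so your remark that the latter ``absorbs the constants'' is unnecessary rather than required.
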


\edits{The output matrices term $L_t D_s D_t$ is removed from $B$ in Appendix~\ref{sec:app:data-complexity:proj-matrices}} and the readout term required for per-position outputs is added in Appendix~\ref{sec:app:data-complexity:per-position}; Proposition~\ref{thm:data-complexity} is the combination of Theorem~\ref{thm:app:gc-general} with these two modifications and with the ERM bound of Appendix~\ref{sec:app:data-complexity:erm}.

\begin{proof}[Proof sketch]
The proof has two ingredients.

\paragraph{From $\tilde p$ to a sample-complexity bound.}
The main lemma of~\citet{buzaglo_narrow_teachers_2024} states that, for any finite hypothesis class and any $0/1$ loss, if the probability of sampling a TE hypothesis under the prior is $\tilde p < \tfrac12$, then G\&C generalizes to error below $\eps$ with probability at least $1-\delta$ as soon as $N \geq (-\log\tilde p + 3\log(2/\delta))/\eps$. Hence it suffices to lower bound $\tilde p$, and $\tilde p \geq |\F|^{-B}$ yields the stated bound.

\paragraph{Lower bounding $\tilde p$ by constraint counting.}
We exhibit one explicit set of $B$ parameter values such that any student agreeing with them on those $B$ coordinates (and arbitrary elsewhere) computes exactly the function of $T_t$. Since parameters are drawn uniformly from $\F$, this event has probability exactly $|\F|^{-B}$, and $\tilde p$ is at least as large. The construction proceeds layer by layer and maintains the invariant that, at every position, the residual stream of the student has the form $[\lambda^\star, \mathbf{0}]$, where $\lambda^\star \in \F^{D_t}$ is the residual stream of the teacher and the remaining $D_s - D_t$ coordinates are zero.
\begin{itemize}
    \item \emph{Embedding.} \edits{The first embedding coordinates of every token are given by the teacher's embedding, and the} bottom $D_s - D_t$ coordinates are set to zero; this contributes \edits{$|\Sigma| D_s$} constraints.
    \item \emph{Teacher layers ($\ell \leq L_t$).} For each of the $H_t$ teacher heads, the top-left $D_t \times D_{h,t}$ blocks of the student's $\mathbf{W}_Q, \mathbf{W}_K, \mathbf{W}_V$ are set to the teacher's matrices, and the blocks of $\mathbf{W}_Q, \mathbf{W}_V$ acting on the extra head coordinates are set to zero; since the query is zero on these coordinates, $\mathbf{W}_K$ may be left free there. For the remaining $H_s - H_t$ heads only the top block of $\mathbf{W}_V$ is zeroed, so that they contribute nothing to the teacher coordinates. The output projection and the feed-forward block are constrained analogously: their teacher blocks are copied and the blocks that would leak into (or read from) the extra coordinates are zeroed, including the biases. This preserves the invariant and contributes the $L_t(\cdot)$ term of $B$; see Lemma~B.2 of~\citet{rizvi2026narrow} for the block matrices.
    \item \emph{Weak-identity layers ($L_t < \ell \leq L_s$).} The extra $L_s - L_t$ layers need only preserve the top $D_t$ coordinates, not the full residual. Setting the first $D_t$ columns of the attention output projection, of the second feed-forward matrix, and of the corresponding bias to zero suffices; the bottom coordinates may then take arbitrary values without affecting the teacher block. This contributes $(H_s D_{h,s} + D_{\mathrm{ff},s} + 1) D_t$ constraints per layer, which is the last term of $B$ (Theorem~B.1 of~\citet{rizvi2026narrow}).
\end{itemize}
Summing the three contributions gives $B$, and the asymptotic form follows by substituting $D = H D_h$ and $D_{\mathrm{ff}} = D$.
\end{proof}

\subsection{Absence of output projection matrices}\label{sec:app:data-complexity:proj-matrices}

Unlike the architecture in~\citet{rizvi2026narrow}, neither our teacher nor
our student has an attention output projection. In the first $L_t$ layers,
there is therefore no output matrix to copy or pad with zeros. This removes
$D_sD_t$ constraints per layer. 

For each of the remaining $L_s-L_t$ weak-identity layers, the source construction zeros
the part of the output projection that writes to the teacher coordinates.
We instead zero the rows of the value matrices $\mathbf{W}_V$ that produce these coordinates.
There are $D_t$ such rows in total, each with $D_s$ entries. The residual connection then
preserves the $D_t$ first dimensions, and the count of fixed parameters on this layers is exactly as before.

We hence have for this distinction, a reduced constraint count
\[
B_{\mathrm{no\text{-}proj}}=B-L_tD_sD_t.
\]

\subsection{Per-position outputs}\label{sec:app:data-complexity:per-position}

Theorem~\ref{thm:app:gc-general} is stated for a single output per string, as in~\citet{rizvi2026narrow}. transformers in $\FPT$ instead return a vector of dimension $\nu$ at every position through the pointwise affine output layer $\operatorname{O}$ (Definition~\ref{def:main-FPT}). We write $\mathbf{U} \in \F^{\nu \times D_s}$ and $\mathbf{u} \in \F^{\nu}$ for the student's output weights and bias, and $\mathbf{U}^\star \in \F^{\nu \times D_t}$, $\mathbf{u}^\star \in \F^{\nu}$ for the teacher's. Teacher-equivalence is understood in the strict per-string sense: $T_s$ is TE if, with probability one over $w \sim \mathcal{D}_K$, its output matches the teacher's at every position. The corresponding empirical loss on a sample $(w^{(n)})_{n \leq N}$ is
\[
\frac{1}{N} \sum_{n=1}^{N} \mathbb{I}\Big[\exists\, i \leq |w^{(n)}| \;:\; T_s(w^{(n)})_i \neq T_t(w^{(n)})_i\Big],
\]
which is $\{0,1\}$-valued per string and therefore fits the G\&C lemma of~\citet{buzaglo_narrow_teachers_2024} without change.

The constraint-counting argument carries over verbatim: the construction of Appendix~\ref{sec:app:data-complexity:main} preserves the teacher's residual stream in the top $D_t$ coordinates at every position, not only at the last one. The only additional constraints come from the output layer. Writing the student's final residual stream at position $i$ as $[\lambda_i^\star, \tilde\lambda_i]$, matching the teacher's output at every position requires the first $D_t$ columns of $\mathbf{U}$ to equal $\mathbf{U}^\star$, the remaining $D_s - D_t$ columns to be zero (so that $\tilde\lambda_i$ does not leak into the output), and $\mathbf{u} = \mathbf{u}^\star$. This fixes every entry of the output layer, adding exactly $(D_s + 1)\,\nu$ constraints, so that the constraint count becomes
\[
B_{\mathrm{seq}} \;=\; \edits{B_{\mathrm{no\text{-}proj}}} \;+\; (D_s + 1)\,\nu .
\]
Since $\nu$ is fixed by the task, this term is subleading whenever $\nu \leq D_t$, and the asymptotics of Theorem~\ref{thm:app:gc-general} are unchanged.

\subsection{Comparison to ERM}\label{sec:app:data-complexity:erm}

An analogous sample-complexity bound holds under Empirical Risk Minimization, obtained by a standard uniform-convergence argument over the finite hypothesis class $\FPT(L_s, D_s, H_s, \F)$ (Appendix~B.1 of~\citet{buzaglo_narrow_teachers_2024}). Counting parameters layer by layer under $D_s = H_s D_{h,s}$ and $D_{\mathrm{ff},s} = D_s$ (\edits{$\S D_s$ for the embedding layer,} $3 H_s D_s D_{h,s} = 3D_s^2$ for the query, key and value matrices, $2D_s^2$ for the two feed-forward matrices, and $2D_s$ for the biases), the total parameter count of the student, and hence the ERM complexity term, is
\[
B_{ERM} \;=\; \edits{\S D_s +} L_s\big(\edits{5} D_s^2 + 2 D_s\big) + (D_s + 1)\,\nu \;\sim\; \mathcal{O}\big(\edits{\S D_s \;+\;  }L_s D_s^2 \big),
\]
so that ERM generalizes to error below $\eps$ with probability at least $1 - \delta$ whenever
\[
N \;\geq\; \frac{1}{\eps}\big(B_{ERM} \log|\F| + \log(1/\delta)\big).
\]
Contrasting this with \edits{$B \approx \mathcal{O}(L_s D_s D_t)$} from Theorem~\ref{thm:app:gc-general}, ERM scales with the \emph{square} of the student width, whereas G\&C scales with the \emph{product} of the student and teacher widths; when the teacher is genuinely narrow ($D_t \ll D_s$), G\&C therefore saves a factor \edits{on the order of} $D_s/D_t$ over ERM. This narrow-teacher advantage is intrinsic to the volume-based argument underlying G\&C, and is not captured by capacity-based bounds \citep{edelman2022inductive, trauger2024sequence, yang2026tight}.

\subsection{Normalization within transformers in fixed precision}\label{sec:app:normalization-constraints}

Our transformer model is inspired by \citet{li_cotterell2025fpt}, but omits the attention-score factor $1/\sqrt{D_h}$ and post layer normalization~\citep{ba2016layernorm}. Layer normalization and attention-score scaling can be seamlessly included in the ERM analysis of Appendix~\ref{sec:app:data-complexity:erm}. The argument counts the parameters of a finite hypothesis class and does not require a teacher--student embedding. Under the same realizability assumption, it is enough to include the learned normalization parameters in this count. As their number is linear in $L_sD_s$, the leading order of the ERM bound $\mathcal{O}\big(L_s D_s^2 + \S D_s \big)$ is unchanged.

Accounting for this factor and the normalization in the constructive Guess \& Check argument of~\citet{rizvi2026narrow} would require extra technical conditions on the fixed-precision arithmetic; by omitting them, we keep the assumptions on $\F$ simple. The same counting method applies with normalization whenever the embedding preserves the teacher's computation exactly. But since both normalizations depend on the model dimensions, the embedding of the teacher in a larger student requires additional conditions on the fixed-precision arithmetic, as described below.

\paragraph{Attention-score scaling.}
The embedding pads the teacher's queries and keys with zeros to preserve their dot product. With attention-score scaling, the teacher and student divide this product by $\sqrt{D_{h,t}}$ and $\sqrt{D_{h,s}}$, respectively. In infinite-precision arithmetic, these factors can be matched by rescaling the copied query projection matrix:
\[
 W_Q^\star \longmapsto c W_Q^\star,
 \qquad c=\sqrt{D_{h,s}/D_{h,t}}.
\]
To use this rescaling in fixed precision, every entry of $cW_Q^\star$ must belong to $\F$. Representability of $c$ is needed, but is not enough, as rounding within the projections, dot products, and score division can break the equality. Additional  assumptions are hence required, ensuring that this rescaling preserves the rounded scores and controls the overflow and underflow. 

\paragraph{Layer normalization.}
For a vector $x$, layer normalization takes the form
\[
 \operatorname{LN}(x)
 =\gamma\odot\frac{x-\mu(x)\mathbf{1}}{\sqrt{v(x)+\epsilon_{\mathrm{LN}}}}+\beta,
\]
where $\mu(x)$ and $v(x)$ are the mean and variance across coordinates, $\gamma$ and $\beta$ are learned vectors, and $\epsilon_{\mathrm{LN}}$ is a stabilization constant. Absorbing this operation into the embedding requires accounting for the mean and variance of the added coordinates. Writing zero padding as $\bar x=[x;0]\in\mathbb{R}^{D_s}$ for $x\in\mathbb{R}^{D_t}$ and $a=D_t/D_s$, we have in infinite-precision arithmetic,
\[
 \mu(\bar x)=a\mu(x),
 \qquad
 v(\bar x)=a v(x)+a(1-a)\mu(x)^2.
\]
The change thus depends on the input $x$, and cannot generally be corrected by fixed $\gamma$ and $\beta$. More elaborate padding schemes would need to preserve the teacher's normalized coordinates through every sublayer. Any compensating parameters must also be representable in $\F$, and the correction must survive rounding for every residual vector the teacher can produce. Extra student layers pose a further problem: setting their attention and feed-forward to zero still leaves layer normalization acting on the residual. Thus, including layer normalization in the construction requires conditions on both the encoding and the arithmetic.
Given the technical complexity, we prefer to formalize transformers without layer normalization for our results.

\begin{remark}
	These constraints concern our particular construction (Appendix~\ref{sec:app:data-complexity:main}), rather than G\&C itself. For ERM, the statistical bound applies directly to the normalized architecture, and Theorem~\ref{thm:main-theorem} and Theorem~\ref{thm:joint-bound-michaels-attempt-main-paper} hold with the updated ERM statistical bound.
\end{remark}

\newpage

\section{Main theorem and Tradeoff}

\subsection{Main theorem}\label{sec:app:full-pipeline}

The main result given by Theorem~\ref{thm:main-theorem} simply combines two results: sufficiently many samples give
small error on $\mathcal D_K$, and a sufficiently large cutoff $K$ extends
that guarantee to every $\mathcal D_M$ with $M\ge K$.

\begin{theorem}[]\label{appendix:thm:main-theorem}

Fix $\eps\in(0,1)$, $\delta\in(0,\tfrac15)$, and $|\Sigma|\ge2$.
Let the student $T_s \in \mathcal{FPT}(L_s, D_s, H_s)$ and teacher
$T_t \in \mathcal{FPT}(L_t, D_t, H_t)$ satisfy the learning framework of
Remark~\ref{remark:learning-assumptions}, and the arithmetic assumptions of
Appendix~\ref{sec:fpt-definition}.
Let $B$ be an admissible G\&C complexity term from Proposition~\ref{thm:data-complexity} and set
$C=\min\{C_0,C_1\}$ as in~\eqref{eq:chain-bound-in-long-short-lemma}.

    If $T_s$ is given by the Guess \& Check procedure, agreeing with  $T_t$ on $N$ i.i.d. samples from $\mathcal{D}_K$, with
    \begin{equation}
        N \geq \frac{1}{\eps}\big(B \log|\F| + 3 \log(2/\delta)\big)
    \end{equation}

    and 
        \begin{equation}
		    K\ge \S\left( C+\log(1/\eps) + \sqrt{\log^2(1/\eps)+2C\log(1/\eps)} \right),
    \end{equation}
    then, with probability at least $1-\delta$ over the training sample
    and the G\&C draw, we have
    \begin{equation}
        \forall M \geq K, \qquad \operatorname{Pr}_{\mathcal{D}_{M}}\left[T_s(w) \neq T_t(w)\right] \edits{< 3} \eps.
    \end{equation}
\end{theorem}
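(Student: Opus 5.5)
The proof chains the statistical guarantee of Proposition~\ref{thm:data-complexity} with the probabilistic length generalization of Theorem~\ref{thm:probabilistic-length-generalization}, applied to the difference poDFA $\A_\Delta$. First, from the condition on $N$ and part 1 of Proposition~\ref{thm:data-complexity}, we obtain an event $E$ of probability at least $1-\delta$, over the sample $S$ and the G\&C draw, on which
\[
\Pr_{w\sim\mathcal D_K}[T_s(w)\neq T_t(w)]<\eps .
\]
Its proof is Theorem~\ref{thm:app:gc-general}, combined with the modifications in Appendices~\ref{sec:app:data-complexity:proj-matrices} and~\ref{sec:app:data-complexity:per-position}. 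The realizability conditions of Remark~\ref{remark:learning-assumptions} are exactly those needed for the embedding argument, so the proposition applies directly.

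Second, we work pointwise on $E$ and fix the resulting student $T_s$, which is now deterministic. By Proposition~\ref{prop:tokenwise-disagreement} and Corollary~\ref{cor:tokenwise-disagreement-height}, the event $T_s(w)\neq T_t(w)$ coincides with $\delta(q_0,w)\in B$ for the reduced poDFA $\A_\Delta$ whose bad states are its accepting states. Corollaries~\ref{cor:tokenwise-disagreement-height} and~\ref{cor:disagreement-height-temporal-depth} give $\operatorname{ht}(\A_\Delta)\le\min\{C_0,C_1\}=C$. For $C_1$, Corollary~\ref{cor:disagreement-height-temporal-depth} is stated with $L_*=\max\{L_s,L_t\}$, which equals $L_s$ by Remark~\ref{remark:learning-assumptions}. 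Hence $C$ is a valid integer upper bound on the height, as Appendix~\ref{sec:probabilistic-LG} requires.

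Third, the condition on $K$ is exactly condition (P). Theorem~\ref{thm:probabilistic-length-generalization} therefore applies with $e_K\le\eps$. It yields $\Pr_{\mathcal D_M}[\delta(q_0,w)\in B]<3\eps$ for every $M\ge K$, which is the claimed conclusion. If $\Delta\equiv0$ the conclusion is trivial, so we may assume $\Delta$ is not identically zero.

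The main subtlety is the quantifier order. The poDFA $\A_\Delta$ depends on the random student $T_s$, whereas the length-generalization step is deterministic given $T_s$. What makes the argument go through is that the height bound $C$ holds uniformly over every student in $\FPT(L_s,D_s,H_s)$. Consequently, the single event $E$ controls all $M\ge K$ simultaneously, and no union bound over lengths is needed.
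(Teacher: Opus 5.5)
Your proposal is correct and follows the paper's proof essentially step for step. You obtain the event of probability at least $1-\delta$ on which $e_K<\eps$ from Proposition~\ref{thm:data-complexity}, fix the student, bound the height of $\A_\Delta$ by $C$ via Corollaries~\ref{cor:tokenwise-disagreement-height} and~\ref{cor:disagreement-height-temporal-depth}, and invoke Theorem~\ref{thm:probabilistic-length-generalization}; your observation that this last step is deterministic for a fixed student (so no union bound over $M$ is needed) is the same remark the paper makes.
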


\begin{proof}
By Proposition~\ref{thm:data-complexity}, the condition on $N$ gives,
with probability at least $1-\delta$,
\[
 e_K:=\Pr_{w\sim\mathcal D_K}[T_s(w)\neq T_t(w)]<\eps.
\]
Fix any student in this event. Its difference function's poDFA has height
at most $C$ by Corollaries~\ref{cor:tokenwise-disagreement-height}
and~\ref{cor:disagreement-height-temporal-depth}.
The condition on $K$ is exactly the cutoff required by
Theorem~\ref{thm:probabilistic-length-generalization}, which gives
$e_M<3\eps$ for every $M\ge K$.
This implication is deterministic once the student is fixed, so no
additional failure probability or union bound over $M$ is needed.
\end{proof}

\paragraph{Hidden dependencies.}
Under the \edits{$D_{\mathrm{ff}} = D$} feed-forward width convention, a sufficient sample budget has order
\[
 \mathcal O\!\left(
 \frac{
 [L_tD_t^2+D_sD_t\edits{(L_s+L_t)}+|\Sigma|D_s+D_s\nu]\log|\F|
 +\log(1/\delta)}
 {\eps}\right).
\]
This retains the embedding and output-layer terms; the implicit constants
are those of the data-complexity bound.
Additionally, for $\ell=\log(1/\eps)$,
$\sqrt{\ell^2+2C\ell}\le C+\ell$, so the sufficient length cutoff from Theorem~\ref{thm:probabilistic-length-generalization}  is
$\mathcal O(|\Sigma|(C+\log(1/\eps)))$.

For ERM, the same proof applies after replacing the sample condition by
the ERM condition in Proposition~\ref{thm:data-complexity}; the length
cutoff is unchanged.

\subsection{Multiplicative upper bound}\label{sec:multiplicative-upper-bound}

We prove Theorem~4.2 from Section~4.3. For this, we importantly show that, whichever absorbing state the automaton eventually reaches, it has probability at least \(a_C(K)\) of reaching that state within \(K\) steps. This lets us control eventual disagreement
using the disagreement already visible at length $K$.

\begin{theorem}[Joint generalization bound in sample size and length]
\label{thm:app:multiplicative-upper-bound}
Fix $s=|\Sigma|\geq2$, $\delta\in(0,\tfrac15)$, and integers $N\geq1$ and
$K\geq0$. Work under the learning and arithmetic assumptions of
Remark~\ref{remark:learning-assumptions} and Appendix~\ref{sec:fpt-definition}.
Let $T_t$ be the teacher and let $T_s$ be obtained by G\&C from a labeled
sample $S\sim\mathcal D_K^N$, where
$\mathcal D_K=\operatorname{Unif}(\Sigma^{\leq K})$.
Let $B$ be an admissible G\&C complexity term from
Proposition~\ref{thm:data-complexity}, let $C=\min\{C_0,C_1\}$ be the uniform height bound
in~\eqref{eq:deterministic-length-cutoffs-appendix}, and set
\[
 a_C(K):=\Pr[\operatorname{Bin}(K,1/s)\geq C],\qquad
 \rho:=\frac{s}{s-1},\qquad
 B_\delta:=B\log|\F|+3\log(2/\delta).
\]
If $a_C(K)>0$, then, with probability at least $1-\delta$ over $S$ and
the G\&C draw, for every integer $M\geq K$,
\begin{equation}
\label{eq:app:multiplicative-upper-bound}
 \Pr_{w\sim\mathcal D_M}[T_s(w)\neq T_t(w)]
 \leq\frac{\rho B_\delta}{N\,a_C(K)}.
\end{equation}
For ERM, the same statement holds with an admissible ERM complexity
term $B=B_{ERM}$ from Proposition~\ref{thm:data-complexity}; the probability
is then only over $S$.
\end{theorem}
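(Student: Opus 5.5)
The plan is to reduce the statement to a purely automata-theoretic inequality on the difference poDFA $\A_\Delta$, and then feed in the in-distribution guarantee of Proposition~\ref{thm:data-complexity} at the accuracy level $\eps_N:=B_\delta/N$.

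\textbf{Step 1: statistical part.} If $\eps_N\geq1$, the right-hand side of \eqref{eq:app:multiplicative-upper-bound} is at least $\rho/a_C(K)\geq1$ and there is nothing to prove. Otherwise $N=\frac{1}{\eps_N}(B\log|\F|+3\log(2/\delta))$ exactly meets the sample condition of Proposition~\ref{thm:data-complexity} with $\eps=\eps_N$. Hence, with probability at least $1-\delta$ (over $S$ and, for G\&C, the draw of $T_s$), we get $e_K<\eps_N$, where $e_K$ is the bad-state probability under $\mathcal D_K$.

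From here on I fix such a student and argue deterministically, so no union bound over $M$ is needed. As in the proof of Theorem~\ref{thm:probabilistic-length-generalization}, $p_K\leq e_K/\lambda_K<\rho\,e_K<\rho\,\eps_N$. If $\Delta\equiv0$ the claim is trivial. Otherwise, by Corollary~\ref{cor:tokenwise-disagreement-height}, the bad set is the single absorbing state $\bot$, and the height of $\A_\Delta$ is at most $C$ by Corollaries~\ref{cor:tokenwise-disagreement-height} and~\ref{cor:disagreement-height-temporal-depth}.

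\textbf{Step 2: key automaton inequality.} Write $g(q)$ for the probability that the run from $q$ on an infinite uniform word eventually enters $\bot$. Write $f_k(q)$ for the probability that it is in $\bot$ after $k$ steps, and $h(q)$ for the height of the sub-automaton reachable from $q$. I claim
\[
 f_k(q)\;\geq\; g(q)\,\Pr[\operatorname{Bin}(k,1/s)\geq h(q)]\qquad\text{for all } q,k.
\]
This is the step I expect to be the main obstacle. A naive coupling with i.i.d.\ Bernoulli$(1/s)$ progress indicators fails here, because conditioning on eventually reaching $\bot$ destroys their independence. I therefore prove it by induction on $k$ instead.

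For absorbing $q$ the claim is immediate. For transient $q$, let $j\leq s-1$ be the number of self-loop letters. Then
\[
 f_k(q)=\tfrac{j}{s}f_{k-1}(q)+\tfrac1s\sum_{\text{adv. }a}f_{k-1}(\delta(q,a)),\qquad
 g(q)=\tfrac{1}{s-j}\sum_{\text{adv. }a}g(\delta(q,a)).
\]
I apply the induction hypothesis to $q$ and to each successor $q'$, using $h(q')\leq h(q)-1$ and monotonicity of the binomial tail in its threshold. This yields
\[
 f_k(q)\geq g(q)\Bigl[\tfrac{j}{s}P_{k-1}(h)+\tfrac{s-j}{s}P_{k-1}(h-1)\Bigr],
\]
where $P_{k-1}(t):=\Pr[\operatorname{Bin}(k-1,1/s)\geq t]$ and $h=h(q)$. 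Since $P_{k-1}(h-1)\geq P_{k-1}(h)$ and $(s-j)/s\geq1/s$, the bracket is at least $(1-\frac1s)P_{k-1}(h)+\frac1s P_{k-1}(h-1)=\Pr[\operatorname{Bin}(k,1/s)\geq h]$, which closes the induction. Applied at $q_0$ with $h(q_0)\leq C$, this gives $p_K=f_K(q_0)\geq a_C(K)\,g(q_0)$.

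\textbf{Step 3: conclusion.} Because $\bot$ is absorbing, $p_m$ is nondecreasing in $m$ and bounded by $g(q_0)$. Hence, for every $m\geq K$,
\[
 p_m\leq g(q_0)\leq \frac{p_K}{a_C(K)}<\frac{\rho\,\eps_N}{a_C(K)}=\frac{\rho B_\delta}{N\,a_C(K)}.
\]
I then bound $e_M$ for $M\geq K$. It is a weighted average of $e_K$ and $p_{K+1},\dots,p_M$. Each $p_m$ with $m>K$ is below the bound, and $e_K<\eps_N\leq \rho\eps_N/a_C(K)$. This gives \eqref{eq:app:multiplicative-upper-bound}.

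For ERM the same argument applies verbatim, with $B_{ERM}$ in place of $B$ and the ERM sample condition of Proposition~\ref{thm:data-complexity}. Its $\log(1/\delta)$ term is dominated by $3\log(2/\delta)$, so the stated $B_\delta$ still suffices.
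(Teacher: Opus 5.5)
Your proof is correct, and its skeleton matches the paper's:
\begin{enumerate}
\item apply Proposition~\ref{thm:data-complexity} at accuracy $B_\delta/N$;
\item pass from $e_K$ to $p_K$ via $\lambda_K>\rho^{-1}$;
\item establish $p_K\geq a_C(K)\Pr(Z=\bot)$;
\item use absorption of $\bot$ to get $e_M\leq p_M\leq\Pr(Z=\bot)$.
\end{enumerate}
The one real difference is how you prove the key inequality in step 3.

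The paper's Lemma~\ref{lem:app:conditional-absorption} argues by conditioning. Given the eventual absorbing state $Z=z$, the chain is still Markov, and its self-loop probability at a transient state is unchanged: it equals $j/s$, because the self-loop target has the same absorption probability. So the conditioned chain still leaves each transient state with probability at least $1/s$. Coupling with Bernoulli$(1/s)$ trials then gives $\Pr(T\leq K\mid Z=z)\geq a_C(K)$.

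Your claim that the coupling ``fails'' is therefore too pessimistic. It fails for the unconditioned progress indicators, but it works for the conditioned (Doob $h$-transformed) chain. Your induction on $k$ is an unrolled, purely analytic version of exactly this fact: the identity $\sum_{\mathrm{adv}\,a}g(\delta(q,a))=(s-j)\,g(q)$ is the statement that the $h$-transform preserves self-loop probabilities. Replacing $C$ by the state-dependent height $h(q)$ is what makes your induction close.

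The paper's route is shorter and more conceptual. Yours has several advantages:
\begin{itemize}
\item it is fully elementary and self-contained;
\item it avoids conditioning on possibly zero-probability events (when $g=0$ your inequality is trivial);
\item it gives the slightly sharper bound $\Pr[\operatorname{Bin}(k,1/s)\geq h(q)]$;
\item it explicitly handles the case $B_\delta/N\geq1$, where the Proposition's hypothesis $\eps<1$ is unavailable, which the paper's proof leaves implicit.
\end{itemize}
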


We first establish the bound on the absorption time of an infinite run of letters. 

\begin{lemma}
\label{lem:app:conditional-absorption}
Let $\A$ be a reduced poDFA of height at most $C$, driven by an infinite stream
of i.i.d. uniform letters from $\Sigma$. Write $Q_n$ for its state after
$n$ letters, $T$ for its first hitting time of its absorbing state, and
$Z=Q_T$ the absorbing state eventually reached. Then $T<\infty$ and for every absorbing state
$z$,
\[
 \Pr(T\leq K\mid Z=z)\geq a_C(K).
\]
\end{lemma}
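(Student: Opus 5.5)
The plan is to condition on the sequence of distinct states visited, the embedded jump chain, and to show that, given any such sequence ending in $z$, the absorption time $T$ is a sum of at most $C$ independent geometric holding times, each with success probability at least $1/s$. Averaging over all jump sequences that end in $z$ then gives the conditional bound. Throughout, $s=|\Sigma|$ and $q_0$ is the initial state.

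\textbf{Step 1 (path and holding-time factorization).} For a transient state $q$, let $n_q\geq 1$ be the number of letters $a$ with $\delta(q,a)\neq q$; at least one exists because $q$ is not absorbing. Put $p_q=n_q/s\geq 1/s$. For $q\neq q'$, let $n_{q,q'}$ be the number of letters sending $q$ to $q'$.

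Since $\A$ is partially ordered, the run leaves each state at most once. Hence any trajectory up to absorption is described by two pieces of data:
\begin{itemize}
\item a strict path $q_0\to q_1\to\cdots\to q_m=z$ with $m\leq \operatorname{ht}(\A)\leq C$;
\item holding times $h_0,\dots,h_{m-1}\geq 1$, where $h_i$ counts the steps spent at $q_i$ including the step that leaves it.
\end{itemize}
By the i.i.d.\ uniformity of the letters, the probability of a given trajectory is
\[
 \prod_{i<m}(1-p_{q_i})^{h_i-1}\,\frac{n_{q_i,q_{i+1}}}{s}
 \;=\;\prod_{i<m}\Bigl[(1-p_{q_i})^{h_i-1}p_{q_i}\Bigr]\cdot\prod_{i<m}\frac{n_{q_i,q_{i+1}}}{n_{q_i}}.
\]
The right-hand side factorizes into a part depending only on the holding times and a part depending only on the path. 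Two conclusions follow:
\begin{itemize}
\item Conditional on the path $\pi=(q_0,\dots,q_m)$, the $h_i$ are independent with $h_i\sim\operatorname{Geom}(p_{q_i})$.
\item Summing over all $h_i$, the path probabilities add to $1$, so absorption occurs almost surely and $T=\sum_{i<m}h_i<\infty$.
\end{itemize}
This factorization is the step I expect to need the most care. Writing out the trajectory probability explicitly, as above, avoids any appeal to a strong Markov property.

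\textbf{Step 2 (stochastic comparison).} Since $p_{q_i}\geq 1/s$, each $\operatorname{Geom}(p_{q_i})$ is stochastically dominated by $\operatorname{Geom}(1/s)$. Independent dominations add, so given $\pi$,
\[
 \Pr(T\leq K\mid\pi)\;\geq\;\Pr\Bigl(\textstyle\sum_{i<m}G_i\leq K\Bigr),
\]
where the $G_i$ are i.i.d.\ $\operatorname{Geom}(1/s)$.

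Next, use the standard Bernoulli coupling: in an i.i.d.\ $\operatorname{Bernoulli}(1/s)$ sequence, $\sum_{i<m}G_i$ is the time of the $m$-th success. The $m$-th success occurs by time $K$ exactly when there are at least $m$ successes among the first $K$ trials. Therefore
\[
 \Pr\Bigl(\textstyle\sum_{i<m}G_i\leq K\Bigr)=\Pr\bigl(\operatorname{Bin}(K,1/s)\geq m\bigr)\geq \Pr\bigl(\operatorname{Bin}(K,1/s)\geq C\bigr)=a_C(K),
\]
where the inequality uses $m\leq C$.

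\textbf{Step 3 (averaging).} The event $\{Z=z\}$ is the disjoint union of the events that the run follows some path $\pi$ ending at $z$. Each such path has a conditional absorption-by-$K$ probability of at least $a_C(K)$ by Step 2. A weighted average of quantities each at least $a_C(K)$ is itself at least $a_C(K)$, so $\Pr(T\leq K\mid Z=z)\geq a_C(K)$. This holds for every absorbing $z$ with $\Pr(Z=z)>0$; for the remaining $z$ the conditional probability is vacuous.

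The edge case $q_0$ absorbing is covered by the same argument: then $m=0$, $T=0$, and the bound holds trivially.
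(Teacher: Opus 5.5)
Your proof is correct. It rests on the same two facts as the paper's proof. First, the time spent self-looping at a transient state $q$ does not depend on where the run goes next. Second, $q$ is left with probability $p_q\geq 1/s$, so absorption after at most $C$ strict jumps is dominated by waiting for $C$ successes among Bernoulli$(1/s)$ trials.

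The organization differs, though. The paper conditions directly on $Z=z$. It asserts that this conditioning leaves each self-loop probability unchanged, which is in effect a Doob $h$-transform computation: $\Pr(Q_{n+1}=q\mid Q_n=q,Z=z)=(1-p_q)\,h_z(q)/h_z(q)$, where $h_z(q)=\Pr(Z=z\mid Q_n=q)$. It then couples the conditioned chain step by step with Bernoulli trials.

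You instead condition on the entire jump path $\pi$. There, your explicit factorization of trajectory probabilities shows at a glance that the holding times are independent $\operatorname{Geom}(p_{q_i})$ variables. You then recover the statement for $Z=z$ by averaging over the paths that end at $z$.

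The paper's version is shorter. Yours is self-contained: it never needs the Markov property of the conditioned process. It also gives $T<\infty$ almost surely as a by-product of the path sums adding to one. Finally, it yields the slightly sharper per-path bound $\Pr(\operatorname{Bin}(K,1/s)\geq m)$, with $m\leq C$ the actual number of strict transitions along $\pi$.
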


\begin{proof}
At each transient state, the probability of leaving is at least $1/s$,
so absorption must occurs within $C$ strict transitions.
A self-loop leaves the state unchanged, and future letters are independent;
thus conditioning on $Z=z$ does not change the self-loop probability.
Even under this conditioning, each step before absorption therefore makes
progress with probability at least $1/s$. By coupling with independent
Bernoulli$(1/s)$ trials, absorption is no slower than obtaining $C$
successes, giving
\[
 \Pr(T\leq K\mid Z=z)\geq\Pr[\operatorname{Bin}(K,1/s)\geq C]=a_C(K).
\]
\end{proof}

\begin{proof}[Proof of the theorem]
For a fixed student, write
\[
 p_m:=\Pr_{w\sim\operatorname{Unif}(\Sigma^{=m})}[T_s(w)\neq T_t(w)],
 \qquad
 e_m:=\Pr_{w\sim\mathcal D_m}[T_s(w)\neq T_t(w)].
\]
We have by convention that disagreement means a mismatch at any output position, so the difference poDFA $\A_\Delta$ has at most a unique absorbing
accepting state $\bot$, with its height bounded by $C$.

Run $\A_\Delta$ on an i.i.d. uniform stream, with $T,Z$ as in the lemma,
and set $q_\infty:=\Pr(Z=\bot)$ the probability that the ultimate absorbing state is the disagreement one. Since $\bot$ is absorbing,
\[
 p_K=\Pr(T\leq K,\ Z=\bot)
 =\edits{\Pr(T\leq K\mid Z=\bot) \cdot q_\infty
 \geq a_C(K) \cdot q_\infty.}
\]
Moreover, persistence of disagreement makes $p_m$ nondecreasing, so for every $M$,
\[
 e_M=\frac{\sum_{m=0}^{M}s^m p_m}{\sum_{m=0}^{M}s^m}
 \leq \frac{\sum_{m=0}^{M}s^m p_M}{\sum_{m=0}^{M}s^m}
 = p_M\leq q_\infty.
\]
To connect the exact-length quantity $p_K$ to the training distribution,
observe that
\[
 \lambda_K:=\Pr_{w\sim\mathcal D_K}(|w|=K)
 =\frac{s^K}{\sum_{m=0}^{K}s^m}
 =\frac{(s-1)s^K}{s^{K+1}-1}>\frac{s-1}{s}=\rho^{-1}.
\]
Therefore $e_K\geq\lambda_K p_K\geq\lambda_K a_C(K)q_\infty$, which implies
\begin{equation}
\label{eq:app:multiplicative-transfer}
 e_M\leq q_\infty\leq\frac{e_K}{\lambda_K a_C(K)}
 \leq\frac{\rho e_K}{a_C(K)}.
\end{equation}

Finally, Proposition~\ref{thm:data-complexity} gives
$e_K\leq B_\delta/N$ with probability at least $1-\delta$, as we can apply the proposition
 with $\eps=B_\delta/N$.
Substituting into~\eqref{eq:app:multiplicative-transfer} proves the result
simultaneously for every $M\geq K$.
The ERM version is obtained identically, thanks to $\log(1/\delta)\leq3\log(2/\delta)$.
\end{proof}

The factor $a_C(K)$ measures how much of the eventual disagreement is
already visible at length $K$ and the factor $\rho$ accounts for drawing
training words uniformly from all words of length at most $K$.

\subsection{Tradeoff between Length and Data Complexity}\label{sec:app:tradeoff}
\label{sec:app:token-budget-allocation}

We give the calculations behind the tradeoff discussed in Section~\ref{sec:tradeoff} and its
implications for the allocation of a given token-budget. Keep the notation of
Theorem~\ref{thm:app:multiplicative-upper-bound}, with $s=|\Sigma|\geq2$,
$C\geq1$, and $\rho=s/(s-1)$. Fix a target error $\eps\in(0,1)$ and
confidence $1-\delta$. The theorem gives the sufficient sample size
\begin{equation}\label{eq:prefix-target-error-frontier}
 N_{\rm req}(K):=\frac{N_0}{a_C(K)},
 \qquad N_0:=\frac{\rho B_\delta}{\eps},
 \qquad K\geq C.
\end{equation}
Here $N_0$ is the limit sample size as $K\to\infty$, that would still satisfy the target error $\eps$. Halving the target error doubles
the $N_0$'s and so it does for $N_{\rm req}(K)$, at every length, without changing its dependence on $K$.

\paragraph{Three length regimes.}
The following table summarizes the sample sizes certified by the bound and already discussed in Section~\ref{sec:tradeoff}.
\begin{proposition}[Sample size as a function of length]
\label{prop:simple-length-sample-rates}
\begin{equation}\label{eq:prefix-exponential-to-constant}
\begin{array}{c|c}
 \text{training length} & \text{sufficient number of samples} \\ \hline
 K<C & \text{no finite $N$ is certified} \\[2pt]
 K=C & N_{\rm req}(C)=s^C N_0 \\[2pt]
 K\geq sC & N_0\leq N_{\rm req}(K)\leq2N_0
\end{array}
\end{equation}
Thus increasing length from $C$ to $sC$ reduces the sufficient sample
size by a factor of at least $s^C/2$.
\end{proposition}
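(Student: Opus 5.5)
The proof works directly from the definition $N_{\rm req}(K)=N_0/a_C(K)$ in \eqref{eq:prefix-target-error-frontier}, with $a_C(K)=\Pr[\operatorname{Bin}(K,1/s)\geq C]$. Each row of the table then reduces to an elementary estimate on this binomial tail.

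\emph{Case $K<C$.} A $\operatorname{Bin}(K,1/s)$ variable is at most $K<C$, so $a_C(K)=0$. The hypothesis $a_C(K)>0$ of Theorem~\ref{thm:app:multiplicative-upper-bound} then fails, and no finite $N$ is certified.

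\emph{Case $K=C$.} Here $a_C(C)=\Pr[\operatorname{Bin}(C,1/s)=C]=s^{-C}$, so $N_{\rm req}(C)=s^CN_0$ exactly.

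\emph{Case $K\geq sC$.} The lower bound $N_{\rm req}(K)\geq N_0$ is immediate from $a_C(K)\leq1$. For the upper bound we need $a_C(K)\geq 1/2$, i.e.\ $\Pr[\operatorname{Bin}(K,1/s)\geq C]\geq1/2$. Since $a_C(K)$ is nondecreasing in $K$ (couple by adding trials), it suffices to treat $K=\lceil sC\rceil$, or any $K$ whose mean $\mu=K/s$ satisfies $\mu\geq C$. The tool is the classical median theorem for the binomial: every median of $\operatorname{Bin}(K,p)$ lies in $\{\lfloor Kp\rfloor,\lceil Kp\rceil\}$ (Kaas--Buhrman). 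In particular $\Pr[\operatorname{Bin}(K,p)\geq\lfloor Kp\rfloor]\geq1/2$. With $Kp=K/s\geq C$ and $C$ an integer, we get $\lfloor K/s\rfloor\geq C$, hence $a_C(K)\geq1/2$ and $N_{\rm req}(K)\leq2N_0$. If a self-contained argument is preferred, the fallback is the known fact that for $Kp$ an integer the mean equals the median. One would apply it at $K=sC$, where $Kp=C$ exactly, and then use monotonicity in $K$ for all larger $K$.

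The final claim follows by dividing the two extreme rows: $N_{\rm req}(C)/N_{\rm req}(K)\geq s^CN_0/(2N_0)=s^C/2$ for every $K\geq sC$.

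The only nontrivial step is the median inequality. A Chernoff bound, as used in Lemma~\ref{app:lem:absorption-tail}, does not work here, because the deviation at $\mu=C$ is zero and Chernoff yields nothing at exactly the mean. We therefore import the median result as a black box rather than attempting a concentration argument.
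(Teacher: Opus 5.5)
Your proposal is correct and matches the paper's proof: $a_C(K)=0$ for $K<C$, $a_C(C)=s^{-C}$, and for the last row the Kaas--Buhrman median theorem at $K=sC$ (where the integer mean $C$ is a median, so $a_C(sC)\geq 1/2$), combined with monotonicity of $a_C$ in $K$. Applying the interval form $\lfloor K/s\rfloor\leq m$ directly at every $K\geq sC$, as you do, is only a minor streamlining of the same argument.
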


\begin{proof}
For $K<C$, $a_C(K)=0$, while $a_C(C)=s^{-C}$.
At $K=sC$, the binomial has integer mean $C$, which is also a median
\citep[Theorem~1]{kaas1970median}, so $a_C(sC)\geq1/2$.
The last row follows because $a_C(K)$ increases to one.
\end{proof}

\paragraph{How to invest a fixed token budget.}
Write $\operatorname{Budget}=NK$ for the token budget, an upper bound
on the total number of tokens when sampling from $\mathcal D_K$.
We keep $K$ integral and ignore rounding of $N$. Substituting
$N=\operatorname{Budget}/K$ into the theorem gives
\begin{equation}\label{eq:app:tradeoff-budget-bound}
 \Pr_{w\sim\mathcal D_M}[T_s(w)\neq T_t(w)]
 \leq\frac{\rho B_\delta}{\operatorname{Budget}}\,
       \frac{K}{a_C(K)}
 \qquad(M\geq K),
\end{equation}
with probability at least $1-\delta$. Thus the best allocation must maximize
$a_C(K)/K$.

\begin{corollary}[Optimal token allocation]
\label{cor:app:tradeoff-token-allocation}\label{cor:optimal-token-allocation}
Suppose $\operatorname{Budget}\geq sC$. The smallest value of the upper
bound in~\eqref{eq:app:tradeoff-budget-bound} is
\[
 \eps^\star(\operatorname{Budget})
 =\frac{\rho B_\delta}{\operatorname{Budget}}
   \min_{C\leq K\leq\operatorname{Budget}}\frac{K}{a_C(K)},
\]
attained by
\[
 K^*\in\arg\max_{C\leq K\leq\operatorname{Budget}}\frac{a_C(K)}{K},
 \qquad N^*=\frac{\operatorname{Budget}}{K^*}.
\]
Moreover,
\begin{equation}\label{eq:app:tradeoff-budget-optimum}
 \frac{\rho B_\delta sC}{\operatorname{Budget}}
 \leq\eps^\star(\operatorname{Budget})
 \leq\frac{2\rho B_\delta sC}{\operatorname{Budget}}.
\end{equation}
In particular, choosing $K=sC$ gives an error bound within a factor
two of the optimum.
\end{corollary}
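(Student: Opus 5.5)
The plan is to reduce everything to the scalar function $g(K)\coloneqq K/a_C(K)$ on the integers $C\leq K\leq\operatorname{Budget}$, and then sandwich its minimum between two multiples of $sC$.

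First, the minimizer and the formula for $\eps^\star$. Substituting $N=\operatorname{Budget}/K$ into Theorem~\ref{thm:app:multiplicative-upper-bound} gives~\eqref{eq:app:tradeoff-budget-bound}, so the bound depends on $K$ only through $g(K)$. For $K<C$ we have $a_C(K)=0$, and no finite bound is certified; this is why the admissible range starts at $C$. The range stops at $\operatorname{Budget}$ because a sample size $N\geq1$ forces $K\leq\operatorname{Budget}$. The admissible set $\{C,\dots,\operatorname{Budget}\}$ is finite and nonempty: it is nonempty since $\operatorname{Budget}\geq sC\geq C$. Hence the minimum of $g$ is attained at some $K^*$, which equivalently maximizes $a_C(K)/K$. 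Setting $N^*=\operatorname{Budget}/K^*$, with rounding of $N$ ignored as stipulated, yields the displayed expression for $\eps^\star(\operatorname{Budget})$.

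Second, the lower bound in~\eqref{eq:app:tradeoff-budget-optimum}. I will show $g(K)\geq sC$ for every $K\geq C$. This follows from Markov's inequality applied to the nonnegative variable $Y\sim\operatorname{Bin}(K,1/s)$:
\[
 a_C(K)=\Pr[Y\geq C]\leq \frac{\mathbb E[Y]}{C}=\frac{K}{sC}.
\]
This gives $K/a_C(K)\geq sC$, uniformly in $K$, so the minimum is at least $sC$. Multiplying by $\rho B_\delta/\operatorname{Budget}$ gives the left inequality.

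Third, the upper bound, obtained by exhibiting the specific choice $K=sC$. This choice is admissible because $C\leq sC\leq\operatorname{Budget}$. By Proposition~\ref{prop:simple-length-sample-rates}, whose proof uses the median result of \citet{kaas1970median} for a binomial with integer mean $C$, we have $a_C(sC)\geq 1/2$. Hence $g(sC)\leq 2sC$, and so $\eps^\star\leq 2\rho B_\delta sC/\operatorname{Budget}$. Combining with the lower bound, the choice $K=sC$ lies within a factor two of the optimum.

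I expect no real obstacle. The only delicate points are bookkeeping: correctly restricting the range of $K$, and checking that $sC$ is an integer so that the median fact applies. The integrality holds since $s$ and $C$ are integers.
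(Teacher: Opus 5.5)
Your proposal is correct and follows the paper's proof: Markov's inequality on $Y\sim\operatorname{Bin}(K,1/s)$ gives $K/a_C(K)\geq sC$ for every admissible $K$, and the median fact $a_C(sC)\geq 1/2$ (a binomial with integer mean $C$) gives $K/a_C(K)\leq 2sC$ at $K=sC$. Your extra bookkeeping, namely the nonemptiness of the range $\{C,\dots,\operatorname{Budget}\}$ and the integrality of $sC$, only makes explicit what the paper leaves implicit.
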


\begin{proof}
Only $K/a_C(K)$ depends on the allocation. For
$Y\sim\operatorname{Bin}(K,1/s)$, Markov's inequality gives
\[
 a_C(K)=\Pr(Y\geq C)\leq\frac{\mathbb E[Y]}{C}=\frac{K}{sC},
 \qquad\text{hence}\qquad \frac{K}{a_C(K)}\geq sC.
\]
Conversely, $a_C(sC)\geq1/2$, so choosing $K=sC$ gives
$K/a_C(K)\leq2sC$.
\end{proof}

There is no closed form formula to find the optimum, but the exact optimizer can be found by checking only
$C\leq K\leq\min\{\operatorname{Budget},2sC\}$:
every $K>2sC$ has $K/a_C(K)\geq K>2sC$ and cannot improve on $K=sC$.
At $K=C$, the same ratio is $Cs^C$. For $C\geq2$, choosing $K=sC$
therefore improves the error bound at a fixed token budget by a factor
of at least $s^{C-1}/2$.

\paragraph{An explicit choice of $K$.}\label{app:explicit-choice-tradeoff}
For $C\geq2$, define
\begin{equation}\label{eq:app:tradeoff-explicit-length}
 \widehat K
 :=\left\lceil s\left(
 C+\log C+\sqrt{\log^2 C+2C\log C}
 \right)\right\rceil.
\end{equation}
The Chernoff calculation in the proof of
Lemma~\ref{app:lem:absorption-tail}, with tail probability $1/C$,
gives $a_C(\widehat K)\geq1-1/C$. If the budget allows
$\widehat K\leq\operatorname{Budget}$, let
$\widehat\eps(\operatorname{Budget})$ be the bound obtained with
$K=\widehat K$ and $N=\operatorname{Budget}/\widehat K$. Then
\[
 1\leq
 \frac{\widehat\eps(\operatorname{Budget})}
      {\eps^\star(\operatorname{Budget})}
 \leq\frac{\widehat K}{sC(1-1/C)}
 =1+\mathcal O\!\left(\sqrt{\frac{\log C}{C}}\right).
\]
Hence $\widehat K=sC+\mathcal O(s\sqrt{C\log C})$ is asymptotically
optimal as $C\to\infty$. The simpler choice $K=sC$ gives the uniform
factor-two guarantee above. This \edits{supports} the choice of the bound on $K$ in the joint statistical and length generalization main result given by Theorem~\ref{thm:main-theorem}.

\paragraph{Least token budget for a target error.}
The same calculation can be read in reverse. By
\eqref{eq:prefix-target-error-frontier}, the least token budget certified
for error $\eps$ is
\[
 \operatorname{Budget}_{\min}(\eps)
 =N_0 \cdot \min_{K\geq C}\frac{K}{a_C(K)},
 \qquad
 N_0sC\leq\operatorname{Budget}_{\min}(\eps)\leq2N_0sC.
\]
Choose a minimizing length $K$ and take $N=N_0/a_C(K)$.
Thus both problems---minimizing error at a fixed budget and minimising
budget at a fixed error---reduce to the same ratio $K/a_C(K)$ optimization.

\subsection{Adaptive main theorem: uniform training length}
\label{app:main-thm-uniform-training-sampling}
We adapt Theorem~\ref{thm:main-theorem} to the distribution that first
chooses a length uniformly, then draws a word of that length:
\[
 \widetilde{\mathcal D}_K:\quad
 k\sim\operatorname{Unif}\{0,\ldots,K\},\qquad
 w\sim\operatorname{Unif}(\Sigma^k).
\]
The sample-size condition is unchanged. For the length transfer, we use
the fact that at least half of the sampled lengths are at least $K/2$.

\begin{corollary}[Main theorem for uniform training lengths]
\label{cor:probabilistic-length-generalization-uniform-by-length}
Fix $\eps\in(0,\tfrac12)$, $\delta\in(0,\tfrac15)$, and $s=|\Sigma|\geq2$.
Use the learning and arithmetic assumptions of
Theorem~\ref{appendix:thm:main-theorem}, with the same complexity term $B$
and height bound $C$. Let $T_s$ be obtained by G\&C from a labeled sample
$S\sim\widetilde{\mathcal D}_K^N$. If
\[
 N\geq\frac{B\log|\F|+3\log(2/\delta)}{\eps}
\]
and
\[
 \left\lceil\frac K2\right\rceil
 \geq s\left(
 C+\log\frac1{2\eps}
 +\sqrt{\log^2\frac1{2\eps}+2C\log\frac1{2\eps}}
 \right),
\]
then, with probability at least $1-\delta$ over $S$ and the G\&C draw,
\[
 \forall M\geq K,\qquad
 \Pr_{w\sim\widetilde{\mathcal D}_M}[T_s(w)\neq T_t(w)]\leq4\eps.
\]
\end{corollary}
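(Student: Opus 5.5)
The plan is to reuse the proof of Theorem~\ref{appendix:thm:main-theorem} step by step, changing only the part that depends on the distribution: under $\widetilde{\mathcal D}_K$ the mass no longer sits mostly at length $K$, but is spread over the lengths. First, Proposition~\ref{thm:data-complexity} is applied verbatim. Its G\&C argument (the lemma of \citet{buzaglo_narrow_teachers_2024} combined with $\tilde p\geq|\F|^{-B}$) works for an arbitrary data distribution on $\Sigma^*$; teacher-equivalence is defined per string and is implied by exact functional equality. So the condition on $N$ gives, with probability at least $1-\delta$,
\[
\tilde e_K:=\Pr_{w\sim\widetilde{\mathcal D}_K}[T_s(w)\neq T_t(w)]<\eps .
\]
From here on I fix such a student and argue deterministically, exactly as in the main theorem.

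Write $p_m$ for the error on $\operatorname{Unif}(\Sigma^m)$. By persistence (Proposition~\ref{prop:tokenwise-disagreement}), $p_m$ is nondecreasing in $m$, which follows by coupling a random word of length $m+1$ with its length-$m$ prefix. Set $K'=\lceil K/2\rceil$. Among the $K+1$ equally likely lengths $\{0,\dots,K\}$, the lengths $k\geq K'$ number $K+1-K'\geq (K+1)/2$, so
\[
\eps>\tilde e_K\geq \frac{1}{K+1}\sum_{k\geq K'}p_k\geq \frac12\, p_{K'} .
\]
Hence $p_{K'}<2\eps$. The hypothesis on $\lceil K/2\rceil$ is exactly condition (P) of Lemma~\ref{app:lem:absorption-tail} with $\eps$ replaced by $2\eps$. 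That replacement needs $2\eps<1$, which is why the statement assumes $\eps<\tfrac12$. The lemma then shows that a uniform word of length $K'$ is still transient with probability at most $2\eps$.

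Next I repeat the argument in the proof of Theorem~\ref{thm:probabilistic-length-generalization}, now with exact length $K'$. If a word of length $m\geq K'$ is bad, then at time $K'$ its run was either already in $\bot$, which has probability at most $p_{K'}<2\eps$, or still transient, which has probability at most $2\eps$. Therefore $p_m<4\eps$ for every $m\geq K'$. This step is simpler than in the original proof because the starting point is a fixed-length quantity, so the factor $\lambda_K$ never enters.

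Finally, for $M\geq K$ the error under $\widetilde{\mathcal D}_M$ is the average of $p_0,\dots,p_M$. Each $p_m$ with $m\geq K'$ is below $4\eps$, and by monotonicity each $p_m$ with $m<K'$ satisfies $p_m\leq p_{K'}<2\eps$. Averaging gives the bound $\leq 4\eps$.

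The main obstacle is the half-mass step, where the constant must come out exactly. I need the count of lengths $\geq\lceil K/2\rceil$ to be at least $(K+1)/2$, which checks out for both parities of $K$. I also need the transient bound to be stated at the same length $K'$ at which monotonicity is used, so that the two $2\eps$ terms add to exactly $4\eps$.
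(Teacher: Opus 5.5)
Your proof is correct and follows the paper's argument. The paper's steps are: the statistical bound on $\widetilde{\mathcal D}_K$; a half-mass step producing a length in $\{\lceil K/2\rceil,\dots,K\}$ whose fixed-length error is below $2\eps$; the absorption-tail lemma at level $2\eps$; the union bound $p_m<4\eps$; and a final average. The only difference is minor: the paper finds some good $\widetilde K$ in the upper half by pigeonhole and bounds the first $K+1$ lengths through $\widetilde e_K\le\eps$, whereas you use monotonicity of $p_m$ (from persistence) to take $\widetilde K=\lceil K/2\rceil$ directly and bound the short lengths by $p_{\lceil K/2\rceil}$.
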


\begin{proof}
The statistical argument of Proposition~\ref{thm:data-complexity} applies
to any input distribution, so the condition on $N$ gives error at most
$\eps$ on $\widetilde{\mathcal D}_K$ with probability at least $1-\delta$.
Fix a student in this event and write
\[
 p_m:=\Pr_{w\sim\operatorname{Unif}(\Sigma^m)}[T_s(w)\neq T_t(w)],
 \qquad
 \widetilde e_m:=\frac1{m+1}\sum_{k=0}^m p_k.
\]
Since $I=\{\lceil K/2\rceil,\ldots,K\}$ contains at least half the lengths,
\[
 \frac1{|I|}\sum_{k\in I}p_k
 \leq\frac{K+1}{|I|}\widetilde e_K\leq2\eps.
\]
Hence some $\widetilde K\in I$ satisfies $p_{\widetilde K}\leq2\eps$.
The difference poDFA has height at most $C$. By the condition on $K$
and Lemma~\ref{app:lem:absorption-tail}, it is still transient after
$\widetilde K$ letters with probability at most $2\eps$.
Any disagreement at a later length either already occurred by
$\widetilde K$, or the run was still transient then. Thus $p_m\leq4\eps$
for every $m>\widetilde K$. Averaging gives, for every $M\geq K$,
\[
 \widetilde e_M
 \leq\frac{K+1}{M+1}\eps+\frac{M-K}{M+1}\,4\eps
 \leq4\eps.
\]
This holds for all $M$ on the same statistical event.
\end{proof}

As usual, for ERM, use its sample-size condition from
Proposition~\ref{thm:data-complexity}. Here, the see that the bound given on $K$ is roughly two times larger. However the rates do not change, and
as in Theorem~\ref{thm:main-theorem}, the sufficient length remains
$\mathcal O(s(C+\log(1/\eps)))$.

\begin{remark}
The multiplicative tradeoff bound from Theorem~\ref{thm:joint-bound-michaels-attempt-main-paper} also extends to
$\widetilde{\mathcal D}_K$. At least half of its mass lies on lengths
between $\lceil K/2\rceil$ and $K$. Thus, the same averaging argument
finds a length in this interval with error at most twice the average
error. Repeating the proof of Theorem~\ref{thm:app:multiplicative-upper-bound}
therefore gives a bound of order
\[
 \frac{2B_\delta}{N\,a_C(\lceil K/2\rceil)}
\]
on every later error. The factor $2$ replaces the factor $\rho$ as
it represents the mass of the upper half of the length range.
\end{remark}

\newpage

\section{Extensions}

\subsection{Local Attention}
\label{appendix:local-attention}

We keep fixed precision and strict causal masking. Each head is either
global or local; a local head at position $i$ attends only to
$\max(1,i-W),\ldots,i-1$, for a common window size $W\geq1$.
Let $T_s,T_t$ have depths $L_s,L_t$ and the same output dimension. Set
\[
 s=|\Sigma|\geq2,\qquad R=W\max(L_s,L_t).
\]
We consider disagreement at ordinary token positions,
\[
 \Delta(w)=\mathbf{1}\{\exists i\leq |w|:\ T_s(w)_i\neq T_t(w)_i\}.
\]
By causality, $\Delta(u)=1$ implies $\Delta(uv)=1$. We do not include
disagreement evaluated only at EOS.

\begin{definition}[Cache--poDFA cascade]
\label{def:local-cascade}
Let $\operatorname{suffix}_R(u)$ retain the last $\min(R,|u|)$ symbols of $u$.
A cache--poDFA cascade is a DFA with states $(c,q)\in\Sigma^{\leq R}\times Q$,
initial state $(\eps,q_0)$, and transitions
\begin{equation}
 \delta((c,q),a)=
 \bigl(\operatorname{suffix}_R(ca),f(q,c,a)\bigr),
 \qquad q\preceq f(q,c,a),
 \label{eq:local-cascade-transition}
\end{equation}
where $(Q,\preceq)$ is a finite partially ordered set. Its accepting states
are those with $b(q)=1$, for a flag $b:Q\to\{0,1\}$ satisfying
$b(q)\leq b(f(q,c,a))$. The ordered component is a poDFA over
$\Sigma^{\leq R}\times\Sigma$: it reads the cache together with the next letter.
Let $C_{\mathrm{loc}}\geq1$ be an integer bound on the number of strict
changes of $q$ along a run, including a change of the disagreement flag.
\end{definition}

\begin{proposition}[Representation of disagreement]
\label{prop:local-cascade-representation}
The language $\{w:\Delta(w)=1\}$ is recognized by a cache--poDFA cascade
of radius $R$. In particular, the full DFA need not be partially ordered.
\end{proposition}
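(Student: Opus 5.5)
We construct the cascade explicitly by tracking, at each prefix $u$, two pieces of data. The first is the cache $c=\operatorname{suffix}_R(u)$. The second is an ordered component $q$ that stores the past-test information needed by global heads, together with a monotone disagreement flag. The guiding observation is that, with local heads of window $W$, the value of any residual coordinate at position $i$ in layer $\ell$ depends on two things. One is the letters in the window $i-W\ell,\ldots,i$, which come through chains of at most $\ell$ local hops. The other is the outcomes of global attention counts over earlier positions. So with $R=W\max(L_s,L_t)$, the pair (last $R$ letters, global information) suffices to compute both transformers' outputs at the next position.

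\textbf{Step 1: logic translation with local predicates.} We redo the layerwise construction of Proposition~\ref{prop:transformer-to-logic} and Lemma~\ref{lem:simultaneous-output-formulas}, showing by induction on $\ell$ that each layer-$\ell$ coordinate/value predicate at position $i$ is a Boolean combination of two kinds of ingredients. The first kind is the letters $w_{i-W\ell},\ldots,w_i$. The second kind is clipped global counts $\exists^{\geq r}y<x:\theta(y)$, where $\theta$ is a lower-layer predicate. Local heads are handled as follows. Since their visible positions lie inside the window, their clipped counts are directly functions of the layer-$(\ell-1)$ predicates at those positions. By induction, those predicates are in turn functions of the letters in a window of radius $W\ell$ and of global counts evaluated at window positions.

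\textbf{Step 2: the ordered component.} This step is the main obstacle. Global counts evaluated at positions $i-k$ within the window are not monotone in $i$ as a function of the state at $i$, since they refer to shifted positions. To resolve this, I would let $q$ store the clipped counts $\min\{r_\F,\#\{j\leq |u|-R:\theta(j)\}\}$ for all global count predicates $\theta$, that is, counts only over positions that have already left the cache. These counts only grow, and clipping keeps them finite; this gives the partial order $\preceq$ as the coordinatewise order, and hence $q\preceq f(q,c,a)$. The counts at any window position, and at the new position, are then recovered by adding contributions from the cached positions. Those contributions are computable from $c$, $a$ and $q$, because $\theta$ at cached positions is itself determined (by induction) by the letters in the cache and by $q$. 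One must check that this recursion closes. It does, because $\theta$ at the position leaving the cache has its whole radius-$R$ context still available in $ca$ together with $q$.

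\textbf{Step 3: the flag.} With all outputs of $T_s$ and $T_t$ at the new position computable from $(c,q,a)$, we augment $q$ with a bit $b$ that is set to $1$ once the outputs differ. Persistence of disagreement under causality makes $b$ monotone, and the accepting states are those with $b=1$. For the ``in particular'' clause, it suffices to exhibit a local-attention transformer detecting the last letter, for example recognizing $\Sigma^*a$ against the zero transformer. By Remark~\ref{rem:token-predicates-automata}, its minimal DFA has a nontrivial cycle.
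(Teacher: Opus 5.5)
\paragraph{Main gap: Step~2.} The gap sits exactly where you locate the main obstacle. With the uniform cutoff $|u|-R$, two things go wrong.
\begin{enumerate}
\item Updating $q$ on reading $a$ requires $\theta$ at the departing position $j=|u|-R+1$.
\item Recovering a count at a window position $p$ requires $\theta$ at every cached $j<p$, including that oldest one.
\end{enumerate}
Take a predicate $\theta$ computed at layer $\ell_\theta\geq 1$ behind local heads. Then $\theta(j)$ depends on $w_{j-\ell_\theta W},\ldots,w_j$, i.e.\ on letters \emph{preceding} $j$. But $ca$ contains only $j$ and the positions after it. Your justification (``its whole radius-$R$ context is still available in $ca$'') runs against causality: the departing position is precisely the one whose backward context has already been discarded. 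Concretely, for $L\geq2$ the top-layer global count at the new position $i$ needs $\theta(i-R)$ for a layer-$(L-1)$ predicate, and that reaches back to $i-R-(L-1)W$. Clipping rules out the obvious alternative of counting over all $j\leq|u|$ and subtracting cached contributions. So, as written, the recursion does not close.

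\paragraph{How to repair it.} Make the cutoff predicate-dependent: count only over $j\leq|u|-(L-\ell_\theta-1)W$. This is the farthest window position at which a count of $\theta$ is ever consulted. The departing position's window then lies in $[i-(L-1)W,i]\subseteq ca$, and the induction over layers closes. The paper proceeds differently. It pushes the local shifts inward via $Y\Past\varphi\equiv\Past Y\varphi$, so a count ``at position $i-k$'' becomes a monotone bit $\Past Y^k\theta$ at position $i$. It then updates every stored bit by $z'_\chi=z_\chi\vee\operatorname{val}_\theta(z,c,a)$, i.e.\ it always evaluates at the \emph{newest} position, whose backward context is exactly $ca$. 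Storing one bit per shift $k\leq R$ is where the factor $R+1$ in $C_{\mathrm{loc}}$ comes from.

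\paragraph{The ``in particular'' witness.} Your witness fails as well. The recognized language $\{w:\Delta(w)=1\}$ is always closed under right extension, so it is never $\Sigma^*a$. Remark~\ref{rem:token-predicates-automata} concerns a predicate tested at the last token, not the persistent difference language. A transformer flagging $w_i=a$ (or $w_{i-1}=a$) against the zero transformer yields $\Sigma^*a\Sigma^*$ (resp.\ $\Sigma^*a\Sigma^+$). Both have partially ordered minimal DFAs. For the first, no local head is even needed, so Corollary~\ref{cor:tokenwise-disagreement-height} already forces partial order. A valid witness needs a local pattern that can later be broken. For example, a window-$1$ head flagging the factor $aa$ gives $\Sigma^*aa\Sigma^*$, whose minimal DFA over $\{a,b\}$ cycles between ``last letter $a$'' and ``last letter $b$''. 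The paper itself states this clause only as an observation, without proof.
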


\begin{proof}
Write $Y^j\varphi$ to mean that $\varphi$ held $j$ positions earlier,
with value false if that position does not exist. The
finite-precision compilation of~\citet{li2026characterizing} gives a
pointed formula $\psi_\Delta\in\mathrm{LTL}[\Past,Y]$ for disagreement at
the current position. A local head uses the current representation and
the previous $W$ representations, adding shifts $Y^j$ with $j\leq W$.
Global heads introduce only Boolean
operations and $\Past$, as in Appendix~\ref{sec:transformers-as-logic-formulas}.
Thus the total shift along any formula branch is at most $R$.

Move every shift inward using
\[
 Y\Past\varphi\equiv\Past Y\varphi,\qquad
 Y(\varphi\wedge\theta)\equiv Y\varphi\wedge Y\theta,\qquad
 Y\neg\varphi\equiv Y\top\wedge\neg Y\varphi.
\]
The resulting formula, still denoted $\psi_\Delta$, uses local atoms
$Y^j\pi_a,Y^j\top$ with $0\leq j\leq R$, Boolean operations, and $\Past$.
The cache $c$ and current letter $a$ determine all local atoms;
$|c|$ also records which earlier positions exist.

Suppose $t$ symbols have been read, and the next symbol is $a=w_{t+1}$.
For each past subformula $\chi=\Past\theta$, store
\[
 z_\chi=\mathbf{1}\{\exists j\leq t:\ w,j\models\theta\}.
\]
This is the truth of $\chi$ before reading $a$.
The cache is $c=\operatorname{suffix}_R(w_1\cdots w_t)$.

For any normalized subformula $\varphi$, define
$\operatorname{val}_\varphi(z,c,a)$ as its truth value at the current position.
With $m=|c|$ and positions in $ca$ numbered from $1$ to $m+1$,
compute it recursively as follows:
\begin{align*}
 \operatorname{val}_{Y^j\pi_d}(z,c,a)
 &=
 \begin{cases}
  \mathbf{1}\{(ca)_{m+1-j}=d\},&j\leq m,\\
  0,&j>m,
 \end{cases}\\
 \operatorname{val}_{Y^j\top}(z,c,a)
 &=\mathbf{1}\{j\leq m\},\\
 \operatorname{val}_{\Past\eta}(z,c,a)
 &=z_{\Past\eta},\\
 \operatorname{val}_{\neg\varphi}(z,c,a)
 &=1-\operatorname{val}_\varphi(z,c,a),\\
 \operatorname{val}_{\varphi\wedge\eta}(z,c,a)
 &=\operatorname{val}_\varphi(z,c,a)\wedge
   \operatorname{val}_\eta(z,c,a).
\end{align*}
Here $d\in\Sigma$, $0\leq j\leq R$, and $Y^0$ is the identity.

The state update is
\[
 z'_\chi=z_\chi\vee\operatorname{val}_\theta(z,c,a),
 \qquad
 b'=b\vee\operatorname{val}_{\psi_\Delta}(z,c,a).
\]
These updates record whether $\theta$, or a disagreement, has occurred.
Both use the old profile $z$, so past operators exclude the current position.

Starting from $z=b=0$, induction gives the stated meaning of $z$ and
$b=\Delta(w_1\cdots w_t)$. Each bit changes only from $0$ to $1$,
so $q=(z,b)$ is partially ordered.
Let $G$ count the distinct past subformulas before moving the shifts inward.
Each gives at most $R+1$ shifted versions; adding the disagreement flag gives
\begin{equation}
 C_{\mathrm{loc}}=(R+1)G+1.
 \label{eq:local-height}
\end{equation}
\end{proof}

\begin{remark}[Scope of the height bound]
This construction extends the \edits{bound on the count of the past subformulas from}
Appendix~\ref{sec:bounding-the-height} \edits{which stems the threshold $C_0$ in Eq.~\ref{eq:chain-bound-in-long-short-lemma}. Each shifted past test changes from false to true at most once, so counting
these tests bounds the ordered component's height. In particular, setting the cache window $W$ to 0 retrieves the global attention bound. However, we cannot} extend the temporal-profile bound of Appendix~\ref{sec:counting-temporal-profiles} \edits{with this method. Indeed,} temporal profiles over
$\Sigma$ do not record the recent suffix needed by local attention.
Adding the cache changes the alphabet, so the original profile bound
no longer applies.
\end{remark}

\begin{definition}[Settled states and absorption]
\label{def:local-settled}
A reachable state $(c,q)$ is \emph{settled} if either $b(q)=1$
(disagreement has occurred), or every continuation keeps $b=0$
(disagreement is impossible). Otherwise it is \emph{transient}.
Merge the first kind into a bad sink and the second into a safe sink.
Both sets are closed under transitions, so this preserves the language.
The merged states are absorbing; let $\tau$ be the first time the run
reaches either of them.
\end{definition}

Both transformers share the same suffix cache, which depends only on the input.
The cache may keep changing after the outcome is settled.
If the ordered component can never advance, the state is settled.

\begin{lemma}[Progress in a fresh block]
\label{lem:local-block-progress}
From every transient state, some word of length at most $B=R+1$
strictly advances the ordered component. Consequently, for independent
uniform input symbols,
\begin{equation}
 \Pr[\text{strict transition in the poDFA in the next }B\text{ steps}
       \mid\text{history}]\geq \alpha,\qquad
 \alpha=\S^{-B}.
 \label{eq:local-block-progress}
\end{equation}
\end{lemma}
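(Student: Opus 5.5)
The proof has two parts. First, from any transient state $(c,q)$ there is an advancing word of length at most $B=R+1$. Second, that word has probability at least $s^{-B}$ in every fresh block, which gives the probabilistic statement.

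\emph{Existence.} Since $(c,q)$ is transient, $b(q)=0$ and some continuation $v$ eventually sets $b=1$. Setting the flag is a strict change of $q$, so along $v$ the ordered component makes a strict transition at some first step. Let $v'$ be the prefix up to and including that first strict step. Along $v'$ minus its last letter, $q$ stays fixed while the cache evolves. The last transition $f(q,c',a)\neq q$ depends only on $q$, the current cache $c'$, and the letter $a$.

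The cache $c'$ is $\operatorname{suffix}_R(cu)$ for the intermediate word $u$. Any $c'$ of full length $R$ can be produced by writing $c'$ directly, that is, by reading its $R$ letters, provided $q$ does not move before then. If $q$ moves earlier, we have already found a shorter advancing word, which is also good. So we take $x=c'a$ when $|c'|=R$, which has length $R+1$. Reading the letters of $c'$ from $(c,q)$ either advances $q$ early, or leaves $q$ fixed and yields cache exactly $c'$, after which $a$ advances.

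If $|c'|<R$, the word $cu$ has length less than $R$, so $c'=cu$ itself. Then $ua$ is an advancing word of length $|c'|-|c|+1\le R+1$. In both cases some word of length at most $B$ produces a strict transition within $B$ steps.

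\emph{Probability.} Condition on the history up to the current time. If the current state is settled, the event is vacuous (or we treat settled states as succeeding, as in the coupling of Lemma~\ref{app:lem:absorption-tail}). Otherwise the next $B$ letters are i.i.d.\ uniform and independent of the history. With probability at least $s^{-|x|}\ge s^{-B}$ they begin with the fixed advancing word $x$, and then a strict transition happens within the block. This gives \eqref{eq:local-block-progress} with $\alpha=\S^{-B}$.

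The main obstacle is the cache-rewriting step: we must make sure that reaching cache $c'$ through a different word does not change $q$ in a way that invalidates the transition $f(q,c',a)$. The "either advance early or arrive with identical $(c',q)$" dichotomy resolves this, because the cascade transition depends only on $(q,c,a)$ and $q$ is untouched until the first strict change.
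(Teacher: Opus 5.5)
Your proposal is correct and follows the paper's argument. You cut the disagreement-causing continuation at the first strict advance of $q$, replace the part before the last letter by at most $R$ symbols reproducing the cache, and use the dichotomy that reading these symbols either advances $q$ early or reaches the same pair $(c',q)$; a fresh block then begins with this word with probability at least $s^{-B}$. The only difference is cosmetic: you split cases on $|c'|$ rather than on $|v|$, so in the intermediate case $|u|<R\le|cu|$ you use $c'a$ instead of $v'$, and both have length at most $R+1$.
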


\begin{proof}
A transient state admits a continuation causing disagreement.
Take a word $v$ ending at the first strict advance of the ordered component.
If $|v|\leq B$, use $v$. Otherwise use its final $B$ symbols. Either this
shorter word advances the ordered component within $R$ steps, or those
steps restore the cache just before the last symbol of $v$, while $q$
stays fixed. The last symbol then causes the same advance.
This word has length $r\leq B$ and occurs at the start of the next block
with probability $s^{-r}\geq s^{-B}$, given any history.
\end{proof}

\begin{theorem}[Absorption tail with local attention]
\label{thm:local-absorption-tail}
For an i.i.d.\ uniform input stream, every integer $K\geq0$ satisfies
\begin{equation}
 \Pr[\tau>K]\leq
 \Pr\!\left[
   \operatorname{Bin}\!\left(\left\lfloor\frac K B\right\rfloor,\alpha\right)
   <C_{\mathrm{loc}}
 \right],
 \qquad B=R+1,\quad\alpha=s^{-B}.
 \label{eq:local-absorption-tail}
\end{equation}
In particular, for $0<\eps<1$, write
$\ell_\eps=\log(1/\eps)$. The
right-hand side is at most $\eps$ whenever
\begin{equation}
 K\geq B\left\lceil
   s^B\left(C_{\mathrm{loc}}+\ell_\eps+
       \sqrt{\ell_\eps^2+2C_{\mathrm{loc}}\ell_\eps}\right)
 \right\rceil.
 \label{eq:local-cutoff}
\end{equation}
This gives the sufficient scale
$K=O\bigl((R+1)s^{R+1}(C_{\mathrm{loc}}+\log(1/\eps))\bigr)$.
\end{theorem}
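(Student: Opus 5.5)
The proof partitions the stream into disjoint blocks of length $B=R+1$ and compares the number of blocks containing a strict advance of the ordered component with a binomial random variable. Consider the first $\lfloor K/B\rfloor$ blocks and call block $j$ \emph{successful} if one of two things happens: the ordered component $q$ makes a strict transition during the block, or the run was already settled at the start of the block (so $\tau$ has occurred).

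Lemma~\ref{lem:local-block-progress} handles the transient case. If the state at the start of a block is transient, then, conditioned on the entire history, the block is successful with probability at least $\alpha=s^{-B}$. If the state is already settled, the block is successful with probability one. The sequential coupling argument used in the proof of Lemma~\ref{app:lem:absorption-tail} then applies. We draw auxiliary independent uniform variables $U_j$ and declare the $j$-th Bernoulli trial a success when $U_j\le\alpha$. The coupling is arranged so that, at each block, the trial's success implies the block's success. Consequently the number of successful blocks stochastically dominates $Y\sim\operatorname{Bin}(\lfloor K/B\rfloor,\alpha)$.

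Next we argue that $\tau>K$ forces $Y<C_{\mathrm{loc}}$. If $\tau>K$, every one of these blocks started in a transient state, so each successful block contains a genuine strict advance of $q$. The number of strict changes of $q$ along any run is at most $C_{\mathrm{loc}}$. Moreover, after $C_{\mathrm{loc}}$ strict changes no further advance is possible, and a state from which $q$ cannot advance is settled, as remarked after Definition~\ref{def:local-settled}. This is the one place needing care. If $b=0$ and $q$ can never move again, then $b$ can never become $1$, so the state is a safe sink. Hence reaching $C_{\mathrm{loc}}$ successes already forces $\tau\le K$. The contrapositive gives $\Pr[\tau>K]\le\Pr[Y<C_{\mathrm{loc}}]$, which is \eqref{eq:local-absorption-tail}.

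For the explicit cutoff \eqref{eq:local-cutoff}, we reuse the Chernoff computation from Lemma~\ref{app:lem:absorption-tail} with $C_{\mathrm{loc}}$ in place of $C$, $\lfloor K/B\rfloor$ trials, and success probability $\alpha$. Set $\mu=\alpha\lfloor K/B\rfloor$. The condition \eqref{eq:local-cutoff} ensures $\lfloor K/B\rfloor\ge s^B\bigl(C_{\mathrm{loc}}+\ell_\eps+\sqrt{\ell_\eps^2+2C_{\mathrm{loc}}\ell_\eps}\bigr)$, because the ceiling times $B$ lower-bounds $K$. Therefore $\mu$ is at least the larger root of $\mu^2-2(C_{\mathrm{loc}}+\ell_\eps)\mu+C_{\mathrm{loc}}^2$. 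Applying the lower-tail bound $\exp(-(\mu-C_{\mathrm{loc}})^2/(2\mu))$ then gives at most $\eps$. The asymptotic scale follows from $\sqrt{\ell^2+2C\ell}\le C+\ell$.

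The main obstacle is the coupling step. Conditioning on the history must not spoil the per-block lower bound, and the blocks must be disjoint so that a progress word found at the start of a block fits inside it. Lemma~\ref{lem:local-block-progress} supplies exactly this bound given any history, so the coupling goes through.
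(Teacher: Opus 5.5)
Your proposal is correct and follows the paper's proof essentially step for step. You split the stream into blocks of length $B=R+1$ and count a block as successful when it contains a strict advance or the run is already settled. Lemma~\ref{lem:local-block-progress} gives conditional success probability at least $\alpha$, so the success count dominates $\operatorname{Bin}(\lfloor K/B\rfloor,\alpha)$; $C_{\mathrm{loc}}$ advances force settlement; and the Chernoff computation of Lemma~\ref{app:lem:absorption-tail} finishes the argument. You also make explicit two points the paper states only in passing: that $\lfloor K/B\rfloor$ is at least the quantity inside the ceiling, and that a frozen ordered component with $b=0$ is a safe sink.
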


\begin{proof}
Split the stream into blocks of length $B$. Count a block as successful
if it contains an ordered advance or a settled state.
By Lemma~\ref{lem:local-block-progress}, its success probability is at
least $\alpha$, given all previous blocks. Coupling with independent
Bernoulli$(\alpha)$ trials shows that the number of successes in $n$
blocks stochastically dominates $\operatorname{Bin}(n,\alpha)$.

If $\tau>nB$, there are fewer than $C_{\mathrm{loc}}$ successful blocks:
each contains a strict advance, and $C_{\mathrm{loc}}$ advances would
leave the state settled. Taking $n=\lfloor K/B\rfloor$ proves
Equation~\ref{eq:local-absorption-tail}. For
$\mu=\lfloor K/B\rfloor\alpha>C_{\mathrm{loc}}$, the Chernoff bound gives
\[
 \Pr[\operatorname{Bin}(\lfloor K/B\rfloor,\alpha)<C_{\mathrm{loc}}]
 \leq
 \exp\!\left(-\frac{(\mu-C_{\mathrm{loc}})^2}{2\mu}\right).
\]
Condition~\eqref{eq:local-cutoff} makes this at most $\eps$.
\end{proof}

\begin{corollary}[Length generalization]
\label{cor:local-length-generalization}
Let $p_k=\Pr_{w\sim\operatorname{Unif}(\Sigma^k)}[\Delta(w)=1]$ and
$e_K=\Pr_{w\sim\operatorname{Unif}(\Sigma^{\leq K})}[\Delta(w)=1]$.
For every $m\geq K$,
\[
 p_m\leq p_K+\Pr[\tau>K].
\]
Thus, if $K$ satisfies~\eqref{eq:local-cutoff}, then $p_K\leq\eps$
implies $p_m\leq2\eps$ for all $m\geq K$. Likewise,
$e_K\leq\eps$ implies $p_m<3\eps$ and $e_M<3\eps$
for all $m,M\geq K$.
\end{corollary}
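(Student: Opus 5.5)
The corollary has two parts: the comparison $p_m\leq p_K+\Pr[\tau>K]$, and its consequences under \eqref{eq:local-cutoff}.

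\textbf{The comparison inequality.} Fix $m\geq K$ and draw a uniform word of length $m$. I view its first $K$ letters as the first $K$ letters of an i.i.d.\ uniform stream; their law is $\operatorname{Unif}(\Sigma^K)$. Consider the event $\Delta(w)=1$. By Definition~\ref{def:local-settled}, at time $K$ the run of the merged cascade is in one of three places:
\begin{itemize}
\item the bad sink, which means disagreement already occurred in the prefix, an event of probability $p_K$;
\item the safe sink, which is impossible, since no continuation can then produce disagreement;
\item a transient state, which means $\tau>K$.
\end{itemize}
A union bound over the first and third cases gives $p_m\leq p_K+\Pr[\tau>K]$. Persistence of disagreement under right concatenation ensures that being in the bad sink at time $K$ is exactly $\Delta(w_{1:K})=1$. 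Closure of the sinks under transitions ensures the safe case contributes nothing.

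\textbf{Using the cutoff.} If $K$ satisfies \eqref{eq:local-cutoff}, Theorem~\ref{thm:local-absorption-tail} gives $\Pr[\tau>K]\leq\eps$. So $p_K\leq\eps$ yields $p_m\leq2\eps$ for every $m\geq K$.

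\textbf{From $e_K$ to $p_K$.} I reuse the length-mass argument from Theorem~\ref{thm:probabilistic-length-generalization}. Since $e_K\geq\lambda_Kp_K$ and $\lambda_K>(s-1)/s$, the hypothesis $e_K\leq\eps$ gives
\[
p_K<\frac{s}{s-1}\,\eps\leq2\eps .
\]
Then for $m\geq K$,
\[
p_m<2\eps+\eps=3\eps .
\]
The case $m=K$ is already covered by $p_K<2\eps$.

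\textbf{The $e_M$ bound.} For $M\geq K$, $e_M$ is a weighted average of $e_K$ and $p_{K+1},\dots,p_M$, with weights proportional to $\sum_{k\leq K}s^k$ and $s^{K+1},\dots,s^M$. Each of these terms is at most $3\eps$ (and $e_K\leq\eps$), so $e_M<3\eps$.

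\textbf{Main obstacle.} The only delicate point is the first step: making sure the settled and transient dichotomy covers every case. In particular, the cache component keeps changing after settlement, so the argument must rely only on the closedness of the two merged sinks, not on absorption of the full state $(c,q)$. I would state this explicitly, using that both sink sets are closed under transitions.
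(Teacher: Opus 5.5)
Your proposal is correct and follows essentially the same route as the paper: you couple all lengths as prefixes of one uniform stream and note that a disagreement after time $K$ forces the run to be in the bad sink or transient at time $K$. You then invoke Theorem~\ref{thm:local-absorption-tail}, convert $e_K$ to $p_K$ via $p_K\le e_K/\lambda_K$ with $\lambda_K>(s-1)/s$, and conclude with the weighted-average argument for $e_M$, exactly as the paper does. Your explicit remark that only the closedness of the two merged sink sets is needed, not absorption of the full state $(c,q)$ whose cache keeps moving, usefully spells out a point the paper's short proof leaves implicit.
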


\begin{proof}
Couple different lengths as prefixes of one infinite stream. If a
disagreement occurs after time $K$ but not by time $K$, the state at
time $K$ must be transient. This proves the first inequality.
For bounded-length sampling,
\[
 p_K\leq e_K/\lambda_K,\qquad
 \lambda_K=\frac{s^K}{1+s+\cdots+s^K}>\frac{s-1}{s}\geq\frac12.
\]
The tail bound gives $p_m<3\eps$; averaging the errors at lengths
above $K$ together with $e_K$ gives the same bound for $e_M$.
\end{proof}

A uniform bound on $C_{\mathrm{loc}}$ lets us combine this corollary with
statistical generalization, keeping the same confidence.
Local attention replaces single-step progress by blocks of length $R+1$,
with progress probability at least $s^{-(R+1)}$.

\subsection{Generalization with Nonuniform Distribution over Vocabulary}
\label{sec:nonuniform-distribution}

Our main length generalization bound is stated for a uniform distribution over the vocabulary $\Sigma$ for simplicity. In fact, the proof yields a generalization to a nonuniform distribution $P$ over $\Sigma$. 
In fact, even dependencies between successive symbols are allowed; in this case we can consider distributions $P$ over semi-infinite strings $X \in \Sigma^\omega$\footnote{I.e., $X$ has the form $X_1 X_2 X_3 \dots$.}.
We only need to assume that every poDFA over $\Sigma$ reaches an absorbing state in finite time almost surely on $X \sim P$; this is satisfied, e.g., if for each $i > 0$ and each $\sigma \in \Sigma$:
\begin{equation}
    P(X_i=\sigma | X_{1\dots i-1} = \xi) \geq \epsilon_P
\end{equation}
for every $\xi \in \Sigma^{i-1}$, for some $\epsilon_P > 0$.

\begin{definition}
Let $\mathcal{A}$ be a poDFA, and $P$ be a distribution over semi-infinite strings in $\Sigma^\omega$. 
Define the following random variables over semi-infinite strings $X \sim P$:
\begin{enumerate}
\item $T \in \mathbb{Z}$: time of reaching an absorbing state %

\item $Z$: the absorbing state eventually reached
\end{enumerate}
Then define
\begin{equation}
    a_{\mathcal{A}}(K) := \min_{z \in Q_{absorb}} \operatorname{Pr}[T \leq K | Z=z]
\end{equation}
where $Q_{absorb}$ denotes the set of absorbing states of $\mathcal{A}$; we restrict to states where $\operatorname{Pr}[Z=z] > 0$ .

\end{definition}

To keep the same length distribution as $\mathcal D_K$, set $s=|\Sigma|\geq2$
and define $\mathcal D_K^P$ by drawing $X\sim P$ and an independent length $L$ with
\[
 \Pr(L=\ell)=\frac{s^\ell}{\sum_{j=0}^K s^j},
 \qquad 0\leq\ell\leq K,
 \qquad w=X_{1:L}.
\]
When $P$ is the i.i.d.\ uniform source, $\mathcal D_K^P=\mathcal D_K$.

Then the proof of our joint generalization bound (Theorem~\ref{thm:joint-bound-michaels-attempt-main-paper}) carries through for $\mathcal D_K^P$, replacing the quantity $a_C(K)$ used in that theorem with a quantity defined from $a_{\mathcal{A}}(K)$:
\begin{theorem}[Joint Generalization Bound in Sample Size and  Length, Version of Theorem~\ref{thm:joint-bound-michaels-attempt-main-paper} for General Input Distribution]\label{thm:joint-bound-michaels-attempt-arbitrary-P}
Use the same hypotheses as Theorem~\ref{thm:app:multiplicative-upper-bound},
with teacher $T_t$ and hypothesis class $\mathcal H$, and define
\begin{equation}\label{eq:a-k-for-general-hyp-class}
    a(K) := \inf_{T_s \in \mathcal{H}} a_{\mathcal{A}(\Delta(T_t, T_s))}(K)
\end{equation}
where $\mathcal{A}(\Delta(T_t, T_s))$ represents the difference automaton for $T_t$ and $T_s$.\footnote{As in Corollary~\ref{cor:tokenwise-disagreement-height} and Proposition~\ref{prop:tokenwise-disagreement}.}

Let $T_s$ be trained on $N$ i.i.d.\ samples from $\mathcal D_K^P$,
with $a(K)>0$.
Then, for every $M\geq K$, the error rate on $\mathcal D_M^P$ satisfies

    \begin{equation}\label{eq:multiplicative-bound-general-P}
 \Pr_{w\sim\mathcal D_M^P}[T_s(w)\neq T_t(w)]
 \leq \frac{\rho B_\delta}{N\cdot a(K)},
\end{equation}

with probability at least $1-\delta$, where $\rho:=\frac{s}{s-1}$ and $B_\delta:=B\log|\F|+3\log(2/\delta)$.
\end{theorem}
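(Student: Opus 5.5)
The plan is to repeat the proof of Theorem~\ref{thm:app:multiplicative-upper-bound} for a general source $P$. The only change is that the binomial bound of Lemma~\ref{lem:app:conditional-absorption} is replaced by the definition of $a_{\mathcal A}(K)$. First, the statistical step: Proposition~\ref{thm:data-complexity} (i.e.\ Theorem~\ref{thm:app:gc-general} with the per-position and no-projection modifications) does not use uniformity of the input distribution. The G\&C lemma of \citet{buzaglo_narrow_teachers_2024} holds for any distribution on inputs, and teacher-equivalence on $\mathcal D_K^P$ is implied by exact functional equivalence, which is what the constraint-counting construction produces. Applying it with $\eps=B_\delta/N$, with probability at least $1-\delta$ we get
\[
e_K:=\Pr_{w\sim\mathcal D_K^P}[T_s(w)\neq T_t(w)]\leq B_\delta/N .
\]
From here on we fix such a student and argue deterministically, so the transfer holds for all $M\geq K$ at once.

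Second, the transfer step for the fixed student. Let $\A=\A_\Delta$ be the difference poDFA, run on the semi-infinite string $X\sim P$. Define $p_m:=\Pr_{X\sim P}[\Delta(X_{1:m})=1]$, the probability that $\A$ is in $\bot$ after $m$ letters. Since $\bot$ is absorbing, $p_m$ is nondecreasing in $m$. Let $q_\infty:=\Pr(Z=\bot)$.
\begin{itemize}
\item If $q_\infty=0$, then $p_m=0$ for all $m$ and the claim is trivial.
\item Otherwise $\bot$ is among the states with positive probability of being reached. Because $\bot$ is absorbing, being in $\bot$ at time $K$ is the same as the event $\{T\leq K, Z=\bot\}$. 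Hence
\[
p_K=\Pr(T\leq K\mid Z=\bot)\,q_\infty\geq a_{\A}(K)\,q_\infty\geq a(K)\,q_\infty ,
\]
using the definition of $a(K)$ as an infimum over $\mathcal H$.
\end{itemize}
The hypothesis that absorption occurs almost surely is what makes $Z$ well defined. Any continuation that later disagrees must not yet be absorbed, so disagreement eventually happening is exactly the event $Z=\bot$.

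Third, relate the averaged quantities. Under $\mathcal D_M^P$ the length $L$ is independent of $X$, so
\[
e_M=\sum_{\ell\le M}\Pr(L=\ell)\,p_\ell\leq p_M\leq q_\infty
\]
by monotonicity. Also $e_K\geq\lambda_K p_K$ with $\lambda_K=\Pr(L=K)>(s-1)/s=\rho^{-1}$, exactly as before. Combining gives
\[
e_M\leq q_\infty\leq \frac{p_K}{a(K)}\leq \frac{\rho\,e_K}{a(K)}\leq\frac{\rho B_\delta}{N\,a(K)} .
\]
For ERM the same argument applies with $B_{ERM}$, using $\log(1/\delta)\leq 3\log(2/\delta)$.

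The main point needing care is the first step: making sure that the statistical bound is distribution-free. The samples from $\mathcal D_K^P$ are i.i.d.\ even when $P$ has dependencies across positions, because each sample independently draws a fresh $X$ and $L$, and the G\&C and uniform-convergence arguments only need i.i.d.\ samples and a $0/1$ loss. A second subtle point is the conditioning in the definition of $a_{\A}(K)$, which is restricted to states $z$ with $\Pr[Z=z]>0$. This is handled by the case split on $q_\infty$ above.
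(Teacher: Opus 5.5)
Your proposal is correct and follows the paper's proof essentially step for step. First comes the statistical bound $e_K\le B_\delta/N$ on $\mathcal D_K^P$, which is distribution-free because the constraint-counting construction yields exact functional equivalence. Then comes the deterministic transfer $e_M\le q_\infty\le p_K/a(K)\le e_K/(\lambda_K a(K))\le \rho\, e_K/a(K)$. The one detail the paper mentions that you omit is the edge case $B_\delta/N\ge 1$: there Proposition~\ref{thm:data-complexity} cannot be invoked with $\epsilon=B_\delta/N$, but the right-hand side is then at least $1$, so the claim holds trivially.
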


The statement above assumes G\&C; as usual, it also holds for ERM
with $B=B_{ERM}$ from Proposition~\ref{thm:data-complexity}.

\begin{proof}
For a fixed student, write
$e_m=\Pr_{w\sim\mathcal D_m^P}[T_s(w)\neq T_t(w)]$, and let $p_K$ be
the error probability on the exact prefix $X_{1:K}$.
Let $q$ be the probability of ever disagreeing on $X\sim P$.
Since disagreement is absorbing, the definition of $a(K)$ gives
$p_K\geq a(K)q$, while $e_M\leq q$ for every $M$.
The mass at length $K$ is
\[
 \lambda_K=\frac{s^K}{\sum_{j=0}^K s^j}>\frac{s-1}{s}=\rho^{-1}.
\]
Thus $e_K\geq\lambda_Kp_K\geq\lambda_Ka(K)q$, and
\[
 e_M\leq q\leq\frac{e_K}{\lambda_Ka(K)}
 \leq\frac{\rho e_K}{a(K)}.
\]
The statistical argument of Proposition~\ref{thm:data-complexity} applies
to $\mathcal D_K^P$ and gives $e_K\leq B_\delta/N$ with probability at
least $1-\delta$ (trivially if $B_\delta/N\geq1$).
Substitution proves the claim, with the same ERM adaptation as in
Theorem~\ref{thm:app:multiplicative-upper-bound}.
\end{proof}

\subsection{Lower Bound on Tradeoff}

We show a \emph{lower bound} also exhibiting a \emph{multiplicative interaction} between $N$ and $a(K)$.
Our lower bound on the tradeoff matches the more general version allowing nonuniform input distributions (Appendix~\ref{sec:nonuniform-distribution}).

\begin{theorem}\label{thm:lower-bound-on-tradeoff}
Let $C, d \in \mathbb{N}$.
    There is a family $\mathcal{F}_{C,d}$ of target functions $f$ representable by fixed-precision transformers, such that, \edits{for} every $0 < \epsilon \leq \frac{1}{4}$, there is an input distribution $P$ \edits{satisfying}:

Consider a learner with hypothesis class $\mathcal{H} := \mathcal{F}_{C,d}$ producing a hypothesis $T_s \in \mathcal{H}$ from $N$ i.i.d. samples at length $K$ (each sample the length-$K$ prefix of a sample from $P$, labeled with the output provided by the target $f$).

Then, define the risk $R_{N,K}$ by averaging the test loss on very long test strings:
    \begin{equation*}
     \lim\sup_{{K_{test}} \rightarrow \infty}\operatorname{Pr}_{w \sim P}(T_s(w|_{1\dots K_{test}}) \neq f(w|_{1\dots K_{test}}))
     \end{equation*}
across targets $f \in \mathcal{F}_{C,d}$ and size-$N$ i.i.d. datasets.

Then, for all $N$ and all $K \geq C+2$
    \begin{equation*}
     R_{N,K} \leq \epsilon 
   \ \ \ \   \Rightarrow \ \ \ \ 
     \epsilon \geq \frac{\log_2 |\mathcal{F}_{C,d}|}{8 N a(K)} 
    \end{equation*}
    where $a(K)$ is as defined in Eq.~\ref{eq:a-k-for-general-hyp-class}.

\end{theorem}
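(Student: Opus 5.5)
We will prove the lower bound with an information-theoretic (Fano-type) argument on an explicit family in which the only informative event is rare. The family and distribution are designed so that reaching the absorbing state within $K$ steps has probability exactly controlled by $a(K)$, and the target's identity is revealed only on such samples.

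\emph{Construction.} Fix a ``trigger'' pattern that a poDFA of height $C$ must traverse, for instance a subsequence $\sigma_1\sigma_2\cdots\sigma_C$ detected by a depth-$O(C)$ fixed-precision transformer, as in the alternating languages. Each $f_b\in\mathcal F_{C,d}$ is indexed by $b\in\{0,1\}^d$. The function $f_b$ outputs $0$ until the trigger has been completed. After completion, its output at later positions encodes $b$, for example by outputting $b_j$ when the current letter is the $j$-th of $d$ designated ``query'' letters. Then $|\mathcal F_{C,d}|=2^d$.

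For each target, the difference automaton between two members is essentially ``wait for the trigger, then disagree on a query letter''. Its absorption times are governed by the trigger-completion time. Given $\epsilon$, we choose $P$ so that completion happens with probability $1$, but the completion time is spread so that $\Pr[T\le K]$ matches $a(K)$. We also make the post-trigger query letters appear a.s., so that any wrong bit $b_j$ leads to eventual disagreement with probability at least a constant $c_0$. Finally we mix in the $\epsilon$: with probability $1-4\epsilon$ the string never completes the trigger (it is absorbed in the safe state), and with probability $4\epsilon$ it does. This keeps the limsup risk of a learner who errs on a constant fraction of bits at order $\epsilon$. We must check that $a(K)$ computed via Eq.~\ref{eq:a-k-for-general-hyp-class} conditions on $Z=\bot$ and therefore equals $\Pr[T\le K\mid \text{trigger}]$, independent of the $4\epsilon$ mixing. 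This matters because it decouples $a(K)$ from $\epsilon$. The hypothesis $K\ge C+2$ ensures the trigger plus one query letter fit, so $a(K)>0$.

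\emph{Information bound.} We put the uniform prior on $b$. A length-$K$ sample carries information about $b$ only if the trigger is completed within the prefix and some query letter follows. This has probability at most $4\epsilon\, a(K)$, and even then a sample reveals at most all $d$ bits. A sharper route bounds the information per informative sample by the entropy of its labels, at most $d$ bits, although we only need the expected number of revealed bit-coordinates. The expected number of informative samples is at most $4\epsilon a(K) N$. By Fano's inequality or a direct counting argument, if $4\epsilon a(K)N\cdot(\text{bits per sample})<d/2$, then on average at least $d/4$ bits are unrevealed and are guessed wrong with probability $1/2$. Each wrong bit contributes risk at least $c_0\cdot 4\epsilon/d$, and a constant fraction of wrong bits forces risk greater than $\epsilon$. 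To get the clean $\log_2|\mathcal F|/(8Na(K))$ form, we let each informative sample reveal only one uniformly chosen coordinate $b_j$, with the query letter sampled once after the trigger. Then the number of revealed coordinates is at most the number of informative samples, and $R\le\epsilon$ forces $4\epsilon a(K)N\ge d/2$, i.e.\ $\epsilon\ge d/(8Na(K))$.

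\emph{Main obstacle.} The delicate step is the simultaneous requirement that $f_b$ be a fixed-precision transformer (a poDFA-expressible function with strict masking) and that the restriction ``reveal only one bit, only after the trigger'' hold exactly. We must also ensure that $a(K)$ in the theorem is the infimum over the class and not something larger. We will try first the alternating-language teacher for the trigger combined with a single first-post-trigger query lookup. We will then verify that all pairwise difference automata share the same conditional absorption law given $Z=\bot$, so that the infimum is attained and equals the quantity used in the counting.
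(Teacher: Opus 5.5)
Your proposal is essentially the paper's argument. The paper likewise uses an informative branch of mass $\eta=4\epsilon$ and a uniformly chosen coordinate that is revealed only after a height-$C$ chain is completed, so each sample reveals at most one bit, each with probability $4\epsilon\,a(K)/d$. Unrevealed bits are wrong with probability $1/2$, which gives $R\ge 2\epsilon\bigl(1-4\epsilon N a(K)/d\bigr)$ and hence $\epsilon\ge d/(8Na(K))$. Your expected-count step is exactly the paper's Bernoulli inequality applied to $(1-4\epsilon a(K)/d)^N$.

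One point needs tightening. $a_{\mathcal A}(K)$ is a minimum over all absorbing states of positive mass, not a conditioning on $Z=\bot$ alone. So the non-informative branch must be detectable by the automaton and absorbed no later than disagreement. A pure subsequence trigger has no such safe sink. The paper handles this by reading the flag $g$ and the coordinate $j$ off the first letter of the product alphabet $\{0,1\}\times[d]\times\{0,1\}$, with chain steps advanced by the independent bit $p$. The safe sink is then entered at time $1$, so the minimum is attained at the disagreement state.
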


Note here that, in the setting of the function classes $\mathcal{F}_{C,d}$ built here, this bound closely resembles Theorem~\ref{thm:joint-bound-michaels-attempt-arbitrary-P}: the numerator is the negative log probability that a uniform sample from $\mathcal{H}$ matches any given ground truth $f$ -- which is essentially the statistical complexity term $B$ arising in the G\&C bound (which is the negative log probability that a random sample from the hypothesis class matches the target, as per discussion in Appendix~\ref{sec:app:data-complexity}) -- whereas the  denominator is proportional to the product of $N$ and $a(K)$, as in Theorem~\ref{thm:joint-bound-michaels-attempt-arbitrary-P}.
Thus, any bound applying at the generality of (i) general hypothesis classes $\mathcal{H} \subseteq \FPT(L, D, H)$, (ii) general distributions $P$ over $\Sigma^\omega$ needs to have essentially this form, as we have  both upper and lower bounds on $\epsilon$ of this form up to proportionality factors.

\begin{proof}

We define $P$ over $\Sigma^\omega$ so that successive symbols are always sampled independently; we thus specify the probability at the level of individual symbols (i.e., specify $P$ as a probability over $\Sigma$).
Define
\begin{equation}
    \Sigma := \{0,1\} \times [d] \times \{0,1\} %
\end{equation}
with elements $(g,j,p)$, with
\begin{equation}
    P((g,j,p)) = P_1(g) P_2(j)  P_3(p) %
\end{equation}

where
\begin{align*}
    P_1(g = 1) = \eta \ \ \ \ \ \ \ \ \ \ \ \ \ \     P_3(p = 1) = r
\end{align*}
and $P_2$ is uniform.

We parameterize the family by (i) the parameter $d \in \mathbb{N}$, (ii) the parameter $C \in \mathbb{N}$, (iii) a bit vector $v \in \{0,1\}^d$, and (iv) a tilt  $\eta > 0$, (v) a tilt $r \in (0,1)$.
Here, (i)--(iii) jointly define $f$; (iv) -- (v) define the distribution $P$.
We later choose $\eta$ based on $\epsilon$, as $\eta := 4\epsilon$.

Given a target $v = (v_1, \dots, v_d) \in \{0,1\}^d$, we first define a poDFA as follows.
The state set is 
\begin{equation}
    \mathcal{Q} := \{q_0, \bot\} \cup \{q_{j,i} : 0 \leq i \leq C; 1 \leq j \leq d\}. %
\end{equation}
\begin{enumerate}
\item Start in $q_0$.
\item Consider the first token, of the form $(g,j,p)$
\begin{enumerate}
\item If $g \neq 1 $, go to $\bot$. 
\item Else, $j$ indexes a chain to go into. That is, go to $q_{j,0}$.
\end{enumerate}

\item Once we are in such a state
\begin{equation}
    \delta(q_{j,i}, (g,j',p)) := \begin{cases}
        q_{j,i+1} & \text{if }i < C, p = 1 \\
        q_{j,i} & \text{else}
    \end{cases}
\end{equation}
That is, each branch is $q_{j,0} \rightarrow q_{j,1} \rightarrow \dots \rightarrow q_{j,C}$ where only $p=1$ advances the chain

Also, set $\delta(\bot, \sigma) \equiv \bot$ for every $\sigma \in \Sigma$.

\item Every state $q_0$, $\bot$, and $q_{j,i}$ with $i<C$ outputs $0$. Endpoint $q_{j,C}$ outputs an indicator for $v_j$; that is for every $\sigma \in \Sigma$, 
\begin{equation}
    \mathfrak{o}(q_{j,C}, \sigma) = v_j
\end{equation}
\end{enumerate}

This is represented by a family of $\mathcal{FPT}$s with $L=\Theta(C)$, \edits{$D=\Theta(d)$}, $H=1$.

Now for any two distinct elements of $\mathcal{F}_{C,d}$,
we note that the difference automaton $\mathcal{A}_\Delta$
has a very similar structure: it consists of chains of
length $C$ again, and the symbols have the same effect of advancing in a chain as they have in the poDFAs given above.

Define 
\begin{equation}
    a_C(t) := \operatorname{Pr}\left[\operatorname{Bin}(t,r) \geq C\right]
\end{equation}

This in fact matches the definition of $a(K)$ from Eq.~\ref{eq:a-k-for-general-hyp-class}.

A single example reveals $v_j$ with probability
\begin{equation}
    q_j(K) = \frac{\eta a_C(K-2)}{d}
\end{equation}
After $N$ independent examples, coordinate $j$ remains ``unrevealed'' with prob
\begin{equation}
    \left(1-\frac{\eta a_C(K-2)}{d}\right)^N
\end{equation}
Consider test length $m$.
For a given coordinate $j$, the chance of it becoming relevant on a test sample here is
\begin{equation}
    \frac{\eta a_C(m-2)}{d}
\end{equation}
The test error rate is:
\begin{equation}
    \text{TestErrorRate} \geq \frac{\eta a_C(m-2)}{2} \left(1-\frac{\eta a_C(K-2)}{d}\right)^N
\end{equation}

Setting $m \rightarrow \infty$:
\begin{equation}
    \text{TestErrorRate} \geq \frac{\eta}{2} \left(1- \frac{\eta a_C(K-2)}{d}\right)^N
\end{equation}
Take $\eta = 4\epsilon$:

\begin{equation}
    \text{TestErrorRate} \geq 2\epsilon \left(1- \frac{4\epsilon a_C(K-2)}{d}\right)^N
\end{equation}
If $\text{TestErrorRate} \leq \epsilon$, then, dividing both sides by $\epsilon$, and applying Bernoulli's Inequality
\begin{align*}
    \frac{1}{2} \geq  \left(1- \frac{4\epsilon a_C(K-2)}{d}\right)^N %
    \geq  1 - \frac{4N\epsilon a_C(K-2)}{d} 
\end{align*}
Hence,
\begin{align*}
    \frac{4 N\epsilon a_C(K-2)}{d}  \geq \frac{1}{2}
\end{align*}
Rearranging, and by monotonicity of $a_C$,
\begin{equation}
     \epsilon \geq \frac{d}{8 N a_C(K)}
\end{equation}
Now, by construction, $d = \log_2 |\mathcal{F}_{C,d}|$. %
This concludes the result.
\end{proof}

\newpage

\section{Experiments}
\label{app:experiments}

We use three controlled synthetic task families to test the qualitative
predictions of our theory. We choose these families for variety in teacher
depth, width, number of heads, alphabet size, and associated automaton
height. The induction-head task also tests local positional information
provided by a local attention head. \edits{We use standard FP32 arithmetic and test 
whether the theoretical predictions carry over to real-world numerical implementations}.
First, the logarithmic--linear tradeoff between
training length $K$ and sample size $N$ suggests that an additive increase in
$K$ can offset a multiplicative reduction in $N$: increasing training length
can therefore be more effective than adding examples. Second, \edits{larger teacher and} larger student
depth $L$, width $D$, or alphabet size $|\Sigma|$ can increase the sufficient
requirements on $N$ and/or $K$, suggesting possible performance degradation
at fixed training resources. These experiments also have an exploratory
role: the two bounds on the automaton height $C$ in
Lemma~\ref{thm:deterministic-length-generalization} yield different dependencies
on these quantities, and worst-case guarantees need not describe their
empirical effects.

\subsection{Task families}
\label{sec:task-families}

\subsubsection{\texorpdfstring{$L_k$}{Lk}: Alternating languages }

Let the alphabet be $\Sigma_L=\{a,b\}$.  The language $L_k$ consists of the
words that begin with $a$ and that contain at least $k$ alternating constant blocks of a's or b's.
Equivalently, writing the unique block decomposition of a nonempty word as
$w=r_1^+\cdots r_m^+$, with $r_i^+ = r_ir_i^*$ and where the consecutive $r_i, r_{i+1}\in\{a,b\}$ are distinct.
\begin{equation}
 L_k:=\left\{w = r_1^+\cdots r_m^+\in\{a,b\}^*:
 r_1=a\ \text{and}\ m\geq k\right\}.
 \label{eq:experimental-Lk}
\end{equation}

Thus $L_3$ accepts $\mathtt{aaabbbbbba}$ and $\mathtt{ababa}$, but rejects
$\mathtt{aaaabb}$ and $\mathtt{bbbaabb}$. These tasks are inspired by the
alternating-block depth hierarchy of~\citet{yang2025kneedeep}; here we ask
for at least $k$ blocks rather than exactly $k$. \edits{\cite{yang2025kneedeep} prove that, in a fixed-precision paradigm with infinite precision inside attention, the minimal number of  transformer layers required for to express the alternating languages $L_k$ is $k-1$.} We use this family to study increasing sequential depth, and hence the dependence on teacher depth $L_t$, while keeping the alphabet fixed.

\paragraph{Specifics.}\label{app:exp-alternating-lang-specifics}
The residual stream gives access to the current
token, which is included when counting blocks. The target at position $i$
is $1$ exactly when the prefix $w_1\cdots w_i$ belongs to $L_k$; the EOS
target is the membership label of the full word. Evaluation includes every
ordinary position and EOS. Each new letter differs from the previous one with probability $0.05$, rather
than $0.5$, and words start with $a$ with probability $0.95$. This makes
alternations rarer and avoids early target saturation. The distribution still
fits our \edits{uniform sampling} framework: one can \edits{equivalently consider} an alphabet of $20$ symbols, \edits{extending $\{a,b\}$ with $18$ padding symbols}. Additionally, Appendix~\ref{sec:nonuniform-distribution}
discusses the non-uniform extension for the the sample--length tradeoff multiplicative bound in
Equation~\ref{eq:multiplicative-bound}.

\paragraph{Example.} For $L_3$, the target becomes $1$ when the third
block begins:
\[
\begin{array}{r|cccccccccc}
w & \mathtt{a} & \mathtt{a} & \mathtt{a} & \mathtt{b} & \mathtt{b} & \mathtt{b} & \mathtt{a} & \mathtt{a} & \mathtt{a} & \mathtt{EOS} \\
y & 0 & 0 & 0 & 0 & 0 & 0 & 1 & 1 & 1 & 1
\end{array}
\]

\subsubsection{\texorpdfstring{$A_k$}{Ak}: Alphabet languages }

Here $\Sigma_k=\{s_1,\ldots,s_k\}$.  These Alphabet languages ask
whether every letter has occurred at least once in the word:
\begin{equation}
 A_k:=\left\{w\in\Sigma_k^*:
 \forall j\in[k],\ \exists i\in[|w|]\ \text{such that}\ w_i=s_j\right\}.
 \label{eq:experimental-Ak}
\end{equation}
For example, $A_5$ accepts precisely the words over
$\{a,b,c,d,e\}$ in which no letter is missing. We assume a reference teacher
construction, where one attention head monitors each letter, giving $H_t=k$.
With this hypothesis, increasing $k$ therefore varies $H_t$, the corresponding residual width
$D_t$, and $|\Sigma_k|$ together, rather than isolating head count.

\begin{remark*}
Unlike the Alternating languages, we \edits{have no proof} that representing $A_k$ requires $k$ teacher heads. We nevertheless use $H_t=k$ as a heuristic, supported by two main reasons. First, we empirically observed that learning the Alphabet languages with less heads than alphabet symbols is difficult. Secondly, under a sufficiently skewed distribution, if one head monitors two letters $a, b$, its attention mechanism may fail to detect a sporadic presence of letter $b$ among a word full of $a$'s: indeed, most of the attention mass would be set on $a$'s, while fixed-precision rounding and saturation would potentially leave $b$ undetected.
\end{remark*}

\paragraph{Specifics.} Letters are sampled independently and uniformly.
Each ordinary position is labelled by the membership of its prefix in
$A_k$, and EOS by the membership of the full word. Evaluation includes
all these positions.

\paragraph{Example.} For $A_3$ over $\{a,b,c\}$, the target becomes $1$
once every letter has appeared:
\[
\begin{array}{r|ccccccc}
w & \mathtt{a} & \mathtt{b} & \mathtt{c} & \mathtt{a} & \mathtt{b} & \mathtt{c} & \mathtt{EOS} \\
y & 0 & 0 & 1 & 1 & 1 & 1 & 1
\end{array}
\]

\subsubsection{\texorpdfstring{$\mathit{IH}_k$}{IHk}: Induction Head tasks}

The Induction Head tasks have a vector-valued prediction at every ordinary token
position; EOS is not supervised or evaluated. Here again, $\Sigma_k=\{s_1,\ldots,s_k\}$.  For each
$i\in[|w|]$, the target output is a vector
$\mathit{IH}_k(w)_i\in\{0,1\}^k$, with one dimension per letter in $\Sigma_k$, whose $d$-th coordinate records
whether the bigram $\langle w_i,s_d\rangle$ has appeared strictly earlier in
the word:
\begin{equation}
 \bigl(\mathit{IH}_k(w)_i\bigr)_d
 :=\mathbf{1}\!\left\{\exists j\in\{1,\ldots,i-2\}:\
       (w_j,w_{j+1})=(w_i,s_d)\right\},
 \qquad d\in[k].
 \label{eq:experimental-IHk}
\end{equation}
For instance, at the third position of $\mathtt{abac}$, the current letter
is $a$ and the earlier bigram $\langle a,b\rangle$ has occurred, so the
$b$-coordinate is $1$. Thus $\mathit{IH}_k$ is the product of $k$ binary
classification tasks, one for each possible successor $s_d$. Distinct
letters must have distinct embeddings, so fixed precision requires
$|\F|^{D_t}\geq k$. This family therefore varies the alphabet size and
the teacher's width requirement, as well as the output dimension $k$.

\paragraph{Specifics.}
The induction-head task uses local positional information, unlike the
preceding two families. A range-one local attention head can attend only
to position $i-1$ at position $i>1$, providing the information needed to
form bigrams without a separate positional encoding.
Appendix~\ref{appendix:local-attention} treats this extension through a
finite suffix cache coupled to a partially ordered component. We use
only its past-subformula-counting bound, corresponding to $C_0$ in
Lemma~\ref{thm:deterministic-length-generalization}, with the additional
cache and block factors. The temporal-profile bound $C_1$ is not
established for this extension.

\paragraph{Example.} For $\mathit{IH}_3$ over $\{a,b,c\}$, write target
coordinates in the order $(a,b,c)$:
\[
\begin{array}{r|cccccc}
w & \mathtt{a} & \mathtt{b} & \mathtt{a} & \mathtt{c} & \mathtt{a} & \mathtt{b} \\
y & (0,0,0) & (0,0,0) & (0,1,0) & (0,0,0) & (0,1,1) & (1,0,0)
\end{array}
\]

\subsection{Protocol and evaluation}
\label{sec:experimental-protocol}

\paragraph{Binary loss and accuracy.}\label{app:bce-and-accuracy}
Exact equality of FP32 outputs is not a useful experimental criterion, so
we treat each target coordinate as a binary class. For a word $w$, let
$\mathcal S(w)$ be its supervised positions, let $y_{i,d}\in\{0,1\}$ be
the $d$th target coordinate at position $i$, and let $z_{i,d}$ be the
corresponding model logit. We minimize
\begin{align}
 \mathcal L(\theta)
 &=\frac{1}{\sum_{n=1}^N|\mathcal S(w^{(n)})|}
   \sum_{n=1}^N\sum_{i\in\mathcal S(w^{(n)})}
   \frac{1}{\nu}\sum_{d=1}^{\nu}
   \ell_{\mathrm{BCE}}\bigl(z^{(n)}_{i,d},y^{(n)}_{i,d}\bigr),
 \label{eq:experimental-bce}\\
 \ell_{\mathrm{BCE}}(z,y)
 &=-y\log\sigma(z)-(1-y)\log\bigl(1-\sigma(z)\bigr),
\end{align}
with $\sigma(z)\coloneqq(1+e^{-z})^{-1}$.
Note that the output dimension $\nu$ is only $1$ for $L_k$ and $A_k$, and $\nu=|\Sigma_k|$ for
$\mathit{IH}_k$. Thus the first two tasks use one BCE term at every
ordinary position and EOS, whereas $\mathit{IH}_k$ averages BCE across
all $|\Sigma_k|$ target coordinates at each ordinary position.

We classify a coordinate as $\widehat y_{i,d}=\mathbf 1\{z_{i,d}\geq
0\}$ and a word is correct if $\widehat y_{i,d}=y_{i,d}$ for every
$i\in\mathcal S(w)$ and $d\in[\nu]$. Accuracy is the fraction of correct
words, and perfect accuracy means that every word is correct. 

\begin{remark*}
In the fixed-precision abstraction, the binary classifications given above, based on the logit regions $1\{z_{i,d} < 0\}$ and $\{z_{i,d}\geq
0\}$, are representable by boolean predicates in the same way as done in Proposition~\ref{prop:tokenwise-disagreement}. Equivalently, the binary classification framework naturally stems from an alternative definition of the difference function \ref{app:def:difference-function}, checking on two logit regions rather than on exact equality.
\end{remark*}

\paragraph{Sampling and supervision.}
Unless stated otherwise, all ordinary input symbols are drawn independently
and uniformly from the alphabet. Each sampled word has length exactly
$K$, measured in ordinary symbols. Agreement requires an exact match at
every supervised position, and the current letter can contribute through
the residual stream.

For $L_k$ and $A_k$, we additionally append and supervise an EOS token.
These are language-recognition tasks, so the terminal readout of
\citet{li_cotterell2025fpt} is natural, alongside our prefix-wise outputs.
EOS is an experimental convention, not an assumption of our base framework.
Requiring a correct EOS output adds a success condition, making evaluation
at least as demanding as checking ordinary positions alone; we do not
claim a measured effect on training difficulty. The persistent difference
function in our bounds concerns ordinary positions, while EOS accuracy is
an additional check. 

\paragraph{Optimisation and repetitions.}
In all cases, optimisation uses Adam \cite{kingma2014adam} with \edits{a batch size of 32 and a} 
learning rate searched $\in  \{$2e-5, 5e-5, 1e-4$\}$, choosing the learning rate that induces the most seeds to fit perfectly the training data. 
Training lasts for at most $2{,}000$ epochs and stops early
when the epoch training loss is below 1e-5. Unless the quantity $N$ is explicitly given, the training sets contains $N=256$ i.i.d. samples. All configurations in all the experiments were run $10$ times over $10$ different seeds, \edits{each seed affecting the initialization, the training dataset and the evaluation dataset}. Finally, a model is selected in our analyses if and only if it achieves perfect accuracy on its training set. This matches the consistency requirement of Guess-and-Check, but not its distribution over hypotheses.

\paragraph{Choice of training length.}
\label{app:experiments-adjusted-length}
\edits{For simplicity, generated datasets are made of samples of length exactly $K$, instead of words of different lengths smaller than $K$. All the results of this paper hold up to a slight constant adaptation, stemming immediately from the deletion of the weight factor $\lambda_K$ in Section~\ref{sec:probabilistic-LG}.} Unless $K$ is swept, it is chosen using task-specific coverage criteria:
approximately $80\%$ of $L_k$ words starting with $a$ contain the required
blocks, $80\%$ of $A_k$ words contain every letter, and $50\%$ of
$\mathit{IH}_k$ words have exhibited every bigram before the final
ordinary position. These criteria concern the target task. They were also partially selected empirically
to make the experiments interesting (neither trivial nor impossible).
 The lower threshold for $\mathit{IH}_k$
reflects the useful coordinate-wise supervision available before all
bigrams have occurred. The $A_k$ and $\mathit{IH}_k$ choices are guided by
coupon-collector estimates.

\paragraph{Default student architectures.}
The trained transformers \edits{architecture is an enhanced version of the definition given in Appendix~\ref{sec:fpt-definition}, in order to closing in with real-world transformers. We use affine query, key, and value projections and apply an affine output projection after head concatenation. Attention scores are divided by $\sqrt{D_{s,h}}$ as is customary. Furthermore, we apply Post LayerNorm ~\citep{ba2016layernorm} twice in each layer, once after the attention and once after the feed-forward. Local-attention heads are also added for the induction head tasks $\mathit{IH}_k$.} The ReLU feed-forward layers have one hidden layer of width $2D_s$. Outputs are evaluated by coordinate-wise binary
classification rather than exact equality of the logits. Computations
use PyTorch's default \textbf{FP32} precision, which naturally reflects the
fixed-precision setting. \edits{We give the exact student parameter sizes of the experiments in the next section.} %

\edits{
\begin{remark*}
To prove that every $\LTL$ formula can be simulated by a fixed-precision transformer despite vanishing attention weights, \citet[Section~3.3]{li_cotterell2025fpt} provides a two-layer patch construction to implement the $\mathbf{P}$ast operator resilient to any input length. The first block identifies witnesses before attention weights vanish, and the second aggregates this set without saturation. Consequently, experimental and theoretical transformer depths for our tasks may differ by a constant; for example, the alphabet languages $A_k$ might theoretically require two layers to handle vanishment of attention weights. Since FP32 saturation arises only at lengths far beyond our experiments and might not follow the exact arithmetic assumption stated in Appendix~\ref{app:arithmetic-assumptions}, we ignore this distinction.
\end{remark*}
}

In the end, the evaluated models were selected \edits{based on} representational complexity and tractability, ensuring they were sufficiently expressive to yield meaningful insights while remaining learnable.

\paragraph{Evaluation.}
We evaluate at absolute test lengths, such as $K_{\mathrm{test}}=500$,
or relative lengths, such as $K_{\mathrm{test}}=5K$. These lengths count
ordinary symbols, excluding EOS. Each model is evaluated on $2{,}000$
independent test words of the specified length. The supervised positions
are the ordinary positions and EOS for $L_k$ and $A_k$, and the ordinary
positions only for $\mathit{IH}_k$; accuracy is as defined above.

To \edits{match} the selection \edits{prior} of Guess-and-Check, as noticed above, we retain for plotting only
models with perfect training accuracy. Consequently, a reported point can aggregate fewer than the $10$ attempted
seeds \edits{and} a blank region in a plot \edits{signals} that no seed achieved perfect training
accuracy for that configuration. \edits{In particular, some configurations may represent only one of the ten seeds if the training was difficult on this configuration.}

\subsection{Results}

We first examine whether increasing the teacher's parameter size shifts the
amount of data or training length needed for high test accuracy.  
Teacher sizes refer to realizations of the target functions. The \edits{three} families probe different, sometimes coupled resources:
depth $L_t$ for $L_k$; head count $H_t$, width $D_t$, and alphabet size
for $A_k$; and width requirements, alphabet size, and output dimension
for $\mathit{IH}_k$.

\edits{We then study how varying the student parameter sizes can impact the performance, and how it compares with theory.}

\edits{Finally, we demonstrate how the required training length $K$ and sample size $N$ trade-off for the three task families.}

\subsubsection{Impact of the teacher's size}
\label{sec:teacher-complexity-results}

The two height bounds in
Lemma~\ref{thm:deterministic-length-generalization} give different
dependencies: $C_0$ grows with depth and width, but decreases with head
count at fixed width (other quantities held fixed); $C_1$ depends on depth
and alphabet size, not width or head count. \edits{The statistical bounds on $N$ depend on width and depth. In addition, a larger alphabet lowers the worst-case probability
of making progress in the automaton, impacting $K$, while also showing up in bounds on $N$}. These effects can increase the
sufficient requirements on $N$ and $K$. \edits{The experiments therefore give us insights on whether the empirically demanded quantities $N$ and $K$ pursue the theoretical sufficient requirements.} For $\mathit{IH}_k$, only the local extension of
the $C_0$ argument is used.

In this subsection studying the impact of teacher's size, student backbone $(L_s,D_s,H_s)$
is held fixed across tasks; input embeddings and output dimensions
still follow the task alphabet and targets. Moreover, among each family, all students have the same parameter sizes, despite having different targets. This permits to \edits{restrain the variations induced by different student sizes. For every family of tasks, we set the student structure to be large enough so that all the targetted teachers are realizable.}
\begin{itemize}
 \item For the alternating languages, the studied tasks are $L_4, L_5, L_6, L_7$. Every student has exactly $L_s=6$ layers, embedding dimension $D_s=256$, and one attention head $H_s =1$. When varying $K$, they all train on $256$ samples. When varying $N$, the training length $K$ is set, respectively for  $L_4, L_5, L_6, L_7$, to $86, 111, 135, 158$ (c.f. Choice of training lengh~\ref{app:experiments-adjusted-length}).
 \item For the alphabet languages, the tasks are $A_{10}, A_{12}, A_{15}, A_{20}$. Every student has one layer $L_s=1$ and $H_s=30$ attention heads. The global embedding dimension is fixed to $D_s=360$. When varying $K$, they all train on $256$ samples. When varying $N$, the training length $K$ is set, respectively for  $A_{10}, A_{12}, A_{15}, A_{20}$, to $37, 47, 62, 89$.
 \item For the induction head tasks, the targets are $\mathit{IH}_5, \mathit{IH}_6, \mathit{IH}_7, \mathit{IH}_8$. Every student has two layers $L_s=2$ and embedding
 dimension $D_s=256$ and two heads per layer: one local head of range
 $1$ and one global head. When varying $K$, they all train on $256$ samples. When varying $N$, the training length $K$ is set, respectively for $\mathit{IH}_5, \mathit{IH}_6, \mathit{IH}_7, \mathit{IH}_8$, to $92, 144, 211, 292$.
\end{itemize}

\begin{remark*}
    \edits{While we set $L_s = 6$ for alternating languages to correspond with the minimal realizable number of layers of the biggest task $L_7$, we do not proceed accordingly for the alphabet languages $A_k$. Indeed, we set $H_s$ to $30$ and not $20$, as learnability was observed to be difficult when the number of heads is exactly the size of the alphabet. This choice does not alter the informative nature of the experiments or their relationship to the theoretical framework.}
\end{remark*}

\paragraph{Required data complexity.} We first vary $N$ and compare task
sizes within each family. The student backbone is fixed, while $K$ is
adjusted using the coverage criteria above. Larger tasks therefore receive
longer, potentially more informative examples, which could partly offset
their increased difficulty. Nevertheless, the observed sample requirement
generally increases with task size. For $A_k$ and $\mathit{IH}_k$, this
comparison does not separate teacher size from alphabet size or, for
$\mathit{IH}_k$, the number of output coordinates that must all be correct.

\begin{figure}[H]
    \centering
    \begin{subfigure}[b]{\textwidth}
        \centering
        \includegraphics[width=\linewidth]{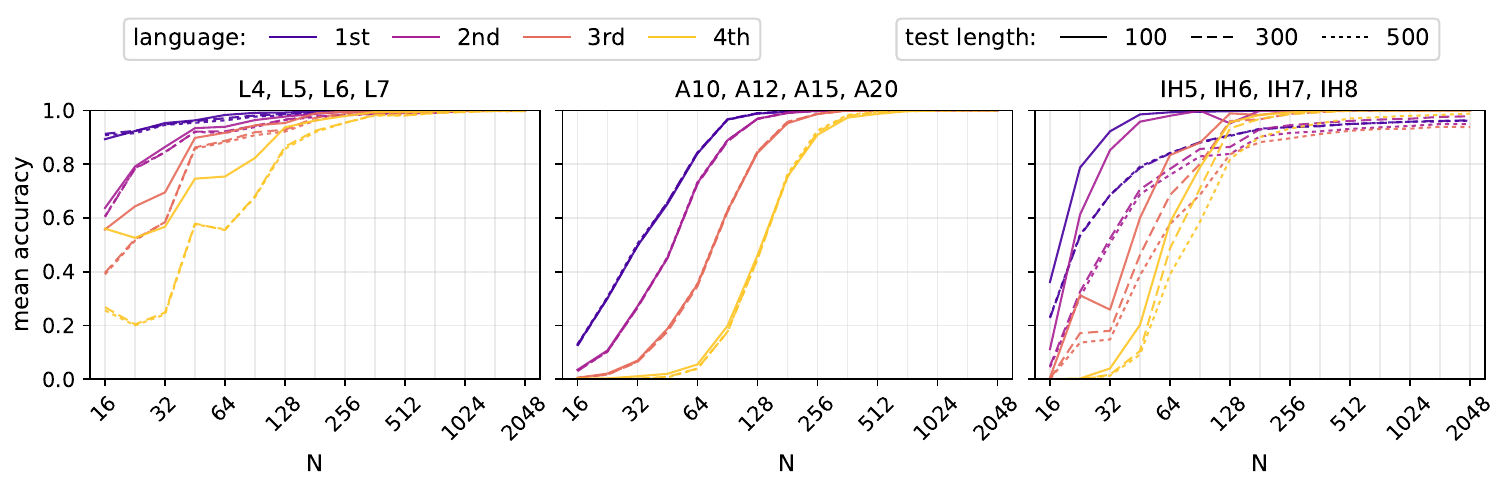}
        \vspace{-15pt}
        \caption{\small Evaluation on absolute lengths}
        \label{fig:aggregate_N_absolute}
    \end{subfigure}
    \vspace{0.5cm} %
    \begin{subfigure}[b]{\textwidth}
        \centering
        \includegraphics[width=\linewidth]{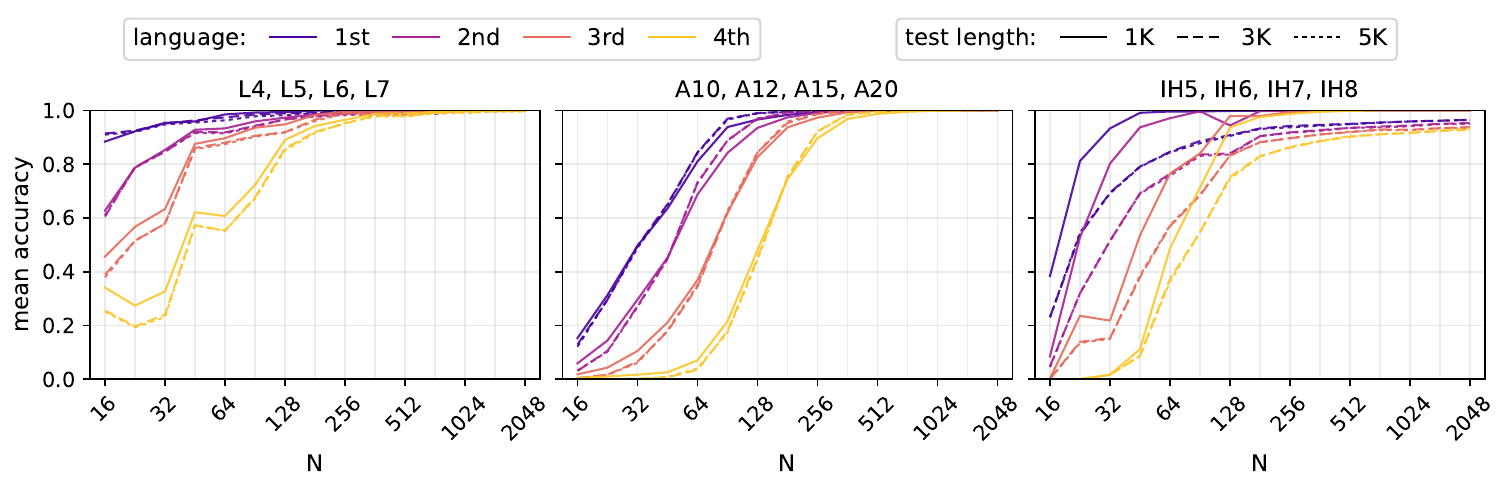}
        \vspace{-15pt}
        \caption{\small Evaluation on relative lengths}
        \label{fig:aggregate_N_relative}
    \end{subfigure}
        \vspace{-1.1cm} %
    \caption{\small \textbf{Impact of teacher's size on data complexity}. $L_k$ probes teacher depth $L_t$; $A_k$ jointly varies $H_t$, $D_t$, and $|\Sigma|$; $\mathit{IH}_k$ varies alphabet size, width requirements, and output dimension. Student backbones are fixed within each family, while $K$ follows the task-specific coverage criterion.}
    \label{fig:aggreage_N}
\end{figure}

\paragraph{Required training length.} We similarly sweep $K$, comparing
task sizes at each common training length, with $N$ and the student
backbone fixed. Larger tasks generally require longer training words for
high test accuracy. We observe that this is qualitatively consistent with the increased
sufficient length scales.

\begin{figure}[H]
    \centering
    \begin{subfigure}[b]{\textwidth}
        \centering
        \includegraphics[width=\linewidth]{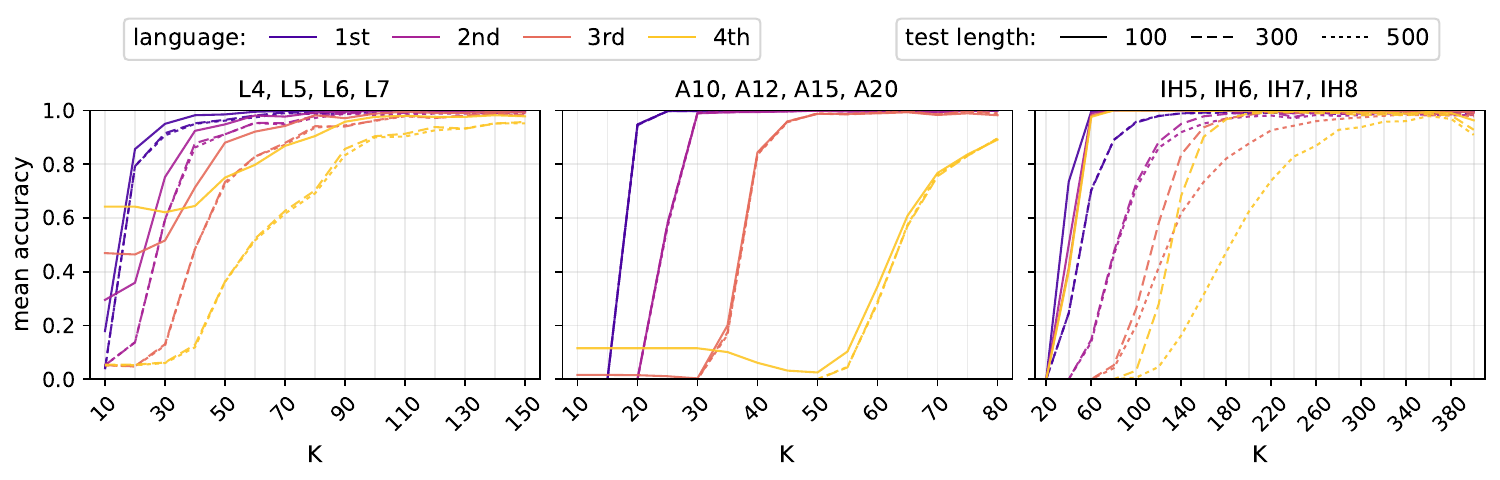}
        \vspace{-15pt}
        \caption{\small Evaluation on absolute lengths}
        \label{fig:aggregate_K_absolute}
    \end{subfigure}
    \vspace{0.5cm} %
    \begin{subfigure}[b]{\textwidth}
        \centering
        \includegraphics[width=\linewidth]{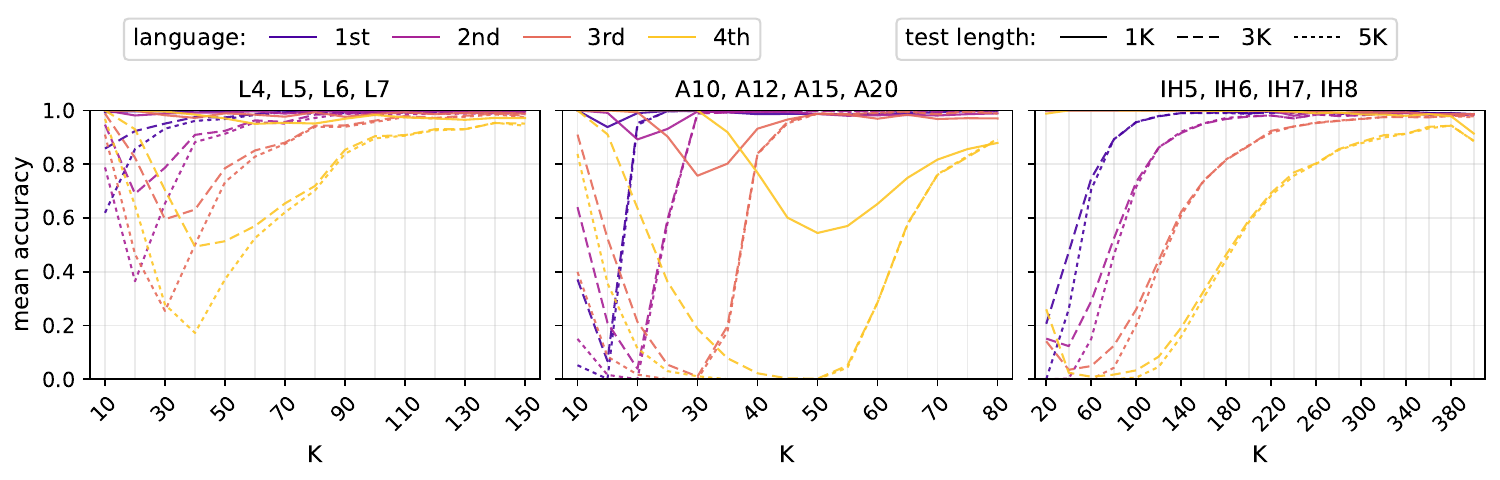}
        \vspace{-15pt}
        \caption{\small Evaluation on relative lengths}
        \label{fig:aggregate_K_relative}
    \end{subfigure}
        \vspace{-1.1cm} %
    \caption{\small \textbf{Impact of teacher's size on length complexity}. $L_k$ probes teacher depth $L_t$; $A_k$ jointly varies $H_t$, $D_t$, and $|\Sigma|$; $\mathit{IH}_k$ varies alphabet size, width requirements, and output dimension. Tasks are compared at shared values of $K$, with $N$ and student backbones fixed. \edits{For relative evaluations, early accuracy is sometimes high for tasks that are trivial on short words: words of length $3K$ or $5K$ might still be short and have a trivial target.}}
    \label{fig:aggreage_K}
\end{figure}

\subsubsection{Impact of the student's size}
\label{app:experiments-impact-student}

We next vary the size of the student, $L_s, D_s$ and $ H_s$, while holding the teacher and all other
independent experimental parameters fixed. \edits{Conversely to the previous results studying the impact of the teacher sizes, we do not fix the student architecture within families. Instead for each task, we set parameters 'close' to the hypothesized minimal size for realizability. Hence, apart from explicit variations of architectural parameters, the trained students use the following default architectures.}
\begin{itemize}
 \item For the alternating languages $L_k$, the tasks are $L_4, L_5, L_6, L_7$. By default, the students have $L_s=k-1$ layers, embedding dimension
 $D_s=256$, and one attention head $H_s =1$. $N = 256$ and $K$ is set, respectively for  $L_4, L_5, L_6, L_7$, to $86, 111, 135, 158$ (c.f. Choice of training lengh~\ref{app:experiments-adjusted-length}).
 \item For the alphabet languages $A_k$, the tasks are $A_{10}, A_{12}, A_{15}, A_{20}$. By default, the students have one layer $L_s=1$ and $H_s=k$ attention heads. The global embedding
 dimension is fixed to $D_s=360$. $N = 256$ and $K$ is set, respectively for  $A_{10}, A_{12}, A_{15}, A_{20}$, to $37, 47, 62, 89$.
 \item For the induction head $\mathit{IH}_k$, the tasks are $\mathit{IH}_5, \mathit{IH}_6, \mathit{IH}_7, \mathit{IH}_8$. By default, the students have two layers $L_s=2$ and embedding
 dimension $D_s=256$, with two heads per layer: one local head of range
 $1$ and one global head. $N = 256$ and $K$ is set, respectively for $\mathit{IH}_5, \mathit{IH}_6, \mathit{IH}_7, \mathit{IH}_8$, to $92, 144, 211, 292$.
\end{itemize} 

Overall, the effect \edits{of the student size} is not
consistently monotone. But the bounds describe worst-case sufficient resources and added capacity can also improve learning.
This distinction is relevant in overparameterized models, as discussed in Appendix~\ref{sec:app:faq}.

\paragraph{Student depth.}
Increasing the number of student layers $L_s$ sometimes improves learning
and length generalization. For $A_k$, however, it substantially degrades
length generalization. This observation is consistent with the depth dependence of
both height bounds and the resulting sufficient requirements on $N$ and
$K$.

\paragraph{Student embedding dimension.}
Varying the student embedding dimension $D_s$ has little visible effect on
length generalization and, in some settings, improves learning. Thus, increasing
$D_s$ does not appear to impose a systematic data or length-complexity cost in
these experiments. At fixed $H_s$, $C_0$ increases with $D_s$, whereas
$C_1$ is independent of width. The observations therefore do not support
a simple monotone width penalty nor identify which bound is
informative. For $\mathit{IH}_k$, the comparison remains with the local
extension of $C_0$, not $C_1$.

\paragraph{Student head count.}
\edits{Although varying $H_s$ may violate the condition $D_{h,s} \geq D_{h,t}$ from Remark~\ref{remark:learning-assumptions}, the $ERM$ learner depicted in Appendix~\ref{sec:app:data-complexity:erm} does not require this condition as long as the target remains realizable. These experiments are therefore suitable for testing the theory’s qualitative predictions. We observe that} increasing the number of student attention heads $H_s$ appears to slightly
harm learning in the $L_k$ and $\mathit{IH}_k$ families, but substantially
improves it in the $A_k$ family, including beyond the $k$ heads used in the
reference construction. At fixed $D_s$, increasing $H_s$ reduces
$D_{h,s}=D_s/H_s$ and hence the student contribution
$r_\F L_sD_s|\F|^{D_s/H_s+1}$ to $C_0$; $C_1$ is unchanged.
The improvement for $A_k$ is notably compatible with this direction.
For $\mathit{IH}_k$, \edits{the sweep varies the total head count, always keeping equal numbers of local and global heads}.

\begin{figure}[H]
    \centering
    \begin{subfigure}[b]{\textwidth}
        \centering
        \includegraphics[width=\linewidth]{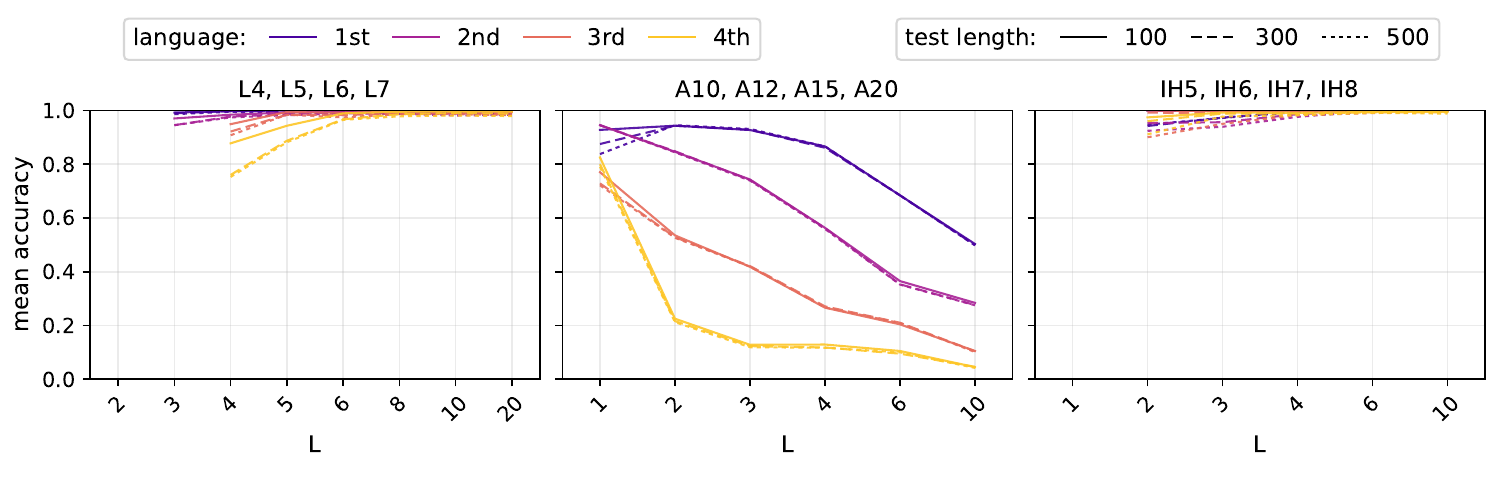}
        \vspace{-15pt}
        \caption{\small Evaluation on absolute lengths}
        \label{fig:student_L_absolute}
    \end{subfigure}
    \vspace{0.5cm} %
    \begin{subfigure}[b]{\textwidth}
        \centering
        \includegraphics[width=\linewidth]{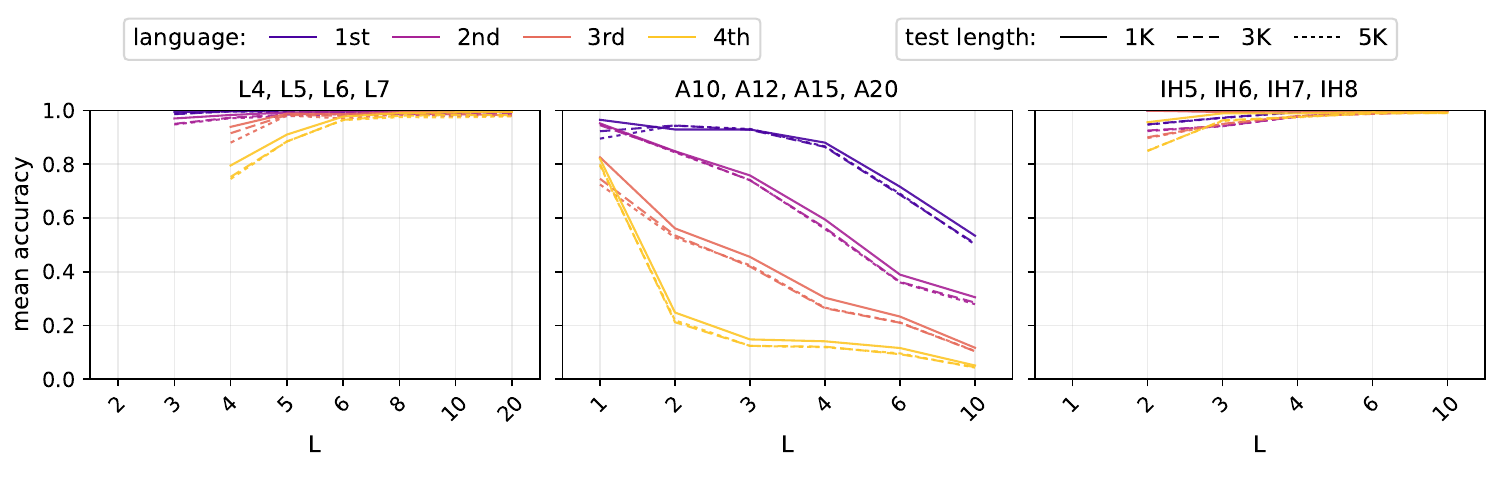}
        \vspace{-15pt}
        \caption{\small Evaluation on relative lengths}
        \label{fig:student_L_relative}
    \end{subfigure}
        \vspace{-1.1cm} %
    \caption{\small \textbf{Impact of the student's depth on length generalization}. We vary $L_s$ while holding $D_s$ and $H_s$ fixed.}
    \label{fig:student_L}
\end{figure}

\begin{figure}[H]
    \centering
    \begin{subfigure}[b]{\textwidth}
        \centering
        \includegraphics[width=\linewidth]{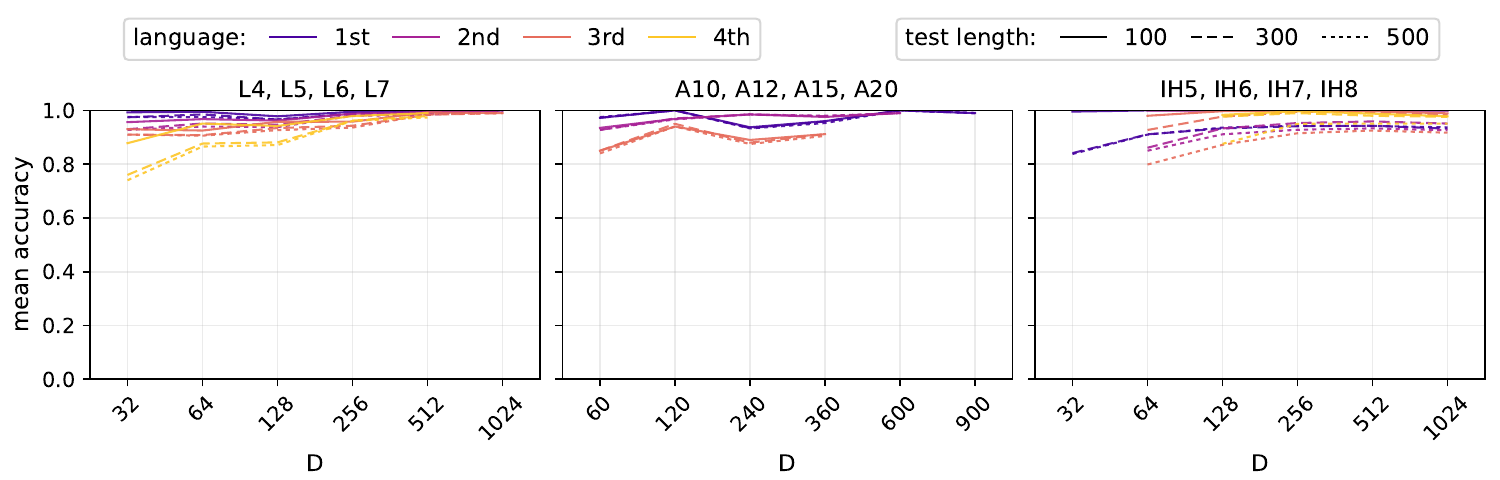}
        \vspace{-15pt}
        \caption{\small Evaluation on absolute lengths}
        \label{fig:student_D_absolute}
    \end{subfigure}
    \vspace{0.5cm} %
    \begin{subfigure}[b]{\textwidth}
        \centering
        \includegraphics[width=\linewidth]{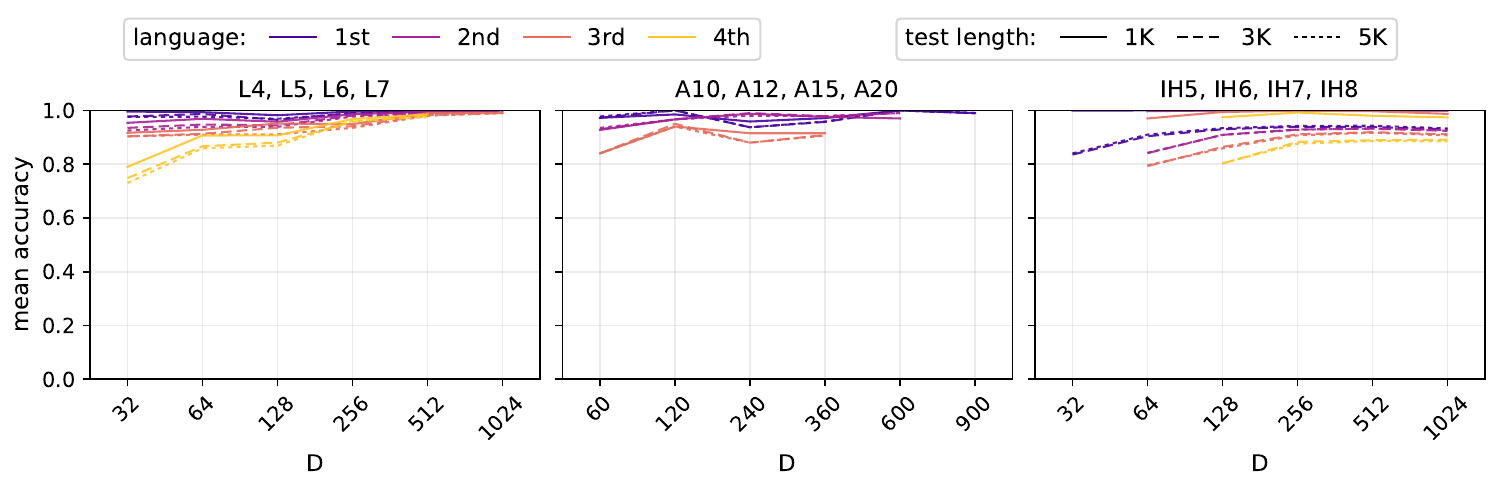}
        \vspace{-15pt}
        \caption{\small Evaluation on relative lengths}
        \label{fig:student_D_relative}
    \end{subfigure}
        \vspace{-1.1cm} %
    \caption{\small \textbf{Impact of the student's embedding dimension on length generalization}. We vary $D_s$ while holding $L_s$ and $H_s$ fixed.}
    \label{fig:student_D}
\end{figure}

\begin{figure}[H]
    \centering
    \begin{subfigure}[b]{\textwidth}
        \centering
        \includegraphics[width=\linewidth]{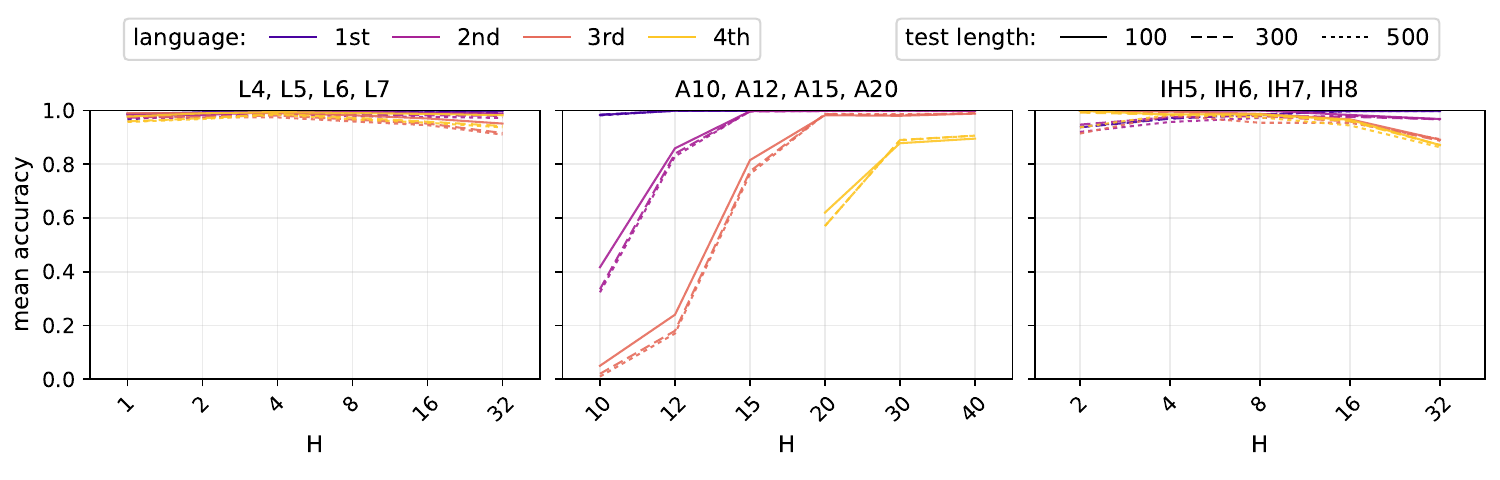}
        \vspace{-15pt}
        \caption{\small Evaluation on absolute lengths}
        \label{fig:student_H_absolute}
    \end{subfigure}
    \vspace{0.5cm} %
    \begin{subfigure}[b]{\textwidth}
        \centering
        \includegraphics[width=\linewidth]{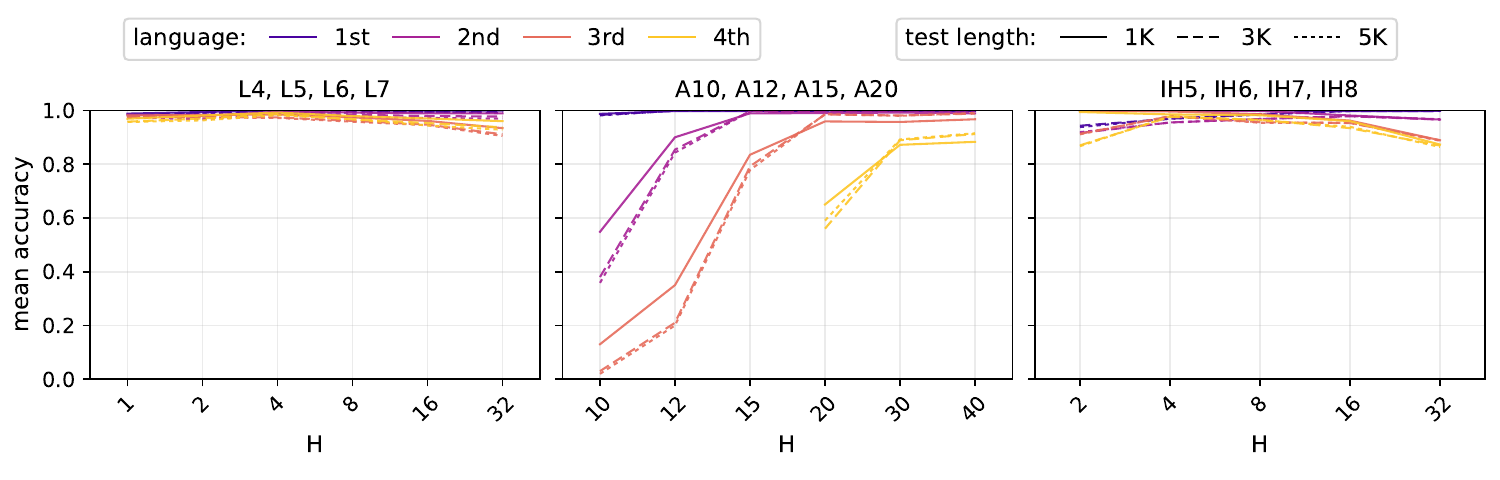}
        \vspace{-15pt}
        \caption{\small Evaluation on relative lengths}
        \label{fig:student_H_relative}
    \end{subfigure}
        \vspace{-1.1cm} %
    \caption{\small \textbf{Impact of the student's head count on length generalization}. We vary $H_s$ while holding $L_s$ and $D_s$ fixed.}
    \label{fig:student_H}
\end{figure}

\subsubsection{Tradeoff between training length and data complexity}
\label{app:experiments-tradeoff}

For these tradeoff experiments, student models are set with the following parameter sizes.
\begin{itemize}
    \item The alternating task $L_6$ has exactly $5$ layers, $D_s = 256$ and $H_s = 1$.
    \item The alphabet task $A_{15}$ has exactly $1$ layer, $D_s = 360$ and $H_s = 20$. The number of heads is $20$ and not $15$ no ease the learning procedure.
    \item The induction head task $\mathit{IH}_7$ has exactly $2$ layers, $D_s = 256$ and it has two attention heads, one local head of range $1$ and one global attention head.
\end{itemize}

We observe a tradeoff between training length $K$ and sample size $N$:
longer examples can substantially reduce the number needed for length
generalization. Both axes use linear scales. The observed frontier is
qualitatively consistent with the logarithmic--linear tradeoff illustrated
in Figure~\ref{fig:multiplicative-tradeoff}.
For a fixed accuracy threshold, the configurations that achieve the
threshold with the smallest token budget typically use a larger $K$ and a
smaller $N$. Conversely, for a fixed token budget, the configurations with the
best accuracy likewise favor a larger $K$ over a much larger $N$. This
resembles the allocation suggested by
Corollary~\ref{cor:optimal-token-allocation}. For $\mathit{IH}_k$, the
relevant theoretical comparison uses the blockwise local-attention
extension of the $C_0$ argument.
The plotted budgets $NK$ count ordinary tokens; $L_k$ and $A_k$ additionally
use one supervised EOS position per word.

\begin{figure}[H]
    \centering
    \begin{subfigure}[b]{\textwidth}
        \centering
        \includegraphics[width=\linewidth]{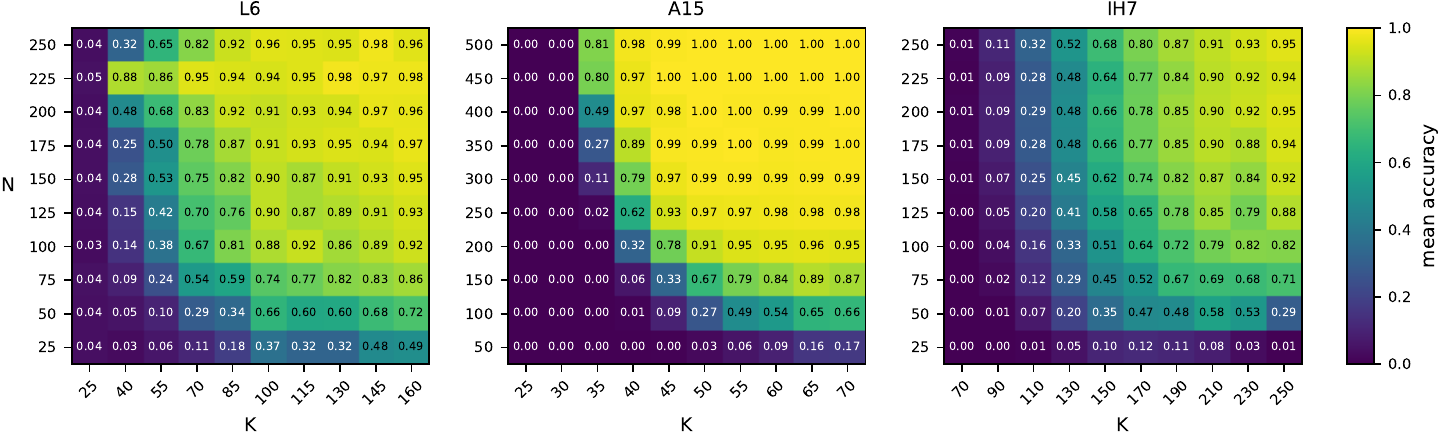}
        \vspace{-15pt}
        \caption{\small Evaluation at absolute test length $500$}
        \label{fig:aggregate_KxN_absolute_500}
    \end{subfigure}
    \vspace{0.5cm} %
    \begin{subfigure}[b]{\textwidth}
        \centering
        \includegraphics[width=\linewidth]{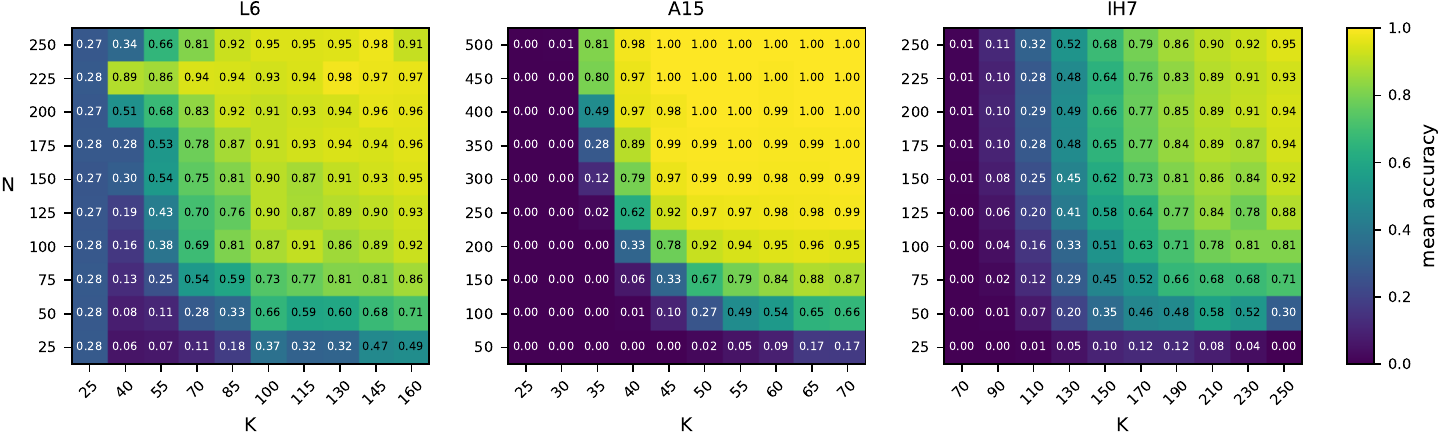}
        \vspace{-15pt}
        \caption{\small Evaluation at relative test length $5K$}
        \label{fig:aggregate_KxN_relative_5K}
    \end{subfigure}
    \vspace{0.5cm} %
    \begin{subfigure}[b]{\textwidth}
        \centering
        \includegraphics[width=\linewidth]{xp_plots/Aggregate_KxN_relative_accuracy_5K.pdf}
        \vspace{-15pt}
        \caption{\small Here is Figure~\ref{fig:aggregate_KxN_5K} for completeness. For a 0.75 minimum accuracy threshold, highlighted in red are the 3 configurations that have the smallest token-budget.}
        \label{fig:aggregate_KxN_relative_accuracy_5K}
    \end{subfigure}
    \vspace{0.5cm} %
    \begin{subfigure}[b]{\textwidth}
        \centering
        \includegraphics[width=\linewidth]{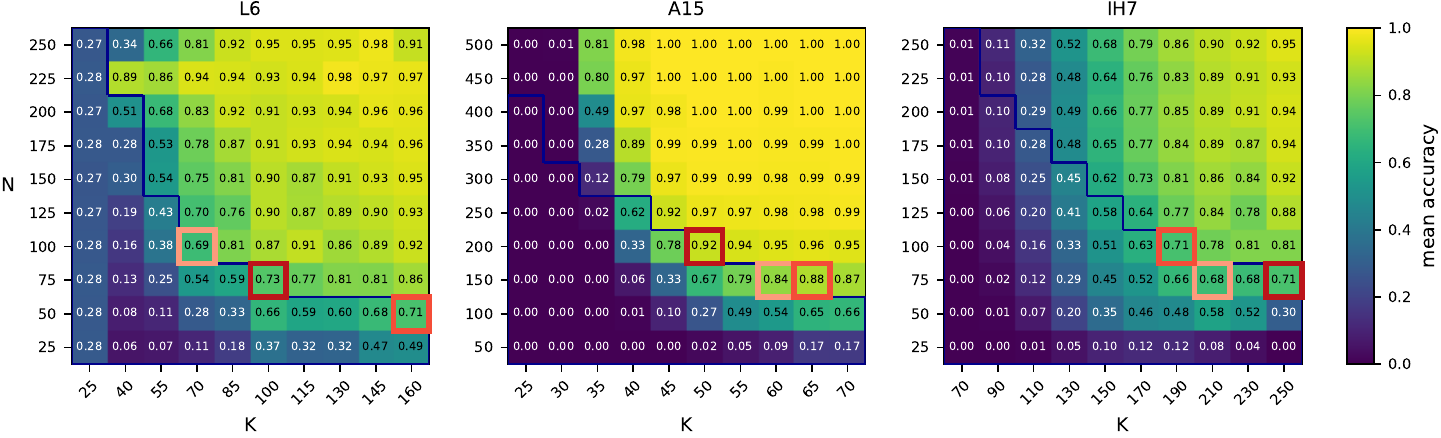}
        \vspace{-15pt}
        \caption{\small For a maximum token-budget threshold, respectively of $8000, 10000, 20000$ tokens, highlighted in red are the 3 configurations that achieve the best accuracy}
        \label{fig:aggregate_KxN_relative_budget_5K}
    \end{subfigure}
        \vspace{-1.1cm} %
    \caption{\small \textbf{Tradeoff between training length and data complexity}. One task of each family: $L_6, A_{15}$ and $\mathit{IH}_7$ to study the tradeoff between the two quantities.}
    \label{fig:aggregate_KxN_tradeoff}
\end{figure}

We also include a control experiment in which the axis labelled $N$
specifies a budget of supervised token positions rather than a number
of words. Gains from increasing $K$ at a fixed budget suggest a
benefit from longer contexts beyond simply supplying more supervised
positions.
\edits{This is in agreement with theoretical predictions from Eq.~\ref{eq:multiplicative-bound}.}

\begin{figure}[H]
    \centering
    \includegraphics[width=\linewidth]{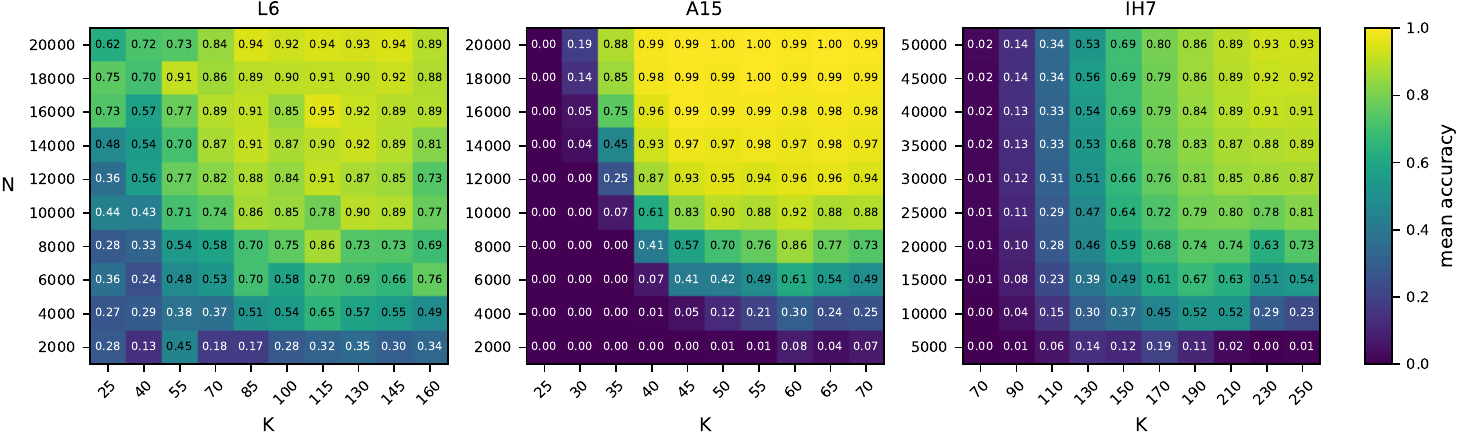}
    \vspace{-15pt}
    \caption{\small Control experiment for $L_6$, $A_{15}$, and $\mathit{IH}_7$\edits{, with evaluation at relative test length 5$K$}. The axis labelled $N$ gives the \edits{total token budget of the dataset} rather than the number of words.}
    \label{fig:aggregate_KxN_relative_control}
\end{figure}

We have seen above the aggregation of the Length--Sample tradeoff of three distinct tasks $L_6$, $A_{15}$, and $\mathit{IH}_7$. We complete the collection with a compendium, of many combinations with increasing test lengths for each of three tasks.

\begin{figure}[H]
    \centering
    \begin{subfigure}[b]{\textwidth}
        \centering
        \includegraphics[width=\linewidth]{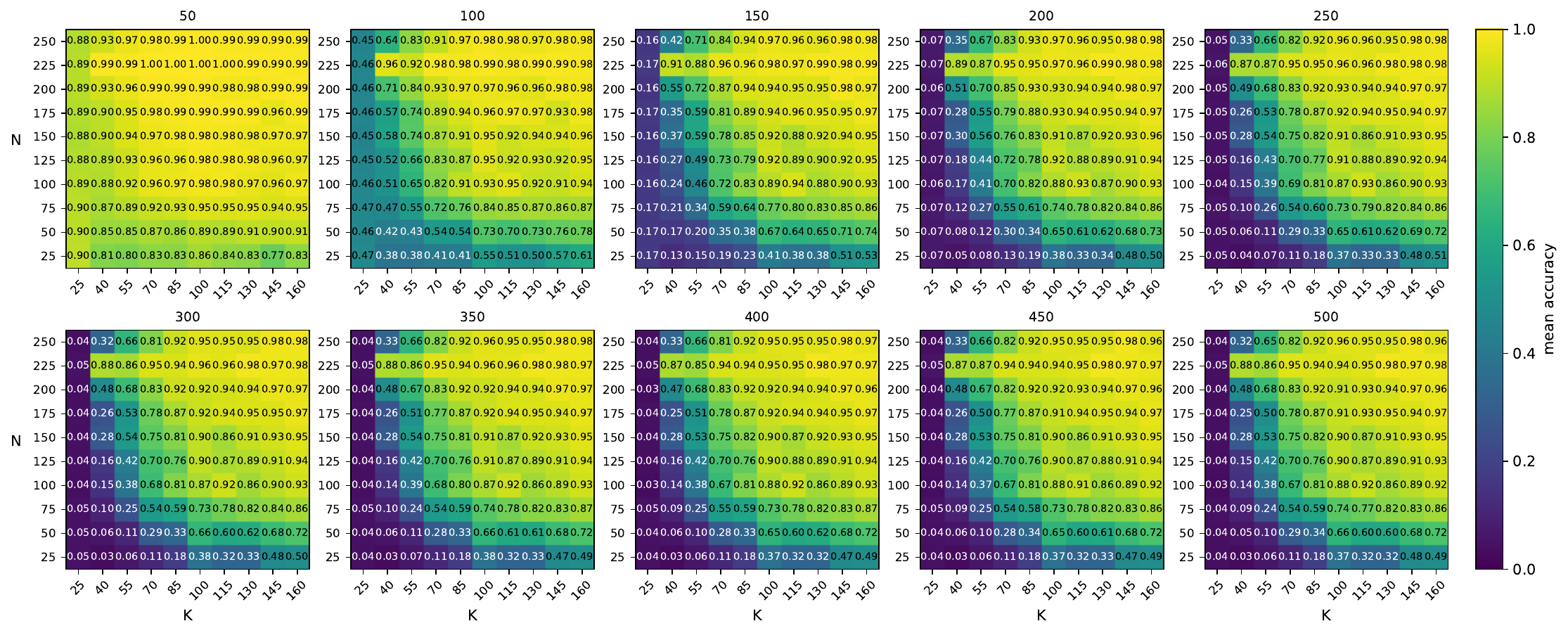}
        \vspace{-15pt}
        \caption{\small Alternating language $L_6$}
    \end{subfigure}
    \vspace{0.5cm} %
    \begin{subfigure}[b]{\textwidth}
        \centering
        \includegraphics[width=\linewidth]{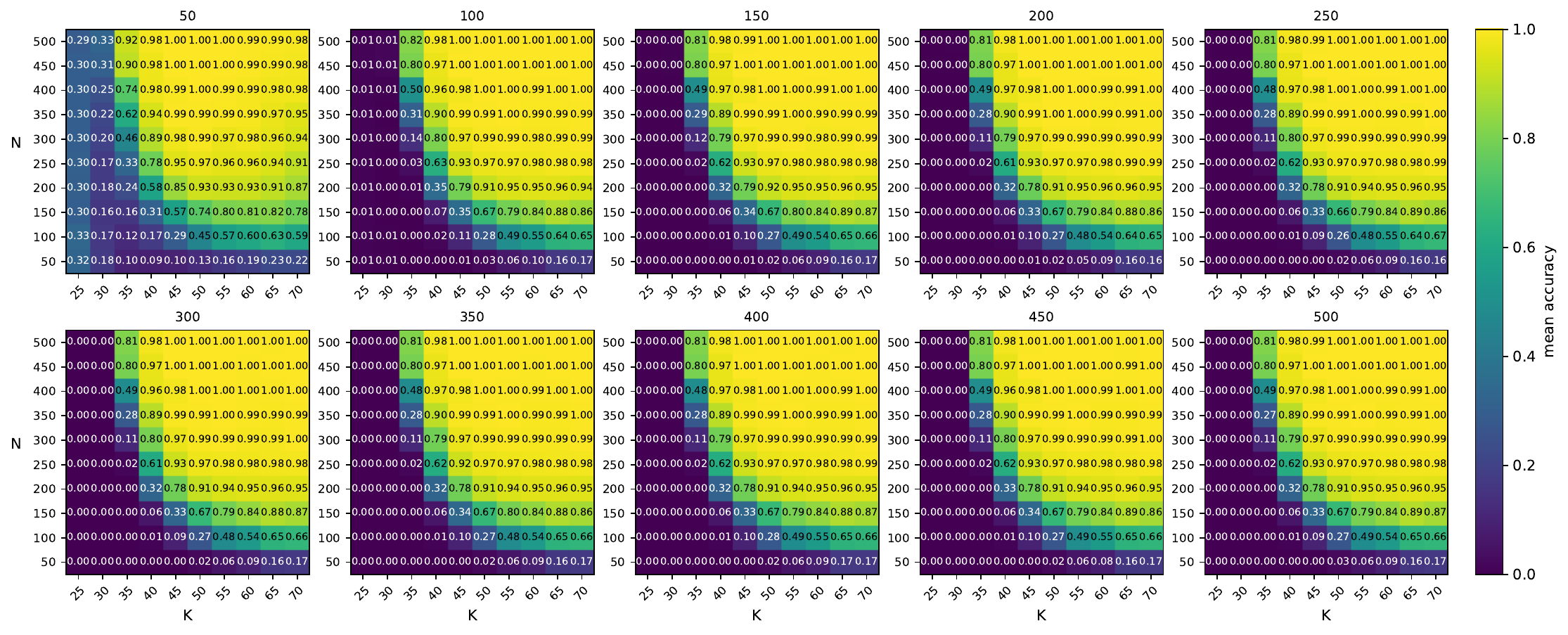}
        \vspace{-15pt}
        \caption{\small Alphabet language $A_{15}$}
    \end{subfigure}
    \begin{subfigure}[b]{\textwidth}
        \vspace{-0.5cm} %
        \centering
        \includegraphics[width=\linewidth]{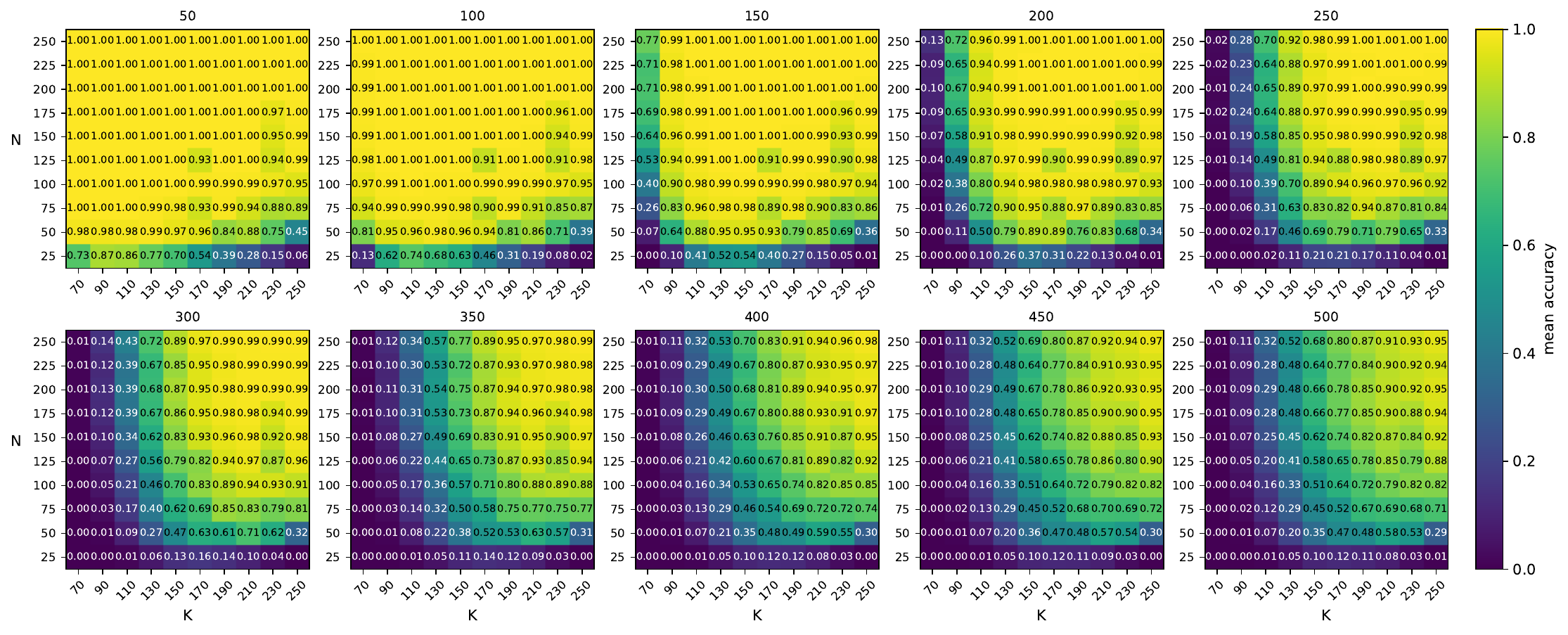}
        \vspace{-15pt}
        \caption{\small Induction Head task $\mathit{IH}_7$}
    \end{subfigure}
      \vspace{-0.5cm} %
    \caption{\small Tradeoff between training length and data complexity: evaluation on absolute lengths, from length $50$ to length $500$ with additive incrementation of $50$.}
\end{figure}

\begin{figure}[H]
    \centering
    \begin{subfigure}[b]{\textwidth}
        \centering
        \includegraphics[width=\linewidth]{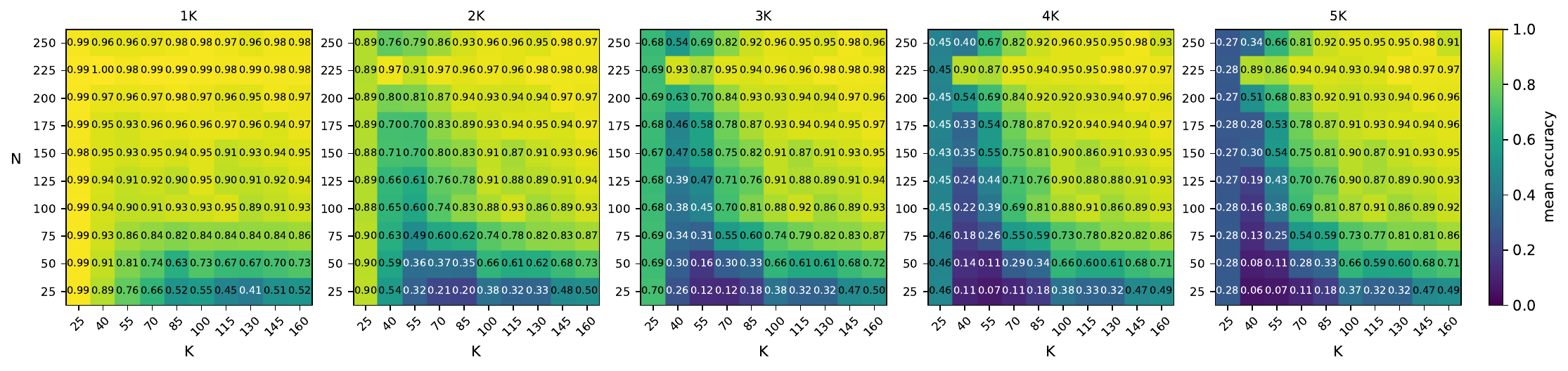}
        \vspace{-15pt}
        \caption{\small Alternating language $L_6$}
    \end{subfigure}
    \vspace{0.5cm} %
    \begin{subfigure}[b]{\textwidth}
        \centering
        \includegraphics[width=\linewidth]{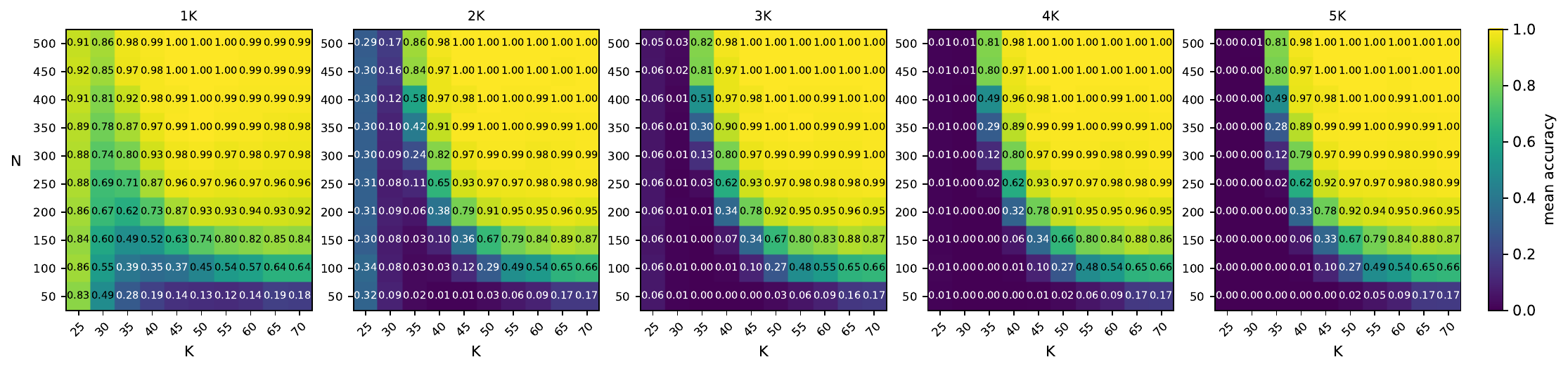}
        \vspace{-15pt}
        \caption{\small Alphabet language $A_{15}$}
    \end{subfigure}
    \begin{subfigure}[b]{\textwidth}
        \vspace{-0.5cm} %
        \centering
        \includegraphics[width=\linewidth]{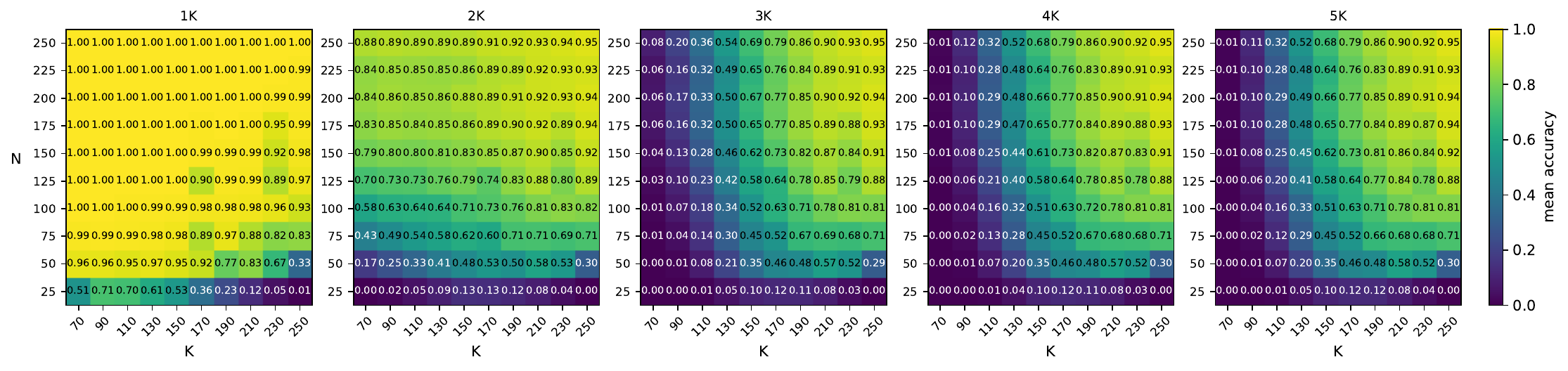}
        \vspace{-15pt}
        \caption{\small Induction Head task $\mathit{IH}_7$}
    \end{subfigure}
      \vspace{-0.5cm} %
    \caption{\small Tradeoff between training length and data complexity: evaluation on relative lengths $1K, 2K, 3K, 4K, 5K$, with $K$ the training length of the involved cells configuration.}
\end{figure}

\begin{figure}[H]
    \centering
    \begin{subfigure}[b]{\textwidth}
        \centering
        \includegraphics[width=\linewidth]{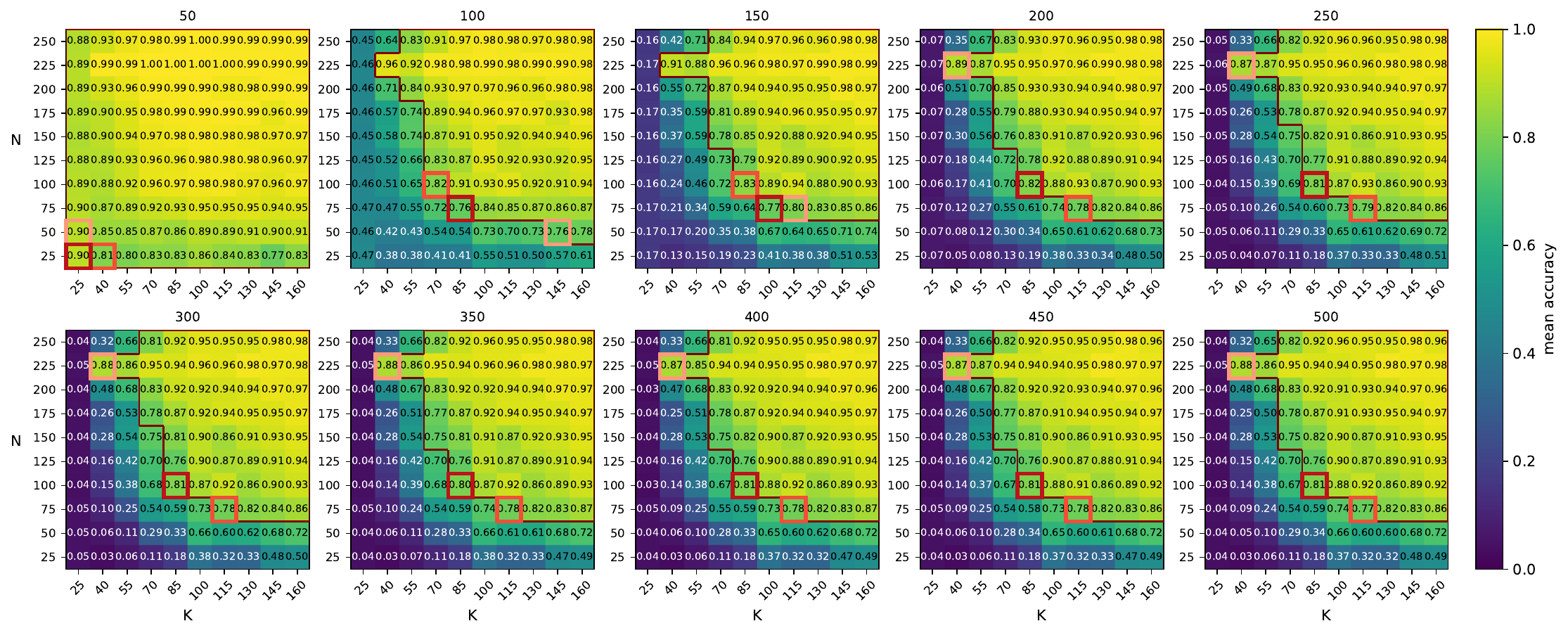}
        \vspace{-15pt}
        \caption{\small Alternating language $L_6$}
    \end{subfigure}
    \vspace{0.5cm} %
    \begin{subfigure}[b]{\textwidth}
        \centering
        \includegraphics[width=\linewidth]{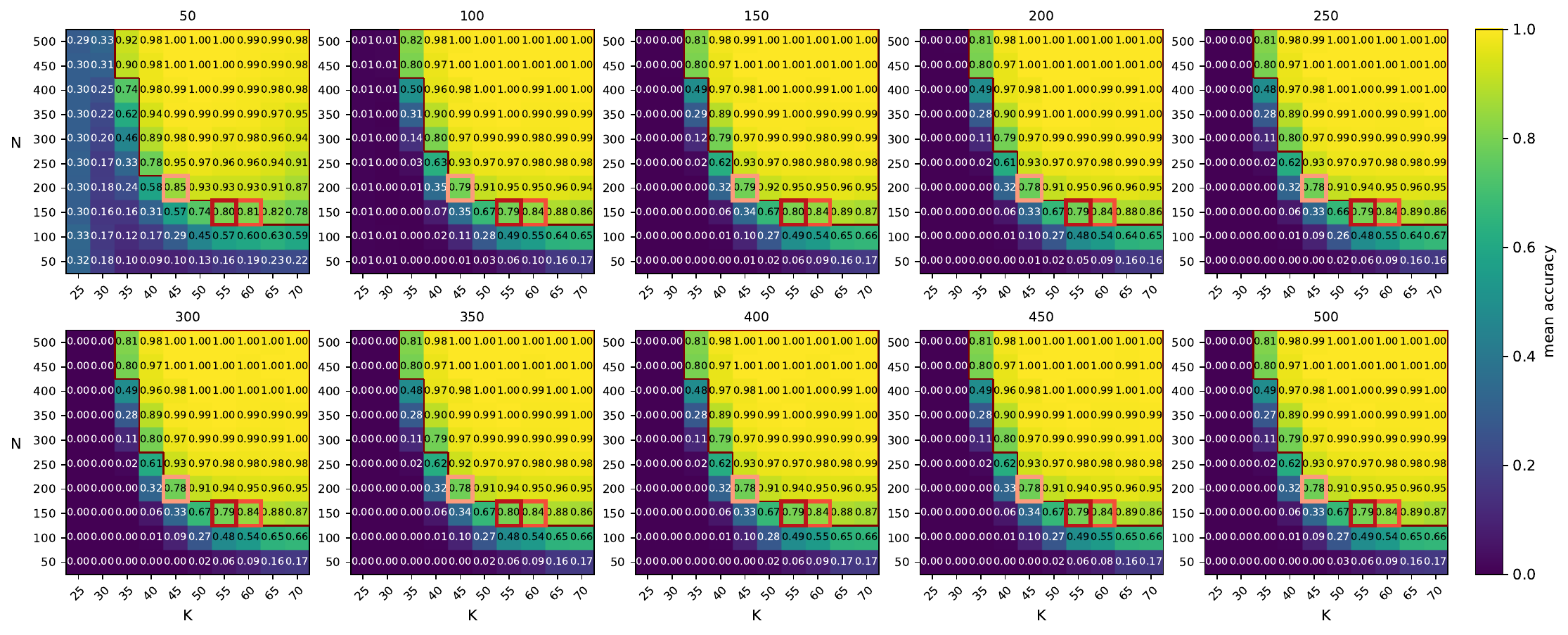}
        \vspace{-15pt}
        \caption{\small Alphabet language $A_{15}$}
    \end{subfigure}
    \begin{subfigure}[b]{\textwidth}
        \vspace{-0.5cm} %
        \centering
        \includegraphics[width=\linewidth]{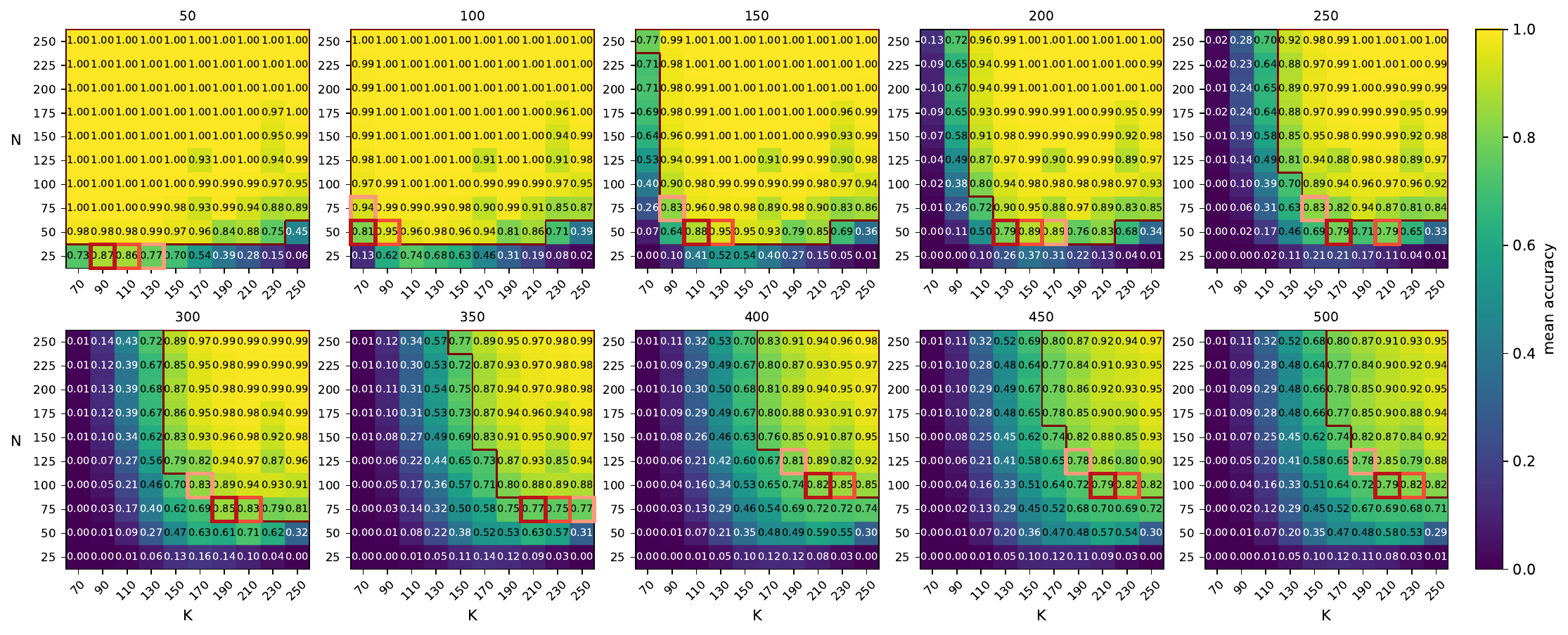}
        \vspace{-15pt}
        \caption{\small Induction Head task $\mathit{IH}_7$}
    \end{subfigure}
      \vspace{-0.5cm} %
    \caption{\small Tradeoff between training length and data complexity: evaluation on absolute lengths, from length $50$ to length $500$ with additive incrementation of $50$. Red contouring for accuracy $\geq 0.75$; highlighted in red are the 3 configurations under this constraint that have the smallest token-budget.}
\end{figure}

\begin{figure}[H]
    \centering
    \begin{subfigure}[b]{\textwidth}
        \centering
        \includegraphics[width=\linewidth]{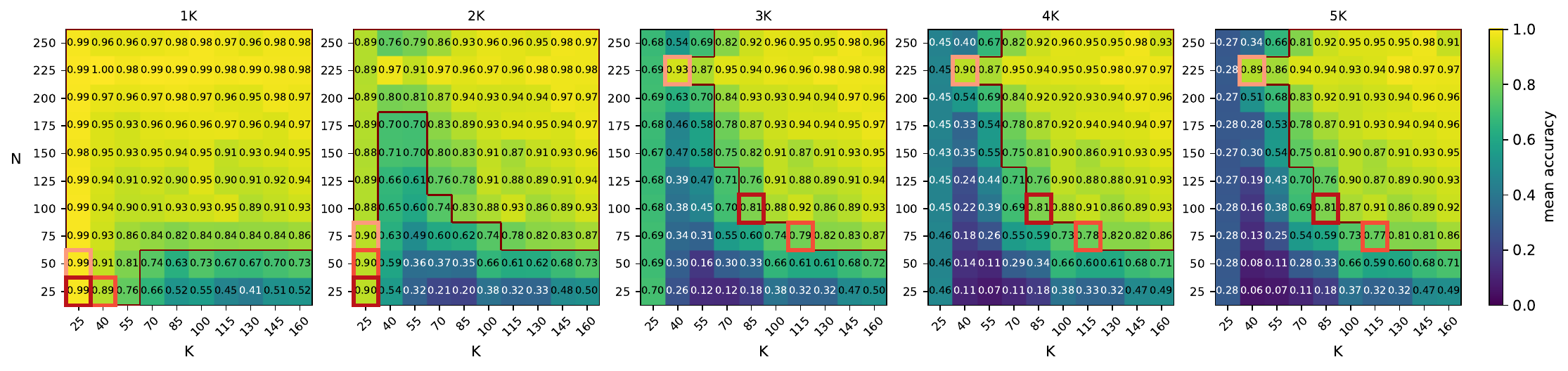}
        \vspace{-15pt}
        \caption{\small Alternating language $L_6$}
    \end{subfigure}
    \vspace{0.5cm} %
    \begin{subfigure}[b]{\textwidth}
        \centering
        \includegraphics[width=\linewidth]{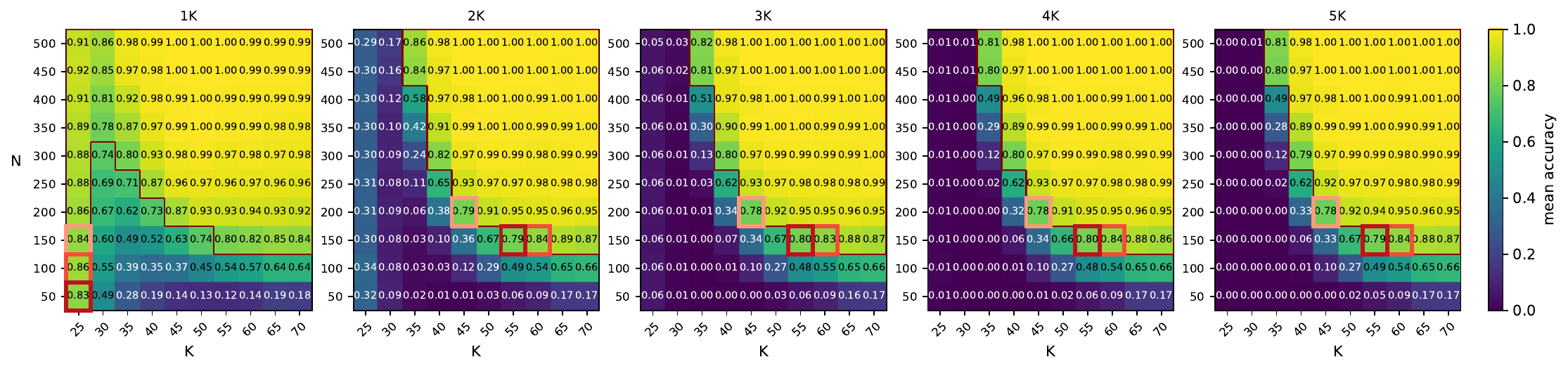}
        \vspace{-15pt}
        \caption{\small Alphabet language $A_{15}$}
    \end{subfigure}
    \begin{subfigure}[b]{\textwidth}
        \vspace{-0.5cm} %
        \centering
        \includegraphics[width=\linewidth]{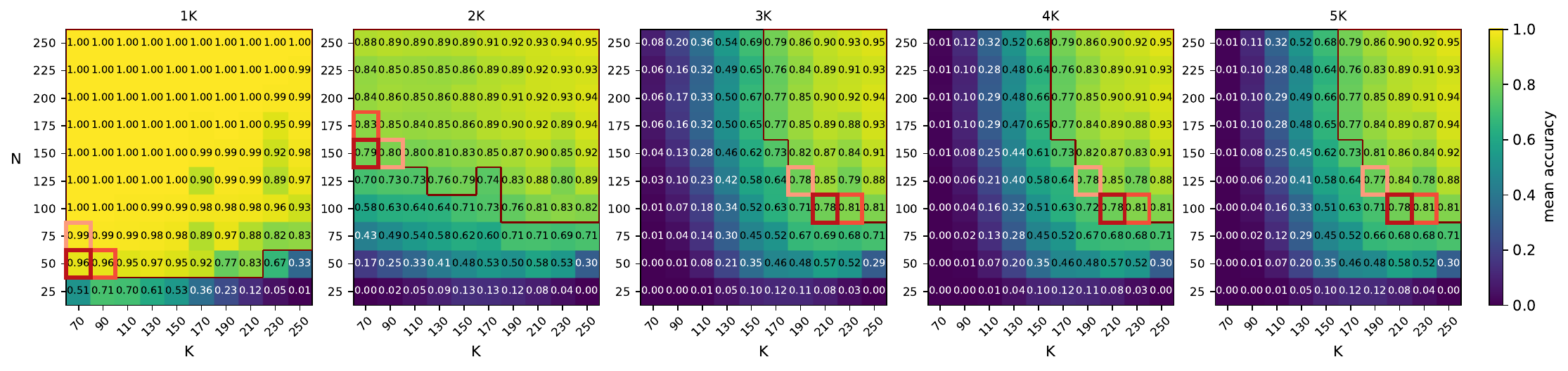}
        \vspace{-15pt}
        \caption{\small Induction Head task $\mathit{IH}_7$}
    \end{subfigure}
      \vspace{-0.5cm} %
    \caption{\small Tradeoff between training length and data complexity: evaluation on relative lengths $1K, 2K, 3K, 4K, 5K$, with $K$ the training length of the involved cells configuration. Red contouring for accuracy $\geq 0.75$; highlighted in red are the 3 configurations under this constraint that have the smallest token-budget.}
\end{figure}

\begin{figure}[H]
    \centering
    \begin{subfigure}[b]{\textwidth}
        \centering
        \includegraphics[width=\linewidth]{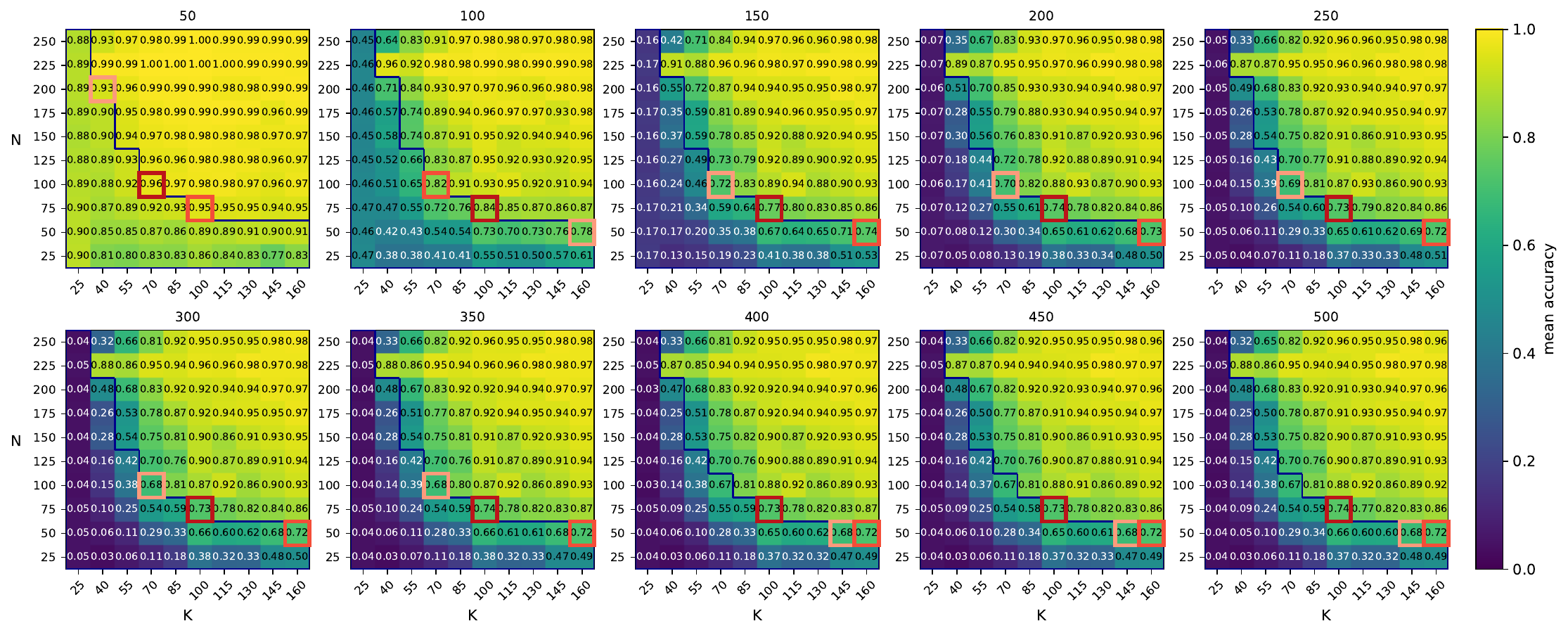}
        \vspace{-15pt}
        \caption{\small Alternating language $L_6$. Contouring cells with maximum token-budget $8000$}
    \end{subfigure}
    \vspace{0.5cm} %
    \begin{subfigure}[b]{\textwidth}
        \centering
        \includegraphics[width=\linewidth]{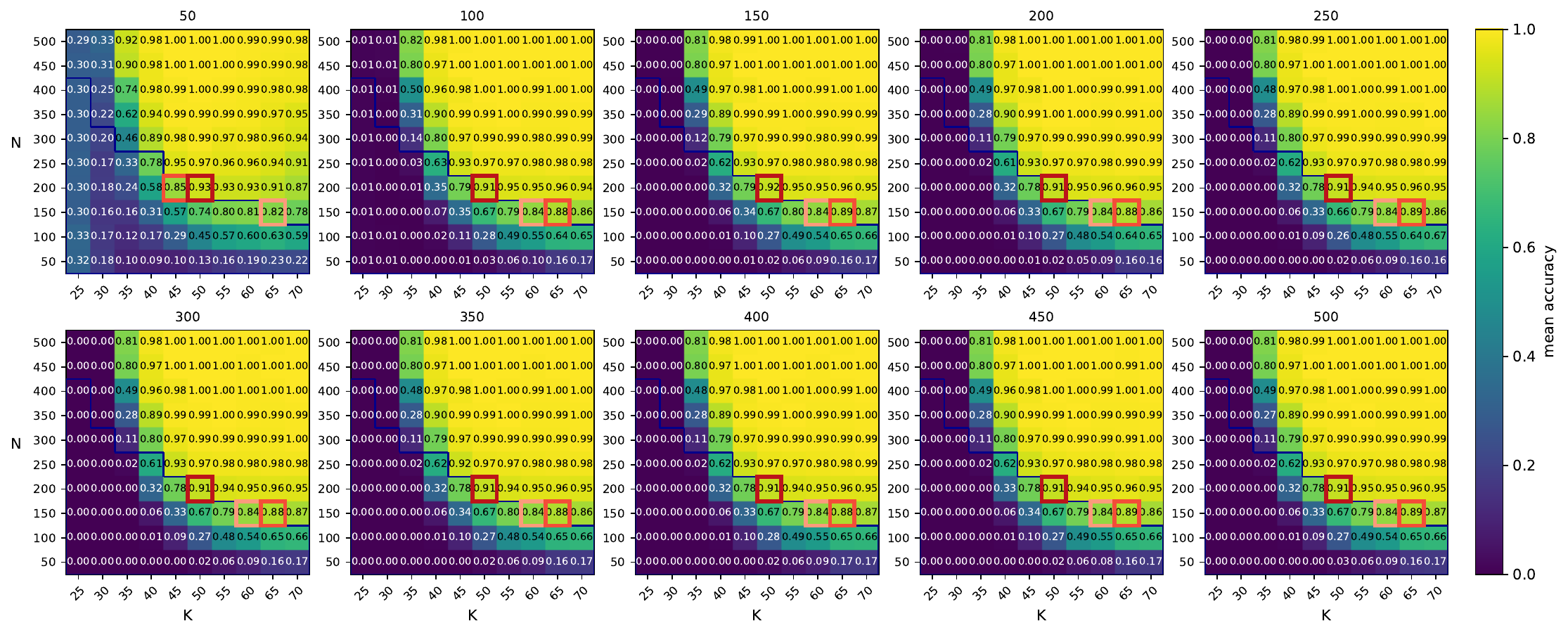}
        \vspace{-15pt}
        \caption{\small Alphabet language $A_{15}$. Contouring cells with maximum token-budget $10000$}
    \end{subfigure}
    \begin{subfigure}[b]{\textwidth}
        \vspace{-0.5cm} %
        \centering
        \includegraphics[width=\linewidth]{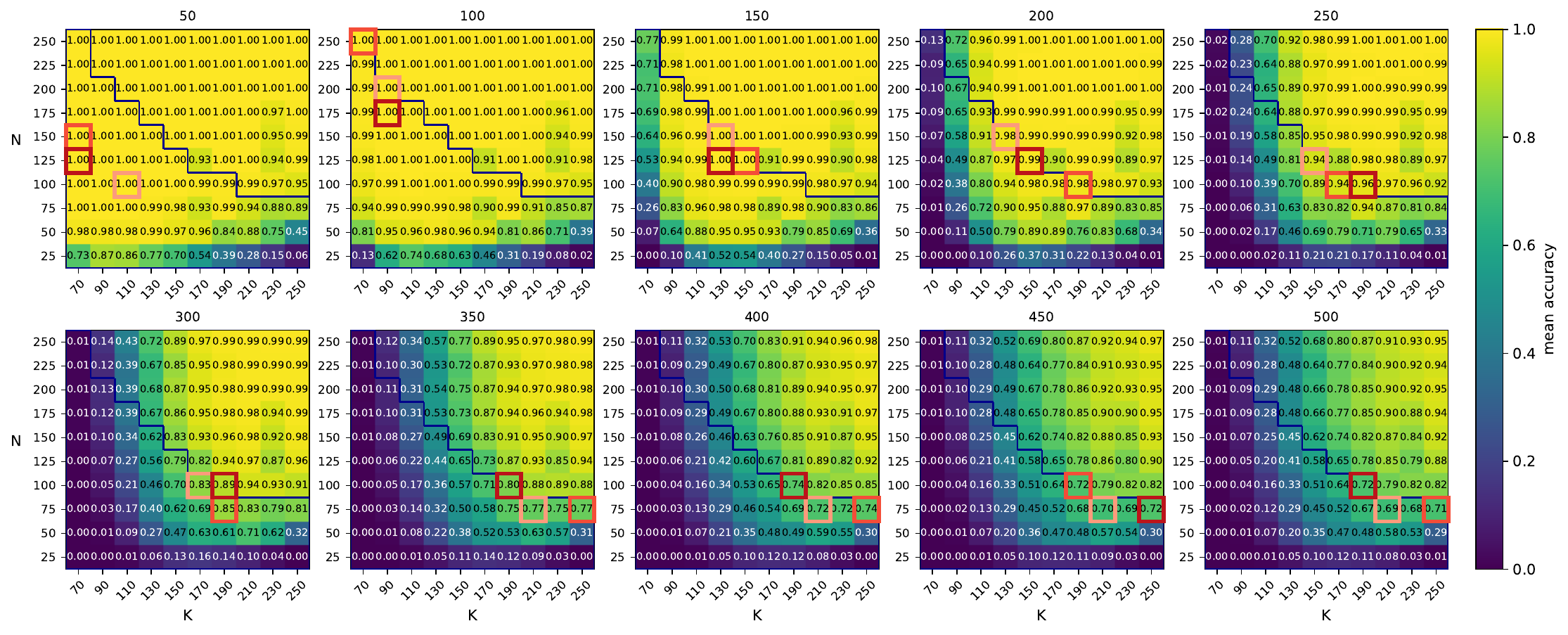}
        \vspace{-15pt}
        \caption{\small Induction Head task $\mathit{IH}_7$. Contouring cells with maximum token-budget $20000$}
    \end{subfigure}
      \vspace{-0.5cm} %
    \caption{\small Tradeoff between training length and data complexity: evaluation on absolute lengths, from length $50$ to length $500$ with additive incrementation of $50$. For their respective maximum token-budget threshold, highlighted in red are the 3 configurations that achieve the best accuracy.}
\end{figure}

\begin{figure}[H]
    \centering
    \begin{subfigure}[b]{\textwidth}
        \centering
        \includegraphics[width=\linewidth]{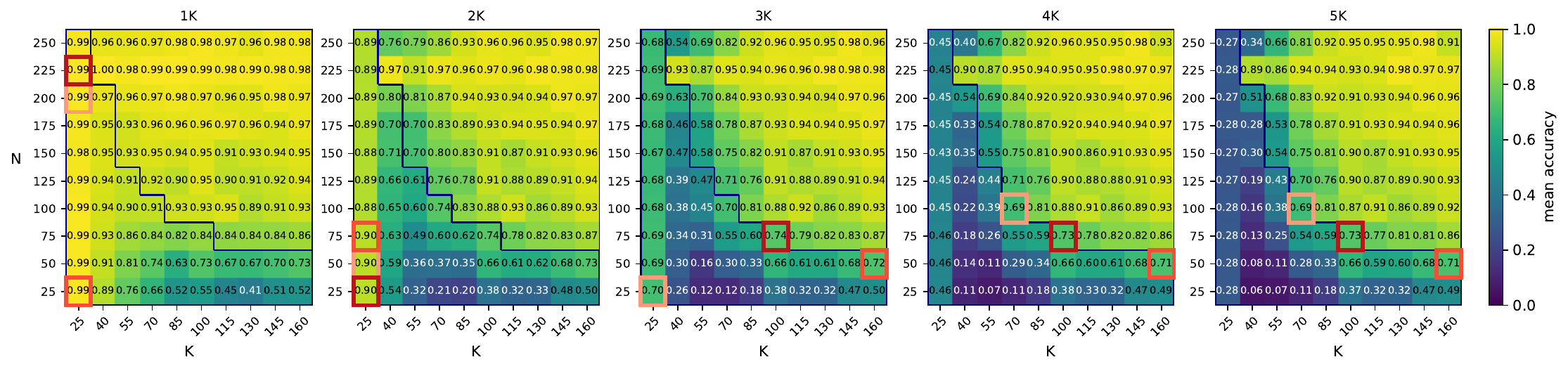}
        \vspace{-15pt}
        \caption{\small Alternating language $L_6$. Contouring cells with maximum token-budget $8000$}
    \end{subfigure}
    \vspace{0.5cm} %
    \begin{subfigure}[b]{\textwidth}
        \centering
        \includegraphics[width=\linewidth]{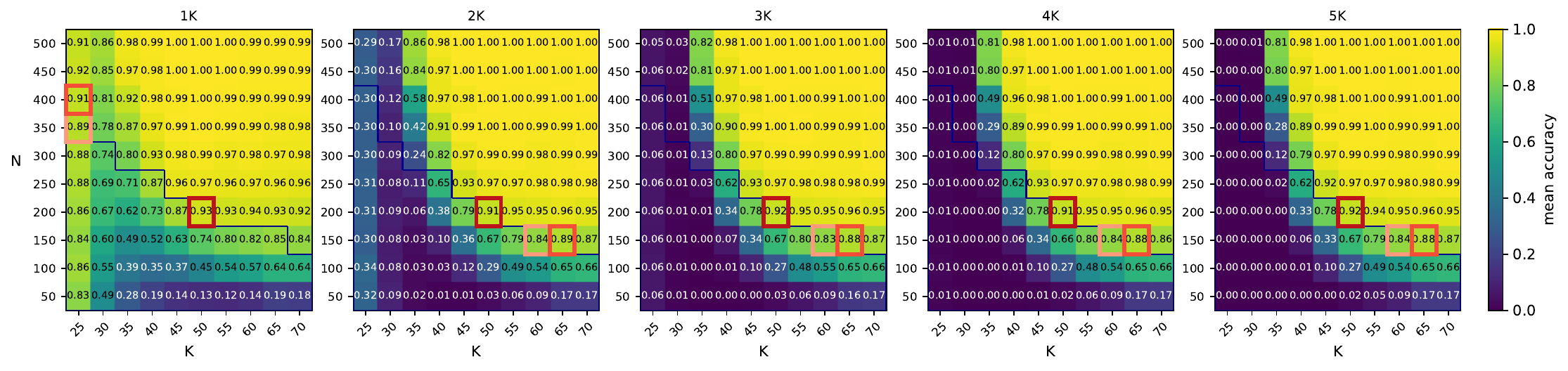}
        \vspace{-15pt}
        \caption{\small Alphabet language $A_{15}$. Contouring cells with maximum token-budget $10000$}
    \end{subfigure}
    \begin{subfigure}[b]{\textwidth}
        \vspace{-0.5cm} %
        \centering
        \includegraphics[width=\linewidth]{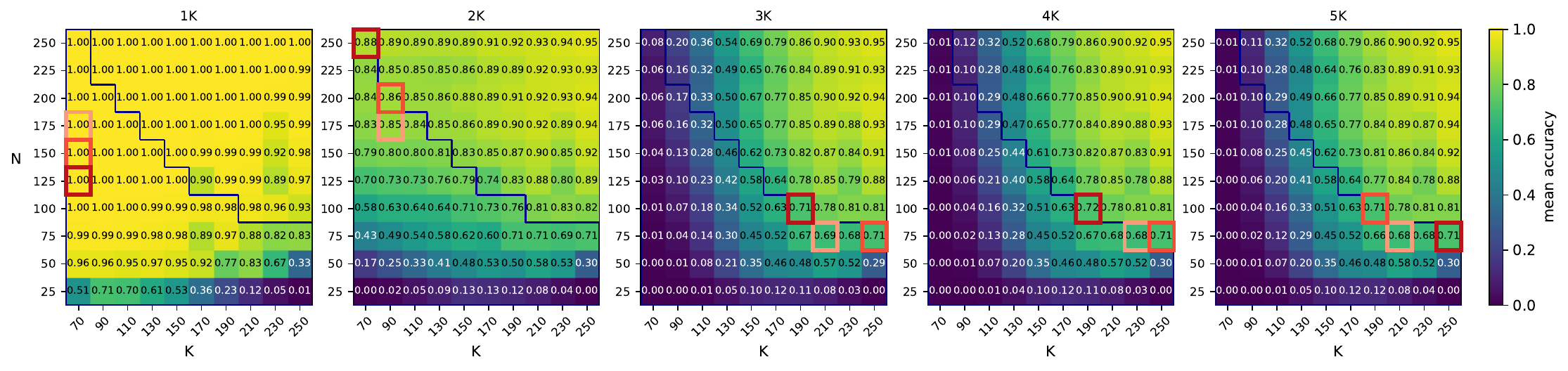}
        \vspace{-15pt}
        \caption{\small Induction Head task $\mathit{IH}_7$. Contouring cells with maximum token-budget $20000$}
    \end{subfigure}
      \vspace{-0.5cm} %
    \caption{\small Tradeoff between training length and data complexity: evaluation on relative lengths $1K, 2K, 3K, 4K, 5K$, with $K$ the training length of the involved cells configuration. For their respective maximum token-budget threshold, highlighted in red are the 3 configurations that achieve the best accuracy.}
\end{figure}

\newpage

\section{AI Usage}

In this work, we used generative AI tools for proofreading the paper and its mathematical content, revising the phrasing of mathematical claims, providing ingredients for proving mathematical claims, and assisting in the writing and proofreading of proofs. 
Additionally, we used generative AI tools for creating and editing software code, assisting with literature research for related work, and proposing edits for improving readability. 

We have \emph{not} used generative AI tools to help develop theoretical models or conceptual frameworks, propose or refine hypotheses, design or provide feedback on research  methodology or experiments, support qualitative and thematic data analysis, or interpret results.

We have reviewed all AI-assisted work. For example, we checked LLM-generated research advices for potential plagiarism through a manual literature survey, we checked mathematical soundness of every mathematical claims, we tested the soundness of the LLM-generated/assisted code experiments, and we supervised and/or overwrote AI drafted writing in the paper. We take responsibility for the final content of this work, including text, claims or artifacts produced with the aid of generative AI.

\end{document}